\newcommand{\vast}{\bBigg@{4}} 
\newcommand{\Vast}{\bBigg@{5}} 
\algnewcommand{\Initialize}[1]{%
  \State \textbf{Initialize:}
  \Statex \hspace*{\algorithmicindent}\parbox[t]{.8\linewidth}{\raggedright #1}
}
\DeclareMathOperator*{\argmin}{arg\,min}
\newtheorem{assumption}{Assumption}
\newtheorem{definition}{Definition}
\newtheorem{proposition}{Proposition}
\newtheorem{lemma}{Lemma}
\newtheorem{theorem}{Theorem}
\title{Optimal differentially private kernel learning with random projection}
\author{
Bonwoo Lee\thanks{Department of Mathematical Sciences, KAIST, Daejeon 34141, Korea. Email: \texttt{righthim@kaist.ac.kr}}
\and
Cheolwoo Park\thanks{Department of Mathematical Sciences, KAIST, Daejeon 34141, Korea. Email: \texttt{parkcw2021@kaist.ac.kr}}
\and
Jeongyoun Ahn\thanks{Department of Industrial and Systems Engineering, KAIST, Daejeon 34141, Korea. Email: \texttt{jyahn@kaist.ac.kr}}
}
\date{}
\begin{document}
\doublespacing
\maketitle
\vspace{-2cm}
\vspace{1cm}

\begin{abstract}
Differential privacy has become a cornerstone in the development of privacy-preserving learning algorithms. This work addresses optimizing differentially private kernel learning within the empirical risk minimization (ERM) framework. We propose a novel differentially private kernel ERM algorithm based on random projection in the reproducing kernel Hilbert space using Gaussian processes. Our method achieves minimax-optimal excess risk rates for both the squared loss and Lipschitz-smooth convex loss functions under a local strong convexity condition. We further show that existing approaches based on alternative dimension reduction techniques, such as random Fourier feature mappings or $\ell_2$ regularization, yield suboptimal excess risk bounds. Our key theoretical contribution also includes the derivation of dimension-free excess risk bounds for objective perturbation-based private linear ERM—marking the first such result that does not rely on noisy gradient-based mechanisms. Additionally, we obtain sharper excess risk bounds for existing differentially private kernel ERM algorithms. Empirical evaluations support our theoretical claims, demonstrating that random projection enables statistically efficient and optimally private kernel learning. These findings provide new insights into the design of differentially private algorithms and highlight the central role of dimension reduction in balancing privacy and utility.
\end{abstract}
\section{Introduction}\label{sec:intro}
Differential privacy (DP), first formalized by \cite{dwork2006calibrating}, provides a rigorous framework for algorithmic indistinguishability and has become the standard for quantifying privacy in machine learning. Growing concerns about information leakage in machine learning models have further underscored the need for formal privacy guarantees \citep{phong, wang,privacy_leakage2,privacy_leakge3,privacy_leakage4,privacy_leakage5,privacy_leakage6}. 

In response, a substantial body of literature has emerged to develop differentially private algorithms across a range of statistical learning procedures. Within the realm of kernel methods, a key challenge is the high model complexity of the hypothesis space, which amplifies the privacy cost. To address this, dimension reduction techniques are often employed. For example, \cite{chaudhuri2011differentially} used random Fourier feature (RFF) maps to reduce kernel learning to a linear one in a randomized low-dimensional space. Similarly, \cite{hall2013differential} applied the reproducing kernel Hilbert space (RKHS) norm regularization, which can be viewed as a soft form of dimension reduction that limits the complexity of the hypothesis space. These two strategies—based on either RFF mappings or RKHS regularization—have since become essential tools in designing differentially private kernel learning algorithms \citep{balog2018differentially,harder2021dp,raj2019differentially,rubinstein2009learning,bassily2022differentially,smith2018differentially,smith2021differentially, wang2019privacy}.

Despite their widespread adoption, the optimality of differentially private kernel learning algorithms that rely on existing dimension reduction techniques remains largely unexplored. To the best of our knowledge, while prior work on differentially private kernel empirical risk minimization (ERM) has established excess risk bounds \citep{chaudhuri2011differentially,hall2013differential,jain2013differentially,bassily2022differentially,NEURIPS2024, wang2024differentially}, these results stop short of assessing whether the employed dimension reduction strategies achieve \emph{optimal} excess risk rates or characterizing minimax rates, which are commonly used to evaluate the optimality of differentially private algorithms \citep{duchi2013local,karwa_et_al:LIPIcs.ITCS.2018.44,cai2021cost,NEURIPS2023,cai2024optimal}. Notably, \citet{chaudhuri2011differentially} raised concerns about the statistical inefficiency of methods based on RFF. This limitation motivates a fundamental question: can more efficient dimension reduction methods lead to tighter excess risk bounds and improved statistical guarantees in differentially private kernel learning?

In this paper, we propose a new dimension reduction approach based on random projections and develop differentially private kernel ERM algorithms for both squared loss and general Lipschitz-smooth loss functions. Specifically, we define random projections on RKHSs via Gaussian processes, which naturally extend the projection methods used in compressed learning\footnote{We note a terminological inconsistency with \citet{chaudhuri2011differentially}, where the random Fourier features (RFF) map is referred to as a \emph{random projection}. In our work, we distinguish between the two: the term “random projection” is reserved for a separate construction detailed in Section~\ref{sec:alg}, which differs both conceptually and technically from the RFF-based mapping.}. This approach offers several advantages over existing ones. First, it applies to arbitrary kernels, whereas RFF-based algorithms are restricted to shift-invariant kernels, a limitation arising from their fundamentally different construction. Second, our method achieves the fastest excess risk rate than any other existing methods, and matches the optimal excess risk rate. Specifically, under the standard eigenvalue assumption—which capture the complexity of the hypothesis space—we derive excess risk bounds for the proposed algorithms and evaluate them against existing methods. We further establish minimax lower bounds for differentially private kernel ERM, and compare them with our upper bounds. The proposed random projection method yields better rates than the existing methods, and attains the minimax-optimal rates. This result highlights that the choice of dimension reduction technique is crucial to the performance of differentially private kernel ERM. Furthermore, we establish refined excess risk upper bounds and minimax lower bounds under specific structural assumptions, such as the source condition and local strong convexity. Under these favorable conditions, the proposed method achieves faster excess risk rates than existing approaches. Notably, we demonstrate that the embedding property significantly influences the minimax lower bound for differentially private kernel learning, a phenomenon absent in non-private settings. While random projections have previously been used in privacy-preserving linear classification and optimization \citep{rp_svm_linear,arora2022differentially}, their role in achieving optimality for differentially private kernel methods has not been explored. This work provides the first result demonstrating that random projection can yield minimax-optimal performance in the differentially private kernel learning setting. Our findings show that the trade-off between privacy and utility in kernel methods can be optimally balanced through random projection, offering a principled approach to designing private algorithms that are both theoretically grounded and practically scalable.

Our analysis also leads to several byproduct results of independent interest. First, we obtain improved excess risk bounds for existing differentially private kernel ERM algorithms. Second, we propose a novel objective perturbation algorithm for differentially private linear ERM that achieves \emph{dimension-free} excess risk bounds. Previously, such guarantees were attainable only through noisy gradient descent methods \citep{rp_svm_linear,arora2022differentially}. Unlike gradient-based approaches, our method requires fewer hyperparameters, thereby reducing the privacy overhead associated with private hyperparameter tuning and offering a practical alternative for differentially private learning. Related materials are deferred in Appendix~\ref{sec:linear}.

The remainder of the paper is organized as follows.
Section~\ref{sec:problem} formulates the differentially private kernel ERM problem and introduces key definitions regarding differential privacy and kernel models. Section~\ref{sec:alg} proposes differentially private kernel ERM algorithms based on random projection and discusses their privacy guarantees. Section~\ref{subsec:rp} begins by extending random projection from Euclidean spaces to RKHS. Section~\ref{subsec:rp_alg} presents two algorithms: one for the squared loss (Algorithm~\ref{alg:dprp_reg}) and another for the Lipschitz-smooth loss (Algorithm~\ref{alg:dprp_cls}). The respective privacy guarantees are stated in Theorems~\ref{thm:dprp_reg_priv} and \ref{thm:dprp_cls_priv}.

Section~\ref{sec:general} provides the excess risk analysis for both the proposed algorithms and existing differentially private kernel ERM methods for squared loss. Sections~\ref{subsec:alg_reg} and~\ref{subsec:error_rff} derive the excess risk bounds for the proposed and existing methods, respectively, and establish the minimax lower bound for differentially private kernel regression. This shows that the proposed algorithm achieves the minimax-optimal rate, and is faster than the RFF-based algorithm. Section~\ref{sec:exp} presents empirical results validating the optimality of our proposed algorithm for kernel learning with squared loss. We compare its performance against an RFF-based baseline on both simulated and real-world datasets.
Section~\ref{subsec:lipschitz} then turns to the case of general Lipshitz-smooth loss functions. As in Section~\ref{sec:general}, we present the excess risk bounds of our method under the standard assumptions.
Finally, in Section~\ref{sec:faster}, we derive excess risk bounds for both the proposed and existing methods, and establish the minimax lower bound for differentially private kernel ERM under favorable assumptions.
All omitted proofs and supporting technical lemmas are provided in the Appendix. 

\subsection{Notation}
Let $X$ be a random variable with marginal distribution denoted by $P_X$. We define $\mathcal{X}\subset\mathbb{R}^d$ and $\mathcal{Y} \subset \mathbb{R}$ as measurable spaces from which the input and output data are drawn, respectively. Let $k:\mathcal{X} \times \mathcal{X} \rightarrow \mathbb{R}_{\geq 0}$ be a measurable kernel—that is, a bivariate, symmetric, and semi-positive definite function. The RKHS associated with $k$ is denoted by  $\mathcal{H}_k$, which is a subspace of $L^2(P_X)$. Let $l:\mathbb{R}^2 \to \mathbb{R}$ represent the loss function. For any normed space $V$, we write $\lVert\cdot\rVert_V$ for its norm, omitting the subscript when $V = L^2(P_X)$. Also, we denote the diameter of $V$ with respect to its norm as $\lVert V\rVert$. We use $\lVert\cdot\rVert_2$ to denote the standard $\ell_2$ norm on Euclidean space. We denote a Gaussian process with mean 0 and covariance $k$ as $\mathcal{GP}(0, k)$. For an operator or square matrix $A$, we define $A_\lambda \coloneqq  A + \lambda I$, where $I$ is the identity operator or matrix. Given a prediction function $f$, its risk is defined as $\mathcal{E}(f) \coloneqq  \mathbb{E}\left[l(Y, f(X))\right]$ under the joint data distribution and specified loss function. We use the notation $\widetilde{O}$ to denote asymptotic bounds that suppress logarithmic factors. Lastly, we write $a_n\lesssim b_n$ if there exists a constant $c>0$, independent of $n$, such that $a_n\leq c b_n$ and $a_n \asymp b_n$ if $a_n\lesssim b_n$ and $b_n\lesssim a_n$. Finally, given a sequence $a_1,\ldots,a_m$, we use the shorthand notation $a_{1:m}$.

\subsection{Related works} Early algorithmic frameworks to differentially private kernel ERM were introduced by \cite{chaudhuri2011differentially} and \cite{hall2013differential}. The former proposed a method based on RFF mapping, which reduces kernel learning to a linear one in a randomized feature space, followed by a differentially private linear learner. In contrast, the latter work proposed a functional perturbation approach, in which privacy is enforced by injecting noise into the learned function in an RKHS. Although the framework of \citet{chaudhuri2011differentially} is designed for general bounded random feature mappings, both their excess risk analysis and typical implementations have focused specifically on RFF. These two methods—random Fourier feature mapping and functional perturbation—have become the main ingredients in the design of private algorithms for a variety of kernel-based learning tasks beyond kernel ERM. The RFF map approach has been employed in private algorithms for mean embedding discrepancy estimation \citep{balog2018differentially,harder2021dp}, two-sample testing \citep{raj2019differentially}, support vector machines (SVMs) \citep{rubinstein2009learning,bassily2022differentially}, multi-class SVMs \citep{park2023efficient}, ridge regression \citep{wang2024differentially}, and fast kernel density estimation \citep{wagner2023fast}. Functional perturbation has similarly been applied to private variants of Gaussian processes \citep{smith2018differentially}, sparse Gaussian processes \citep{smith2021differentially}, two-sample testing \citep{hall2013differential}, Q-learning \citep{wang2019privacy}, meta-learned regression \citep{raisa2024noise}, and adaptive distributed classification \citep{auddy2024minimax}.

While these methods have shown empirical promise, their theoretical guarantees remain relatively limited. Several studies have established excess risk bounds, primarily relying on RFF mappings and functional perturbation \citep{chaudhuri2011differentially,hall2013differential,jain2013differentially,bassily2022differentially, wang2024differentially}. However, these results do not provide an optimality analysis, and some of the bounds are demonstrated to be suboptimal in this work. For instance, Theorem 25 in \cite{chaudhuri2011differentially} establishes a privacy cost of order $\widetilde{O}((n\epsilon)^{-1/4})$ under convex Lipschitz-smooth loss functions using the RFF method. In contrast, our analysis shows that the achievable privacy cost using the RFF method is in fact $\widetilde{O}((n\epsilon)^{-2/3})$, and can be further improved under local strong convexity and appropriate source conditions (Theorem~\ref{thm:dprff_cls_rate}). Similarly, \cite{hall2013differential} analyzed a statistical error bound under a fixed hyperparameter $\lambda$, thereby overlooking the effects of proper tuning. We provide a refined bound under optimally selected $\lambda$ (Theorem~\ref{thm:dphall_cls_rate}). Finally, \cite{wang2024differentially} studied differentially private kernel ridge regression under the same eigenvalue and source conditions considered in our work, but their excess risk bound is suboptimal compared to ours (Theorem~\ref{thm:dprff_reg_rate}).

Next, we review the kernel approximation literature and situate our random projection method within the context of existing techniques. Kernel approximation methods have been widely studied to address the computational challenges of kernel learning. Prominent approaches include RFF \citep{rahimi2007random}, Nyström methods \citep{original_Nystrom}, and random projection methods \citep{LOPEZSANCHEZ2018130,BalcanBlumVempala2006KernelsAsFeatures}. These techniques transform kernel learning problems into low-dimensional linear learning problems. That is, they construct a feature map that embeds the data into an $M(<n)$-dimensional vector space and then apply linear learning methods to the transformed data. This greatly reduces the computational complexity of kernel methods. For example, solving kernel ridge regression involves the calculation of the inverse of an $n\times n$ kernel matrix, consuming $O(n^3)$ time complexity, where $n$ is the number of samples. In contrast, kernel approximation methods reduce kernel ridge regression to linear ridge regression for $M$-dimensional data, resulting $O(nM^2)$ time complexity.

Among these approaches, RFF and Nyström methods are the most popular ones. RFF constructs feature maps based on the Fourier spectral density of the kernel, while Nyström methods build feature representations by subsampling columns of the kernel matrix. These approaches have become standard in the literature and are supported by extensive theoretical analyses \citep{Nystrom_generalization,randomfeature,sketching,Li2021unified,JunhongLin0,falkon,SGD_RFF}. However, they differ fundamentally from the random projection-based approach. This line of research is motivated by classical random projection in Euclidean space, which preserves data geometry while reducing dimensionality. Existing works explore different ways to extend the concept from Euclidean space to the RKHS setting. For instance, \citet{BalcanBlumVempala2006KernelsAsFeatures} developed a random projection that preserves the margin of the data distribution within the RKHS, and \citet{LOPEZSANCHEZ2018130} proposed a method specifically for the RKHS of homogeneous polynomial kernels. Our approach offers two distinct advantages over these existing methods. First, our framework is more versatile; while the method in \citet{LOPEZSANCHEZ2018130} is restricted to homogeneous kernels, ours is applicable to general kernels. Furthermore, unlike \citet{BalcanBlumVempala2006KernelsAsFeatures}, which requires access to the underlying data distribution, our projection operates independently of such distributional knowledge. Second, we provide rigorous theoretical guarantees for the excess risk of kernel learning via random projection. Such analyses have largely been absent in prior random projection–based approaches, especially when compared with the extensive theoretical results available for RFF and Nyström methods.

\section{Problem formulation}\label{sec:problem}
In this section, we present the theoretical assumptions of the kernel model for excess risk analysis and introduce basic concepts regarding differential private learning.
\subsection{Kernel method}\label{subsec:model}

Let $k$ be a positive definite kernel $k$ such that $\sup_x k(x,x)\leq\kappa^2$ for some $\kappa>0$. For a given loss function $l$, the goal of kernel-based learning is to identify the target function $f_{\mathcal{H}_k} \in \mathcal{H}_k$ that minimizes the expected loss\footnote{We assume the existence of $f_{\mathcal{H}_k}$, which is a common assumption in kernel learning theory \citep{caponnetto2007optimal,randomfeature}}:
\begin{equation*} f_{\mathcal{H}_k} \coloneqq  \argmin_{f \in \mathcal{H}_k} \mathbb{E}_{P_{X,Y}} \left[ l(Y, f(X)) \right]. \end{equation*}
In this work, we employ regularized kernel ERM \citep{scholkopf2018learning}. Given an i.i.d. sample $\{(x_1,y_1), \ldots, (x_n,y_n)\}$ from $P_{X,Y}$, the regularized estimator is defined as 
\begin{equation*} \hat{f}_\lambda\coloneqq \argmin_{f \in \mathcal{H}_k} \frac{1}{n} \sum_{i=1}^n l(y_i, f(x_i)) + \frac{\lambda}{2} \lVert f \rVert_{\mathcal{H}_k}^2, \end{equation*}
 where $\lambda$ is a penalty parameter. Our objective is to find $f_{\mathcal H_k}$ in a privacy-preserving manner. 

We consider two types of loss functions in our analysis: (i) the squared loss, defined as $l(y, \hat{y}) \coloneqq  (y - \hat{y})^2$, and (ii) a general class of convex, bounded loss functions that satisfy Lipschitz continuity and smoothness conditions, as formalized below. Lipschitz continuity and smoothness are standard assumptions in differentially private learning, as they are essential for delivering privacy guarantees \citep{chaudhuri2011differentially,kifer2012private}. 
\begin{assumption}[Lipschitz and smooth conditions]\label{assump:lip}
    The loss function $l: \mathbb{R}^2 \rightarrow \mathbb{R}$ is convex and twice differentiable in its second argument. Also, there exist constants $c_0 ,c_1,c_2 > 0$ such that $|l(y,0)|$ is bounded by $c_0$,
    \begin{equation*}
        |l(y,z)-l(y,z^\prime)|\leq c_1|z-z^\prime|,
    \end{equation*}
    and
    \begin{equation*}
        l(y,z)\leq l(y,z^\prime)+l_z(y,z^\prime)(z-z^\prime)+\frac{c_2}{2}( z-z^\prime)^2,
    \end{equation*}
    for all $y,z,z^\prime\in\mathbb{R}$, where $l_z(y, z’)$ denotes the partial derivative of $l$ with respect to its second argument evaluated at $(y, z’)$.
\end{assumption}

Our excess risk analysis requires different distributional assumptions on the response variable $Y$, depending on the loss function. For Lipschitz-smooth loss, no additional assumptions on $Y$ are required, while for the squared loss, we assume the conditional distribution of the response variable $Y|X$ is light-tailed.
\begin{assumption}[Bernstein condition]\label{assump:bern}
For some constants $B, \sigma > 0$ and all integers $p \geq 2$,
\[
\mathbb{E}\left[(Y-f_{\mathcal{H}_k}(X))^p|X=x\right]\leq\frac{1}{2}p!B^{p-2}\sigma^2
\]
for all $x\in\mathcal{X}$.
\end{assumption}

We now state the eigenvalue assumption, a standard condition in kernel learning theory \citep{caponnetto2007optimal,Nystrom_generalization,falkon,randomfeature}. The assumption characterizes the complexity of the hypothesis space $\mathcal{H}_k$. To this end, we define the integral operator $L : L^2(P_X) \rightarrow L^2(P_X)$ as
\begin{equation*}
    L(f)(x^\prime)=\int_{\mathcal{X}}k(x,x^\prime)f(x)dP_X(x).
\end{equation*}
The eigenvalue assumption quantifies the complexity of the hypothesis space through the spectral decay of the operator $L$. For $l\geq 1$, let $\mu_l$ be the $l$th eigenvalue of $L$.
\begin{assumption}[Eigenvalue assumption]\label{assump:cap}
   There exists $0<a<b$ and $\gamma\in [0,1)$ such that $\mu_l\in [al^{-1/\gamma},b^{-1/\gamma}]$ for all $l\geq 1$.
\end{assumption}
The decay rate $\gamma$ characterizes the polynomial decay of eigenvalues of $L$. The parameter $\gamma$ can be interpreted as a measure of complexity of the hypothesis space $\mathcal{H}_k$ \citep{caponnetto2007optimal}. Smaller values of $\gamma$ correspond to more restricted spaces, with $\gamma=0$ representing the finite-dimensional case. We note that the excess risk upper bounds presented in subsequent sections require only the upper bound of the eigenvalues, $\mu_l\leq bl^{-1/\gamma}$. This condition is satisfied in various practical settings. For instance, if the given kernel is Gaussian and $P_X$ is sub-Gaussian, Assumption~\ref{assump:cap} holds with $\gamma=0$. Additionally, if the density of $P_X$ is $O((1+\lVert x\rVert_2)^{-\alpha})$ for some $\alpha>d$, the assumption holds for any $\gamma>\alpha^{-1}$. Similar results have been established for other widely used kernels, including Matérn, Student, and sinc kernels \citep[see Corollaries 27--29]{HarchaouiBachMoulines2008KFDA}. We present the assumption in its current stringent form as it is essential for the minimax lower bound analysis.
\subsection{Privacy}\label{subsec:DP}
To ensure privacy, we require the learning algorithm to satisfy differential privacy, which guarantees that the output distribution of the algorithm remains nearly indistinguishable when a single individual’s data is modified. Specifically, we employ the notion of $(\epsilon, \delta)$-differential privacy. Let $\mathcal{M}: \mathcal{X}^n \rightarrow \mathcal{P}(\mathcal{Z})$ be a randomized algorithm that maps datasets of size $n$ to probability distributions over a measurable output space $\mathcal{Z}$. Two datasets $D$ and $D'$ are said to be \emph{neighboring} (denoted $D \sim D'$) if they differ in exactly one data point.

\begin{definition}[$(\epsilon,\delta)$-DP, \cite{dwork2006our}]\label{def:ed-DP}
    A randomized algorithm $\mathcal{M}$ is said to satisfy $(\epsilon,\delta)$-DP if the following holds:
\begin{equation*}
    \sup_{D,D'\subset\mathcal{X}^n, D\sim D'}\sup_{A\subset\mathcal{Z}}\mathbb{P}\left(\mathcal{M}(D)\in A\right)-e^\epsilon\mathbb{P}\left(\mathcal{M}(D')\in A\right)\leq \delta.
\end{equation*}
\end{definition}
Although this definition may initially seem unintuitive, a useful interpretation is provided by the hypothesis testing perspective introduced by \citet{dong2019gaussian}. Letting $Z$ denote the output of the algorithm, differential privacy can be understood as a bound on the ability to distinguish between neighboring datasets via hypothesis testing:
\begin{equation*}
    H_0:Z\sim\mathcal{M}(D)\quad\text{ vs. }\quad H_1:Z\sim \mathcal{M}(D^\prime).
\end{equation*}
Under this view, the $(\epsilon,\delta)$-DP constrains the trade-off between type I and type II errors, limiting the power to detect the presence and absence of an individual's data. 

To satisfy the $(\epsilon, \delta)$-DP criterion in Definition~\ref{def:ed-DP}, a widely used method is to add Gaussian noise to the output of a deterministic algorithm. The magnitude of this noise is scaled to the algorithm's $\ell_2$ sensitivity, denoted by $\Delta$, which measures the maximum change in the output when a single data point in the input dataset is altered. This approach is formalized by the following result:
\begin{proposition}[Gaussian mechanism, \cite{nikolov2013geometry}]\label{prop:gaussian}
    Let $\mathcal{A}:\mathcal{X}^n\rightarrow\mathbb{R}^d$ be a deterministic algorithm satisfying $\sup_{D,D'\subset\mathcal{X}^n,D\sim D'}\lVert \mathcal{A}(D)-\mathcal{A}(D')\rVert_2\leq\Delta$. Then the randomized algorithm defined by $\mathcal{M}(D)\coloneqq \mathcal{A}(D)+\frac{\Delta\left(1+\sqrt{2\log\frac{1}{\delta}}\right)}{\epsilon}\varepsilon$, where $\varepsilon$ follows the standard normal distribution on $\mathbb{R}^d$, satisfies $(\epsilon,\delta)$-DP.
\end{proposition}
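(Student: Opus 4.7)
The plan is to bound the privacy loss random variable between the two Gaussian output laws induced by neighboring datasets and invoke the standard reduction that this tail bound implies $(\epsilon,\delta)$-DP. Fix neighboring $D\sim D'$ and write $\mu_0\coloneqq\mathcal{A}(D)$, $\mu_1\coloneqq\mathcal{A}(D')$, so that $\Delta_0\coloneqq\lVert\mu_0-\mu_1\rVert_2\leq\Delta$ by the sensitivity hypothesis, while $\mathcal{M}(D)\sim p_0=N(\mu_0,\sigma^2 I_d)$ and $\mathcal{M}(D')\sim p_1=N(\mu_1,\sigma^2 I_d)$ with $\sigma=\Delta(1+c)/\epsilon$ and $c\coloneqq\sqrt{2\log(1/\delta)}$. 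Define the privacy loss $L(z)\coloneqq\log(p_0(z)/p_1(z))$. For any measurable $A$, splitting the integral of $p_0-e^\epsilon p_1$ over $A$ according to whether $L>\epsilon$ gives
\begin{equation*}
\mathbb{P}(Z_0\in A)-e^\epsilon\mathbb{P}(Z_1\in A)\leq\int_{\{L>\epsilon\}}(p_0-e^\epsilon p_1)_+\,dz\leq\mathbb{P}(L(Z_0)>\epsilon),
\end{equation*}
so by symmetry in $(D,D')$ it suffices to show $\mathbb{P}(L(Z_0)>\epsilon)\leq\delta$ in the worst case $\Delta_0=\Delta$.

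The second step is to compute the law of $L(Z_0)$ explicitly. A direct calculation with the Gaussian densities yields $L(z)=\langle z-(\mu_0+\mu_1)/2,\,\mu_0-\mu_1\rangle/\sigma^2$, which under $Z_0\sim p_0$ is univariate Gaussian with mean $\Delta_0^2/(2\sigma^2)$ and variance $\Delta_0^2/\sigma^2$. After standardizing and applying the Gaussian tail bound $\mathbb{P}(N(0,1)>t)\leq e^{-t^2/2}$ for $t\geq 0$, I obtain
\begin{equation*}
\mathbb{P}(L(Z_0)>\epsilon)\leq\exp\left(-\frac{1}{2}\left(\frac{\epsilon\sigma}{\Delta_0}-\frac{\Delta_0}{2\sigma}\right)^2\right),
\end{equation*}
provided the inner expression is positive. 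The function $\Delta_0\mapsto\epsilon\sigma/\Delta_0-\Delta_0/(2\sigma)$ is strictly decreasing in $\Delta_0$, so the worst case is indeed $\Delta_0=\Delta$.

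In the final step, substituting $\sigma=\Delta(1+c)/\epsilon$ at $\Delta_0=\Delta$ reduces the inner expression to $(1+c)-\epsilon/(2(1+c))$, which is at least $c$ whenever $\epsilon\leq 2(1+c)$. In that case the exponent is at most $-c^2/2=\log\delta$, so $\mathbb{P}(L(Z_0)>\epsilon)\leq\delta$, completing the proof.

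I expect the main delicate point to lie in constant tracking at this last stage: the clean inequality $(1+c)-\epsilon/(2(1+c))\geq c$ covers every DP regime of practical interest, but in a pathologically large-$\epsilon$ regime one would replace the simple additive constant $1+c$ by the positive root of $s^2+2cs-2\epsilon=0$ to recover the sharp form $\sigma\geq \Delta(c+\sqrt{c^2+2\epsilon})/(2\epsilon)$. The other two ingredients---the integral split producing the PLV reduction and the closed-form Gaussian log-density calculation---are routine and well known.
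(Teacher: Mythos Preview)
The paper does not supply its own proof of this proposition; it is quoted as a known result with a citation to \cite{nikolov2013geometry} and then used as a black box in the privacy analyses of Theorems~\ref{thm:dprp_reg_priv} and~\ref{thm:dprff_reg_priv}. Your argument is the standard privacy-loss-random-variable proof of the Gaussian mechanism, and it is correct: the integral split reducing $(\epsilon,\delta)$-DP to a tail bound on $L(Z_0)$, the closed-form computation of that tail as a univariate Gaussian, and the Chernoff tail bound are all valid, and your constant tracking at the end is accurate. You are also right to flag that the clean bound $(1+c)-\epsilon/(2(1+c))\geq c$ requires $\epsilon\leq 2(1+\sqrt{2\log(1/\delta)})$; the proposition as stated in the paper silently assumes a moderate-$\epsilon$ regime (which is all that is ever used downstream), and your remark about the sharp quadratic root for large $\epsilon$ correctly identifies how one would remove that restriction.
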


Many statistical procedures, including the DP algorithms we propose, involve multiple stages of accessing the data or performing data-dependent computations. The following result formalizes the privacy guarantees for such composed procedures.

\begin{proposition}[Composition theorem, \cite{dwork2006our}]\label{thm:comp}
    Let $\mathcal{M}_1:\mathcal{X}^n\rightarrow\mathcal{P}(\mathcal{Z})$ and $\mathcal{M}_2:\mathcal{X}^n\times\mathcal{Z}\rightarrow\mathcal{P}(\mathcal{W})$ be randomized algorithms. Define $\mathcal{M}:\mathcal{X}^n\rightarrow\mathcal{W}$ by $\mathcal{M}(D)\coloneqq \mathcal{M}_2(D,\mathcal{M}_1(D))$ for $D\subset\mathcal{X}^n$. Then $\mathcal{M}$ is $(\epsilon_1+\epsilon_2,\delta_1+\delta_2)$-DP if $M_1$ is $(\epsilon_1,\delta_1)$-DP and $M_2(\cdot,z)$ is $(\epsilon_2,\delta_2)$-DP for every $z\in\mathcal{Z}$.
\end{proposition}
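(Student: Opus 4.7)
The plan is to verify the $(\epsilon_1 + \epsilon_2, \delta_1 + \delta_2)$-DP condition directly by bounding $\mathbb{P}(\mathcal{M}(D) \in A)$ in terms of $\mathbb{P}(\mathcal{M}(D') \in A)$ for an arbitrary pair of neighboring datasets $D \sim D'$ and measurable event $A \subset \mathcal{W}$. I would first condition on the intermediate output $z$ of $\mathcal{M}_1$ via the tower rule:
\[
\mathbb{P}(\mathcal{M}(D) \in A) = \int_{\mathcal{Z}} \phi(z) \, d\mu_D(z), \qquad \phi(z) \coloneqq \mathbb{P}(\mathcal{M}_2(D, z) \in A),
\]
where $\mu_D$ denotes the law of $\mathcal{M}_1(D)$, and analogously define $\phi'(z) \coloneqq \mathbb{P}(\mathcal{M}_2(D', z) \in A)$ and $\mu_{D'}$. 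The problem then reduces to comparing the mixed integral $\int \phi \, d\mu_D$ with $\int \phi' \, d\mu_{D'}$.

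Next, I would apply the two privacy guarantees in sequence. Since $\mathcal{M}_2(\cdot, z)$ is $(\epsilon_2, \delta_2)$-DP for every fixed $z$, we obtain the pointwise bound $\phi(z) \leq e^{\epsilon_2} \phi'(z) + \delta_2$; integrating against $\mu_D$ yields $\mathbb{P}(\mathcal{M}(D) \in A) \leq e^{\epsilon_2} \int \phi'(z) \, d\mu_D(z) + \delta_2$. Then, since $\phi'$ is a $[0,1]$-valued measurable function on $\mathcal{Z}$, I would invoke the $(\epsilon_1, \delta_1)$-DP of $\mathcal{M}_1$ through the layer-cake identity $\int \phi' \, d\mu = \int_0^1 \mu(\{\phi' > t\}) \, dt$ applied to each superlevel set, giving $\int \phi'(z) \, d\mu_D(z) \leq e^{\epsilon_1} \int \phi'(z) \, d\mu_{D'}(z) + \delta_1$. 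Chaining the two bounds yields
\[
\mathbb{P}(\mathcal{M}(D) \in A) \leq e^{\epsilon_1 + \epsilon_2} \, \mathbb{P}(\mathcal{M}(D') \in A) + e^{\epsilon_2} \delta_1 + \delta_2.
\]

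The main obstacle is sharpening the residual $e^{\epsilon_2}\delta_1 + \delta_2$ to the tight $\delta_1 + \delta_2$ stated in the proposition. The standard remedy is the coupling (or ``simulator'') characterization of approximate DP: every $(\epsilon, \delta)$-DP mechanism admits a representation in which, with probability at least $1 - \delta$, its output and that of a neighboring dataset have density ratio uniformly bounded by $e^\epsilon$, while the remaining probability-$\delta$ event may behave arbitrarily. Decomposing each of $\mathcal{M}_1$ and $\mathcal{M}_2(\cdot, z)$ into a pure-$\epsilon$ good component and a bad event of mass $\delta_1$ and $\delta_2$ respectively, pure $\epsilon$-DP composes with exact additivity in $\epsilon$ and no $\delta$-residual, while a union bound over the two bad events contributes exactly $\delta_1 + \delta_2$. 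This coupling-based decomposition is the technically delicate---but standard---part of the argument and is what removes the stray $e^{\epsilon_2}$ factor from the naive bound.
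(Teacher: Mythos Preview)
The paper does not provide its own proof of this proposition; it is stated as a cited result from \cite{dwork2006our} and used as a black box. There is therefore nothing in the paper to compare against.

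On the merits, your sketch is essentially the standard argument and is correct in outline. The naive tower-rule computation indeed yields the residual $e^{\epsilon_2}\delta_1 + \delta_2$, and sharpening to $\delta_1 + \delta_2$ does require the refinement you describe. One small caution: your phrasing of the ``coupling/simulator characterization'' is slightly loose. The precise device is not a global simulator mechanism but a density truncation for each fixed pair $(D,D')$: writing $p_i, q_i$ for the relevant densities under $D, D'$, one sets $p_i' \coloneqq \min(p_i, e^{\epsilon_i} q_i)$, so that $p_i' \le e^{\epsilon_i} q_i$ pointwise and $\int (p_i - p_i') \le \delta_i$. Splitting $p_1 p_2 = p_1' p_2' + (\text{remainder})$ and bounding the remainder by $\delta_1 + \delta_2$ via two successive peels---each time using that the inner factor is a sub-probability, hence bounded by $1$---gives exactly $\delta_1 + \delta_2$ without any stray $e^{\epsilon}$ factor. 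This is the argument in the original reference and is what your last paragraph is gesturing at.
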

Finally, to assess the optimality of the proposed method within the minimax framework, we define the differentially private minimax risk as follows.
\begin{definition}[$(\epsilon,\delta)$-DP minimax risk]
    For a given family of data distributions $\mathcal{P}$, we define the minimax risk of differentially private kernel learning as
\begin{equation*}
    \mathfrak{R}(\mathcal{P},\mathcal{E},\epsilon,\delta)\coloneqq \inf_{M\in\mathcal{M}_{\epsilon,\delta}}\sup_{P\in\mathcal{P}}\left(\mathbb{E}_{P}[\mathcal{E}(\widetilde{f})]-\min_{f\in\mathcal{H}_k}\mathcal{E}(f)\right),
\end{equation*}
where $\mathcal{M}_{\epsilon,\delta}$ denotes the class of $(\epsilon,\delta)$-DP mechanisms that return a function estimator $\widetilde{f}$.
\end{definition}
The $(\epsilon,\delta)$-DP minimax risk characterizes the fundamental lower limit of the excess risk for any differentially private kernel learning algorithm under the worst-case scenario over $\mathcal{P}$. This quantity captures the inherent trade-off between privacy and utility in differentially private kernel learning.
\section{Random projection-based DP kernel ERM algorithms}\label{sec:alg}
In this section, we define random projection in an RKHS and propose differentially private kernel ERM algorithms for both squared loss and general Lipschitz-smooth loss functions, along with their respective privacy guarantees.


\subsection{Random projection on RKHS}\label{subsec:rp}

We construct the random projection map via centered Gaussian processes. To this end, we begin by recalling the definition of a Gaussian process.
\begin{definition}[Gaussian process]
Given a positive semi-definite covariance function $C:\mathcal{X} \times \mathcal{X} \rightarrow \mathbb{R}$, a mean function $m:\mathcal{X} \rightarrow \mathbb{R}$, and a probability space $\Omega$, a measurable function $\psi:\mathcal{X} \times \Omega \rightarrow \mathbb{R}$ is said to be a Gaussian process with mean function $m$ and covariance function $C$ if, for an arbitrary positive integer $N$, and $t_1, \ldots, t_N \in \mathcal{X}$, the random vector $(\psi(t_1,\omega), \ldots, \psi(t_N,\omega))$ follows a multivariate normal distribution:
\begin{align*}
    N\left((m(t_1),\ldots,m(t_N))^\top,\{ C(t_i,t_j)\}_{N\times N}\right).
\end{align*}
If $m\equiv0$, then $\psi$ is called a centered Gaussian process.
\end{definition}
Conceptually, a Gaussian process generalizes the Gaussian distribution to function spaces. While a Gaussian distribution generates random scalars or vectors, a Gaussian process produces entire functions as its realizations. In this sense, a sample drawn from a Gaussian process is not a single point or a vector, but a real-valued function defined on $\mathcal{X}$. For a given kernel $k$, we assume the Gaussian process is almost surely bounded.
\begin{assumption}\label{assump:kernel}
    If $h\sim\mathcal{GP}(0,k)$, then $|\sup_x h(x,\omega)|$ is almost surely bounded. That is, $\mathbb{P}(|\sup_x h(x,\omega)|<\infty)=1$. 
\end{assumption}
Assumption~\ref{assump:kernel} holds in many practical settings, including widely used kernels such as inner product and Gaussian kernels on bounded domains. Notably, the assumption also holds for certain non-smooth kernels like the Laplace kernel, even when $\mathcal{X}$ is unbounded.

Given a projection dimension $M > 0$, we define the random projection from the RKHS $\mathcal{H}_k$ to $\mathbb{R}^M$ as follows:
\begin{definition}[Random projection on RKHS]\label{def:rp}
Let $h(\cdot,\omega_1),\ldots,h(\cdot,\omega_M)$ be i.i.d. random functions drawn from a centered Gaussian process with covariance function $k$. The random projection on the RKHS $\mathcal{H}_k$ is defined as the map $h_M:\mathcal{X}\rightarrow\mathbb{R}^M$ given by
    \begin{eqnarray*}
        h_M(x)\coloneqq \frac{1}{\sqrt{M}}\left(h(x,\omega_1),\ldots,h(x,\omega_M)\right)^\top.
    \end{eqnarray*}
\end{definition}
For a given input $x\in\mathcal{X}$, our random projection first collects the $M$ values of i.i.d. random functions, $h(\cdot,\omega_1),\ldots,h(\cdot,\omega_M)$, at $x$. Aggregating these numbers into $M$-dimensional vector, we obtain the random projection $h_M(x)$.

\begin{figure}[htp]
    \centering
    \includegraphics[width=0.9\textwidth]{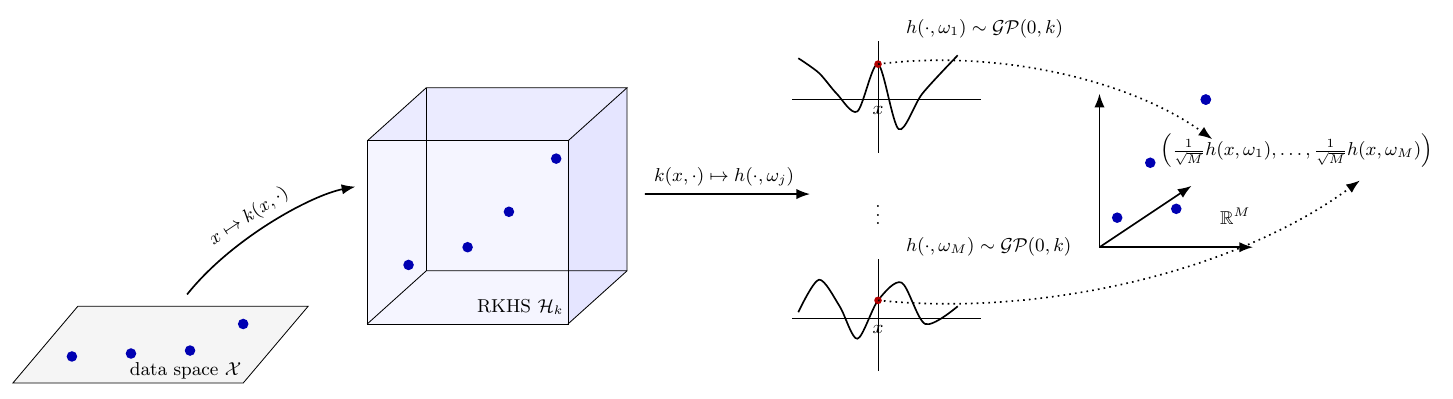}
    \caption{Random projection on $\mathcal{H}_k$. The random projection first maps the input $x \in \mathcal{X}$ to its feature representation $k(x, \cdot)$ in the RKHS $\mathcal{H}_k$. This is followed by a projection onto an $M$-dimensional space $\mathbb{R}^M$, where each coordinate of the projected vector corresponds to a realization of $M$ i.i.d. Gaussian processes $h(\cdot, \omega_j) \sim \mathcal{GP}(0, k)$ evaluated at $x$.}
    \label{fig:rp_example}
\end{figure}
Definition~\ref{def:rp} is directly inspired by the classical random projection in Euclidean space, formally defined below.
\begin{definition}[Random projection in Euclidean space, \cite{achlioptas2001database}]\label{def:rp_euc}
   For $x\in\mathbb{R}^d$, the random projection of $x$ is defined as $\mathbf{R}x$, where $\mathbf{R} = \{ r_{ij}/\sqrt{M}\}_{M \times d}$ is a random matrix, where $r_{ij}$ are i.i.d. standard normal random variables.
\end{definition}In Euclidean space, random projection is performed by multiplying the data by a Gaussian random matrix. Analogously, our random projection on the RKHS can be viewed as applying an infinite-width random Gaussian matrix to the feature mapping $k(x,\cdot)$. Then each component of the projected vector corresponds to a realization of a Gaussian process evaluated at $x$, as illustrated in Fig.~\ref{fig:rp_example}. A detailed derivation of this correspondence is provided below.

Let $\{\phi_l\}_{l=1}^\infty$ and $\{\mu_l\}_{l=1}^\infty$ denote the eigenfunctions and eigenvalues, respectively, of the integral operator $L$. Then, $\{\sqrt{\mu_l}\phi_l\}_{l=1}^\infty$ forms an orthonormal basis for $\mathcal{H}_k$. To construct the random projection on $\mathcal{H}_k$, we first consider the feature mapping $k(x,\cdot)\in\mathcal{H}_k$ for a given $x\in \mathcal{X}$. Under the orthonormal basis $\{\sqrt{\mu_l}\phi_l\}_{l=1}^\infty$, the coordinate representation of $k(x,\cdot)$ is given by $(\sqrt{\mu_1}\phi_1(x),\sqrt{\mu_2}\phi_2(x),\ldots)^\top$. This follows from Mercer's theorem \citep{mercer1909,mercer}, which provides the spectral decomposition of the kernel function:
\begin{equation}\label{eq:mercer} k(x,y) = \sum_{l=1}^\infty \mu_l \phi_l(x) \phi_l(y). \end{equation}
As in Definition~\ref{def:rp_euc}, we can define the random projection by multiplying this coordinate representation by an infinite-width random Gaussian matrix. Consider a random matrix $\mathbf{R}=\{r_{jl}/\sqrt{M}\}_{M\times\infty}$, where $r_{jl}$ are i.i.d. standard normal random variables. Applying $\mathbf{R}$ to the coordinates of $k(x,\cdot)$ yields: \begin{equation}\label{eq:Karhunen}
    \mathbf{R}(k(x,\cdot))=\frac{1}{\sqrt{M}}\left(\sum_{l=1}^\infty r_{1l}\sqrt{\mu_l}\phi_l(x),\ldots,\sum_{l=1}^\infty r_{Ml}\sqrt{\mu_l}\phi_l(x)\right)^\top.
\end{equation}
Importantly, the expansion in \eqref{eq:Karhunen} corresponds to the Karhunen–Loève expansion of a centered Gaussian process with covariance function $k$. Specifically, for each $j=1,\ldots,M$, the $j$th component $\sum_{l=1}^\infty r_{jl}\sqrt{\mu_l}\phi_l(x)$ converges to the Gaussian process \citep{karhunen1947lineare}. 
Consequently, each component of $\mathbf{R}(k(x,\cdot))$ represents an i.i.d. random function $h(\cdot,\omega_j)$ as defined in Definition~\ref{def:rp}. This formulation establishes a natural bridge between matrix-based projections in Euclidean space and Gaussian process-based projections in $\mathcal{H}_k$. Notably, when $\mathcal{X} \subset \mathbb{R}^d$ and $k(x, y) = \langle x, y \rangle$, Definition~\ref{def:rp} recovers the classical Euclidean random projection.

\subsection{Algorithms and their privacy analysis}\label{subsec:rp_alg}
We propose two differentially private kernel ERM algorithms based on random projection: one for the squared loss function presented in Algorithm \ref{alg:dprp_reg} and the other for general Lipschitz-smooth loss functions presented in Algorithm~\ref{alg:dprp_cls}. Both algorithms solve the following finite-dimensional ERM problem under a differential privacy constraint:
\begin{equation}\label{eq:rp_reg}
\hat{\beta}_{M,\lambda}\coloneqq \argmin_{\beta\in\mathbb{R}^M}\frac{1}{n}\sum_{i=1}^n l(y_i,\beta^\top h_M(x_i))+\frac{\lambda}{2}\lVert\beta\rVert_2^2.
\end{equation}
Algorithm~\ref{alg:dprp_reg} is based on the sufficient statistics perturbation method for private linear regression proposed by \cite{wang2018revisiting}, while
Algorithm~\ref{alg:dprp_cls} is based on the objective perturbation method for $(\epsilon, \delta)$-DP by \cite{kifer2012private}. Following theorems establish the privacy guarantees for Algorithms~\ref{alg:dprp_reg} and~\ref{alg:dprp_cls}.
\begin{algorithm}
    \caption{Random projection-based DP kernel ridge regression}\label{alg:dprp_reg}
\begin{algorithmic}
  \Input:\textrm{  }Given data $\{(x_i,y_i)\}_{i=1}^n$, kernel $k$, kernel size $\kappa$, projection dimension $M$, penalty parameter $\lambda$, truncation level $T$, privacy parameters $\epsilon,\delta$.
  \EndInput

  \Output:\textrm{  }A differentially private estimator $\widetilde{f}$.
  \EndOutput
\begin{enumerate}
    \item Compute the sensitivities:

    $\Delta_1=2\kappa^2\left(1+2\sqrt{\frac{\log\frac{8}{\delta}}{M}}+\frac{2\log\frac{8}{\delta}}{M}\right)$ and $\Delta_2=2\kappa T\sqrt{1+2\sqrt{\frac{\log\frac{8}{\delta}}{M}}+\frac{2\log\frac{8}{\delta}}{M}}$.
    \item Process data:

    (i) Project $x_i$ to $z_i\coloneqq h_M(x_i)$ for $i=1,\ldots,n$, where $h_M$ is from Definition~\ref{def:rp}.

    (ii) Truncate responses $y_i$ to $[y_i]_T\coloneqq \min\{\max\{y_i,-T\},T\}$ for $i=1,\ldots,n$.
    \item Compute the sufficient statistics for linear regression on the projected data:

    $\hat{C}_{M}\coloneqq \frac{1}{n}\sum_{i=1}^nz_iz_i^\top$ and $\hat{u}_T\coloneqq \frac{1}{n}\sum_{i=1}^n[y_i]_Tz_i$.
    \item Privatize the sufficient statistics:

    $\widetilde{C}_{M}\coloneqq \hat{C}_{M}+\frac{2\Delta_1\left(1+\sqrt{2\log\frac{4}{\delta}}\right)}{n\epsilon}\left(\frac{\varepsilon_{M\times M}+\varepsilon_{M\times M}^\top}{2}\right)$ and $\widetilde{u}\coloneqq \hat{u}_T+\frac{2\Delta_2\left(1+\sqrt{2\log\frac{4}{\delta}}\right)}{n\epsilon}\varepsilon_{M}$ where $\varepsilon_{M}$ and $\varepsilon_{M\times M}$ are a $M$-dimensional random vector and $M\times M$ matrix, respectively, with i.i.d. standard normal entries.
    \item Compute $\widetilde{\beta}\coloneqq \widetilde{C}_{M,\lambda}^{-1}\widetilde{u}$, where $\widetilde{C}_{M,\lambda}\coloneqq \widetilde{C}_M+\lambda I$.
\end{enumerate}
 \Return $\widetilde{f}\coloneqq \widetilde{\beta}^\top h_M$.

\end{algorithmic}
\end{algorithm}
\begin{algorithm}
    \caption{Random projection-based DP kernel objective perturbation}\label{alg:dprp_cls}
\begin{algorithmic}
  \Input:\textrm{  }Given data $\{(x_i,y_i)\}_{i=1}^n$, kernel $k$, kernel size $\kappa$, projection dimension $M$, penalty parameter $\lambda$, Lipschitz parameter $c_1$, smoothness parameter $c_2$, privacy parameters $\epsilon,\delta$.
  \EndInput

  \Output:\textrm{  }A differentially private estimator $\widetilde{f}$.
  \EndOutput

1. Compute the sensitivity $\Delta_3=\kappa\sqrt{1+2\sqrt{\frac{\log\frac{4}{\delta}}{M}}+\frac{2\log\frac{4}{\delta}}{M}}$.

2. Obtain the random projection map $h_M$ in Definition \ref{def:rp}.

3. Generate perturbation vector $b\sim \mathcal{N}\left(0,\frac{4c_1^2\Delta_3^2\left(2\log\frac{4}{\delta}+\epsilon\right)}{\epsilon^2}I_M\right)$.

4. Solve $\widetilde{\beta}\coloneqq \argmin_{\beta\in\mathbb{R}^M}\frac{1}{n}\sum_{i=1}^n l(y_i,\beta^\top h_M(x_i))+\frac{\lambda_0}{2}\lVert\beta\rVert_2^2+\frac{b^\top \beta}{n}$ where $\lambda_0\coloneqq \max\Big\{\lambda,\frac{c_2\Delta_3^2}{n}\left(e^{\epsilon/4}-1\right)^{-1}\Big\}$.

 \Return $\widetilde{f}\coloneqq \widetilde{\beta}^\top h_M$.

\end{algorithmic}
\end{algorithm}
\begin{theorem}\label{thm:dprp_reg_priv}
    Under Assumption \ref{assump:kernel}, Algorithm \ref{alg:dprp_reg} is $(\epsilon,\delta)$-DP.
\end{theorem}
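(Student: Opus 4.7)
The plan is to decompose the mechanism into two Gaussian mechanisms—one for the matrix statistic $\hat{C}_M$ and one for the vector statistic $\hat{u}_T$—combine them via composition (Proposition~\ref{thm:comp}), and absorb the unboundedness of $z_i = h_M(x_i)$ into the $\delta$ budget using Lemma~\ref{lem:priv_bound}. Step 5 of Algorithm~\ref{alg:dprp_reg} accesses the data only through the released $\widetilde{C}_M$ and $\widetilde{u}$, so it is post-processing and consumes no additional privacy.

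I first fix neighboring datasets $D \sim D'$ differing in a single pair $(x,y) \leftrightarrow (x',y')$, and define the event $E$ that $\lVert h_M(x)\rVert_2^2$ and $\lVert h_M(x')\rVert_2^2$ are both at most $\kappa^2(1+2\sqrt{\log(8/\delta)/M}+2\log(8/\delta)/M)$. Lemma~\ref{lem:priv_bound} with $t=\delta/8$, together with a union bound over the two points, yields $\mathbb{P}(E^c)\leq \delta/4$; crucially, $E$ is measurable with respect to the data-independent randomness of $h_M$. On $E$, the Frobenius sensitivity of $\hat{C}_M$ is at most $\lVert zz^\top - z'(z')^\top\rVert_F/n \leq (\lVert z\rVert_2^2+\lVert z'\rVert_2^2)/n \leq \Delta_1/n$, and the truncation bound $|[y]_T|\leq T$ gives $\ell_2$ sensitivity of $\hat{u}_T$ at most $T(\lVert z\rVert_2+\lVert z'\rVert_2)/n \leq \Delta_2/n$. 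To reduce the symmetric matrix mechanism to an instance of Proposition~\ref{prop:gaussian}, I use the weighted half-vectorization $\mathrm{vech}_w(A)=(A_{11},\sqrt{2}A_{12},A_{22},\sqrt{2}A_{13},\sqrt{2}A_{23},A_{33},\ldots)^\top$, which is an isometry between $(\mathrm{Sym}_M,\lVert\cdot\rVert_F)$ and $\mathbb{R}^{M(M+1)/2}$ and maps $\tfrac{1}{2}(\varepsilon_{M\times M}+\varepsilon_{M\times M}^\top)$ to a vector of i.i.d.\ standard normals, so that the Frobenius-sensitivity Gaussian mechanism becomes a standard $\ell_2$-sensitivity Gaussian mechanism on $\mathbb{R}^{M(M+1)/2}$.

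Allocating $(\epsilon/2,\delta/4)$ to each of the two mechanisms, Proposition~\ref{prop:gaussian} reproduces precisely the noise scales $\tfrac{2\Delta_1(1+\sqrt{2\log(4/\delta)})}{n\epsilon}$ and $\tfrac{2\Delta_2(1+\sqrt{2\log(4/\delta)})}{n\epsilon}$ that appear in Algorithm~\ref{alg:dprp_reg}, so Proposition~\ref{thm:comp} yields $(\epsilon,\delta/2)$-DP conditional on $E$. Together with $\mathbb{P}(E^c)\leq \delta/4$, the standard reduction
\begin{equation*}
\mathbb{P}(\mathcal{M}(D)\in A) \leq \mathbb{P}(\mathcal{M}(D)\in A,\, E) + \mathbb{P}(E^c) \leq e^\epsilon\,\mathbb{P}(\mathcal{M}(D')\in A) + \tfrac{\delta}{2} + \tfrac{\delta}{4}
\end{equation*}
gives $(\epsilon, 3\delta/4)$-DP, which is stronger than the desired $(\epsilon,\delta)$-DP. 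The main obstacle I anticipate is the clean handling of the data-dependent good event $E$: it is defined in terms of the identities of the differing points $(x,x')$, so one must verify that conditioning on $E$ does not covertly leak information. The resolution is that the law of $h_M$ is independent of the dataset, so the $(\epsilon,\delta/2)$-indistinguishability derived under $E$ lifts to the joint law of $(h_M,\text{Gaussian noise})$ with only the additive penalty $\mathbb{P}(E^c)$, which is exactly the step that lets Lemma~\ref{lem:priv_bound} replace a (non-existent) uniform sup-norm bound on the Gaussian process.
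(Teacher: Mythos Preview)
Your proposal is correct and follows essentially the same route as the paper: bound the sensitivities of $\hat{C}_M$ and $\hat{u}_T$ on a high-probability $\omega_{1:M}$-event via Lemma~\ref{lem:priv_bound}, apply the Gaussian mechanism (Proposition~\ref{prop:gaussian}) to each statistic, and compose. The only cosmetic differences are that you handle the symmetric-matrix noise via the weighted half-vectorization isometry whereas the paper releases the unsymmetrized $\hat{C}_M + c\,\varepsilon_{M\times M}$ and treats symmetrization as post-processing, and you merge the two failure events into one so that $\mathbb{P}(E^c)$ is charged only once, yielding $(\epsilon,3\delta/4)$ rather than the paper's $(\epsilon,\delta)$.
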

Theorem~\ref{thm:dprp_reg_priv} can be proved by leveraging the composition property of differential privacy (Proposition~\ref{thm:comp}), provided that the release of statistics $\widetilde{u}$ and $\widetilde{C}_M$ is differentially private. A primary technical challenge arises from the fact that the randomly projected data $h_M(x)$ is not bounded, which implies that the $\ell_2$ sensitivities of these statistics are infinite. We address this issue by demonstrating that the projected data remains bounded with high probability. Similarly, we utilize this probabilistic boundedness in conjunction with the privacy guarantees for objective perturbation \citep[e.g.,][]{kifer2012private} to prove Theorem~\ref{thm:dprp_cls_priv}. The complete proofs of these theorems are provided in Appendix~\ref{sec:proof_dprp_reg_priv} and Appendix~\ref{sec:proof_dprp_cls_priv}, respectively.
\begin{theorem}\label{thm:dprp_cls_priv}
    Under Assumption \ref{assump:lip}, Algorithm~\ref{alg:dprp_cls} is $(\epsilon,\delta)$-DP.
\end{theorem}

\section{Excess risk analysis of DP kernel learning with squared loss}\label{sec:general}
In this section, we first establish the minimax rate to assess the optimality of differentially private kernel learning with squared loss. Specifically, we derive high-probability excess risk bounds for the proposed random projection-based method and the existing RFF-based method. Then we establish the minimax lower bound to show the optimality of the proposed method. The excess risk of a given prediction function $f$ is defined as follows:
\begin{align*}
    \mathcal{E}(f)-\min_{f\in\mathcal{H}_k}\mathcal{E}(f).
\end{align*}
\subsection{DP kernel ridge regression via random projection}\label{subsec:alg_reg}
We analyze the excess risk of the differentially private kernel ridge regression algorithm presented in Algorithm \ref{alg:dprp_reg}.  Under the eigenvalue assumption (Assumption~\ref{assump:cap}), Theorem~\ref{thm:dprp_reg_rate} provides a high-probability bound on the excess risk.

\begin{theorem}\label{thm:dprp_reg_rate_simple} Under Assumptions \ref{assump:bern}--\ref{assump:kernel}, for $t\in (0,e^{-1}]$, there exist $n_0>0$ and choices of $T$, $\lambda$, and $M$ such that, Algorithm \ref{alg:dprp_reg} achieves an excess risk of order $O\Big (\Big(n^{-\frac{1}{1+\gamma}}+(n\epsilon)^{-\frac{2}{2+\gamma}}\Big)\log^2\frac{1}{t}\Big)$ with probability at least $1-t$ if  $\epsilon\geq n^{-1}$ and $n\geq n_0$.
\end{theorem}
As $\epsilon\rightarrow\infty$, the excess risk bound becomes $O(n^{-\frac{1}{1+\gamma}})$ recovering the excess risk rate of kernel ridge regression in non-private setting \citep{caponnetto2007optimal}. Hence, the privacy constraint introduces an additional error term of order $O\left((n\epsilon)^{-\frac{2}{2 + \gamma}}\right)$ in the high privacy regime.

The excess risk bound presented in Theorem~\ref{thm:dprp_reg_rate_simple} omits the explicit dependency of the data dimension $d$ by treating it as a constant. However, the dependency can be critical in high-dimensional settings. In the proof of Theorem~\ref{thm:dprp_reg_rate_simple}, we rely on the uniform boundedness of the random projection function $h_M$ with high probability. The following lemma establishes that the supremum of the Gaussian process realizations, $\sup_{1\le l \le M}\sup_{x\in\mathcal{X}}|h(x,\omega_l)|$, remains bounded with high probability.
\begin{lemma}\label{lem:general_bound}
For a given $t\in (0,1]$, the Gaussian processes $\{h(\cdot,\omega_j)\}_{j=1}^M$ satisfy
    \begin{eqnarray*}
    \mathbb{P}_{\omega_{1:M}}\left(\sup_{1\le j \le M}\sup_{x\in\mathcal{X}}|h(x,\omega_j)|\leq H\right)\geq 1-t,
\end{eqnarray*}
where $H\coloneqq H_0+\kappa\sqrt{2\log\frac{2M}{t}}$ and $H_0\coloneqq\mathbb{E}_\omega[\sup_x|h(x,\omega)|]$ is a constant that depends on the kernel. \end{lemma}
The term $H_0$ enters the excess risk bound as a multiplicative factor. While $H_0$ depends on the data dimension $d$, it is small enough to be treated as a constant factor in many practical settings. For example, for the inner product or Gaussian kernels, it is established that $H_0=O(\log d)$ suffices \citep[e.g.,][Example 4.3.2]{adler2007gaussian}. This suggests that the dimensionality of the original space $\mathcal{X}$ may not substantially degrade the performance of Algorithm~\ref{alg:dprp_reg} in many practical scenarios.

Next, we derive the minimax lower bound of the excess risk for differentially private kernel learning with squared loss to analyze the optimality of Algorithm~\ref{alg:dprp_reg}. Let $\mathcal{P}_{1}$ denote the class of data distributions over $\mathcal{X}\times\mathcal{Y}$ satisfying Assumptions~\ref{assump:bern} and \ref{assump:cap}. In the following theorem, we establish the minimax lower bound for differentially private kernel ridge regression over $\mathcal{P}_1$.
\begin{theorem}\label{thm:lower_bd_simple} Under Assumption~\ref{assump:kernel} and $\delta=o(n^{-1})$, the minimax lower bound is given by
\begin{equation*}
    \mathfrak{R}(\mathcal{P}_1,\mathcal{E},\epsilon,\delta)\gtrsim
    n^{-\frac{1}{1+\gamma}}+(n\epsilon)^{-\frac{2}{2+\gamma}-\eta}\wedge1,
\end{equation*}
for $\eta>0$.
\end{theorem}
Taking $\eta \to 0^+$, the lower bound in Theorem~\ref{thm:lower_bd_simple}
can be made arbitrarily close to $O(n^{-\frac{1}{1+\gamma}}+(n\epsilon)^{-\frac{2}{2+\gamma}}\wedge1)$. Hence, the minimax rate is $O(n^{-\frac{1}{1+\gamma}}+(n\epsilon)^{-\frac{2}{2+\gamma}}\wedge1)$ up to an arbitrarily small slack in the exponent. Notably, this lower bound matches the excess risk upper bound presented in Theorem~\ref{thm:dprp_reg_rate_simple}. Thus, Algorithm~\ref{alg:dprp_reg} achieves the minimax-optimal rate. We note that the eigenvalue condition is necessary for the minimax risk analysis. In particular, one needs both upper and lower bounds on the eigenvalues to ensure the existence of a `worst-case' scenario, while the upper bound condition of the eigenvalues is sufficient to derive the excess risk upper bound. This distinction is standard in the kernel learning literature; for example, \citet{caponnetto2007optimal} uses only the upper bound on risk convergence while employing a more specific decay condition to prove minimax optimality.

\subsection{DP kernel ridge regression via random feature mapping}\label{subsec:error_rff}
To compare our method to the existing method for differentially private kernel regression, we study a widely used technique: random feature mapping. In particular, we present the RFF-based DP kernel ridge regression algorithm \citep{rahimi2007random, chaudhuri2011differentially} and provide its excess risk bound.

We begin by introducing the concept of a random feature map:
\begin{definition}[Random feature map, \cite{rahimi2007random}]\label{def:rff}
Let $k : \mathcal{X} \times \mathcal{X} \rightarrow \mathbb{R}$ be a positive definite kernel. Let $\varphi:\mathcal{X}\times\Omega\rightarrow\mathbb{R}$ be a bounded measurable function such that for all $x,x^\prime\in\mathcal{X}$,
    \begin{equation*}
k(x,x^\prime)=\mathbb{E}_{\omega\sim\pi}\left[\varphi(x,\omega)\varphi(x^\prime,\omega)\right]
    \end{equation*}
    for some distribution $\pi$ over $\Omega$. Given a projection dimension $M$, the associated random feature map $\varphi_M : \mathcal{X} \rightarrow \mathbb{R}^M$ is defined as
\[
    \varphi_M(x) \coloneqq  \frac{1}{\sqrt{M}} \left( \varphi(x, \omega_1), \ldots, \varphi(x, \omega_M) \right),
\]
where $\omega_1, \ldots, \omega_M \overset{\mathrm{i.i.d.}}{\sim} \pi$.
\end{definition}
When the kernel $k$ is shift-invariant, that is, $k(x,x^\prime)$ is a $k_0(x-x^\prime)$ for some function $k_0:\mathcal{X}\rightarrow\mathbb{R}$, there exists a bounded random feature map $\varphi:\mathcal{X}\times\Omega\rightarrow[-\sqrt{2},\sqrt{2}]$ such that the condition in Definition~\ref{def:rff} is satisfied. This construction, known as the random Fourier feature map~\citep{rahimi2007random}, is the most commonly used random feature mapping in DP kernel learning. We emphasize that random projection is neither an RFF map nor a random feature map in the sense of Definition~\ref{def:rff}, which has been used in prior work on DP kernel learning. While the RFF map is defined only for shift-invariant kernels, random projection can be applied to general kernels. Moreover, the random projection map is typically unbounded and thus does not satisfy the boundedness condition required in Definition~\ref{def:rff}.

Algorithm~\ref{alg:dprff_reg}, presents the DP kernel ridge regression method based on RFF map. Its privacy guarantee is given in Theorem~\ref{thm:dprff_reg_priv}.

\begin{algorithm}
    \caption{RFF-based DP kernel ridge regression}\label{alg:dprff_reg}
\begin{algorithmic}
  \Input:\textrm{  }Given data $\{(x_i,y_i)\}_{i=1}^n$, shift-invariant kernel $k$, kernel size $\kappa$, projection dimension $M$, penalty parameter $\lambda$, truncation level $T$, privacy parameters $\epsilon,\delta$.
  \EndInput

  \Output:\textrm{  }A differentially private solution $\widetilde{f}$.
  \EndOutput

1. Set the sensitivities as $\Delta_1 = 2$ and $\Delta_2 = \sqrt{2}$.

2. Obtain a random feature map $\varphi_M$ from Definition \ref{def:rff}.

3. Compute the sufficient statistics for linear regression of projected data: $\hat{C}_{M}\coloneqq \frac{1}{n}\sum_{i=1}^n\varphi_M(x_i)\varphi_M(x_i)^\top$ and $\hat{u}_T\coloneqq \frac{1}{n}\sum_{i=1}^n [y_i]_T\varphi_M(x_i)$.

4. Privatize the sufficient statistics: $\widetilde{C}_{M}\coloneqq \hat{C}_{M}+\frac{2\Delta_1\left(1+\sqrt{2\log\frac{2}{\delta}}\right)}{n\epsilon}\left(\frac{\varepsilon_{M\times M}+\varepsilon_{M\times M}^\top}{2}\right)$ and $\widetilde{u}_T\coloneqq \hat{u}_T+\frac{2\Delta_2T\left(1+\sqrt{2\log\frac{2}{\delta}}\right)}{n\epsilon}\varepsilon_{M}$ where $\varepsilon_{M}$ and $\varepsilon_{M\times M}$ are a random vector and matrix, respectively, with i.i.d. standard normal entries.

5. Compute $\widetilde{\beta}\coloneqq \widetilde{C}_{M,\lambda}^{-1}\widetilde{u}_T$.

 \Return $\widetilde{f}\coloneqq \widetilde{\beta}^\top\varphi_M$.

\end{algorithmic}
\end{algorithm}
\begin{theorem}\label{thm:dprff_reg_priv}
    Algorithm \ref{alg:dprff_reg} is $(\epsilon,\delta)$-DP.
\end{theorem}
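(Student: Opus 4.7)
The plan is to follow the same two-step Gaussian-mechanism-plus-composition recipe used in the proof of Theorem \ref{thm:dprp_reg_priv}, but to take advantage of a crucial simplification: the RFF map is deterministically bounded, so the high-probability bookkeeping governed by Lemma \ref{lem:priv_bound} is no longer needed, and no good-event probability has to be absorbed into $\delta$.

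First I would record the deterministic bound $|\varphi(x,\omega)|\leq\sqrt{2}$ that accompanies the standard random Fourier feature construction satisfying Definition~\ref{def:rff} for a shift-invariant kernel, which immediately gives $\lVert\varphi_M(x)\rVert_2\leq\sqrt{2}$ and $\lVert\varphi_M(x)\varphi_M(x)^\top\rVert_F=\lVert\varphi_M(x)\rVert_2^2\leq 2$ for every realization of $\omega_{1:M}$ and every $x\in\mathcal{X}$. These bounds play the role that the events $E_1$ and $E_2$ played in the proof of Theorem \ref{thm:dprp_reg_priv}, but hold unconditionally rather than only on a high-probability event.

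For neighboring datasets $D\sim D'$, the triangle inequality combined with the above bounds yields $\lVert\hat{C}_M(D)-\hat{C}_M(D')\rVert_F\leq 2\Delta_1/n$ and $\lVert\hat{u}_T(D)-\hat{u}_T(D')\rVert_2\leq 2\Delta_2 T/n$, where the second estimate uses $|[y_i]_T|\leq T$ together with $\lVert\varphi_M(x)\rVert_2\leq\Delta_2$. Applying the Gaussian mechanism of Proposition~\ref{prop:gaussian} with the noise scales prescribed in Algorithm~\ref{alg:dprff_reg} then shows that the asymmetric noisy release $\hat{C}_M+\frac{2\Delta_1(1+\sqrt{2\log(2/\delta)})}{n\epsilon}\varepsilon_{M\times M}$ is $(\epsilon/2,\delta/2)$-DP, and likewise that $\widetilde{u}_T$ is $(\epsilon/2,\delta/2)$-DP. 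The subsequent symmetrization $\widetilde{C}_M=(\widetilde{C}_M^0+\widetilde{C}_M^{0\top})/2$ is a deterministic post-processing of the asymmetric release and therefore inherits the same privacy level.

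Finally, Proposition~\ref{thm:comp} combines the two releases into an $(\epsilon,\delta)$-DP mechanism emitting the pair $(\widetilde{C}_M,\widetilde{u}_T)$, and the output $\widetilde{f}=\widetilde{\beta}^\top\varphi_M$ with $\widetilde{\beta}=\widetilde{C}_{M,\lambda}^{-1}\widetilde{u}_T$ depends on the data only through $(\widetilde{C}_M,\widetilde{u}_T)$, since $\varphi_M$ is generated from the data-independent randomness $\omega_{1:M}$. Post-processing therefore preserves the $(\epsilon,\delta)$-DP guarantee. There is no real obstacle here: the whole argument is a clean simplification of the random-projection proof, and the only point worth double-checking is that the independence of $\omega_{1:M}$ from the data makes both the symmetrization and the final linear solve genuine post-processing operations with respect to the private releases.
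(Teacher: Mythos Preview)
Your proposal is correct and mirrors the paper's intended argument: the paper treats Theorem~\ref{thm:dprff_reg_priv} as the bounded-feature simplification of the proof of Theorem~\ref{thm:dprp_reg_priv}, replacing the high-probability event furnished by Lemma~\ref{lem:priv_bound} with the deterministic bound $|\varphi(x,\omega)|\le\sqrt{2}$, and then invoking the Gaussian mechanism on each sufficient statistic followed by composition and post-processing---exactly the steps you outline.
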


To analyze the excess risk of Algorithm \ref{alg:dprff_reg}, we introduce the \emph{compatibility condition}, a standard theoretical tool in kernel learning with random feature maps ~\citep{randomfeature,li2022optimal,li2019towards,liu2021random, wang2024towards, wang2024optimal}.

\begin{assumption}[Compatibility condition, \cite{randomfeature}]\label{assump:compat} Let $\varphi_M$ be a random feature map constructed from a bounded base function $\varphi:\mathcal{X}\times\Omega\rightarrow\mathbb{R}$. For $\lambda>0$, define
    \begin{equation*}
        \mathcal{F}_\infty(\lambda)\coloneqq \sup_{w\in\Omega}\left\| L_\lambda^{-\frac{1}{2}}\varphi(\cdot,\omega)\right\|^2.
    \end{equation*} 
    We say the compatibility condition holds if there exist constants $F>0$ and $\alpha\in[\gamma,1]$ such that $\mathcal{F}_\infty(\lambda)\leq F\lambda^{-\alpha}$.
\end{assumption}

The quantity $\mathcal{F}_\infty(\lambda)$ captures the efficiency of the dimension reduction induced by the random feature map $\varphi_M$ in kernel learning. Intuitively, when functions $\varphi(\cdot,\omega)$ align well with the leading eigenfunctions of the integral operator $L$, the value of $\mathcal{F}_\infty(\lambda)$ is small. In such cases, a relatively small number of random features $\varphi(\cdot,\omega_j)$ suffices for the $\mathrm{span}\{\varphi(\cdot,\omega_1),\ldots,\varphi(\cdot,\omega_M)\}$ to approximate the RKHS $\mathcal{H}_k$ effectively. Conversely, if $\varphi(\cdot, \omega)$ poorly captures the spectral structure of $L$, a larger number of features is required to achieve a comparable approximation. It is worth noting that $\mathcal{F}_\infty(\lambda)\in [N(\lambda),\sup_{x\in\mathcal{X},\omega\in\Omega}|\varphi(x,\omega)|^2\lambda^{-1}]$. This implies that there always exists some $\alpha\in[\gamma,1]$ such that Assumption \ref{assump:compat} holds for any given feature map $\varphi_M$. In particular, existing theoretical results on RFF-based algorithms that make no assumptions on RFF can be compared with ours under the compatibility condition where $\alpha=1$.

We next present the excess risk bound for DP kernel ridge regression using random feature maps under the compatibility condition.
\begin{theorem}\label{thm:dprff_reg_rate_simple}
     Under Assumptions \ref{assump:bern}-\ref{assump:compat}, for $t\in (0,e^{-1}]$, there exist $n_0>0$ and choices of $\lambda$ and $M$ such that Algorithm \ref{alg:dprff_reg} achieves an excess risk $\widetilde{O}\Big(\Big(n^{-\frac{1}{1+\gamma}}+(n\epsilon)^{-\frac{2}{2+\alpha}}\Big)\log^2\frac{1}{t}\Big)$ with probability at least $1-t$ if $\epsilon\geq n^{-1}$ and $n\geq n_0$.
\end{theorem}
By setting $\alpha=1$, we obtain an excess risk bound of $O(n^{-\frac{1}{1+\gamma}}+(n\epsilon)^{-\frac{2}{3}})$ for differentially private kernel regression based on general RFFs. This result significantly improves upon the previously established rate of $O(n^{-\frac{1}{1+\gamma}}+n^{\frac{2-2\gamma}{1+\gamma}}\epsilon^{-2})$ by~\cite{wang2024differentially}, thereby offering tighter excess risk guarantees for private kernel ridge regression using RFFs.

The excess risk bound in Theorem~\ref{thm:dprff_reg_rate_simple} underscores the critical role of dimension reduction efficiency in the performance of DP kernel ridge regression. The bound improves with decreasing $\alpha$, indicating that more effective dimension reduction—characterized by a better alignment between the random feature map and the kernel eigenspectrum—leads to stronger excess risk guarantees in the private setting. Specifically, the projection dimension $M$ required for our random projection approach is significantly smaller than the number of features typically needed for RFF. For example, the number of features ($M$) required to reach the optimal rate in RFF-based method ranges from $\widetilde{O}\left(n^{\frac{\gamma}{1+\gamma}}\wedge(n\epsilon)^{\frac{2\gamma}{2+\gamma}}\right)$ to $\widetilde{O}\left(n^{\frac{1}{1+\gamma}}\wedge(n\epsilon)^{\frac{2}{3}}\right)$. In contrast, the random projection-based method achieves the optimal rate with $M=O\left(n^{\frac{\gamma}{1+\gamma}}\wedge (n\epsilon)^{\frac{2\gamma}{2+\gamma}}\right)$. This demonstrates that random projection offers the most efficient dimension reduction than any bounded random feature map. Consequently, the excess risk bounds for existing RFF-based methods are generally slower than our algorithm when $\gamma < 1$, rendering them suboptimal in this regime.

\section{Experiments}\label{sec:exp}
We conducted two experiments to evaluate the performance of differentially private kernel ridge regression algorithms based on random projection and RFF. The first experiment was performed on simulated data, while the second was based on a real-world dataset. In both settings, we compared the test errors of the two algorithms-one using random projections (Algorithm~\ref{alg:dprp_reg}) and the other using RFF (Algorithm~\ref{alg:dprff_reg})-under $(\epsilon,\delta)$-DP with $\epsilon\in\{10^{-1},10^{-0.5},1,10^{0.5},10^1\}$ and $\delta=n^{-1.1}$ for the given sample size $n$ of the training data.

For the synthetic experiment, we employ the spline kernel to construct a data-generating distribution that exactly matches the capacity and compatibility conditions in our analysis. This allows us to explicitly verify the minimax-optimality gap between random projection and RFF-based kernel regression. For $q>1$, define the spline kernel on $[0,1]$ as
\begin{align*}
    \Lambda_q(x,x^\prime):=\sum_{k=1}^\infty \frac{2}{|k|^q}\cos(2\pi k(x-x^\prime)).
\end{align*}
For a given $\gamma\in(0,1)$, we consider kernel learning with the kernel $\Lambda_{1/\gamma}$. The eigenvalues of the integral operator of the kernel decay as $l^{-1/\gamma}$, which matches Assumption~\ref{assump:cap}.

The data are generated i.i.d. from the model:
\begin{align*}
    Y=\Lambda_{1/2\gamma}(X,0.5)+\varepsilon,
\end{align*}
where $X\sim\mathrm{Unif}[0,1]$ and $\varepsilon$ follows a zero-mean normal distribution with variance $0.01$, truncated at $\pm 0.1$. The corresponding RFF map can be obtained by applying the Fourier transform to the kernel, as given below:
\begin{align*}
    \varphi_M(\omega,x):=\sqrt{\frac{2}{M}}(\cos(2\pi \omega_1 x+\theta_1),\ldots,\cos(2\pi \omega_M x+\theta_M)),
\end{align*}
where $\omega_1,\ldots,\omega_M$ and $\theta_1,\ldots\theta_M$ are i.i.d. with
\begin{align*}
    \mathbb{P}(\omega_1=k)=\frac{k^{-1/\gamma}}{\sum_{k=1}^\infty k^{-1/\gamma}}\textrm{ for }k\geq1, \quad
    \theta_1\sim\mathrm{Unif}[0,2\pi].
\end{align*}
Under this construction, $\mathcal{F}_{\infty}(\lambda)\asymp \lambda^{-1}$, which implies that the compatibility condition holds with exponent $\alpha=1$ and fails for any $\alpha<1$.

We empirically compare the performance of the two methods across different eigenvalue decay regimes. For the synthetic data experiment, we considered $\gamma\in\{1/10,1/8,1/4\}$, and generated $n=100,000$ samples each for training and testing. For each method, we evaluated a range of hyperparameter pairs $(M, \lambda)$, where the projection dimension $M$ varied from 10 to 1000, and $\lambda$ was selected from the set $\{2^{-i+2}|0\leq i\leq 15\}$. For each privacy level $\epsilon$, we reported the minimum test error achieved across all hyperparameter combinations. The experiments were repeated 100 times. Note that tuning $\lambda$ does not compromise privacy, whereas selecting $M$ does and should ideally be performed using differentially private tuning algorithms, such as those proposed in \cite{chaudhuri2011differentially} and \cite{priv_hyper}. However, incorporating such procedures would only introduce additional logarithmic factors in the excess risk bounds. Since our focus is on the primary scaling behavior, we do not include private hyperparameter tuning in the experiment.

\begin{figure}[htp]
    \centering
    \begin{subfigure}[b]{0.32\textwidth}\includegraphics[width=\textwidth]{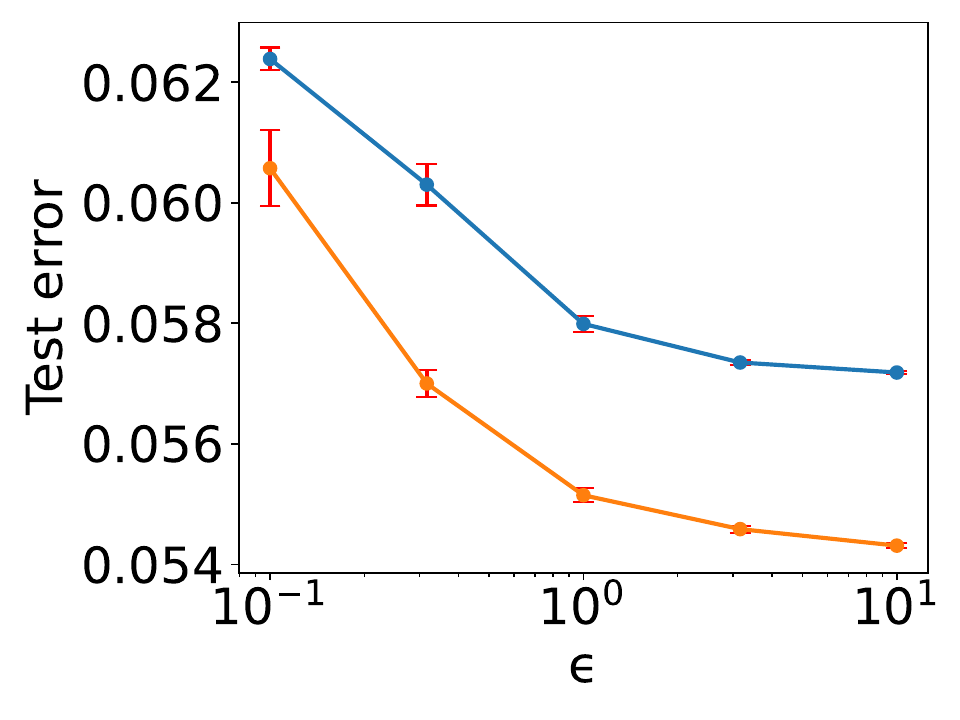}\caption{$\gamma = \frac{1}{10}$}\label{subfig:10dim}
    \end{subfigure}
    \hfill
    \begin{subfigure}[b]{0.32\textwidth}\includegraphics[width=\textwidth]{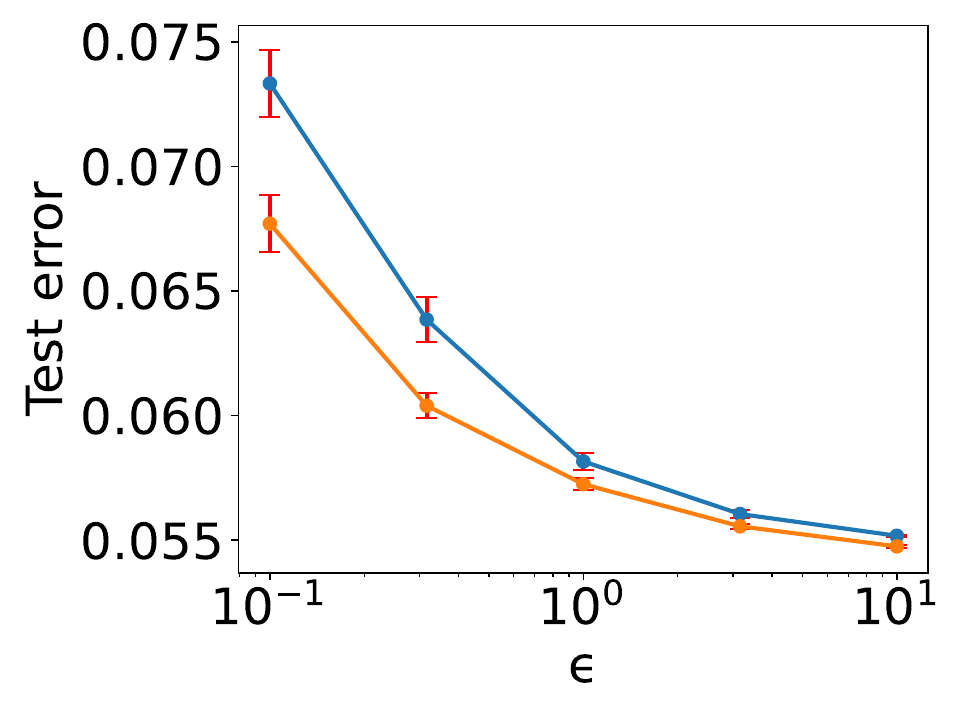}\caption{$\gamma = \frac{1}{8}$}\label{subfig:20dim}
    \end{subfigure}
    \hfill
    \begin{subfigure}[b]{0.32\textwidth}\includegraphics[width=\textwidth]{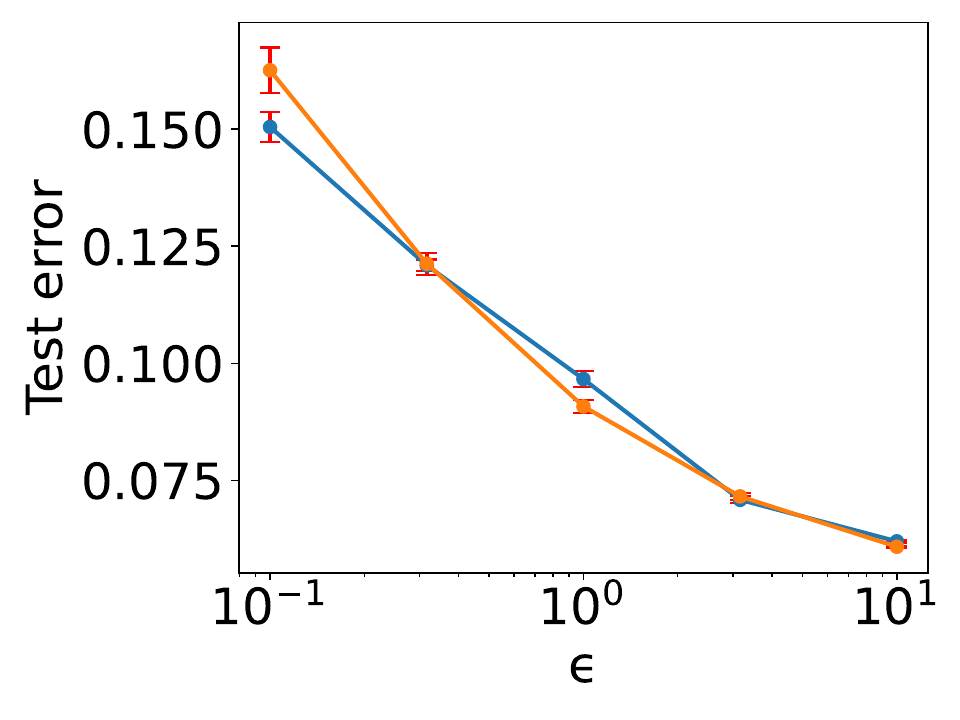}\caption{$\gamma = \frac{1}{4}$}\label{subfig:30dim}
    \end{subfigure}
    \caption{Average test errors for DP kernel ridge regression based on random projection (orange) and RFF (blue) over 100 repetitions. The random projection consistently outperforms RFF across a wide range of privacy budgets, with the performance gap widening as the dimensionality of the original data increases. }
    \label{fig:rp_rff}
\end{figure}

The test errors in Fig.~\ref{fig:rp_rff} demonstrate that the random projection-based algorithm outperforms the RFF-based algorithm across most of the considered privacy budget range except in the case $\gamma=1/4$. Moreover, the performance gap becomes more pronounced as $\gamma$ decreases. This trend is consistent with our theoretical analysis. In particular, the excess risk of the random projection-based method scales as $O((n\epsilon)^{-\frac{2}{2+\gamma}})$, while that of the RFF-based method scales as $O((n\epsilon)^{-\frac{2}{3}})$. As a result, the rate of the RFF-based method does not improve as $\gamma$ decreases, whereas the random projection-based method achieves faster rates, leading to an increasing performance gap. The relatively small gap observed for $\gamma=1/4$ can be attributed to finite-sample effects. In particular, unlike RFF, the random projection-based method induces unbounded features, which leads to a larger noise scale in the Gaussian mechanism in order to ensure differential privacy. This results in a larger constant factor in the excess risk. As this effect does not change the asymptotic rates predicted by our theory, we expect the random projection-based method will outperform the RFF-based method.

\begin{figure}[htp]
    \centering
   \begin{subfigure}[b]{0.32\textwidth}\includegraphics[width=\textwidth]{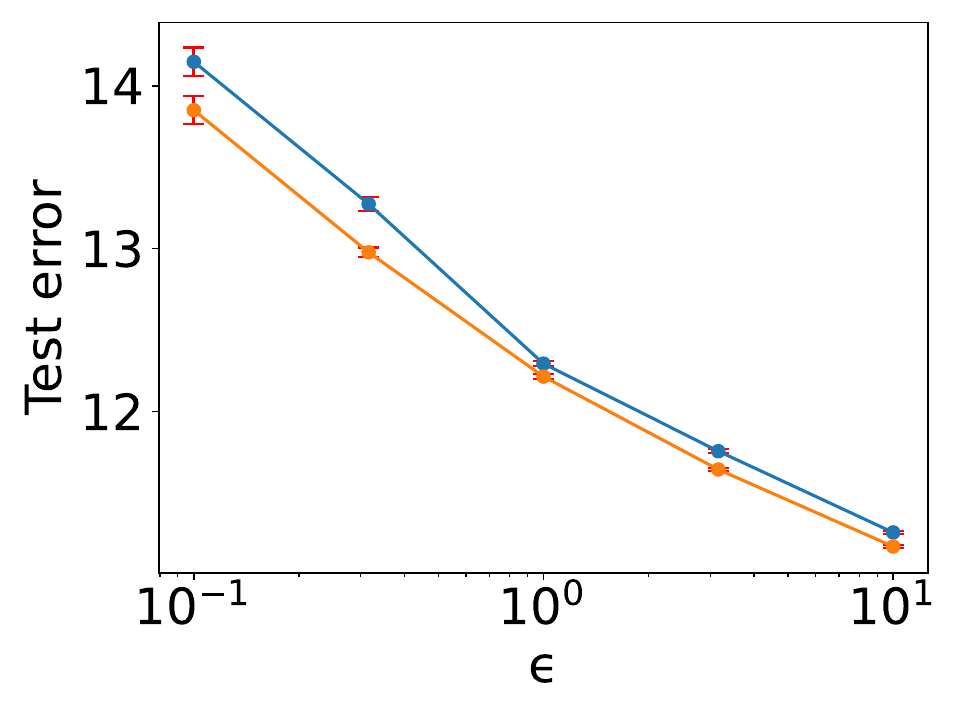}\label{subfig:cali1}
   \end{subfigure}
   \begin{subfigure}[b]{0.32\textwidth}\includegraphics[width=\textwidth]{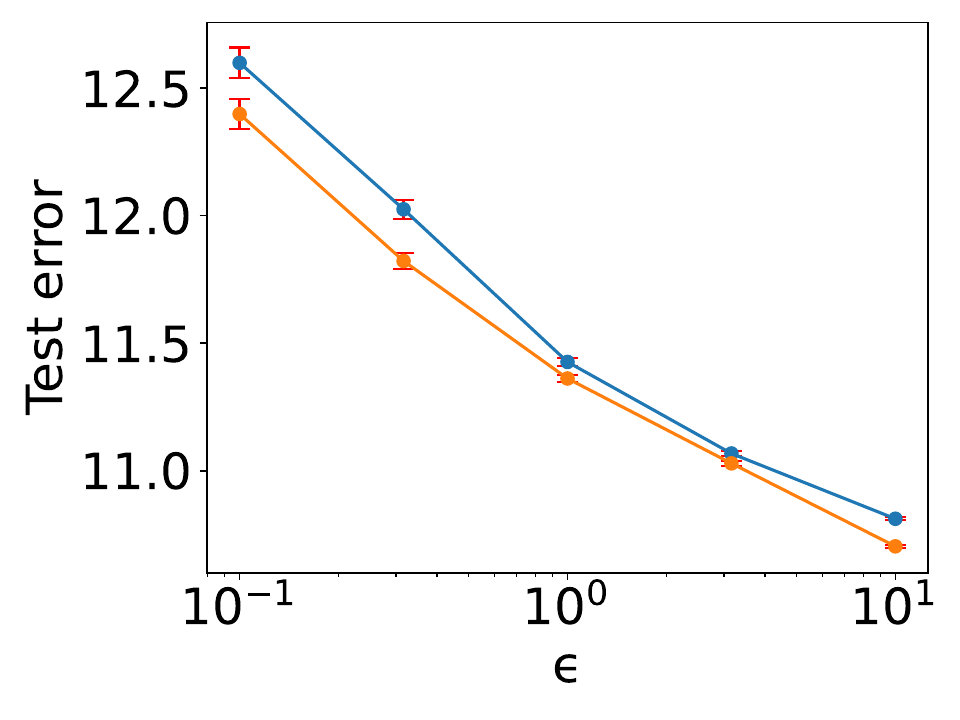}\label{subfig:cali2}
   \end{subfigure}
   \begin{subfigure}[b]{0.32\textwidth}\includegraphics[width=\textwidth]{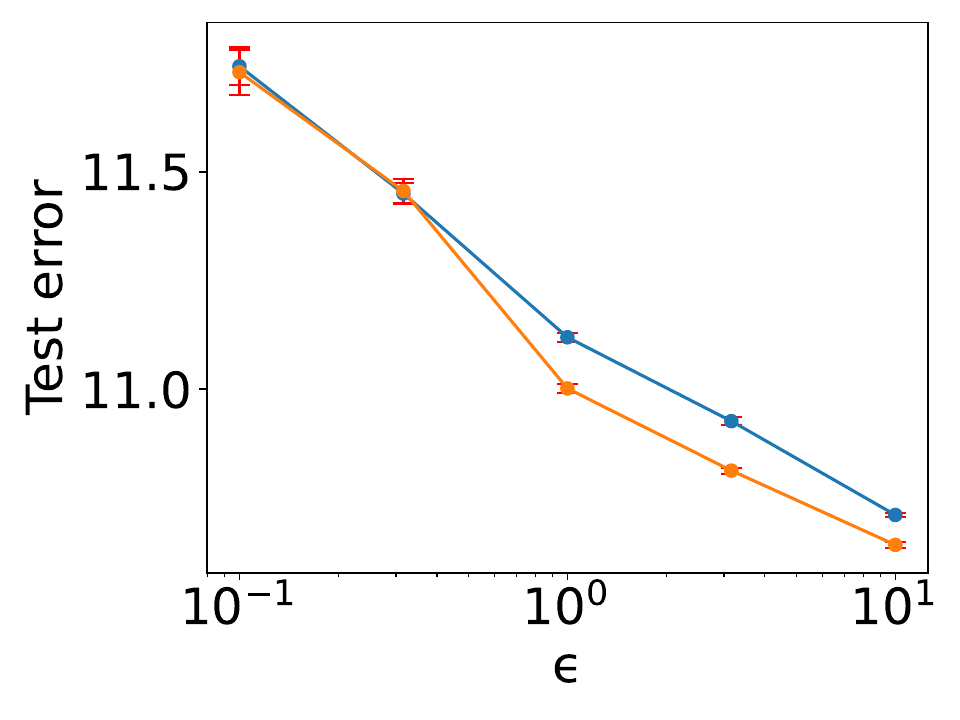}\label{subfig:cali3}
   \end{subfigure}
    \caption{Test errors of DP kernel ridge regression for Million Song dataset with bandwidth parameters $\sigma=$ $2^4$ (left), $2^{4.5}$ (middle), $2^5$ (right).}
    \label{fig:cali}
\end{figure}

For the real-data experiment, we used the Million Song dataset from the UCI machine learning repository \citep{year_prediction_msd_203}, the dataset for the regression task of predicting the year in which a song is released based on audio features
associated with the song. The dataset consists of 515,345 samples with 90 features and a response within $[1922,2011]$. For the experiments, we randomly subsampled 2,0000 data points, using half for training and half for testing. The features were standardized to have a zero mean and unit variance based on the training set. Note that this step may introduce privacy leakage if performed non-privately; in practice, differentially private standardization should be used to maintain privacy guarantees. The responses were centered so that they lie on $[-44.5,44.5]$. Gaussian kernel with multiple bandwidth parameters $\sigma \in \{2^{4},2^{4.5},2^{5}\}$ were used. The experiments were repeated 100 times. The result is presented in Fig.~\ref{fig:cali}. Consistent with the simulation results, the random projection-based method yields lower test errors than the RFF-based method on real data across most of the privacy budget range.

\section{DP kernel learning with Lipschitz-smooth loss}\label{subsec:lipschitz}
In this section, we consider a more general class of loss functions: Lipschitz-smooth loss. We analyze the excess risk of the DP kernel-based ERM presented in Algorithm~\ref{alg:dprp_cls}, comparing it with the existing algorithms of \citet{chaudhuri2011differentially} and \citet{hall2013differential} (Algorithms~\ref{alg:dprff_cls} and~\ref{alg:hall_cls}, respectively). Theorem~\ref{thm:dprp_cls_smooth} provides a high-probability bound on the excess risk of Algorithm~\ref{alg:dprp_cls}.
\begin{theorem}\label{thm:dprp_cls_smooth}
    Under Assumptions~\ref{assump:lip}, \ref{assump:cap}-\ref{assump:kernel}, for $t\in(0,e^{-1}]$, there exist $n_0>0$ and choices of $\lambda$ and $M$ such that Algorithm~\ref{alg:dprp_cls} achieves the excess risk $O\left(\left(n^{-\frac{1}{2}}+(n\epsilon)^{-\frac{2}{2+\gamma}}\right)\log^2\frac{1}{t}\right)$ with probability at least $1-t$ if $\epsilon\geq n^{-1}$ and $n\geq n_0$.
\end{theorem}
Theorem~\ref{thm:dprp_cls_smooth} shows the privacy cost of Algorithm~\ref{alg:dprp_cls} is $(n\epsilon)^{-\frac{2}{2+\gamma}}$. As in the regression, the excess risk rate of random projection-based algorithm is improved under the faster eigenvalue decay. Next, we analyze the existing differentially private kernel ERM algorithms for Lipschitz-smooth loss. We first turn to the RFF-based algorithm, shown in Algorithm~\ref{alg:dprff_cls}. Algorithm~\ref{alg:dprff_cls} is an $(\epsilon,\delta)$-DP version of Algorithm 3 in \cite{chaudhuri2011differentially}, the differentially private kernel ERM algorithm for convex Lipschitz-smooth loss functions based on RFF maps.
\begin{algorithm}
    \caption{RFF-based DP kernel objective perturbation algorithm}\label{alg:dprff_cls}
\begin{algorithmic}
  \Input:\textrm{  }Given data $\{(x_i,y_i)\}_{i=1}^n$, kernel $k$, kernel size $\kappa$, projection dimension $M$, penalty parameter $\lambda$, Lipschitz-smooth parameters $c_1$ and $c_2$, privacy parameters $\epsilon,\delta$.
  \EndInput

  \Output:\textrm{  }A differentially private solution $\widetilde{f}$.
  \EndOutput

1. Set the sensitivity as $\Delta=\sqrt{2}$.

2. Obtain the random projection map $\varphi_M$ from Definition \ref{def:rff}.

3. Generate perturbation vector $b\sim \mathcal{N}\left(0,\frac{4c_1^2\Delta^2\left(2\log\frac{2}{\delta}+\epsilon\right)}{\epsilon^2}I_M\right)$

4. Solve $\widetilde{\beta}\coloneqq \argmin_{\beta\in\mathbb{R}^M}\frac{1}{n}\sum_{i=1}^n l(y,\beta^\top \varphi_M(x_i))+\max\Big\{\frac{\lambda}{2},\frac{c_2}{n}\left(e^{\epsilon/4}-1\right)^{-1}\Big\}\lVert\beta\rVert_2^2+\frac{b^\top \beta}{n}$.

 \Return $\widetilde{f}\coloneqq \widetilde{\beta}^\top \varphi_M$.

\end{algorithmic}
\end{algorithm}
Using the similar arguments by \citet{chaudhuri2011differentially} and \citet{kifer2012private} we can show the privacy guarantee of the algorithm.
\begin{theorem}\label{thm:dprff_cls_priv}
    Under Assumption \ref{assump:lip}, Algorithm \ref{alg:dprff_cls} is $(\epsilon,\delta)$-DP.
\end{theorem}
Theorem~\ref{thm:dprff_cls_smooth} establishes the excess risk rate of Algorithm~\ref{alg:dprff_cls}.
\begin{theorem}\label{thm:dprff_cls_smooth}
    Under Assumptions~\ref{assump:lip}, \ref{assump:cap}--~\ref{assump:compat} for $t\in(0,e^{-1}]$, there exist $n_0>0$ and choices of $\lambda$ and $M$ such that Algorithm~\ref{alg:dprp_cls} achieves the excess risk $\widetilde{O}\Big(\Big(n^{-\frac{1}{2}}+(n\epsilon)^{-\frac{2}{2+\alpha}}\Big)\log^2\frac{1}{t}\Big)$ with probability at least $1-t$ if $\epsilon\geq n^{-1}$ and $n\geq n_0$.
\end{theorem}
The rate given in Theorem~\ref{thm:dprff_cls_smooth} improves upon previously known results for RFF-based differentially private kernel ERM with Lipschitz-smooth loss. In particular, Theorem 25 of \cite{chaudhuri2011differentially} establishes the excess risk rate $O(n^{-1/2}+(n\epsilon)^{-1/4})$ under the same theoretical setting of ours with $\alpha=1$, which is strictly slower than the guarantee of Theorem~\ref{thm:dprff_cls_smooth}, $\widetilde{O}(n^{-1/2} + (n\epsilon)^{-2/3})$. However, even the improved rate is slower compared to the random projection-based method (Algorithm~\ref{alg:dprp_cls}).

We briefly comment on the role of the DP linear ERM subroutine in Algorithm~\ref{alg:dprp_cls} and~\ref{alg:dprff_cls} in the excess risk rates. If the same analysis is applied to a variant of these algorithms using output perturbation instead of objective perturbation, the resulting excess risk rates are strictly worse than the ones stated in Theorem~\ref{thm:dprp_cls_smooth} and~\ref{thm:dprff_cls_smooth}. The limitations of output perturbation were first noted in \cite{chaudhuri2011differentially}, and further examined by \cite{arora2022differentially}, who showed that output perturbation can achieve the minimax-optimal rate. Their theoretical setting corresponds to the special case of ours. Putting $\gamma=1$ with the standard inner product kernel, Theorem~\ref{thm:dprp_cls_smooth} recovers their rate $O(n^{-\frac{1}{2}}+(n\epsilon)^{-\frac{1}{3}})$. However, in a situation where the eigenvalue decays faster, the combination of objective perturbation and dimension reduction leads to significantly sharper rates across a broad range of $\gamma$. As in the least squares regression setting, this underscores the importance of selecting the appropriate private linear ERM algorithm to attain strong privacy-utility trade-offs.

Finally, we examine the functional perturbation approach proposed by \citet{hall2013differential}. The detailed description of the algorithm can be found in Algorithm~\ref{alg:hall_cls}. Although this method is frequently cited in the literature, its statistical performance under general structural assumptions, such as the eigenvalue assumption, has not been thoroughly examined. Our analysis reveals that this approach yields the weakest excess risk guarantee among the algorithms considered.
\begin{theorem}\label{thm:dphall_cls_smooth_rate}
     Under the same assumptions of Theorem~\ref{thm:dprp_cls_smooth}, for $c>0$, there exists $n_0>0$ and a choice of $\lambda$ such that the algorithm of \cite{hall2013differential} achieves an excess risk of order: \[\widetilde{O}\left(\left(n^{-\frac{1}{2}}+ (n\epsilon)^{-\frac{2}{3}}\right)\log^2\frac{1}{t}\right),\]
     with probability at least $1-t$ if $\epsilon\geq n^{-1}$ and $n\geq n_0$.
\end{theorem}
The relatively weak excess risk guarantee of the algorithm proposed by \cite{hall2013differential} can be attributed to its lack of \emph{compression}. Although their functional perturbation approach incorporates regularization via an $\mathcal{H}_k$-norm penalty, it perturbs the infinite-dimensional estimator directly, without reducing the complexity of the hypothesis space. In contrast, Algorithms~\ref{alg:dprp_cls} and~\ref{alg:dprff_cls} perform explicit dimension reduction before privatization, followed by $\ell_2$-regularized optimization in a finite-dimensional subspace. This combination not only reduces sensitivity but also lowers the privacy cost. The ordering of the excess risk bounds across the considered methods reinforces the conclusion that efficient dimension reduction is essential for achieving strong utility guarantees under differential privacy constraints.

We discuss the difference between our approach and \citet{hall2013differential} in detail. Both our algorithm and that of \cite{hall2013differential} rely on Gaussian processes, but for fundamentally different reasons with contrasting effects on their performances. \cite{hall2013differential} use Gaussian processes to inject noise directly into the empirical risk minimizer in the RKHS to ensure privacy. In contrast, Algorithm~\ref{alg:dprp_cls} uses a Gaussian process to construct a randomized finite-dimensional function space. This enables a form of statistical compression through random projection, which precedes the privatization step and plays a crucial role in improving excess risk guarantees.

\section{Improved excess risk rates under additional conditions}\label{sec:faster}
Faster excess risk rates and tighter minimax lower bounds can be obtained under additional conditions. These correspond to the standard source condition and local strong convexity, which are commonly used assumptions in kernel learning theory. In this section, we establish the excess risk rates and minimax lower bounds for both the proposed and existing differentially private kernel ERM algorithms under these favorable conditions. The source condition imposes a regularity assumption on $f_{\mathcal{H}_k}$ by specifying its alignment with the leading eigenfunctions of the integral operator $L$ \citep{caponnetto2007optimal,randomfeature,embedding_property}.
\begin{assumption}[Source condition]\label{assump:source}
    There exists $r\in[1/2,1]$ such that $f_{\mathcal{H}_k}=L^rg$ for some $g\in L^2(P_X)$ with $\lVert g\rVert\leq R$.
\end{assumption}
Note that for any $f \in \mathcal{H}_k$, there exists a function $g \in L^2(P_X)$ such that $f = L^{1/2} g$. Therefore, Assumption~\ref{assump:source} is always satisfied for some $g$ when $r \geq 1/2$. As $r$ increases, the representation of $f_{\mathcal{H}_k}=L^rg$ places greater weights on the leading eigenfunctions of $L$, corresponding to a smoother function in the input space. This parameter characterizes the impact of the smoothness of the target function in the excess risk bound. Larger the $r$ is, a faster rate can be achieved  \citep{falkon,randomfeature}.
We establish excess risk bounds of differentially private kernel ridge regression via random projection and random features in Theorems~\ref{thm:dprp_reg_rate} and~\ref{thm:dprff_reg_rate}, respectively.
\begin{theorem}\label{thm:dprp_reg_rate} Under Assumption~\ref{assump:source} and the assumptions of Theorem~\ref{thm:dprp_reg_rate_simple}, for $t\in (0,e^{-1}]$, there exist $n_0>0$ and choices of $T$, $\lambda$, and $M$ such that, Algorithm \ref{alg:dprp_reg} achieves an excess risk of order $O\Big (\Big(n^{-\frac{2r}{2r+\gamma}}+(n\epsilon)^{-\frac{4r}{4r+\gamma}}\Big)\log^2\frac{1}{t}\Big)$ with probability at least $1-t$ if  $\epsilon\geq n^{-1}$ and $n\geq n_0$.
\end{theorem}
The excess risk bound for non-private kernel ridge regression under the same assumption is $O\left(n^{-\frac{2r}{2r+\gamma}}\right)$ \citep{caponnetto2007optimal}. Hence, the privacy constraint introduces an additional error term of order $O\left((n\epsilon)^{-\frac{4r}{4r + \gamma}}\right)$ for Algorithm~\ref{alg:dprp_reg} in the high privacy regime. The rate becomes faster as $r$ increases and recovers the bound of Theorem~\ref{thm:dprp_reg_rate_simple} when $r=1/2$.
\begin{theorem}\label{thm:dprff_reg_rate}
Under Assumption~\ref{assump:source} and the assumptions of Theorem~\ref{thm:dprff_reg_rate_simple}, for $t\in (0,e^{-1}]$, there exist $n_0>0$ and choices of $\lambda$ and $M$ such that Algorithm \ref{alg:dprff_reg} achieves an excess risk $\widetilde{O}\Big(\Big(n^{-\frac{2r}{2r+\gamma}}+(n\epsilon)^{-\frac{4r}{4r+\gamma_\alpha}}\Big)\log^2\frac{1}{t}\Big)$ with probability at least $1-t$ if $\epsilon\geq n^{-1}$ and $n\geq n_0$, where $\gamma_{\alpha}\coloneqq (2r-1)\gamma+(2-2r)\alpha$.
\end{theorem}
The rate established in Theorem~\ref{thm:dprff_reg_rate} recovers the result of Theorem~\ref{thm:dprff_reg_rate_simple} when $r=1/2$, ensuring consistency with our previous analysis. A comparison with Theorem~\ref{thm:dprp_reg_rate} further indicates that our proposed method outperforms RFF-based approaches even under these favorable conditions. Interestingly, while the excess risk rate improves as $\alpha$ decreases, the magnitude of this improvement depends on the value of $r$. When $r=1/2$, employing random features with a smaller $\alpha$ leads to a faster rate. However, when $r=1$, no such improvement is observed. This suggests that when the target function is sufficiently smooth, the algorithm can effectively compensate for the inefficiency of the random features, regardless of the value of $\alpha$.

Finally, we derive a minimax lower bound under the additional assumptions. Let $\mathcal{P}_{2}$ denote the class of data distributions over $\mathcal{X}\times\mathcal{Y}$ satisfying Assumptions~\ref{assump:bern}-\ref{assump:cap} and~\ref{assump:source}-\ref{assump:lsc}. We establish the minimax lower bound for differentially private kernel ridge regression over $\mathcal{P}_2$.
\begin{theorem}\label{thm:lower_bd} Under Assumption \ref{assump:kernel} and $\delta=o(n^{-1})$, the minimax risk is lower bounded by
\begin{equation*}
    \mathfrak{R}(\mathcal{P}_2,\mathcal{E},\epsilon,\delta)\gtrsim
    n^{-\frac{2r}{2r+\gamma}}+(n\epsilon)^{-\frac{4r}{2r+\gamma+1}-\eta}\wedge1,
\end{equation*}
for any $\eta>0$.
\end{theorem}
By letting $\eta \to 0^+$, the lower bound in Theorem~\ref{thm:lower_bd}
can be made arbitrarily close to $O(n^{-\frac{2r}{2r+\gamma}}+(n\epsilon)^{-\frac{4r}{2r+\gamma+1}}\wedge1)$. Hence, the minimax rate is $O(n^{-\frac{2r}{2r+\gamma}}+(n\epsilon)^{-\frac{4r}{2r+\gamma+1}}\wedge1)$ up to an arbitrarily small slack in the exponent. First, as $\epsilon\rightarrow\infty$, our lower bound recovers the non-private minimax rate $O(n^{-\frac{2r}{2r+\gamma}})$ for kernel regression under the same assumption \citep{caponnetto2007optimal}. 

Unlike in the squared loss setting, the source condition fails to improve the excess risk bound for Lipschitz-smooth losses. This is because such loss functions may lack sufficient curvature. Specifically, while the source condition characterizes the regularity of the target function via the spectral structure of the integral operator $L$, this structure cannot be effectively exploited when the loss is merely convex and Lipschitz. To address this, recent studies \citep{li2019towards, quantile, liu2025improved} have shown that many commonly used loss functions satisfy a refined curvature property known as local strong convexity.
\begin{assumption}[Local strong convexity, \cite{li2019towards}]\label{assump:lsc}
    For a given data distribution $P_{X,Y}$, there exist $\mu>0$ and $R_{\mathrm{lsc}}>0$ such that
    \begin{eqnarray}\label{eq:lsc}
       \mathcal{E}(f)-\mathcal{E}(f_{\mathcal{H}_k})\geq \frac{\mu}{2}\lVert f-f_{\mathcal{H}_k}\rVert^2,
    \end{eqnarray}
    for all $f\in L^2(P_X)$ satisfying $\lVert f-f_{\mathcal{H}_k}\rVert_{}\leq R_{\mathrm{lsc}}$.
\end{assumption}
The condition ensures that the risk function exhibits positive curvature in a neighborhood around the target function $f_{\mathcal{H}_k}$. Under this condition, significantly faster convergence rates can be achieved in the non-private setting.
In the remainder of this section, we show that similar gains are possible under differential privacy as well. This suggests that, in practice, the privacy cost may be much smaller than the bound in Theorem~\ref{thm:dprp_cls_smooth} implies, provided that the loss function exhibits localized curvature near the optimum.

\begin{theorem}\label{thm:dprp_cls_rate} Under Assumptions~\ref{assump:source}--\ref{assump:lsc}, and the same assumptions of Theorem~\ref{thm:dprp_cls_smooth}, for $c>0$, there exist $n_0>0$ and choices of $\lambda$ and $M$ such that Algorithm~\ref{alg:dprp_cls} achieves an excess risk bound $O\Big(\Big(n^{-\frac{2r}{2r+\gamma}}+(n\epsilon)^{-\frac{4r}{4r+\gamma}}\Big)\log^2n\Big)$ with probability at least $1-n^{-c}$, if $\epsilon\geq n^{-1}$ and $n\geq n_0$.
\end{theorem}
The rate in Theorem \ref{thm:dprp_cls_rate} shows that we can achieve a smaller privacy cost under the local strong convexity condition compared to Theorem \ref{thm:dprp_cls_smooth}.

\begin{theorem}\label{thm:dprff_cls_rate}
    Under Assumptions~\ref{assump:source}--\ref{assump:lsc}, and the same assumptions of Theorem~\ref{thm:dprff_cls_smooth}, for $c>0$, there exist $n_0>0$ and choices of $\lambda$ and $M$ such that Algorithm \ref{alg:dprff_cls} achieves $O\Big(\Big(n^{-\frac{2r}{2r+\gamma}}+(n\epsilon)^{-\frac{4r}{4r+\gamma_\alpha}}\Big)\log^2n\Big)$ with probability at least $1-n^{-c}$ if $\epsilon\geq n^{-1}$ and $n\geq n_0$.
\end{theorem}

Finally, we compare our excess risk analysis with that of a differentially private kernel ERM algorithm based on \emph{functional perturbation}, originally proposed by \cite{hall2013differential}. Although this method is frequently cited in the literature, its statistical performance under general structural assumptions, such as the eigenvalue and source conditions, has not been thoroughly examined. Our analysis reveals that this approach yields the weakest excess risk guarantee among the algorithms considered, under the same assumptions as Theorem~\ref{thm:dprp_cls_rate} when $r=1/2$. We provide the excess risk bound of the functional perturbation algorithm in the theorem below.
\begin{theorem}\label{thm:dphall_cls_rate}
Under the same assumptions of Theorem~\ref{thm:dprp_cls_rate}, for $c>0$, there exists $n_0>0$ and a choice of $\lambda$ such that the algorithm of \cite{hall2013differential} achieves an excess risk of order: \[O\left(\left(n^{-\frac{2r}{2r+\gamma}}+ (n\epsilon)^{-\frac{4r}{2r+2}}\right)\log^2n\right),\]
     with probability $1-n^{-c}$ if $\epsilon\geq n^{-1}$ and $n\geq n_0$.
\end{theorem}
As a side note, we state that the local strongly convexity assumption can be replaced by $\ell_\infty$ condition. That is, \eqref{eq:lsc} holds for $f\in\mathcal{H}_k$ satisfying $\lVert f\rVert_{\infty}\leq R_{\mathrm{lsc}}\lVert f_{\mathcal{H}_k}\rVert_{\mathcal{H}_k}$ instead of $f$ satisfying $\lVert f-f_{\mathcal{H}_k}\rVert_{}\leq R_{\mathrm{lsc}}$.

Finally, we establish the minimax risk for differentially private kernel learning when the loss function satisfies the Lipschitz smoothness and local strong convexity conditions (Assumptions~\ref{assump:lip} and~\ref{assump:lsc}). Let $\mathcal{P}_3$ be a family of distribution satisfying Assumptions~\ref{assump:bern}--\ref{assump:cap} and~\ref{assump:source}.
\begin{theorem}\label{thm:lower_bd_general} Under Assumption~\ref{assump:kernel} and with $\delta = O(n^{-1})$, there exists a loss function $l$ satisfying Assumptions~\ref{assump:lip} and~\ref{assump:lsc} such that the minimax lower bound is given by
\begin{equation*}
    \mathfrak{R}(\mathcal{P}_3,\mathcal{E},\epsilon,\delta)\gtrsim
    n^{-\frac{2r}{2r+\gamma}}+(n\epsilon)^{-\frac{4r}{\gamma+2r+1}-\eta}\wedge1,
\end{equation*}
for any $\eta>0$.
\end{theorem}
The result in Theorem~\ref{thm:lower_bd_general} mirrors the findings in Theorem~\ref{thm:lower_bd} for kernel ridge regression, establishing that the same minimax lower bound applies under the locally strongly convex loss function. This equivalence underscores that, given sufficient local curvature, private classification exhibits learning behavior analogous to private least squares regression.
\section{Discussion}
In this work, we developed a family of differentially private kernel ERM algorithms based on random projections, with formal guarantees under $(\epsilon, \delta)$-differential privacy. We analyzed both the squared loss and general Lipschitz-smooth losses, and derived upper bounds on the excess risk for the proposed methods. These results strictly improve upon the guarantees of existing differentially private kernel ERM algorithms. We further established a minimax lower bound and showed that the proposed algorithm attains the minimax-optimal rate. As a consequence, we obtained a class of objective perturbation algorithms that are computationally efficient and enjoy dimension-independent risk bounds. Finally, we validated the theoretical findings with empirical results, which confirm the practical advantages of the proposed approach. To our knowledge, this is the first differentially private kernel ERM method to achieve minimax-optimality.

There are two important implications arising from our results. First, our theoretical analysis highlights the critical role of dimension reduction in achieving strong excess risk guarantees for private kernel ERM algorithms. Specifically, we showed that random projection achieves risk guarantees that match the most efficient methods within a compatible algorithmic class, thereby establishing minimax-optimality. This result partially addresses the open question posed by \cite{chaudhuri2011differentially}: \textit{Is there a more statistically efficient solution to privacy-preserving learning with kernels?}, where they noted the statistical inefficiency of algorithms based on RFF. Our findings suggest that random projection offers a viable and theoretically sound alternative. Extending random projection-based dimension reduction to other private kernel learning problems presents a promising direction for future work and may lead to additional minimax-optimal procedures. 

Second, our excess risk analysis suggests that prior studies may have overestimated the privacy cost associated with differentially private kernel ERM algorithms based on RFF. In particular, we established tighter bounds on the privacy loss for both least squares regression \citep{wang2024differentially} and classification \citep{chaudhuri2011differentially}. Notably, we showed that substantially smaller privacy costs can be achieved under realistic conditions—especially when the loss function satisfies a local strong convexity condition. These results indicate that, contrary to earlier assessments, the practical privacy cost of differentially private kernel ERM may be significantly lower than previously believed.

Finally, we acknowledge some limitations of our work. The primary drawback of the proposed random projection-based differentially private kernel ERM algorithm is its computational complexity, which scales as $O(n^3)$ due to the reliance on Gaussian processes. Designing more computationally efficient variants, possibly through approximate projections or sparse representations, remains an important direction for future research. In addition, our optimality analysis reveals a limitation related to the embedding property. Under favorable assumptions, while fast excess risk rates are possible, they do not match the corresponding minimax lower bounds. A key open problem is whether it is possible to construct a private algorithm that achieves the minimax-optimal rate under the source condition and the embedding property.


\section*{Acknowledgments and Disclosure of Funding}
All correspondence should be addressed to Cheolwoo Park and Jeongyoun Ahn (corresponding authors) at \href{mailto:parkcw2021@kaist.ac.kr}{parkcw2021@kaist.ac.kr} and \href{mailto:jyahn@kaist.ac.kr}{jyahn@kaist.ac.kr}. The work of Jeongyoun Ahn and Cheolwoo Park was partially supported by the National Research Foundation of Korea (RS-2022-NR068758).


\newpage

\appendix
\section{Proofs of omitted lemmas and theorems}
\subsection{Proof of Theorem~\ref{thm:dprp_reg_priv}}\label{sec:proof_dprp_reg_priv}
To show the privacy guarantee we use the following lemma and proposition.
\begin{lemma}\label{lem:priv_bound}
Under Assumption \ref{assump:kernel}, for $t\in (0,1]$ and $x\in\mathcal{X}$,
\begin{equation*}
    \mathbb{P}_{\omega_{1:M}}\left(\lVert h_M(x)\rVert_2^2\leq \kappa^2\left(1+2\sqrt{\frac{\log\frac{1}{t}}{M}}+\frac{2\log\frac{1}{t}}{M}\right)\right)\geq 1-t.
\end{equation*}
\end{lemma}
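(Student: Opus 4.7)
The plan is to recognize that, for a fixed input $x$, the coordinates of $h_M(x)$ are i.i.d. centered Gaussians and therefore $\lVert h_M(x)\rVert_2^2$ has a scaled chi-squared distribution. A one-sided Laurent--Massart tail bound then produces exactly the stated inequality, after which Assumption~\ref{assump:kernel} is used to replace the pointwise variance $k(x,x)$ by its uniform upper bound $\kappa^2$.

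First, I would unwind the definitions. By Definition~\ref{def:rp}, the functions $h(\cdot,\omega_1),\ldots,h(\cdot,\omega_M)$ are i.i.d. samples from the centered Gaussian process with covariance $k$. So for each fixed $x\in\mathcal{X}$, the random variables $h(x,\omega_1),\ldots,h(x,\omega_M)$ are i.i.d.\ $\mathcal{N}(0,k(x,x))$. Writing $\sigma^2\coloneqq k(x,x)$, this implies
\begin{equation*}
\lVert h_M(x)\rVert_2^2 \;=\; \frac{1}{M}\sum_{j=1}^M h(x,\omega_j)^2 \;=\; \frac{\sigma^2}{M}\, Y,\qquad Y\sim \chi^2_M.
\end{equation*}
(If $\sigma^2=0$, the bound holds trivially, so I assume $\sigma^2>0$ below.)

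Next, I would invoke the standard Laurent--Massart tail bound: for $Y\sim\chi^2_M$ and any $u>0$,
\begin{equation*}
\mathbb{P}\!\left(Y \;\geq\; M + 2\sqrt{Mu} + 2u\right) \;\leq\; e^{-u}.
\end{equation*}
Setting $u=\log(1/t)$ for $t\in(0,1]$ and dividing through by $M$ gives
\begin{equation*}
\mathbb{P}\!\left(\tfrac{Y}{M}\;\geq\; 1+2\sqrt{\tfrac{\log(1/t)}{M}}+\tfrac{2\log(1/t)}{M}\right)\;\leq\; t,
\end{equation*}
and multiplying the chi-squared event by $\sigma^2$ translates this directly into the same statement for $\lVert h_M(x)\rVert_2^2$ with $\kappa^2$ replaced by $\sigma^2$.

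Finally, I would use Assumption~\ref{assump:kernel}, which supplies $\sigma^2 = k(x,x)\leq \kappa^2$. Since the right-hand side of the deviation inequality is monotone in the leading constant, replacing $\sigma^2$ with $\kappa^2$ only enlarges the event, yielding the claimed $1-t$ lower bound. There is no genuine obstacle here; the only point requiring a bit of care is identifying that the Gaussian-process marginal at a single point $x$ is one-dimensional with variance exactly $k(x,x)$, so that the chi-squared representation is valid and dependence on $\omega_{1:M}$ enters only through $M$ independent scalars.
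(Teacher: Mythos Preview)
Your proposal is correct and follows essentially the same route as the paper: the paper also observes that $\lVert h_M(x)\rVert_2^2=\frac{1}{M}\sum_{j=1}^M Z_j^2$ with $Z_j\sim\mathcal{N}(0,k(x,x))$ i.i.d., and then applies the Laurent--Massart inequality (their Proposition~\ref{prop:gaussian_vec}) with $a_i=1/M$, followed by the bound $k(x,x)\leq\kappa^2$ from Assumption~\ref{assump:kernel}.
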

\begin{proof}
We use the lemma by \cite{laurentandmassart}:
    \begin{proposition}[\citet{laurentandmassart}]\label{prop:gaussian_vec}
    Let $(Y_1,\ldots,Y_n)$ be i.i.d. standard Gaussian variables, and let $a=(a_1,\ldots,a_n)$ be a nonnegative vector. Define $S=\sum_{i=1}^n a_i(Y_i^2-1)$. Then for any $x>0$, the following inequalities hold:
\begin{align*}
\mathbb{P}\left(S\geq2\left\lVert a\right\rVert_2\sqrt{x}+2\left\lVert a\right\rVert_\infty x\right)\leq& e^{-x}\\\mathbb{P}\left(S\leq-2\left\lVert a\right\rVert_2\sqrt{x}\right)\leq& e^{-x}.
\end{align*}
\end{proposition}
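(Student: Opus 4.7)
The plan is to prove both tail bounds via the Cramér--Chernoff method, using the explicit moment generating function of a centered chi-squared variable. For a standard Gaussian $Y$ and $\lambda<1/2$, direct Gaussian integration gives $\mathbb{E}[e^{\lambda(Y^2-1)}]=e^{-\lambda}/\sqrt{1-2\lambda}$. By independence, for $0<\lambda<1/(2\lVert a\rVert_\infty)$,
\begin{equation*}
\log\mathbb{E}[e^{\lambda S}]=\sum_{i=1}^n\left(-\lambda a_i-\tfrac{1}{2}\log(1-2\lambda a_i)\right).
\end{equation*}

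For the upper tail I would apply the elementary inequality $-\log(1-u)-u\leq u^2/(2(1-u))$ for $u\in[0,1)$ with $u=2\lambda a_i$, which bounds each summand by $\lambda^2 a_i^2/(1-2\lambda a_i)$. Using $1-2\lambda a_i\geq 1-2\lambda\lVert a\rVert_\infty$ and summing yields
\begin{equation*}
\log\mathbb{E}[e^{\lambda S}]\leq \frac{\lambda^2\lVert a\rVert_2^2}{1-2\lambda\lVert a\rVert_\infty}.
\end{equation*}
Markov's inequality then gives $\mathbb{P}(S\geq u)\leq\exp\bigl(-\lambda u+\lambda^2\lVert a\rVert_2^2/(1-2\lambda\lVert a\rVert_\infty)\bigr)$. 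Taking $u=2\lVert a\rVert_2\sqrt{x}+2\lVert a\rVert_\infty x$ together with the candidate $\lambda=\sqrt{x}/(\lVert a\rVert_2+2\lVert a\rVert_\infty\sqrt{x})$ keeps $2\lambda\lVert a\rVert_\infty<1$ and, after simplification using $1-2\lambda\lVert a\rVert_\infty=\lVert a\rVert_2/(\lVert a\rVert_2+2\lVert a\rVert_\infty\sqrt{x})$, the exponent collapses to exactly $-x$, which yields the first inequality.

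For the lower tail I would apply the same argument to $-S$: since $\mathbb{E}[e^{-\lambda a_i(Y_i^2-1)}]=e^{\lambda a_i}/\sqrt{1+2\lambda a_i}$ is finite for every $\lambda>0$, no constraint on $\lVert a\rVert_\infty$ is needed. Using the sharper bound $-\log(1+v)+v\leq v^2/2$ for $v\geq 0$ with $v=2\lambda a_i$ gives $\log\mathbb{E}[e^{-\lambda S}]\leq \lambda^2\lVert a\rVert_2^2$ directly. Minimizing $-\lambda u+\lambda^2\lVert a\rVert_2^2$ at $\lambda=u/(2\lVert a\rVert_2^2)$ and substituting $u=2\lVert a\rVert_2\sqrt{x}$ yields the second inequality. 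This cleanly explains why the $\lVert a\rVert_\infty$ correction appears only in the upper tail: the upper-tail MGF has a pole at $\lambda=1/(2\lVert a\rVert_\infty)$ that forces a Bernstein-type correction, while the lower-tail MGF is globally sub-Gaussian with variance proxy $2\lVert a\rVert_2^2$.

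The main obstacle is purely algebraic, namely verifying that the specific choice $\lambda=\sqrt{x}/(\lVert a\rVert_2+2\lVert a\rVert_\infty\sqrt{x})$ in the upper-tail argument makes the two terms in the exponent combine to exactly $-x$. This cancellation is a classical Laurent--Massart calculation and requires no machinery beyond the cumulant bound above; no concentration-of-measure tools or entropy methods are needed.
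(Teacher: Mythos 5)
The paper does not reproduce a proof of this proposition; it is cited directly from Laurent and Massart, so there is no in-paper argument to compare against. Your proof is correct and is essentially the argument from the cited source: the explicit chi-squared moment generating function, the cumulant bound $-\log(1-u)-u\leq u^2/(2(1-u))$ for the upper tail (giving the Bernstein/sub-gamma bound $\log\mathbb{E}[e^{\lambda S}]\leq \lambda^2\lVert a\rVert_2^2/(1-2\lambda\lVert a\rVert_\infty)$), the sharper bound $-\log(1+v)+v\leq v^2/2$ for the lower tail (giving a sub-Gaussian MGF), and the Cramér--Chernoff optimization. I verified the algebra: with $\lambda=\sqrt{x}/(\lVert a\rVert_2+2\lVert a\rVert_\infty\sqrt{x})$ one gets $1-2\lambda\lVert a\rVert_\infty=\lVert a\rVert_2/(\lVert a\rVert_2+2\lVert a\rVert_\infty\sqrt{x})$, hence $\lambda^2\lVert a\rVert_2^2/(1-2\lambda\lVert a\rVert_\infty)=x\lVert a\rVert_2/(\lVert a\rVert_2+2\lVert a\rVert_\infty\sqrt{x})$, and the exponent simplifies to exactly $-x$ as you claim; the lower-tail optimization at $\lambda=u/(2\lVert a\rVert_2^2)$ with $u=2\lVert a\rVert_2\sqrt{x}$ likewise gives $e^{-x}$. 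No gaps. The one alternative worth mentioning is that instead of exhibiting the optimal $\lambda$ by hand, one can invoke the generic sub-gamma tail bound (if $\log\mathbb{E}[e^{\lambda S}]\leq \frac{\lambda^2 v}{2(1-c\lambda)}$ then $\mathbb{P}(S\geq \sqrt{2vx}+cx)\leq e^{-x}$, here with $v=2\lVert a\rVert_2^2$, $c=2\lVert a\rVert_\infty$), which packages the algebraic cancellation once and for all; your explicit computation is equivalent and arguably more self-contained.
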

For a fixed $x$, the squared $\ell_2$ norm of $h_M(x)$ can be expressed as $\frac{Z_1^2+\cdots+Z_M^2}{M}$, where $Z_i\sim \mathcal{N}(0,k(x,x))$. Applying Proposition~\ref{prop:gaussian_vec} with $a_i=\frac{1}{M}$ yields the desired bound.
\end{proof}
Let $D=\{(x_1,y_1),\ldots,(x_{n-1},y_{n-1}),(x,y)\}$ and $D^{\prime}=\{(x_1,y_1),\ldots,(x_{n-1},y_{n-1}),(x^\prime,y^\prime)\}$ be neighboring datasets. By Lemma \ref{lem:priv_bound},
\begin{equation*}
    \mathbb{P}_{\omega_{1:M}}\left(\lVert h_M(x)\rVert_2^2\geq \kappa^2\left(1+2\sqrt{\frac{\log\frac{1}{t}}{M}}+\frac{2\log\frac{1}{t}}{M}\right)\right)\leq t. 
\end{equation*}
Letting $t=\delta/8$, define
\begin{equation*}
    E_1\coloneqq \Bigg\{\omega_{1:M}:\left\lVert h_M(x^\prime)h_M(x^\prime)^\top-h_M(x)h_M(x)^\top\right\rVert_F\leq 2\kappa^2\left(1+2\sqrt{\frac{\log\frac{8}{\delta}}{M}}+\frac{2\log\frac{8}{\delta}}{M}\right)\Bigg\}.
\end{equation*}
Then,
\begin{eqnarray*}
\mathbb{P}\left(E_1^c\right)&\leq&\mathbb{P}\left(\lVert h_M(x)h_M(x)^\top\rVert_F\geq\kappa^2\left(1+2\sqrt{\frac{\log\frac{8}{\delta}}{M}}+\frac{2\log\frac{8}{\delta}}{M}\right)\right)\\
&&+\mathbb{P}\left(\lVert h_M(x^\prime)h_M(x^\prime)^\top\rVert_F\geq\kappa^2\left(1+2\sqrt{\frac{\log\frac{8}{\delta}}{M}}+\frac{2\log\frac{8}{\delta}}{M}\right)\right)\\
    &\leq&\frac{\delta}{4}.
\end{eqnarray*}
Denote $\widetilde{C}_{M}^0\coloneqq \frac{1}{n}\sum_{i=1}^nz_iz_i^\top+\varepsilon_{M\times M}$. Then, for any measurable subset $A$ of $\mathcal{Z}$, the following inequality holds: {\small
\begin{eqnarray*}\mathbb{P}\left(\widetilde{C}_{M}^0\in A|D\right)
    &\leq&\frac{1}{4}\delta+\mathbb{P}\left(\hat{C}_{M}+\frac{2\Delta_1\left(1+\sqrt{2\log\frac{4}{\delta}}\right)}{n\epsilon}\varepsilon_{M\times M}\in A,E_1\vast|D\right)\\
    &\leq&\frac{1}{2}\delta+e^{\frac{1}{2}\epsilon}\mathbb{P}\left(\hat{C}_{M}+\frac{2\Delta_1\left(1+\sqrt{2\log\frac{4}{\delta}}\right)}{n\epsilon}\varepsilon_{M\times M}\in A\vast|D^\prime\right)
\end{eqnarray*}}
by Proposition~\ref{prop:gaussian}, where we treat $\hat{C}_{M,\lambda}$ as the $M^2$-dimensional vector. 
Hence, releasing $\widetilde{C}_{M}=\frac{1}{2}\left(\widetilde{C}_{M}^{0}+\widetilde{C}_{M}^{0\top}\right)$ is $\left(\frac{1}{2}\epsilon,\frac{1}{2}\delta\right)$-DP.

The privacy of $\widetilde{u}$ can be established in a similar manner. Define
\begin{equation*}
    E_2\coloneqq \left \{\omega_{1:M}:\lVert [y^\prime]_T h_M(x^\prime)-[y]_Th_M(x)\rVert_2\leq \Delta_2\right \}.
\end{equation*}
Then $\mathbb{P}\left(E_2\right)\geq1-\frac{\delta}{4}$. Thus, for any measurable subset $A$ of $\mathcal{Z}$, the following inequality holds:
{\small
\begin{eqnarray*}
    \mathbb{P}\left(\widetilde{u}\in A|D\right)
    &\leq&\frac{1}{4}\delta+\mathbb{P}\left(\hat{u}_T+\frac{2\Delta_2\left(1+\sqrt{2\log\frac{4}{\delta}}\right)}{n\epsilon}\varepsilon\in A,E_2\vast|D\right)\\
    &\leq&\frac{1}{2}\delta+e^{\frac{1}{2}\epsilon}\mathbb{P}\left(\hat{u}_T+\frac{2\Delta_2\left(1+\sqrt{2\log\frac{4}{\delta}}\right)}{n\epsilon}\varepsilon\in A\vast|D^\prime\right).
\end{eqnarray*}}
Therefore, releasing $\widetilde{u}$ is also $\left(\frac{1}{2}\epsilon,\frac{1}{2}\delta\right)$-DP. By Proposition~\ref{thm:comp}, the algorithm is $(\epsilon,\delta)$-DP.
\subsection{Proof of Theorem~\ref{thm:dprff_reg_priv}}\label{sec:proof_dprff_reg_priv}
The theorem can be shown in the same way as in the proof of Theorem~\ref{thm:dprp_reg_priv}. Furthermore, we can omit the step of bounding $\lVert h_M(x)\rVert_2$ with high probability since $\lVert\varphi_M(x)\rVert_2$ is bounded by $\sqrt{2}$.

\subsection{Proof of Theorem~\ref{thm:dprp_cls_priv}}\label{sec:proof_dprp_cls_priv}
Denote
    \begin{equation*}
        E_3:=\Bigg\{\lVert h_M(x)h_M(x)^\top\rVert_F,\lVert h_M(x^\prime)h_M(x^\prime)^\top\rVert_F\leq\Delta_3^2\Bigg\}.
    \end{equation*}
    Then, $\mathbb{P}(E_3^c)\leq\frac{\delta}{2}$ by Lemma~\ref{lem:priv_bound}.
    
From the definition of $\widetilde{\beta}_D$ we have
\begin{equation*}
    0=\frac{1}{n}\sum_{i=1}^nl_{\hat{y}}(y_i,\widetilde{\beta}_D^\top h_M(x_i))h_M(x_i)+\lambda_0\widetilde{\beta}_D+\frac{1}{n}b.
\end{equation*}
Denote
\begin{equation*}
    b(\beta;D):=-\sum_{i=1}^nl_{\hat{y}}(y_i,\beta^\top h_M(x_i))h_M(x_i)-n\lambda_0\beta,
\end{equation*}
then $b(\widetilde{\beta}_D;D)\sim N\left(0,\sigma_b^2I\right)$ holds where $\sigma_b^2=\frac{4c_1^2\Delta^2\left(2\log\frac{4}{\delta}+\epsilon\right)}{\epsilon^2}$. Thus, for $\beta\in\mathbb{R}^M$, we have
\begin{equation*}
    p_D(\beta)=p_b(b(\beta;D))|\mathrm{det}\nabla_{\beta}b(\beta;D)|,
\end{equation*}
where $p_D$ and $p_b$ denote the densities of $\widetilde{\beta}_D$ and $N\left(0,\sigma_b^2I\right)$ and $\nabla$ denotes the Jacobian matrix of $b$ with respect to $\beta$.

Let $D=\{(x_1,y_1),\ldots,(x_{n-1},y_{n-1}),(x,y)\}$ and $D^\prime=\{(x_1,y_1),\ldots,(x_{n-1},y_{n-1}),(x^\prime,y^\prime)\}$ be neighboring datasets.

We bound the ratio $\frac{p_D(\beta)}{p_{D^\prime}(\beta)}=\frac{p_b(b(\beta;D))}{p_b(b(\beta;D^\prime))}\frac{|\mathrm{det}\nabla_{\beta}b(\beta;D)|}{|\mathrm{det}\nabla_{\beta}b(\beta;D^\prime)|}$.

The first term of the ratio can be bounded as follows:
\begin{align*}
\frac{p_b(b(\beta;D))}{p_b(b(\beta;D^\prime))}=&\mathrm{exp}\left(\frac{\lVert b(\beta;D^\prime)\rVert_2^2-\lVert b(\beta;D)\rVert_2^2}{2\sigma_b^2}\right)\\
=&\mathrm{exp}\left(\frac{\lVert b(\beta;D^\prime)-b(\beta;D)\rVert_2^2+2\left\langle b(\beta;D^\prime)-b(\beta;D),b(\beta;D)\right\rangle}{2\sigma_b^2}\right).
\end{align*}
Note that $b(\beta;D^\prime)-b(\beta;D)=l_{\hat{y}}(y,\beta^\top h_M(x))h_M(x)-l_{\hat{y}}(y^\prime,\beta^\top h_M(x^\prime))h_M(x^\prime)$ is independent of $b$. Thus, we have 
\begin{equation*}
    \left\langle b(\beta;D^\prime)-b(\beta;D),b(\beta;D)\right\rangle\sim N(0,\sigma_b^2\lVert b(\beta;D^\prime)-b(\beta;D)\rVert_2^2).
\end{equation*}
Denote
\begin{equation*}
    E_4:=\left\{\left|\left\langle b(\beta;D^\prime)-b(\beta;D),b(\beta;D)\right\rangle\right|\leq\sigma_b\lVert b(\beta;D^\prime)-b(\beta;D)\rVert_2\sqrt{2\log\frac{4}{\delta}}\right\}.
\end{equation*}
Then $\mathbb{P}(E_4^c)\leq\delta/2$ by Lemma 17 in \cite{kifer2012private}, which states $\mathbb{P}(|Z|>t)\leq \mathrm{exp}(-t^2/2)$ if $t\geq 1$.

Also,
\begin{equation*}
    \lVert b(\beta;D^\prime)-b(\beta;D)\rVert_2\leq 2c_1\Delta_3,
\end{equation*}
on $E_3$. Thus,
\begin{align*}
\frac{p_b(b(\beta;D))}{p_b(b(\beta;D^\prime))}\leq&\mathrm{exp}\left(\frac{4c_1^2\Delta_3^2+4c_1\sigma_b\Delta_3\sqrt{2\log\frac{4}{\delta}}}{2\sigma_b^2}\right)\\
\leq&\mathrm{exp}\left(\frac{\epsilon}{2}\right)
\end{align*}
on $E_3\cap E_4$.

Next, we bound the second term. Note that
\begin{equation*}
    \nabla_{\beta}b(\beta;D^\prime)=-\sum_{i=1}^nl_{\hat{y}\hat{y}}(y_i,\beta^\top h_M(x_i))h_M(x_i)h_M(x_i)^\top-n\lambda_0 I.
\end{equation*}
Thus, $\lVert \nabla_{\beta}b(\beta;D^\prime)\rVert_2\geq n\lambda_0$. We bound the second term using the following lemma:
\begin{lemma}[Lemma 10 in \cite{chaudhuri2011differentially}]
If $A$ is full rank, and if $B$ has rank at most 2, then,
\begin{eqnarray*}
    \frac{\mathrm{det}(A+B)-\mathrm{det}(A)}{\mathrm{det}(A)}=\lambda_1(A^{-1}B)+\lambda_2(A^{-1}B)+\lambda_1(A^{-1}B)\lambda_2(A^{-1}B)
\end{eqnarray*}
where $\lambda_j(\cdot)$ is the $j$-th eigenvalue of the given matrix.
\end{lemma}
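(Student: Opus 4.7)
The plan is to reduce the identity to a statement about the spectrum of $A^{-1}B$ and then exploit its low rank. Since $A$ is invertible, factoring gives $\det(A+B) = \det(A)\det(I + A^{-1}B)$, so the left-hand side of the claim rewrites as $\det(I + A^{-1}B) - 1$. The task therefore becomes computing $\det(I + A^{-1}B)$ under the constraint $\mathrm{rank}(A^{-1}B) \le 2$.

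Next I would track the eigenvalue structure of $A^{-1}B$. Because $A^{-1}$ is invertible, $\mathrm{rank}(A^{-1}B) = \mathrm{rank}(B) \le 2$, so the nullspace of $A^{-1}B$ has dimension at least $n-2$. This forces the eigenvalue $0$ to appear with algebraic multiplicity at least $n-2$ in the characteristic polynomial of $A^{-1}B$, so the remaining (at most two) roots can be written as $\lambda_1, \lambda_2 \in \mathbb{C}$, which are precisely the quantities $\lambda_1(A^{-1}B)$ and $\lambda_2(A^{-1}B)$ appearing in the claim. Shifting by the identity then produces the eigenvalues $1+\lambda_1$, $1+\lambda_2$, and $1$ with multiplicity $n-2$ for $I + A^{-1}B$, again counted with algebraic multiplicity.

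Because the determinant equals the product of eigenvalues with algebraic multiplicity, this yields
\[
\det(I + A^{-1}B) = (1+\lambda_1)(1+\lambda_2) \cdot 1^{n-2} = 1 + \lambda_1 + \lambda_2 + \lambda_1\lambda_2,
\]
and subtracting $1$ gives the stated identity. The one subtlety is that $A^{-1}B$ need not be diagonalizable, so the argument must use algebraic rather than geometric multiplicities throughout; phrasing everything in terms of the characteristic polynomial $\chi_{A^{-1}B}(t) = t^{n-2}(t-\lambda_1)(t-\lambda_2)$ and using the identity $\det(M) = (-1)^n \chi_M(0)$ sidesteps any need for Jordan decomposition. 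Apart from this bookkeeping, no further obstacles arise.
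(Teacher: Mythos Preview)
Your proof is correct. The paper does not supply its own proof of this lemma; it simply quotes the result as Lemma 10 of \cite{chaudhuri2011differentially} and applies it directly, so there is nothing to compare against. Your argument via $\det(A+B)=\det(A)\det(I+A^{-1}B)$ together with the rank bound forcing $\chi_{A^{-1}B}(t)=t^{n-2}(t-\lambda_1)(t-\lambda_2)$ is the standard route and handles the non-diagonalizable case cleanly.
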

Put $A=\nabla_{\beta}b(\beta;D^\prime)$ and $B=\nabla_{\beta}b(\beta;D)-\nabla_{\beta}b(\beta;D^\prime)$. Then
\begin{eqnarray*}
    |\lambda_j(A^{-1}B)|\leq \lVert A^{-1}\rVert|\lambda_j(B)|\leq\frac{|\lambda_j(B)|}{n\lambda_0},
\end{eqnarray*}
for $j=1,2$. Also,
\begin{align*}
|\lambda_1(B)|+|\lambda_2(B)|=&\lVert B\rVert_*\\
    \leq&\lVert l_{\hat{y}}(y,\beta^\top h_M(x))h_M(x)h_M(x)^\top-l_{\hat{y}}(y^\prime,\beta^\top h_M(x^\prime))h_M(x^\prime)h_M(x^\prime)^\top \rVert_*\\
    \leq&2c_2\Delta^2,
\end{align*}
where $\lVert\cdot\rVert_{*}$ denotes the trace norm of the given matrix.
\begin{align*}
\frac{|\mathrm{det}\nabla_{\beta}b(\beta;D)|}{|\mathrm{det}\nabla_{\beta}b(\beta;D^\prime)|}=&\left|\frac{\mathrm{det}\nabla_{\beta}b(\beta;D)}{\mathrm{det}\nabla_{\beta}b(\beta;D^\prime)}\right|\\
    =&1+\frac{2c_2\Delta_3^2}{n\lambda_0}+\frac{c_2^2\Delta_3^4}{n^2\lambda_0^2}\\
    \leq&\mathrm{exp}\left(\frac{\epsilon}{2}\right).
\end{align*}
Therefore, $\frac{p_D(\beta)}{p_{D^\prime}(\beta)}\leq e^\epsilon$ on $E_3\cap E_4$. Then for any measurable set $A\subset\mathbb{R}^M$, we have
\begin{align*}
\mathbb{P}\left(\widetilde{\beta}_D\in A\right)\leq&\mathbb{P}\left(\widetilde{\beta}_D\in A,E_3\cap E_4\right)+\mathbb{P}\left(E_3^c\right)+\mathbb{P}\left(E_4^c\right)\\
    \leq&e^\epsilon\mathbb{P}\left(\widetilde{\beta}_{D^\prime}\in A,E_3\cap E_4\right)+\mathbb{P}\left(E_3^c\right)+\mathbb{P}\left(E_4^c\right)\\
    \leq&e^\epsilon\mathbb{P}\left(\widetilde{\beta}_{D^\prime}\in A,E_3\cap E_4\right)+\delta.
\end{align*}

\subsection{Proof of Lemma~\ref{lem:general_bound}}
We apply the Borel–TIS inequality to obtain a high-probability bound on $\sup_{x\in\mathcal{X}}h(x,\omega)$.
\begin{proposition}[Borel-TIS inequality in \citet{adler2007gaussian}]\label{prop:borel_tis}
    \begin{eqnarray*}
    \mathbb{P}_\omega \left(\sup_{x\in\mathcal{X}}h(x,\omega)-\mathbb{E}\left[\sup_{x\in\mathcal{X}}h(x,\omega)\right]>\sqrt{2\sigma_T^2\log\frac{1}{t}}\right)\leq t
\end{eqnarray*}
where $\sigma_T^2:=\sup_{x\in\mathcal{X}}\mathbb{E}_\omega \left[|h(x,\omega)|^2\right]$.
\end{proposition}
Therefore, the following holds:
\begin{eqnarray*}
    \mathbb{P}_{\omega_{1:M}} \left(\sup_{1\leq j\leq M}\sup_{x\in\mathcal{X}}|h(x,\omega_j)|>H\right)\leq t.
\end{eqnarray*}

\subsection{Proof of Theorem~\ref{thm:dprp_reg_rate_simple}}
By setting $r=1/2$ in Theorem~\ref{thm:dprp_reg_rate}, we obtain Theorem~\ref{thm:dprp_reg_rate_simple}.
\subsection{Proof of Theorem~\ref{thm:dprp_reg_rate}}
We prove the theorem under a weaker assumption than Assumption~\ref{assump:cap}.
\begin{assumption}\label{assump:cap_real}
   For $\lambda>0$, define the \emph{effective dimension} as $N(\lambda)\coloneqq \mathrm{tr}(L_\lambda^{-1}L)$, where $L_\lambda\coloneqq L+\lambda I$. We assume that there exist constants $Q>0$ and $\gamma\in [0,1]$ such that $N(\lambda)\leq Q\lambda^{-\gamma}$ for all $\lambda>0$.
\end{assumption}
Assumption~\ref{assump:cap_real} is weaker than Assumption~\ref{assump:cap} since there exists a constant $b>0$ such that $\mu_l\leq bl^{-1/\gamma}$ under Assumption~\ref{assump:cap_real}. See Section~\ref{sec:cap_cap_real} for the proof.

Denote $\mathcal{H}_M\coloneqq \mathrm{span}\{h(\cdot,\omega_1),\ldots,h(\cdot,\omega_M)\}$. The output of Algorithm~\ref{alg:dprp_reg} resides in $\mathcal{H}_M$. For any prediction function $f\in \mathcal{H}_M$, the risk $\mathcal{E}(f)$ can be decomposed as follows:
\begin{equation*}
    \mathcal{E}(f)-\min_{f\in\mathcal{H}_k}\mathcal{E}(f)=\lVert f-f_{\mathcal{H}_k}\rVert^2.
\end{equation*}
Let $\hat{f}_{M,\lambda}\coloneqq \hat{\beta}_{M,\lambda}^\top h_M$ denote the (non-private) estimator obtained from kernel ridge regression using random projection.  Let $\widetilde{f}$ denote the output of the private Algorithm~\ref{alg:dprp_reg}. Applying the triangle inequality yields the following decomposition of the excess risk:
\begin{equation*}
    \mathcal{E}(\widetilde{f})-\min_{f\in\mathcal{H}_k}\mathcal{E}(f)\leq (\lVert f_{\mathcal{H}_k}-\hat{f}_{M,\lambda}\rVert+\lVert \hat{f}_{M,\lambda}-\widetilde{f}\rVert)^2.
\end{equation*}
The first term captures the generalization error and the approximation error due to finite samples and random projection, respectively. The second term accounts for the perturbation error introduced to preserve differential privacy.

The following lemmas provide high-probability bounds on the two components of the excess risk.
\begin{lemma}\label{lem:nonpriv_rp}
For $t\in(0,e^{-1}]$, if  $M\gtrsim \left(\frac{\kappa}{\lVert L\rVert}+N(\lambda)\right)\log\frac{2}{t},\lambda\in[0,\lVert L\rVert]$, and $n\gtrsim \frac{H^2}{\lambda}\log\frac{N(\lambda)}{t}$, then
    \begin{equation*}
\lVert f_{\mathcal{H}_k}-\hat{f}_{M,\lambda}\rVert\lesssim \frac{H^2\log\frac{1}{t}}{n\sqrt{\lambda}}+\sqrt{\frac{H^2N(\lambda)\log\frac{1}{t}}{n}}+\sqrt{\frac{\lambda N(\lambda)}{M}}\log\frac{1}{t}+ \lambda^r
\end{equation*}
holds with probability at least $1-t$.
\end{lemma}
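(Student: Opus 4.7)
The plan is to use a triangle decomposition in $L^2(P_X)$ through two intermediate estimators. Let $f_\lambda\coloneqq (L+\lambda I)^{-1}L f_{\mathcal{H}_k}$ be the population kernel ridge estimator in the full RKHS, and let $f_{M,\lambda}\coloneqq \beta_{M,\lambda}^\top h_M$ with $\beta_{M,\lambda}\coloneqq (C_M+\lambda I)^{-1}u_M$, where $C_M\coloneqq \mathbb{E}[h_M(X)h_M(X)^\top]$ and $u_M\coloneqq \mathbb{E}[Y\,h_M(X)]$, be the population counterpart on the random subspace $\mathcal{H}_M=\mathrm{span}\{h(\cdot,\omega_1),\ldots,h(\cdot,\omega_M)\}$ (conditional on $\omega_{1:M}$). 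Then
\begin{equation*}
\lVert f_{\mathcal{H}_k}-\hat{f}_{M,\lambda}\rVert \le \lVert f_{\mathcal{H}_k}-f_\lambda\rVert + \lVert f_\lambda-f_{M,\lambda}\rVert + \lVert f_{M,\lambda}-\hat{f}_{M,\lambda}\rVert,
\end{equation*}
and I would bound the three pieces separately, each on a high-probability event, and conclude by a union bound.

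The first (regularization bias) term is handled by the source condition: writing $f_{\mathcal{H}_k}=L^r g$ with $\lVert g\rVert\le R$, spectral calculus gives $\lVert f_{\mathcal{H}_k}-f_\lambda\rVert = \lVert \lambda L_\lambda^{-1}L^r g\rVert \le R\lambda^r$, which yields the $\lambda^r$ term. The third (sample) term is handled by conditioning on $\omega_{1:M}$ and treating the problem as a finite-dimensional KRR whose features are uniformly bounded by $H$ via Lemma~\ref{lem:general_bound}. Expressing the $L^2(P_X)$ discrepancy as $\lVert C_M^{1/2}(\hat{\beta}_{M,\lambda}-\beta_{M,\lambda})\rVert_2$ and expanding through the resolvent identity
\begin{equation*}
\hat{\beta}_{M,\lambda}-\beta_{M,\lambda}=(\hat{C}_M+\lambda I)^{-1}\bigl[(\hat{u}-u_M)-(\hat{C}_M-C_M)\beta_{M,\lambda}\bigr],
\end{equation*}
I would apply vector and matrix Bernstein inequalities to the two centered sums, weighting by $C_{M,\lambda}^{-1/2}$ so the variance proxy is controlled by the effective dimension $\mathrm{tr}(C_MC_{M,\lambda}^{-1})\lesssim N(\lambda)$. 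The assumption $n\gtrsim (H^2/\lambda)\log(N(\lambda)/t)$ supplies the standard ``$\hat{C}_M+\lambda I \asymp C_M+\lambda I$'' event, which lets me replace the empirical resolvent by the population one up to a constant factor, delivering the two data-driven terms $H^2\log(1/t)/(n\sqrt{\lambda})$ and $\sqrt{H^2N(\lambda)\log(1/t)/n}$.

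The random-projection term $\lVert f_\lambda-f_{M,\lambda}\rVert$ is the main obstacle, since it is specific to the Gaussian-process construction of Definition~\ref{def:rp}. The key observation is that each $h(\cdot,\omega_j)$ is a random element of $L^2(P_X)$ whose covariance operator equals $L$, so the random operator $\Phi_M\coloneqq \tfrac{1}{M}\sum_{j=1}^M h(\cdot,\omega_j)\otimes h(\cdot,\omega_j)$ satisfies $\mathbb{E}[\Phi_M]=L$, and the analogous mean object is centered at $L f_{\mathcal{H}_k}$. I would apply an operator Bernstein inequality to $L_\lambda^{-1/2}(\Phi_M-L)L_\lambda^{-1/2}$ (and its mean-side analogue), combining the sub-exponential tails of Gaussian quadratic functionals with the variance bound $\mathrm{tr}(LL_\lambda^{-1})=N(\lambda)$; after undoing the $L_\lambda^{1/2}$ weights this produces the $\sqrt{\lambda N(\lambda)/M}\log(1/t)$ term. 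The condition $M\gtrsim(\kappa/\lVert L\rVert+N(\lambda))\log(2/t)$ is precisely what is required so that the projected regularized covariance is spectrally equivalent to $L_\lambda$ on the relevant subspace, ensuring that $f_{M,\lambda}$ is well-posed and close to $f_\lambda$ in $L^2(P_X)$. Combining the three bounds by a union bound completes the proof.
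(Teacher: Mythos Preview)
Your proposal is correct and follows essentially the same approach as the paper. The paper organizes the decomposition into five terms rather than three (keeping the empirical covariance $\hat{C}_M$ in some intermediate points and splitting off a term that vanishes because $S_M^\top(I-P)=0$ almost surely), but the key ingredients---the source condition for the bias, Bernstein inequalities with $C_{M,\lambda}^{-1/2}$ weighting and the effective-dimension control $\mathrm{tr}(C_MC_{M,\lambda}^{-1})\lesssim N(\lambda)$ for the sample terms, and operator concentration for $L_\lambda^{-1/2}(L_M-L)L_\lambda^{-1/2}$ for the projection term---are identical. For the last of these, the paper invokes a Koltchinskii--Lounici-type Gaussian covariance concentration bound (their Proposition~\ref{prop:gaussian_cov}) rather than a generic sub-exponential operator Bernstein, which handles the unboundedness of the Gaussian-process samples directly; your phrasing is slightly vague on this point but the intent is clearly the same.
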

\begin{lemma}\label{lem:priv_ridge} For $t\in (0,e^{-1}]$, let $T=\kappa R+2B\log\frac{n}{t}+\sqrt{2\log\frac{n}{t}}\sigma$. Then the following holds
    \begin{eqnarray*}
        \lVert \hat{f}_{M,\lambda}- \widetilde{f}\rVert
        \lesssim\frac{\sqrt{M}}{n\sqrt{\lambda}\epsilon}\left(\kappa^{2r-1}R+T+\frac{BH\log\frac{1}{t}}{n\lambda}+\sqrt{\sigma^2\frac{N(\lambda)\log\frac{1}{t}}{n\lambda}}+\frac{\sqrt{M}}{n\lambda\epsilon}\right)
    \end{eqnarray*}
    with probability at least $1-t$ if $\frac{n^2\lambda^2\epsilon^2}{\Delta_1^2\log^2\frac{1}{\delta}}\gtrsim M\gtrsim \max\left\{\left(\frac{\kappa}{\lVert L\rVert}+N(\lambda)\right)\log\frac{2}{t},\log\frac{2}{t}\right\}$, $n\gtrsim \frac{H^2}{\lambda}\log\frac{N(\lambda)}{t}$, and $\lambda\in[0,\lVert L\rVert]$.
\end{lemma}
Combining Lemmas~\ref{lem:nonpriv_rp} and~\ref{lem:priv_ridge}, we obtain a high-probability bound on the excess risk of Algorithm~\ref{alg:dprp_reg}. By appropriately choosing the hyperparameters $M$ and $\lambda$, we recover the rates stated in Theorem~\ref{thm:dprp_reg_rate}. Specifically, setting $\lambda=O\left(n^{-\frac{1}{2r+\gamma}}\vee (n\epsilon)^{-\frac{2}{4r+\gamma}}\right)$ and $M=O\left(n^{\frac{2r+\gamma-1}{2r+\gamma}}\wedge (n\epsilon)^{\frac{4r+2\gamma-2}{4r+\gamma}}\right)$ yields the rate given in Theorem~\ref{thm:dprp_reg_rate}. 

\subsection{Proof of Lemma~\ref{lem:nonpriv_rp}}
Define the following operators:
\begin{align*}
&L_M:L^2(P_X)\rightarrow L^2(P_X)\textrm{ defined as }L_Mf:=\frac{1}{M}\sum_{i=1}^Mh(\cdot,\omega_i)\langle h(\cdot,\omega_i),f\rangle_{L^2(P_X)},\\
&S_M:\mathbb{R}^M\rightarrow L^2(P_X)\textrm{ defined as }(S_Mu)(\cdot)=h_M(\cdot)^\top u,\\
&S_M^\top :L^2(P_X)\rightarrow \mathbb{R}^M\textrm{ defined as }S_M^\top f=\langle h_M,f\rangle_{L^2(P_X)}=\mathbb{E}\left[h_M(X)f(X)\right].
\end{align*}
Then $L_M=S_MS_M^\top $ holds.

The following are the covariance and sample covariance matrices arising in linear ridge regression of processed data $\{(h_M(x_i),y_i)\}_{i=1}^n$:
\begin{align*}
&C_M:\mathbb{R}^M\rightarrow \mathbb{R}^M\textrm{ defined as }C_M=\mathbb{E}_X\left[h_M(X)h_M(X)^\top\right],\\
&\hat{C}_M:\mathbb{R}^M\rightarrow \mathbb{R}^M\textrm{ defined as }\hat{C}_M:=\frac{1}{n}\sum_{i=1}^n h_M(x_i)h_M(x_i)^\top,\\
&\hat{S}_M :\mathbb{R}^M\rightarrow\mathbb{R}^n \textrm{ defined as }\hat{S}_Mu:=\left(h_M(x_1)^\top u,\ldots,h_M(x_n)^\top u\right),\\
&\hat{S}_M^{\top} :\mathbb{R}^n\rightarrow \mathbb{R}^M\textrm{ defined as }\hat{S}_M^{\top} \hat{y}=\frac{1}{n}\sum_{i=1}^nh_M(x_i)y_i,
\end{align*}
where $\hat{y}=(y_1,\ldots,y_n)$. Then $\hat{C}_M=\hat{S}_M^{\top} \hat{S}_M$. Similarly, $C_M=S_M^\top S_M$. Note that the defined operators are bounded with probability one since Gaussian processes lie in $L^2(P_X)$ with probability one.

Now we define our kernel regression estimator using $h_M$, which corresponds to linear ridge regression for $\{(h_M(x_i),y_i)\}_{i=1}^n$:
\begin{equation*}
    \hat{f}_{\lambda}:=S_M\hat{C}_{M,\lambda}^{-1}\hat{S}_M^{\top} \hat{y}
\end{equation*}
Let $P:L^2(P_X)\rightarrow L^2(P_X)$ denote the projection onto the range of $L$. For $x\in\mathcal{X}$, define $f_0(x)\coloneqq \mathbb{E}\left[Y|X=x\right]$. Then $f_{\mathcal{H}_k}=Pf_0$. We first note that, for $f\in\mathcal{H}_M$, the excess risk can be expressed as the square of the $L^2(P_X)$ distance, that is, $
    \mathcal{E}(f)-\min_{f\in\mathcal{H}_k}\mathcal{E}(f)=\lVert f-f_{\mathcal{H}_k}\rVert^2$:
\begin{align*}
    &\mathbb{E}[(Y-f(X))^2]\\
    =&\lVert f-f_0\rVert^2+\mathbb{E}\left[(Y-f_0(X))^2\right]\\
    =&\lVert f-f_{\mathcal{H}_k}\rVert^2+2\langle f-f_{\mathcal{H}_k},f_{\mathcal{H}_k}-f_0\rangle+\lVert f_{\mathcal{H}_k}-f_0\rVert^2+\mathbb{E}\left[(Y-f_0(X))^2\right]\\
    =&\lVert f-f_{\mathcal{H}_k}\rVert^2+2\langle f-f_{\mathcal{H}_k},f_{\mathcal{H}_k}-f_0\rangle+\mathbb{E}\left[(Y-f_{\mathcal{H}_k}(X))^2\right]\\
    =&\lVert f-f_{\mathcal{H}_k}\rVert^2+2\langle f-f_{\mathcal{H}_k},(I-P)f_0\rangle+\mathbb{E}\left[(Y-f_{\mathcal{H}_k}(X))^2\right]\\
    =&\lVert f-f_{\mathcal{H}_k}\rVert^2+2\langle (I-P)f,f_0\rangle-2\langle (I-P)f_{\mathcal{H}_k},f_0\rangle+\mathbb{E}\left[(Y-f_{\mathcal{H}_k}(X))^2\right]\\
    =&\lVert f-f_{\mathcal{H}_k}\rVert^2+2\langle (I-P)f,f_0\rangle+\mathbb{E}\left[(Y-f_{\mathcal{H}_k}(X))^2\right]\\
    =&\lVert f-f_{\mathcal{H}_k}\rVert^2+\mathbb{E}\left[(Y-f_{\mathcal{H}_k}(X))^2\right],
\end{align*}
where the sixth equality follows from the fact that $(I-P)f_{\mathcal{H}_k}\equiv0$, since $Pf_{\mathcal{H}_k}=f_{\mathcal{H}_k}$; and the seventh equality follows from the fact that $(I-P)f\equiv0$ almost surely, as stated in Lemma~\ref{lem:secondterm}.
\begin{align*}
\sqrt{\mathcal{E}(\hat{f}_{M,\lambda})-\min_{f\in\mathcal{H}_k}\mathcal{E}(f)} \leq &\left\lVert\hat{f}_{M,\lambda}-S_M\hat{C}_{M,\lambda}^{-1}S_M^\top f_0\right\rVert\\
    &+\left\lVert S_M\hat{C}_{M,\lambda}^{-1}S_M^\top (I-P)f_0\right\rVert
    \\
    &+\left\lVert S_M \hat{C}_{M,\lambda}^{-1}S_M^\top Pf_0-L_ML_{M,\lambda}^{-1}Pf_0\right\rVert\\
    &+\left\lVert L_ML_{M,\lambda}^{-1}Pf_0-LL_{\lambda}^{-1}Pf_0\right\rVert\\
    &+\left\lVert LL_\lambda^{-1}Pf_0-Pf_0\right\rVert.
\end{align*}
The first term represents the error due to response noise; the second term arises from restricting the hypothesis space to $\mathcal{H}_k$; the third term corresponds to the covariance approximation error from the randomly projected data; the fourth term reflects information loss due to the dimension reduction via random projection; and the fifth term captures the bias introduced by the ridge penalty. Using the bounds in \citet{randomfeature}, we can bound these terms as follows:
\begin{lemma}\label{lem:firstterm}
    \begin{equation*}
        \left\lVert\hat{f}_{M,\lambda}-S_M\hat{C}_{M,\lambda}^{-1}S_M^\top f_0\right\rVert\leq\left\lVert S_M\hat{C}_{M,\lambda}^{-1}C_{M,\lambda}^{\frac{1}{2}}\right\rVert\left\lVert C_{M,\lambda}^{-\frac{1}{2}}(\hat{S}_M^{\top} \hat{y}-S_M^\top f_0)\right\rVert.
    \end{equation*}
\end{lemma}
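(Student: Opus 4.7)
The claim is purely an operator-norm factorisation, so the proof proposal is short.

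The plan is to start from the defining identity $\hat{f}_{M,\lambda}=S_M\hat{C}_{M,\lambda}^{-1}\hat{S}_M^{\top}\hat{y}$. Subtracting the second term on the left-hand side of the inequality gives
\begin{equation*}
\hat{f}_{M,\lambda}-S_M\hat{C}_{M,\lambda}^{-1}S_M^{\top}f_0=S_M\hat{C}_{M,\lambda}^{-1}\bigl(\hat{S}_M^{\top}\hat{y}-S_M^{\top}f_0\bigr),
\end{equation*}
which isolates the discrepancy between the empirical ``label operator'' $\hat{S}_M^\top \hat{y}$ and its population counterpart $S_M^\top f_0$.

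Next I would insert the resolution of the identity $C_{M,\lambda}^{1/2}C_{M,\lambda}^{-1/2}=I$ on $\mathbb{R}^M$ (which is well defined since $C_{M,\lambda}$ is positive definite for $\lambda>0$) between $\hat{C}_{M,\lambda}^{-1}$ and the residual vector, obtaining
\begin{equation*}
S_M\hat{C}_{M,\lambda}^{-1}\bigl(\hat{S}_M^{\top}\hat{y}-S_M^{\top}f_0\bigr)=\bigl(S_M\hat{C}_{M,\lambda}^{-1}C_{M,\lambda}^{1/2}\bigr)\bigl(C_{M,\lambda}^{-1/2}(\hat{S}_M^{\top}\hat{y}-S_M^{\top}f_0)\bigr).
\end{equation*}
Taking the $L^2(P_X)$ norm and applying the operator-norm/vector-norm submultiplicativity $\lVert Av\rVert\le \lVert A\rVert\,\lVert v\rVert$ (with $A=S_M\hat{C}_{M,\lambda}^{-1}C_{M,\lambda}^{1/2}$ viewed as an operator $\mathbb{R}^M\to L^2(P_X)$ and $v$ the $\mathbb{R}^M$-vector $C_{M,\lambda}^{-1/2}(\hat{S}_M^{\top}\hat{y}-S_M^{\top}f_0)$) immediately yields the stated bound.

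There is essentially no obstacle: the whole argument is a standard bias/variance-style preconditioning trick that separates the sample-dependent ``design'' factor $S_M\hat{C}_{M,\lambda}^{-1}C_{M,\lambda}^{1/2}$ from the label-noise factor $C_{M,\lambda}^{-1/2}(\hat{S}_M^{\top}\hat{y}-S_M^{\top}f_0)$. The only point requiring mild care is keeping the domains and codomains of the operators consistent so that the norm on the left is the $L^2(P_X)$ function norm while the norm of the residual factor on the right is the Euclidean norm on $\mathbb{R}^M$; once this bookkeeping is in place, the inequality is an immediate consequence of submultiplicativity.
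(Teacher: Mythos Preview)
Your proposal is correct and is exactly the standard argument: write $\hat{f}_{M,\lambda}-S_M\hat{C}_{M,\lambda}^{-1}S_M^\top f_0=S_M\hat{C}_{M,\lambda}^{-1}(\hat{S}_M^\top\hat{y}-S_M^\top f_0)$, insert $C_{M,\lambda}^{1/2}C_{M,\lambda}^{-1/2}$, and apply submultiplicativity. The paper does not spell out a separate proof for this lemma (it simply lists it among bounds taken from \citet{randomfeature}), so your derivation is precisely what is intended.
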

\begin{lemma}\label{lem:secondterm}
    $\mathbb{P}_\omega \left(\left\lVert S_M\hat{C}_{M,\lambda}^{-1}S_M^\top (I-P)f_0\right\rVert=0\right)=1$. Actually, $\left\lVert S_M^\top (I-P)\right\rVert=0$ almost surely.
\end{lemma}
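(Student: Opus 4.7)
The plan is to establish the stronger second claim that $\lVert S_M^\top (I-P)\rVert = 0$ almost surely; the first claim follows immediately, since then $S_M^\top(I-P)f_0 = 0$ a.s., and applying the bounded operator $S_M\hat C_{M,\lambda}^{-1}$ (which is well-defined a.s.\ because $\lambda>0$) to the zero vector still yields zero.

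Since $L$ is self-adjoint and positive on $L^2(P_X)$, there is an orthogonal decomposition $L^2(P_X) = \overline{\mathrm{range}(L)} \oplus \ker(L)$, so $I-P$ coincides with the orthogonal projection onto $\ker(L)$. Hence it suffices to show that, almost surely in $\omega$, each sample path $h(\cdot,\omega_j)$ lies in $\ker(L)^\perp$, i.e., is $L^2(P_X)$-orthogonal to every $f \in \ker(L)$. For a single fixed $f\in \ker(L)$, I would note that $Z_f \coloneqq \int_\mathcal{X} h(x,\omega_j)f(x)\, dP_X(x)$ is a centered Gaussian random variable (justified via the Karhunen--Lo\`eve expansion, or by approximating the integral by Riemann sums of jointly Gaussian vectors and passing to the $L^2$ limit), whose variance is
\begin{equation*}
\iint k(x,y)f(x)f(y)\, dP_X(x)\,dP_X(y) \;=\; \langle f, Lf\rangle \;=\; 0,
\end{equation*}
so $Z_f = 0$ almost surely.

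To upgrade this pointwise statement to a uniform one over $\ker(L)$, I would invoke separability of $L^2(P_X)$: choose a countable dense subset $\{f_k\}\subset \ker(L)$ and take a countable union of the null events above over $k=1,2,\ldots$ and $j=1,\ldots,M$ to conclude that, almost surely, $\langle h(\cdot,\omega_j), f_k\rangle_{L^2(P_X)} = 0$ for all $k$ and $j$ simultaneously. Since Assumption~\ref{assump:kernel} together with Fubini yields $h(\cdot,\omega_j) \in L^2(P_X)$ a.s., the map $f \mapsto \langle h(\cdot,\omega_j), f\rangle$ is a continuous linear functional on $L^2(P_X)$, so orthogonality extends from the dense set $\{f_k\}$ to the whole closed subspace $\ker(L)$. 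This gives $S_M^\top(I-P)f = 0$ for every $f \in L^2(P_X)$ a.s., hence $\lVert S_M^\top(I-P)\rVert = 0$ a.s.

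The only mildly delicate point is the passage from ``for each fixed $f$, orthogonality holds almost surely in $\omega$'' to ``almost surely, orthogonality holds for all $f$ simultaneously'', which relies on separability of $\ker(L)$ and continuity of the inner product. Once this is secured, the remaining steps are routine Gaussian and measure-theoretic bookkeeping, and no regularity of $f_0$ beyond $f_0 \in L^2(P_X)$ is required.
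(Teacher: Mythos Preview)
Your argument is correct. The paper does not give a standalone proof of this lemma; it lists the statement alongside the other decomposition bounds and attributes the general framework to \citet{randomfeature}. The implicit justification in the paper is the Karhunen--Lo\`eve discussion in Section~\ref{subsec:rp}: each sample path $h(\cdot,\omega_j)$ is the $L^2(P_X)$-limit of $\sum_l r_{jl}\sqrt{\mu_l}\phi_l$ and therefore lies in $\overline{\mathrm{range}(L)}$ almost surely, whence $(I-P)h(\cdot,\omega_j)=0$ a.s.\ and $S_M^\top(I-P)=0$ a.s.

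Your route---computing $\mathrm{Var}(Z_f)=\langle f,Lf\rangle=0$ for each fixed $f\in\ker(L)$ and then lifting to all of $\ker(L)$ via separability and continuity of the inner product---is essentially the same fact unpacked without the KL expansion as a black box. It has the advantage of making the measure-theoretic step (passage from ``for each $f$'' to ``for all $f$'') explicit, which the paper leaves unstated; the KL route has the advantage of yielding the stronger conclusion $h(\cdot,\omega_j)\in\overline{\mathrm{range}(L)}$ in one stroke. Either way, the two ingredients are identical: $h(\cdot,\omega_j)\in L^2(P_X)$ a.s.\ (which the paper also notes when defining the operators), and the covariance of the process is $k$.
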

\begin{lemma}\label{lem:thirdterm}
\begin{align*}
&\left\lVert S_M\hat{C}_{M,\lambda}^{-1}S_M^\top Pf_0-L_ML_{M,\lambda}Pf_0\right\rVert\\
\leq& R\kappa^{2r-1}\left\lVert L_{M,\lambda}^{-\frac{1}{2}}L_\lambda^{\frac{1}{2}}\right\rVert\left\lVert S_M\hat{C}_{M,\lambda}^{-1}C_{M,\lambda}^{\frac{1}{2}}\right\rVert\left\lVert C_{M,\lambda}^{-\frac{1}{2}}(C_M-\hat{C}_M)\right\rVert
\end{align*}
\end{lemma}
\begin{lemma}\label{lem:fourthterm}
\begin{align*}
    &\left\lVert L_ML_{M,\lambda}^{-1}Pf_0-LL_{\lambda}^{-1}Pf_0\right\rVert\\
    \leq& R\sqrt{\lambda}\left\lVert L_{M,\lambda}^{-\frac{1}{2}}L_\lambda^{\frac{1}{2}}\right\rVert\left\lVert L_\lambda^{-\frac{1}{2}}(L-L_M)\right\rVert^{2r-1}\left\lVert L_\lambda^{-\frac{1}{2}}(L-L_{M})L_\lambda^{-\frac{1}{2}}\right\rVert^{2-2r}
\end{align*}
\end{lemma}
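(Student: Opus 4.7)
The plan is to use the resolvent identity and then factor the residual in a way that exposes the three advertised operator norms. Writing $B B_\lambda^{-1} = I - \lambda B_\lambda^{-1}$ for $B \in \{L, L_M\}$ and subtracting gives
\[
L_M L_{M,\lambda}^{-1} P f_0 - L L_\lambda^{-1} P f_0 \;=\; \lambda L_{M,\lambda}^{-1}(L_M - L) L_\lambda^{-1} P f_0.
\]
Substituting $P f_0 = L^r g$ from Assumption~\ref{assump:source} and using $\|g\| \leq R$ at the very end, the task reduces to bounding the operator norm $\lambda \bigl\|L_{M,\lambda}^{-1}(L - L_M) L_\lambda^{-1} L^r\bigr\|$.

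Next I would peel off the factor $\|L_{M,\lambda}^{-1/2} L_\lambda^{1/2}\|$ via the decomposition
\[
L_{M,\lambda}^{-1} \;=\; L_{M,\lambda}^{-1/2} \cdot \bigl(L_{M,\lambda}^{-1/2} L_\lambda^{1/2}\bigr) \cdot L_\lambda^{-1/2},
\]
together with the crude estimate $\|L_{M,\lambda}^{-1/2}\| \leq \lambda^{-1/2}$, which turns the leading $\lambda$ into $\sqrt{\lambda}$. The remaining task is to show
\[
\bigl\|L_\lambda^{-1/2}(L - L_M) L_\lambda^{-1} L^r\bigr\| \;\leq\; \bigl\|L_\lambda^{-1/2}(L-L_M)\bigr\|^{2r-1} \bigl\|L_\lambda^{-1/2}(L-L_M) L_\lambda^{-1/2}\bigr\|^{2-2r}.
\]
I would first split $L_\lambda^{-1} L^r = L_\lambda^{-(1-r)} \cdot L_\lambda^{-r} L^r$ and discard the second factor using $\|L_\lambda^{-r} L^r\| \leq 1$, which is immediate from $\mu^r/(\mu+\lambda)^r \leq 1$ by functional calculus. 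Writing $T \coloneqq L_\lambda^{-1/2}(L - L_M)$, the problem then reduces to proving the operator interpolation inequality
\[
\|T L_\lambda^{-\theta/2}\| \;\leq\; \|T\|^{1-\theta}\, \|T L_\lambda^{-1/2}\|^{\theta}, \qquad \theta \in [0,1],
\]
and taking $\theta = 2(1-r) \in [0,1]$ (using $r \in [1/2,1]$), which directly produces the exponents $2r-1$ and $2-2r$.

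The main obstacle is justifying this interpolation inequality in the non-commuting setting, since $L$ and $L_M$ need not commute with $L_\lambda$. The route I would take is the Hadamard three-lines theorem applied to the operator-valued function $F(z) \coloneqq T L_\lambda^{-z}$ on the closed strip $\mathrm{Re}(z) \in [0, 1/2]$. Because $L_\lambda \geq \lambda I > 0$, this function is holomorphic and uniformly bounded on the strip, and on the two vertical edges $L_\lambda^{-it}$ is unitary, giving $\|F(it)\| = \|T\|$ and $\|F(1/2 + it)\| = \|T L_\lambda^{-1/2}\|$. Log-convexity of $\|F(\cdot)\|$ across the strip then yields the advertised interpolation bound. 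Assembling the $\sqrt{\lambda}$, the factor $\|L_{M,\lambda}^{-1/2} L_\lambda^{1/2}\|$, the interpolated quantity, and the final $R$ from $\|g\|\leq R$ completes the proof.
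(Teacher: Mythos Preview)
Your proof is correct and follows essentially the same route the paper relies on: resolvent identity, source condition $Pf_0 = L^r g$, peel off $\sqrt{\lambda}\,\|L_{M,\lambda}^{-1/2}L_\lambda^{1/2}\|$, then interpolate $\|T L_\lambda^{-(1-r)}\|$ between $\|T\|$ and $\|T L_\lambda^{-1/2}\|$ with $T = L_\lambda^{-1/2}(L-L_M)$. The only difference is cosmetic: the paper invokes the interpolation bound as a black box (Proposition~9 of \citet{randomfeature}, restated as Lemma~\ref{lem:prop9}), whereas you reprove it via the Hadamard three-lines theorem --- the same device underlying that proposition.
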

\begin{lemma}\label{lem:fifthterm}
    \begin{equation*}
        \left\lVert LL_{\lambda}^{-1}Pf_0-Pf_0\right\rVert\leq R\lambda^r.
    \end{equation*}
\end{lemma}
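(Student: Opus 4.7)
The plan is to reduce the quantity to a scalar spectral inequality using the source condition and then apply functional calculus on the self-adjoint operator $L$. Recall that by the setup $f_{\mathcal{H}_k}=Pf_0$, and by Assumption~\ref{assump:source} we can write $Pf_0=L^rg$ for some $g\in L^2(P_X)$ with $\lVert g\rVert\leq R$. The algebraic identity $LL_\lambda^{-1}-I=(L-L_\lambda)L_\lambda^{-1}=-\lambda L_\lambda^{-1}$ then gives
\begin{equation*}
LL_\lambda^{-1}Pf_0-Pf_0=-\lambda L_\lambda^{-1}L^rg,
\end{equation*}
so the task reduces to bounding $\lambda\lVert L_\lambda^{-1}L^r\rVert$ as an operator on $L^2(P_X)$.

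Next I would invoke the spectral theorem for the positive self-adjoint trace-class operator $L$ on $L^2(P_X)$: since $L=\sum_l\mu_l\phi_l\otimes\phi_l$, we have $\lVert L_\lambda^{-1}L^r\rVert=\sup_{\mu\in[0,\lVert L\rVert]}\mu^r/(\mu+\lambda)$. A case split on $\mu\leq\lambda$ versus $\mu>\lambda$ shows that, for $r\in[1/2,1]$,
\begin{equation*}
\frac{\mu^r}{\mu+\lambda}\leq\lambda^{r-1}
\end{equation*}
uniformly in $\mu\geq 0$, since when $\mu\leq\lambda$ we bound the denominator by $\lambda$ and use $\mu^r\leq\lambda^r$, while when $\mu>\lambda$ we bound the denominator by $\mu$ and use $\mu^{r-1}\leq\lambda^{r-1}$ (valid because $r-1\leq 0$).

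Combining these two pieces,
\begin{equation*}
\lVert LL_\lambda^{-1}Pf_0-Pf_0\rVert\leq\lambda\,\lVert L_\lambda^{-1}L^r\rVert\,\lVert g\rVert\leq\lambda\cdot\lambda^{r-1}\cdot R=R\lambda^r,
\end{equation*}
which is the desired inequality. I do not expect any genuine obstacle here: the lemma is a standard regularization bias bound, and the only mild subtlety is verifying the uniform spectral inequality across the full range $r\in[1/2,1]$, which the two-case argument above handles cleanly.
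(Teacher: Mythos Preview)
Your proof is correct and follows the standard spectral-calculus argument for the regularization bias: rewrite $LL_\lambda^{-1}-I=-\lambda L_\lambda^{-1}$, apply the source condition $Pf_0=L^rg$, and bound $\sup_{\mu\geq 0}\lambda\mu^r/(\mu+\lambda)\leq\lambda^r$. The paper does not spell out a proof for this lemma (it cites the random-feature literature for Lemmas~\ref{lem:firstterm}--\ref{lem:fifthterm}), but your argument is exactly the canonical one that underlies those references.
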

One can observe that the upper bounds consist of repeated terms, which we divide into two groups. The first group includes $\left\lVert S_M\hat{C}_{M,\lambda}^{-1}C_{M,\lambda}^{\frac{1}{2}}\right\rVert$ and $\left\lVert L_{M,\lambda}^{-\frac{1}{2}}L_\lambda^{\frac{1}{2}}\right\rVert$. The second group includes $\left\lVert C_{M,\lambda}^{-\frac{1}{2}}(\hat{S}_M^{\top} \hat{y}-S_M^\top f_0)\right\rVert$, $\left\lVert C_{M,\lambda}^{-\frac{1}{2}}(C_M-\hat{C}_M)\right\rVert$, $\Big\lVert L_\lambda^{-\frac{1}{2}}(L-L_M)\Big\rVert$, and $\left\lVert L_\lambda^{-\frac{1}{2}}(L-L_M)L_\lambda^{-\frac{1}{2}}\right\rVert$. We bound each of these terms to derive the excess risk bound for the non-private estimator with random projection.

For given $t\in(0,1]$, let $E_{g}:=\Big\{\omega_{1:M}:\sup_{1\leq j\leq M}\sup_{x\in\mathcal{X}}|h(x,\omega_j)| \leq H\Big\}$ where $H$ is defined in Lemma~\ref{lem:general_bound}.

We bound these quantities as follows:
\begin{lemma}\label{lem:half_operator}
Let $E_{half}$ be defined as
    \begin{equation*}
        E_{half}:=\Bigg\{\omega_{1:M}:\left\lVert L_\lambda^{-\frac{1}{2}}(L-L_M)\right\rVert_F\leq\frac{2\kappa\sqrt{32N(\lambda)}\log\frac{2}{t}}{M}+\sqrt{\frac{64\kappa^2N(\lambda)\log\frac{2}{t}}{M}}\Bigg\}.
    \end{equation*}
    Then for $t\in (0,1]$, the following inequality holds:
    \begin{eqnarray*}
        \mathbb{P}_\omega \left(E_{half}\right)\geq 1-t.
    \end{eqnarray*}
    \end{lemma}
\begin{lemma}\label{lem:full_operator}
There exists a universal constant $C>0$ such that, for $t\in (0,e^{-1}]$, the following inequality holds:
\begin{equation*}\mathbb{P}_\omega \left(
    E_{full}\right)\geq1-t
\end{equation*}
where
\begin{equation*}
    E_{full}:=\Bigg\{\omega_{1:M}:
    \left\lVert L_\lambda^{-\frac{1}{2}}(L-L_M)L_\lambda^{-\frac{1}{2}}\right\rVert\leq C\max\left(\sqrt{\frac{N(\lambda)\log\frac{1}{t}}{M}},\frac{N(\lambda)\log\frac{1}{t}}{M}\right)\Bigg\}.
\end{equation*}
\end{lemma}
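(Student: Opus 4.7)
The plan is to apply a non-commutative Bernstein-type inequality for self-adjoint Hilbert-space operators to the centered sum
\begin{equation*}
L_\lambda^{-1/2}(L - L_M) L_\lambda^{-1/2} = -\frac{1}{M} \sum_{j=1}^M Z_j,\qquad Z_j \coloneqq T_j - \mathbb{E}[T_j],
\end{equation*}
where $T_j$ is the rank-one operator on $L^2(P_X)$ given by $T_j f \coloneqq L_\lambda^{-1/2} h(\cdot,\omega_j)\,\langle h(\cdot,\omega_j),\, L_\lambda^{-1/2} f\rangle_{L^2(P_X)}$. Since the $Z_j$ are iid, centered, and self-adjoint, the task reduces to controlling (i) a high-probability uniform operator-norm bound on the $T_j$ and (ii) the variance proxy $\lVert\mathbb{E}[Z_j^2]\rVert$.

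Using the Karhunen–Loève representation $h(\cdot,\omega_j) = \sum_l \sqrt{\mu_l}\,\xi_{jl}\,\phi_l$ with iid standard normals $\xi_{jl}$, one gets $\lVert T_j\rVert = \sum_l \tfrac{\mu_l}{\mu_l+\lambda}\xi_{jl}^2$, a weighted $\chi^2$ with mean $N(\lambda)$ and coefficients bounded by $1$. Proposition~\ref{prop:gaussian_vec} then yields $\lVert T_j\rVert \le 3\bigl(N(\lambda) + \log(2M/t)\bigr)$ with probability at least $1 - t/(2M)$, so a union bound over $j=1,\ldots,M$ produces an event $E_B$ on which $\lVert Z_j\rVert \le B \coloneqq C_0\bigl(N(\lambda) + \log(M/t)\bigr)$ for every $j$, with $\mathbb{P}(E_B)\ge 1 - t/2$. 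A direct Isserlis calculation in the same basis gives
\begin{equation*}
\mathbb{E}[T_j^2] = N(\lambda)\cdot L_\lambda^{-1/2} L L_\lambda^{-1/2} + 2\bigl(L_\lambda^{-1/2} L L_\lambda^{-1/2}\bigr)^2,
\end{equation*}
so that $\lVert\mathbb{E}[Z_j^2]\rVert \lesssim N(\lambda)$ and $\mathrm{tr}(\mathbb{E}[Z_j^2]) \lesssim N(\lambda)^2$.

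On the event $E_B$, apply an intrinsic-dimension Bernstein inequality of Minsker/Tropp type, whose effective-dimension factor is $\widetilde{d}\coloneqq\mathrm{tr}(\mathbb{E}[Z_j^2])/\lVert\mathbb{E}[Z_j^2]\rVert \lesssim N(\lambda)$, yielding
\begin{equation*}
\Bigl\lVert\tfrac{1}{M}\sum_{j=1}^M Z_j\Bigr\rVert \lesssim \sqrt{\frac{N(\lambda)\log(\widetilde{d}/t)}{M}} + \frac{B\,\log(\widetilde{d}/t)}{M}
\end{equation*}
with probability at least $1 - t/2$. Combining with $E_B$ via a union bound and absorbing $\log\widetilde{d}$ together with the $\log(M/t)$ hidden in $B$ into the stated $\log(1/t)$ in the regime where the bound is informative delivers the desired $\max\bigl(\sqrt{N(\lambda)\log(1/t)/M},\, N(\lambda)\log(1/t)/M\bigr)$ rate.

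The main technical obstacle is the absence of a deterministic uniform bound on the rank-one summands $T_j$, which blocks a direct application of the bounded-feature arguments from~\cite{randomfeature}. The Karhunen–Loève truncation recovers an effective boundedness at the cost of a $\log(M/t)$ factor, and the sharp variance computation above—as opposed to the crude bound $\lVert\mathbb{E}[T_j^2]\rVert\le\mathbb{E}[\lVert T_j\rVert^2]$, which would lose a factor of $N(\lambda)$—is what ultimately yields the correct $N(\lambda)$ scaling. The secondary delicate point is reconciling the $\log\widetilde{d}$ factor coming from the intrinsic-dimension Bernstein inequality with the universal-constant form of the stated bound.
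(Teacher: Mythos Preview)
Your approach is genuinely different from the paper's and is sound in spirit, but the ``secondary delicate point'' you flag is in fact a real obstruction: the $\log\widetilde{d}\asymp\log N(\lambda)$ factor from the intrinsic-dimension Bernstein inequality, together with the $\log(M/t)$ hidden in your truncation bound $B$, cannot be absorbed into a universal constant. Since $N(\lambda)$ can be as large as $\lambda^{-\gamma}$, your first term is $\sqrt{N(\lambda)\log(N(\lambda)/t)/M}$ rather than $\sqrt{N(\lambda)\log(1/t)/M}$, and the ratio $\log(N(\lambda)/t)/\log(1/t)$ is unbounded over the parameter range. So your argument proves the lemma only up to an extra $\log N(\lambda)$ factor (and a $\log M$ in the Bernstein tail term), which would be harmless for the downstream rates in the paper but does not deliver the statement as written. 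Your variance computation via Isserlis is correct and sharp; the loss comes entirely from the dimension factor inherent in matrix-Bernstein-type bounds.

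The paper sidesteps the issue by not using matrix Bernstein at all. It observes that $L_\lambda^{-1/2}h(\cdot,\omega_j)$ are iid centered Gaussian elements of $L^2(P_X)$ with covariance operator $\Sigma=L_\lambda^{-1}L$, and applies a Koltchinskii--Lounici-type Gaussian covariance concentration inequality (stated as Proposition~\ref{prop:gaussian_cov}) directly to $\tfrac{1}{M}\sum_j x_jx_j^\top-\Sigma$. That result, which relies on Gaussian process/generic chaining techniques rather than on a union bound over an effective dimension, gives
\begin{equation*}
\Bigl\lVert\tfrac{1}{M}\sum_j x_jx_j^\top-\Sigma\Bigr\rVert\le C\max\Bigl(\sqrt{\tfrac{\lVert\Sigma\rVert\,\mathrm{tr}(\Sigma)}{M}},\ \tfrac{\mathrm{tr}(\Sigma)}{M},\ \sqrt{\tfrac{\lVert\Sigma\rVert^2\log(1/t)}{M}},\ \tfrac{\lVert\Sigma\rVert\log(1/t)}{M}\Bigr)
\end{equation*}
with no intrinsic-dimension logarithm. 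Since $\mathrm{tr}(\Sigma)=N(\lambda)$, $\lVert\Sigma\rVert\le 1$, $N(\lambda)\ge\lVert\Sigma\rVert$, and $\log(1/t)\ge1$ for $t\le e^{-1}$, the four terms collapse to the stated $\max\bigl(\sqrt{N(\lambda)\log(1/t)/M},\,N(\lambda)\log(1/t)/M\bigr)$. The trade-off: your route is more elementary and would extend to non-Gaussian features with sub-exponential norms, at the price of polylog factors; the paper's route is a one-line application of a heavier black box, but it is dimension-free and matches the lemma exactly.
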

\begin{lemma}\label{lem:constants}
For $\omega_1,\ldots,\omega_M\in E_{full}$, we have
\begin{equation*}
    \left\lVert L_{M,\lambda}^{-\frac{1}{2}}L_{\lambda}^{\frac{1}{2}}\right\rVert\leq\sqrt{2}.
\end{equation*}
Furthermore, let $E_{cov}$ be defined as
\begin{equation*}
    E_{cov}:=\left\{\omega_{1:M}:\lVert L-L_M\rVert\leq\frac{\lVert L\rVert}{2}\right\}.
\end{equation*}
Then, for $t\in (0,e^{-1}]$ the following holds:
\begin{equation*}
    \mathbb{P}_{\omega_{1:M}}(E_{cov})\geq 1-t.
\end{equation*}
Now, let $E_{cov-mat}$ be defined as
\begin{equation*}
    E_{cov-mat}:=\left\{X_{1:n}:\left\lVert C_{M,\lambda}^{-\frac{1}{2}}\left(\hat{C}_{M}-C_{M}\right) C_{M,\lambda}^{-\frac{1}{2}}\right\rVert\leq\frac{1}{4}\right\}.
\end{equation*}
For $\omega_1,\ldots,\omega_M\in E_g\cap E_{cov}$, the following holds:
\begin{eqnarray*}
\mathbb{P}_{X_{1:n}|\omega_{1:M}}\left(E_{cov-mat}\right)\geq 1-t,
\end{eqnarray*}
if $M\gtrsim \left(\frac{\kappa}{\lVert L\rVert}+N(\lambda)\right)\log\frac{1}{t}$, $\lambda\in[0,\lVert L\rVert]$, and $n\gtrsim H^2\lambda^{-1}\log\frac{2\mathrm{tr}(C_{M,\lambda}^{-1}C_M)}{t}$. Moreover, the bound $\left\lVert C_{M,\lambda}^{-\frac{1}{2}}\left(\hat{C}_{M}-C_{M}\right) C_{M,\lambda}^{-\frac{1}{2}}\right\rVert\leq \frac{1}{4}$ implies that $\left\lVert S_M\hat{C}_{M,\lambda}^{-1}C_{M,\lambda}^{\frac{1}{2}}\right\rVert\leq\frac{4}{3}$.
\end{lemma}
The following lemmas bound the quantities in the second group:
\begin{lemma}\label{lem:noise_err} Let $E_{noise}$ be defined as
\begin{align*}
&E_{noise}\\
:=&\Bigg\{(X_{1:n},Y_{1:n})
:\left\lVert C_{M,\lambda}^{-\frac{1}{2}}\left(\hat{S}_M^\top \hat{y}-S_M^\top f_0\right)\right\rVert_2\leq\frac{2BH\log\frac{2}{t}}{n\sqrt{\lambda}}+\sqrt{\frac{8\sigma^2 \mathrm{tr}\left(C_{M,\lambda}^{-1}C_M\right)\log\frac{2}{t}}{n}}\Bigg\}.
\end{align*}
Then for $\omega_1,\ldots,\omega_M\in E_g$, the following holds:
\begin{eqnarray*}
    \mathbb{P}_{X_{1:n},Y_{1:n}|\omega_{1:M}}\left(E_{noise}\right)\geq 1-t.
\end{eqnarray*}
\end{lemma}
\begin{lemma}\label{lem:rf_err} Let $E_{cov-mat-half}$ be defined as
\begin{align*}
    &E_{cov-mat-half}\\
    :=&\left\{X_{1:n}:\left\lVert C_{M,\lambda}^{-\frac{1}{2}}\left(C_M-\hat{C}_M\right)\right\rVert_F\leq\frac{4H\log\frac{2}{t}}{n\sqrt{\lambda}}+\sqrt{\frac{8H^2\mathrm{tr}\left(C_{M,\lambda}^{-1}C_M\right)\log\frac{2}{t}}{n}}\right\}
\end{align*}
Then for $\omega_1,\ldots,\omega_M\in E_g$, the following holds:
\begin{equation*}
    \mathbb{P}_{X_{1:n}|\omega_{1:M}}\left(E_{cov-mat-half}\right)\geq 1-t.
\end{equation*}
\end{lemma}
\begin{lemma}\label{lem:rf_dim}
Let $E_{rf-dim}$ be defined as
\begin{align*}
    E_{rf-dim}:=\{w:\mathrm{tr}(C_{M,\lambda}^{-1}C_M)\leq 2N(\lambda)\}
\end{align*}
Then,
\begin{equation*}
    \mathbb{P}_\omega \left(E_{rf-dim}\right)\geq 1-3t
\end{equation*}
    if $M\gtrsim  N(\lambda)\log^2\frac{2}{t}$. 
\end{lemma}
By aggregating the presented lemmas, it follows that for $\omega_1,\ldots,\omega_M\in E_g\cap E_{full}\cap E_{half}\cap E_{cov}\cap E_{rf-dim}$:
\begin{align*}
\left\lVert\hat{f}_{M,\lambda}-S_M\hat{C}_{M,\lambda}^{-1}S_M^\top f_0\right\rVert \leq &\frac{8BH\log\frac{2}{t}}{3n\sqrt{\lambda}}+\sqrt{\frac{256\sigma^2N(\lambda)\log\frac{2}{t}}{9n}}\\
    \left\lVert S_M\hat{C}_{M,\lambda}^{-1}S_M^\top (I-P)f_0\right\rVert=&0\\
    \left\lVert S_M \hat{C}_{M,\lambda}S_M^\top Pf_0-L_ML_{M,\lambda}^{-1}Pf_0\right\rVert \leq &\frac{16\sqrt{2}}{3}\kappa^{2r-1}R\left(\frac{H\log\frac{2}{t}}{n\sqrt{\lambda}}+\sqrt{\frac{H^2N(\lambda)\log\frac{2}{t}}{n}}\right)\\
    \left\lVert L_ML_{M,\lambda}^{-1}Pf_0-LL_{\lambda}^{-1}Pf_0\right\rVert \leq &\sqrt{2}C^{2-2r}\left(\frac{2\sqrt{32\log\frac{2}{t}}\kappa}{\sqrt{M}}+\sqrt{64\kappa^2}\right)^{2r-1}\\
    &\cdot\left(1+\sqrt{\frac{N(\lambda)\log\frac{2}{t}}{M}}\right)^{2-2r}\\
    &\cdot R\sqrt{\frac{\lambda N(\lambda)\log\frac{2}{t}}{M}}
    \\\left\lVert LL_\lambda^{-1}Pf_0-Pf_0\right\rVert \leq & \lambda^rR
\end{align*}
with probability at least $1-3t$ if $M\gtrsim \left(\frac{\kappa}{\lVert L\rVert}+N(\lambda)\right)\log\frac{2}{t},\lambda\in[0,\lVert L\rVert]$, and $n\gtrsim H^2\lambda^{-1}\log\frac{4N(\lambda)}{t}$. Thus, we have the following theorem.
\begin{theorem}\label{thm:nonpriv_rp}
For given $t\in (0,1]$, the excess risk of $\hat{f}_{M,\lambda}$ is bounded by,
\begin{align*}
\sqrt{\mathcal{E}(\hat{f}_{M,\lambda})-\min_{f\in\mathcal{H}_k}\mathcal{E}(f)}\lesssim& \frac{\left(B+\kappa^{2r-1}R\right)H\log\frac{2}{t}}{n\sqrt{\lambda}}+(\sigma+H)\sqrt{\frac{N(\lambda)\log\frac{2}{t}}{n}}\\
&+(\kappa^{2r-1}+\kappa^{r-1/2}\lVert L\rVert^{r-1/2})R\sqrt{\frac{N(\lambda)\log\frac{2}{t}}{M}}+R\lambda^r
\end{align*}
with probability at least $1-10t$ if $M\gtrsim \left(\frac{\kappa}{\lVert L\rVert}+N(\lambda)\right)\log\frac{2}{t},\lambda\in[0,\lVert L\rVert]$, and $n\gtrsim H^2\lambda^{-1}\log\frac{4N(\lambda)}{t}$.
\end{theorem}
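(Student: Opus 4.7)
The proof assembles three layers: (i) the five-term triangle-inequality decomposition of $\hat f_{M,\lambda}-Pf_0$ displayed just before Lemma~\ref{lem:firstterm}; (ii) the per-term operator-analytic bounds of Lemmas~\ref{lem:firstterm}--\ref{lem:fifthterm}, which reduce each summand to a product of bounded operator norms, interpolation norms, and data-dependent concentration quantities; and (iii) the high-probability control of those quantities from Lemmas~\ref{lem:half_operator}--\ref{lem:rf_dim}. Since the displayed ``aggregation'' block immediately preceding the theorem statement already enumerates exactly the bounds I need, the proof is essentially a bookkeeping combination of those ingredients.

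My first step is to fix the uniform pointwise control $\sup_{j,x}|h(x,\omega_j)|\le H$ via Lemma~\ref{lem:general_bound}, defining the event $E_g$ with $\mathbb{P}(E_g)\ge 1-t$. Conditional on $E_g$, I introduce the $\omega$-events $E_{half}$, $E_{full}$, $E_{cov}$, and $E_{rf-dim}$ from Lemmas~\ref{lem:half_operator}, \ref{lem:full_operator}, \ref{lem:constants}, and \ref{lem:rf_dim}; each costs probability $\le t$ and requires $M\gtrsim(\kappa/\lVert L\rVert+N(\lambda))\log(2/t)$. Further conditioning on these $\omega$-events, the sample-dependent events $E_{cov-mat}$, $E_{noise}$, $E_{cov-mat-half}$ from Lemmas~\ref{lem:constants}, \ref{lem:noise_err}, and \ref{lem:rf_err} control $\lVert C_{M,\lambda}^{-1/2}(\hat C_M-C_M)C_{M,\lambda}^{-1/2}\rVert\le 1/4$ (which upgrades via Lemma~\ref{lem:constants} to $\lVert S_M\hat C_{M,\lambda}^{-1}C_{M,\lambda}^{1/2}\rVert\le 4/3$), the noise-concentration bound, and $\lVert C_{M,\lambda}^{-1/2}(C_M-\hat C_M)\rVert_F$, each requiring $n\gtrsim H^2\lambda^{-1}\log(N(\lambda)/t)$. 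A union bound over these $\le 10$ events gives the stated $1-10t$ probability.

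With the primitives in hand, the five summands are bounded in turn. Term~2 vanishes by Lemma~\ref{lem:secondterm}; Term~5 equals $R\lambda^r$ by Lemma~\ref{lem:fifthterm}; Term~1 combines Lemma~\ref{lem:firstterm} with the $4/3$ bound and Lemma~\ref{lem:noise_err}, giving the $BH\log/(n\sqrt{\lambda})$ and $\sigma\sqrt{N(\lambda)\log/n}$ contributions; Term~3 applies Lemma~\ref{lem:thridterm} together with $\lVert L_{M,\lambda}^{-1/2}L_\lambda^{1/2}\rVert\le\sqrt{2}$ (from $E_{full}$) and Lemma~\ref{lem:rf_err}, producing the $\kappa^{2r-1}R$ contribution with the corresponding $H/(n\sqrt{\lambda})$ and $H\sqrt{N(\lambda)/n}$ factors; and Term~4 applies Lemma~\ref{lem:fourthterm} with $\lVert L_\lambda^{-1/2}(L-L_M)\rVert\lesssim\kappa\sqrt{N(\lambda)\log/M}$ and $\lVert L_\lambda^{-1/2}(L-L_M)L_\lambda^{-1/2}\rVert\lesssim\sqrt{N(\lambda)\log/M}$ raised to the complementary exponents $2r-1$ and $2-2r$. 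Finally, I replace $\operatorname{tr}(C_{M,\lambda}^{-1}C_M)$ by $2N(\lambda)$ on $E_{rf-dim}$ and bound $\sqrt{\lambda}\le\lVert L\rVert^{1/2}$ where needed to assemble the stated rate.

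The main technical obstacle is the interpolation in Term~4: verifying that the two operator-norm bounds plugged into $\lVert L_\lambda^{-1/2}(L-L_M)\rVert^{2r-1}\lVert L_\lambda^{-1/2}(L-L_M)L_\lambda^{-1/2}\rVert^{2-2r}$ collapse to a single factor $\sqrt{N(\lambda)/M}$ \emph{uniformly} in $r\in[1/2,1]$, since the exponents sum to one, while the auxiliary $(2\sqrt{32\log(2/t)/M}\,\kappa+\sqrt{64}\,\kappa)^{2r-1}$ and $(1+\sqrt{N(\lambda)\log(2/t)/M})^{2-2r}$ prefactors must be shown to be $O(\kappa^{2r-1})$ and $O(1)$, respectively, under the sample-size assumption on $M$. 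A secondary chore is reconciling the various lower thresholds on $M$ and $n$ so that the single pair of hypotheses $M\gtrsim(\kappa/\lVert L\rVert+N(\lambda))\log(2/t)$ and $n\gtrsim H^2\lambda^{-1}\log(4N(\lambda)/t)$ suffices for every lemma invoked.
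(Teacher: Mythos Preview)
Your proposal is correct and follows essentially the same approach as the paper: the theorem is obtained by combining the five-term decomposition (Lemmas~\ref{lem:firstterm}--\ref{lem:fifthterm}) with the high-probability operator-norm controls (Lemmas~\ref{lem:half_operator}--\ref{lem:rf_dim}) and a union bound, exactly as you describe. One minor bookkeeping note: $E_{rf\text{-}dim}$ from Lemma~\ref{lem:rf_dim} costs $3t$ rather than $t$, which is precisely why the total failure probability is $10t$ (the $\omega$-events $E_g,E_{half},E_{full},E_{cov},E_{rf\text{-}dim}$ contribute $7t$ and the three sample events contribute $3t$).
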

\subsubsection{Proof of Lemma~\ref{lem:half_operator}}
We show the following stronger statement:
\begin{align*}
\left\lVert L_{\lambda}^{-\frac{1}{2}}(L-L_M)\right\rVert_F\leq \frac{2\kappa\sqrt{32N(\lambda)}\log\frac{2}{t}}{M}+\sqrt{\frac{64\kappa^2N(\lambda)\log\frac{2}{t}}{M}},
\end{align*}
holds with probability at least $1-t$.

We use the Bernstein's inequality for random vectors $\{L_{\lambda}^{-\frac{1}{2}}h(\cdot,\omega_j)\otimes h(\cdot,\omega_j)\}_{j=1}^M$ in $L^2(P_X)\times L^2(P_X)$.
\begin{proposition}[Bernstein]\label{prop:bernstein_vec} Let $z_1,\ldots,z_n$ be a sequence of i.i.d. random vectors on a separable Hilbert space $\mathcal{H}$. Assume $\mu=\mathbb{E}\left[z_i\right]$ exists and let $\sigma,M\geq 0$ such that
\begin{equation*}
    \mathbb{E}\left[\left\lVert z_i-\mu\right\rVert_{\mathcal{H}}^p\right]\leq\frac{1}{2}p!\sigma^2M^{p-2}
\end{equation*}
if $p\geq 2$. Then for any $t\in(0,1)$ the following holds:
\begin{equation*}
    \mathbb{P}\left(\left\lVert\frac{1}{n}\sum_{i=1}^nz_i-\mu\right\rVert_{\mathcal{H}}\leq\frac{2M\log\frac{2}{t}}{n}+\sqrt{\frac{2\sigma^2\log\frac{2}{t}}{n}}\right)\geq 1-t.
\end{equation*}
\end{proposition}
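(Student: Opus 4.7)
The plan is to cast $L - L_M$ as a centered empirical mean of $M$ i.i.d.\ Hilbert--Schmidt (HS) operators on $L^2(P_X)$ and apply the Hilbert space Bernstein inequality of Proposition~\ref{prop:bernstein_vec} in that HS space, where the operator Frobenius norm coincides with the HS norm. Writing $h_j \coloneqq h(\cdot,\omega_j)$, we have $L_M = \frac{1}{M}\sum_{j=1}^M h_j \otimes h_j$, and because $h$ is a centered Gaussian process with covariance kernel $k$, the covariance operator is $\mathbb{E}[h_j \otimes h_j] = L$. Setting $z_j \coloneqq L_\lambda^{-1/2}(h_j \otimes h_j)$ therefore gives $L_\lambda^{-1/2}(L - L_M) = \mathbb{E}[z_1] - \frac{1}{M}\sum_{j=1}^M z_j$, exactly the form required by Proposition~\ref{prop:bernstein_vec}. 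Using the rank-one identity $\|u \otimes v\|_{HS} = \|u\|\|v\|$ yields the clean factorization $\|z_j\|_{HS} = \|L_\lambda^{-1/2} h_j\|\cdot \|h_j\|$.

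Next, I would verify the Bernstein moment condition with parameters $\tilde\sigma^2 \asymp \kappa^2 N(\lambda)$ and $\tilde M \asymp \kappa\sqrt{N(\lambda)}$. By $(a+b)^p\leq 2^{p-1}(a^p+b^p)$ and Jensen, it is enough to bound $\mathbb{E}[\|z_j\|_{HS}^p]$, which by Cauchy--Schwarz splits as $\sqrt{\mathbb{E}[\|h_j\|^{2p}]\,\mathbb{E}[\|L_\lambda^{-1/2}h_j\|^{2p}]}$. For the first factor, pushing the outer power inside $\|h_j\|^{2p} = (\int h_j^2\,dP_X)^p$ via Jensen and applying the Gaussian moment formula gives $\mathbb{E}[\|h_j\|^{2p}] \leq (2p-1)!!\,\kappa^{2p}$, which is of order $p!\,(2\kappa^2)^p$. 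For the second factor, the Karhunen--Lo\`eve expansion renders $\|L_\lambda^{-1/2}h_j\|^2 = \sum_\ell \tfrac{\mu_\ell}{\mu_\ell+\lambda}\xi_\ell^2$ as a weighted $\chi^2$ sum with weights in $[0,1]$ and total weight $N(\lambda)$; standard sub-exponential tail bounds (e.g.\ Laurent--Massart, which appears as Proposition~\ref{prop:gaussian_vec} in this paper) yield $\mathbb{E}[\|L_\lambda^{-1/2}h_j\|^{2p}] \lesssim p!\,N(\lambda)^p$ up to absolute constants. Multiplying the two estimates and taking the square root produces a Bernstein-type bound $\mathbb{E}[\|z_j - \mathbb{E}z_j\|_{HS}^p] \leq \tfrac{p!}{2}\,\tilde\sigma^2\,\tilde M^{p-2}$ with the above parameters, after tracking constants carefully enough to meet the factors $32$ and $64$ that appear inside the lemma.

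Finally, substituting into Proposition~\ref{prop:bernstein_vec} with sample size $n = M$ yields
\[
\Bigl\lVert L_\lambda^{-1/2}(L - L_M)\Bigr\rVert_{HS} \leq \frac{2\tilde M\log(2/t)}{M} + \sqrt{\frac{2\tilde\sigma^2\log(2/t)}{M}}
\]
with probability at least $1-t$, which with the chosen $\tilde\sigma^2$ and $\tilde M$ recovers exactly the inequality defining $E_{\mathrm{half}}$. The main obstacle is the moment bound: unlike the bounded random-feature setting where $\|h\|_\infty$ provides a deterministic envelope, the Gaussian process $h$ is not almost surely bounded, so one cannot extract an almost-sure bound on $\|z_j\|_{HS}$. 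The resolution is the two-step decoupling above, combining the Jensen/Gaussian-moment estimate for $\|h\|$ with the Laurent--Massart sub-exponential control of the weighted-$\chi^2$ representation of $\|L_\lambda^{-1/2}h\|^2$, then glueing the two factors by Cauchy--Schwarz. This produces the $p!$-type tail decay demanded by Bernstein without invoking any pointwise boundedness of $h$, and it is precisely where the factor $\kappa\sqrt{N(\lambda)}$ (the product of the envelope scales of the two factors) enters the final bound.
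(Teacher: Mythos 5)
Your proposal does not prove the statement it was asked to prove. Proposition~\ref{prop:bernstein_vec} is an abstract concentration inequality for i.i.d.\ random vectors $z_1,\dots,z_n$ in an arbitrary separable Hilbert space satisfying the Bernstein moment growth condition $\mathbb{E}[\lVert z_i - \mu\rVert^p] \le \frac{1}{2}p!\,\sigma^2 M^{p-2}$; the task is to establish the sub-exponential tail bound for $\lVert \frac{1}{n}\sum z_i - \mu\rVert_{\mathcal{H}}$. Instead, your argument treats Proposition~\ref{prop:bernstein_vec} as a black box and applies it to the specific random operators $z_j = L_\lambda^{-1/2}(h_j\otimes h_j)$ in order to bound $\lVert L_\lambda^{-1/2}(L - L_M)\rVert_F$ and recover the event $E_{\mathrm{half}}$. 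That is a (sketch of a) proof of Lemma~\ref{lem:half_operator}, not of Proposition~\ref{prop:bernstein_vec}. Nothing in the proposal addresses why the stated $p$-th moment condition implies the stated high-probability bound; the moment-verification work you do for $\|z_j\|_{HS}$ is exactly the input required to invoke the proposition, not a derivation of it.

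A genuine proof of Proposition~\ref{prop:bernstein_vec} would have to proceed along one of the classical lines for vector-valued Bernstein inequalities: for instance, the Pinelis--Sakhanenko / Yurinskii approach, which bounds the moment generating function of $\lVert S_n - n\mu\rVert_{\mathcal{H}}$ by exploiting the martingale-difference structure of the partial sums and the $2$-smoothness of the Hilbert space norm, then optimizes over the Chernoff parameter to land on the form $\frac{2M\log(2/t)}{n} + \sqrt{\frac{2\sigma^2\log(2/t)}{n}}$. The crucial issue that cannot be sidestepped is that one-dimensional Bernstein applied to $\lVert\cdot\rVert$ directly would fail (the norm is not a sum of independent scalars), so a dimension-free argument specific to Hilbert-space geometry is required. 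Since the paper itself presents this as a cited classical proposition rather than giving a proof, the honest alternatives are to cite such a reference or to reproduce the MGF argument; your proposal does neither, so there is a complete gap between what was asked and what was argued.
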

Applying Proposition~\ref{prop:gaussian_vec}, one can show that the following inequalities hold for $x\geq0$:
\begin{align*}
\mathbb{P}_{\omega}\left(\left\lVert h(\cdot,\omega)\right\rVert^2\geq \kappa^2(1+2\sqrt{x}+2x)\right)\leq& e^{-x},\\
        \mathbb{P}_{\omega}\left(\left\lVert L_\lambda^{-\frac{1}{2}}h(\cdot,\omega)\right\rVert^2\geq N(\lambda)(1+2\sqrt{x}+2x)\right)\leq& e^{-x}.
\end{align*}
Detailed proofs of the inequalities are followed by the proof of Lemma~\ref{lem:gp_moments}. Combining the inequalities, we obtain
\begin{align*}
    \mathbb{P}\left(\left\lVert L_\lambda^{-\frac{1}{2}}h\right\rVert\left\lVert h\right\rVert\geq \kappa\sqrt{N(\lambda)}(1+2\sqrt{x}+2x)\right)\leq 2e^{-x}.
\end{align*}
    Then the $p$th moment of $\left\lVert L_\lambda^{-\frac{1}{2}}h\right\rVert\left\lVert h\right\rVert$ is bounded as
    \begin{align*}
&\mathbb{E}\left[\left\lVert L_\lambda^{-\frac{1}{2}}h\right\rVert^p\left\lVert h\right\rVert^p\right]\\
=&\int_0^\infty
        \mathbb{P}\left(\left\lVert L_\lambda^{-\frac{1}{2}}h\right\rVert^p\left\lVert h\right\rVert^p\geq t\right)dt\\
        \leq&\left(\kappa\sqrt{N(\lambda)}\right)^p+\int_{\left(\kappa\sqrt{N(\lambda)}\right)^p}^\infty
        \mathbb{P}\left(\left\lVert L_\lambda^{-\frac{1}{2}}h\right\rVert^p\left\lVert h\right\rVert^p\geq t\right)dt\\
        =&\left(\kappa\sqrt{N(\lambda)}\right)^p\Big(1+\int_0^\infty \sqrt{2}p(1+\sqrt{2x})^{2p-1}x^{-\frac{1}{2}}\\
        &\cdot\mathbb{P}\left(\left\lVert L_\lambda^{-\frac{1}{2}}h\right\rVert^p\left\lVert h\right\rVert^p\geq \left(\kappa\sqrt{N(\lambda)}\right)^p(1+\sqrt{2x})^{2p}\right)dx\Big)\\
        \leq&\left(\kappa\sqrt{N(\lambda)}\right)^p\left(1+\int_0^\infty\sqrt{2}p(1+\sqrt{2x})^{2p-1}x^{-\frac{1}{2}}e^{-x}dx\right)\\
        \leq&\left(\kappa\sqrt{N(\lambda)}\right)^p\left(1+2^{2p}p+\int_{\frac{1}{2}}^\infty\sqrt{2}p(1+\sqrt{2x})^{2p-1}x^{-\frac{1}{2}}e^{-x}dx\right)\\
        \leq&\left(\kappa\sqrt{N(\lambda)}\right)^p\left(1+2^{2p}p+\int_{0}^\infty\sqrt{2}p(2\sqrt{2x})^{2p-1}x^{-\frac{1}{2}}e^{-x}dx\right)\\
        =&\left(\kappa\sqrt{N(\lambda)}\right)^p\left(1+2^{2p}p+2^{3p-1}\Gamma(p+1)\right)\\
        \leq&\left(\kappa\sqrt{N(\lambda)}\right)^p2^{3p}\Gamma(p+1).
\end{align*}

Then,

\begin{align*}
&\mathbb{E}\left[\left\lVert L_\lambda^{-\frac{1}{2}}h(\cdot,\omega)\otimes h(\cdot,\omega)-\mathbb{E}\left[L_\lambda^{-\frac{1}{2}}h(\cdot,\omega)\otimes h(\cdot,\omega)\right]\right\rVert^p\right]\\\leq&2^{p-1}\mathbb{E}\left[\left\lVert L_\lambda^{-\frac{1}{2}}h(\cdot,\omega)\otimes h(\cdot,\omega)\right\rVert^p+\left\lVert\mathbb{E}\left[L_\lambda^{-\frac{1}{2}}h(\cdot,\omega)\otimes h(\cdot,\omega)\right]\right\rVert^p\right]\\
    \leq&2^{p}\mathbb{E}\left[\left\lVert L_\lambda^{-\frac{1}{2}}h(\cdot,\omega)\otimes h(\cdot,\omega)\right\rVert^p\right]\\
    \leq&\frac{1}{2}p!(\kappa\sqrt{32N(\lambda)})^{p}.
\end{align*}
Applying Proposition~\ref{prop:bernstein_vec}, we obtain the desired result.

\subsubsection{Proof of Lemma~\ref{lem:full_operator}}
\begin{proposition}[Corollary 2 in \cite{gaussian_covariance}]\label{prop:gaussian_cov}
    For i.i.d. centered Gaussian random variables $x_1,\ldots,x_M$ in the Hilbert space with covariance operator $\Sigma$ and $t\in (0,e^{-1}]$, there exists a universal constant $C>0$ such that
\begin{equation*}
\left\lVert\frac{1}{M}\sum_{i=1}^M x_ix_i^\top-\Sigma\right\rVert_2\leq C\max\left(\sqrt{\frac{\left\lVert\Sigma\right\rVert_2\mathrm{tr}(\Sigma)}{M}},\frac{\mathrm{tr}(\Sigma)}{M},\sqrt{\frac{\left\lVert\Sigma\right\rVert_2^2\log\frac{1}{t}}{M}},\frac{\left\lVert\Sigma\right\rVert_2\log\frac{1}{t}}{M}\right),
\end{equation*}
holds with probability at least $1-t$.
\end{proposition}
Setting $x_i=L_\lambda^{-\frac{1}{2}}h(\cdot,w_i)$, the covariance operator $\Sigma$ of $x_i$ is $\Sigma=L_{\lambda}^{-\frac{1}{2}}LL_\lambda^{-\frac{1}{2}}=L_\lambda^{-1}L$. Thus,
\begin{equation*}
    \mathrm{tr}(\Sigma)=N(\lambda),\left\lVert\Sigma\right\rVert_2=\frac{\left\lVert L\right\rVert}{\left\lVert L\right\rVert+\lambda}.
\end{equation*}
Therefore,
\begin{eqnarray*}
    \left\lVert L_\lambda^{-\frac{1}{2}}(L-L_M)L_\lambda^{-\frac{1}{2}}\right\rVert\leq C\max\left(\sqrt{\frac{\left\lVert L\right\rVert}{\left\lVert L\right\rVert+\lambda}\frac{ N(\lambda)}{M}},\frac{N(\lambda)}{M},\frac{\left\lVert L\right\rVert}{\left\lVert L\right\rVert+\lambda}\sqrt{\frac{ \log\frac{1}{t}}{M}},\frac{\left\lVert L\right\rVert}{\left\lVert L\right\rVert+\lambda}\frac{\log\frac{1}{t}}{M}\right)
\end{eqnarray*}
holds with probability at least $1-t$. Note that $1\geq N(\lambda)\geq\frac{\lVert L\rVert}{\lVert L\rVert+\lambda}$ and $\log\frac{1}{t}\geq 1$. Thus,
\begin{align*}
&\max\left(\sqrt{\frac{\left\lVert L\right\rVert}{\left\lVert L\right\rVert+\lambda}\frac{ N(\lambda)}{M}},\frac{N(\lambda)}{M},\frac{\left\lVert L\right\rVert}{\left\lVert L\right\rVert+\lambda}\sqrt{\frac{ \log\frac{1}{t}}{M}},\frac{\left\lVert L\right\rVert}{\left\lVert L\right\rVert+\lambda}\frac{\log\frac{1}{t}}{M}\right)\\
    \leq&\max\left(\sqrt{\frac{ N(\lambda)\log\frac{1}{t}}{M}},\frac{N(\lambda)\log\frac{1}{t}}{M},\frac{\left\lVert L\right\rVert}{\left\lVert L\right\rVert+\lambda}\sqrt{\frac{ \log\frac{1}{t}}{M}},\frac{\left\lVert L\right\rVert}{\left\lVert L\right\rVert+\lambda}\frac{\log\frac{1}{t}}{M}\right)\\
    \leq&\max\left(\sqrt{\frac{ N(\lambda)\log\frac{1}{t}}{M}},\frac{N(\lambda)\log\frac{1}{t}}{M},\sqrt{\frac{\left\lVert L\right\rVert}{\left\lVert L\right\rVert+\lambda}}\sqrt{\frac{N(\lambda)\log\frac{1}{t}}{M}},\frac{N(\lambda)\log\frac{1}{t}}{M}\right)\\
    \leq&\max\left(\sqrt{\frac{ N(\lambda)\log\frac{1}{t}}{M}},\frac{N(\lambda)\log\frac{1}{t}}{M}\right).
\end{align*}
Therefore,
\begin{eqnarray*}
    \left\lVert L_\lambda^{-\frac{1}{2}}(L-L_M)L_\lambda^{-\frac{1}{2}}\right\rVert\leq C\max\left(\sqrt{\frac{N(\lambda)\log\frac{1}{t}}{M}},\frac{N(\lambda)\log\frac{1}{t}}{M}\right),
\end{eqnarray*}
holds with probability at least $1-t$.

\subsubsection{Proof of Lemma~\ref{lem:constants}}
Note that
\begin{align*}
\left\lVert L_{M,\lambda}^{-\frac{1}{2}}L_\lambda^{\frac{1}{2}}\right\rVert^2=&\left\lVert L_\lambda^{\frac{1}{2}}L_{M,\lambda}^{-1}L_\lambda^{\frac{1}{2}}\right\rVert\\
    =&\left\lVert \left(L_\lambda^{-\frac{1}{2}}L_{M,\lambda}L_\lambda^{-\frac{1}{2}}\right)^{-1}\right\rVert\\
    =&\left\lVert \left(I+L_\lambda^{-\frac{1}{2}}\left(L_{M,\lambda}-L_\lambda\right)L_\lambda^{-\frac{1}{2}}\right)^{-1}\right\rVert\\
    \leq&\left(1-\left\lVert L_\lambda^{-\frac{1}{2}}\left(L_{M}-L\right)L_\lambda^{-\frac{1}{2}}\right\rVert\right)^{-1}.
\end{align*}
Thus, for $M\geq (4C^2+2C)N(\lambda)\log\frac{1}{t}$, the following holds:
\begin{equation*}
    \mathbb{P}\left(\left\lVert L_\lambda^{-\frac{1}{2}}\left(L_{M}-L\right)L_\lambda^{-\frac{1}{2}}\right\rVert\leq \frac{1}{2}\right)\geq 1-t.
\end{equation*}
Therefore,
\begin{equation*}
    \mathbb{P}\left(\left\lVert L_{M,\lambda}^{-\frac{1}{2}}L_\lambda^{\frac{1}{2}}\right\rVert\leq \sqrt{2}\right)\geq 1-t.
\end{equation*}
Next, we show $\mathbb{P}_{\omega_{1:M}}(E_{cov})\geq 1-t$. Applying Proposition~\ref{prop:gaussian_cov}, we get
\begin{eqnarray*}
    \mathbb{P}_{\omega_{1:M}}\left(\left\lVert L-L_M\right\rVert\leq C\max\left(\sqrt{\frac{\lVert L\rVert\mathrm{tr}(L)}{M}},\frac{\mathrm{tr}(L)}{M},\sqrt{\frac{\lVert L\rVert^2\log\frac{1}{t}}{M}},\frac{\lVert L\rVert\log\frac{1}{t}}{M}\right)\right)\geq 1-t.
\end{eqnarray*}
Setting $M\geq \frac{\kappa(4C^2+2C)}{\lVert L\rVert}\log\frac{1}{t}$, we get $\mathbb{P}_{\omega_{1:M}}(\lVert L-L_M\rVert\leq \lVert L\rVert/2)\geq 1-t$

Next, we show
\begin{equation*}
    \mathbb{P}_{X|\omega_{1:M}}\left(\left\lVert C_{M,\lambda}^{-\frac{1}{2}}\left(\hat{C}_{M}-C_M\right)C_{M,\lambda}^{-\frac{1}{2}}\right\rVert\leq \frac{1}{4}\right)\geq 1-t.
\end{equation*}
For $\omega_1,\ldots,\omega_M\in E_g$ we have $\sup_{x\in\mathcal{X}}|h(x,\omega_i)|\leq H$. Then for $i=1,2,\ldots,n$ the operator norm of the random matrix $C_{M,\lambda}^{-\frac{1}{2}}(h_M(x_i)h_M(x_i)^\top-C_M)C_{M,\lambda}^{-\frac{1}{2}}$ is bounded by $2H^2\lambda^{-1}$. Thus, we can apply Bernstein's inequality for bounded self-adjoint random matrices:
\begin{proposition}[Proposition 3 in \cite{randomfeature}]\label{prop:Bernstein_matrx}
     Let $\mathcal{H}$ be a separable Hilbert space and let $X_1,...,X_n$ be a sequence of independent and identically distributed self-adjoint positive random operators on $\mathcal{H}$. Assume that $\mathbb{E}\left[X_i\right]=0$ and there exists $T>0$ such that the largest eigenvalue of $X_i$ is bounded by $T$ almost surely for every $i\in\{1,\ldots,n\}$. Let $S$ be a positive operator such that $\mathbb{E}\left[X_i^2\right]\preceq S$. Then for any $t\in(0,1]$ the following holds:
     \begin{eqnarray*}
         \mathbb{P}\left(\lambda_{\max}\left(\frac{1}{n}\sum_{i=1}^nX_i\right)\leq\frac{2T\log\frac{2\mathrm{tr}(S)}{\lVert S\rVert t}}{3n}+\sqrt{\frac{2\lVert S\rVert\log\frac{2\mathrm{tr}(S)}{\lVert S\rVert t}}{n}}\right)\geq 1-t.
     \end{eqnarray*}
\end{proposition}
Note that
\begin{eqnarray*}
    \mathbb{E}\left[\left( C_{M,\lambda}^{-\frac{1}{2}}h_M(x_i)h_M(x_i)^\top-C_M)C_{M,\lambda}^{-\frac{1}{2}}\right)^2\right]&\preceq&
    \mathbb{E}\left[\left( C_{M,\lambda}^{-\frac{1}{2}}h_M(x_i)h_M(x_i)^\top C_{M,\lambda}^{-\frac{1}{2}}\right)^2\right]\\
    &\preceq& H^2\lambda^{-1}
    \mathbb{E}\left[C_{M,\lambda}^{-\frac{1}{2}}h_M(x_i)h_M(x_i)^\top C_{M,\lambda}^{-\frac{1}{2}}\right]\\
    &\preceq& H^2\lambda^{-1}C_{M,\lambda}^{-1}C_M.
\end{eqnarray*}
Therefore, we can apply Proposition~\ref{prop:Bernstein_matrx} for $T=2H^2\lambda^{-1}$ and $S=H^2\lambda^{-1}C_{M,\lambda}^{-1}C_M$ to conclude that
\begin{eqnarray*}
\left\lVert C_{M,\lambda}^{-\frac{1}{2}}(C_M-\hat{C}_M)C_{M,\lambda}^{-\frac{1}{2}}\right\rVert\leq\frac{2H^2\log\frac{4\mathrm{tr}(C_{M,\lambda}^{-1}C_M)}{\left\lVert C_{M,\lambda}^{-1}C_M\right\rVert t}}{3n\lambda}+\sqrt{\frac{2H^2\log\frac{4\mathrm{tr}(C_{M,\lambda}^{-1}C_M)}{\left\lVert C_{M,\lambda}^{-1}C_M\right\rVert t}}{n\lambda}},
\end{eqnarray*}
holds with probability at least $1-t$.

Note that
\begin{align*}
\left\lVert C_M\right\rVert=&\lVert L_M\rVert\\
    \geq&\lVert L\rVert-\lVert L-L_M\rVert,
\end{align*}
Thus, for $\omega_1,\ldots,\omega_M\in E_{cov}$, we have $\lVert C_M\rVert\geq\frac{\lVert L\rVert}{2}$. Since $\lambda\in [0,\lVert L\rVert]$, we have
\begin{eqnarray*}
    \mathbb{P}_{\omega_{1:M}}\left(\left\lVert C_{M,\lambda}^{-1}C_M\right\rVert\geq \frac{1}{3}\right)\geq 1-t.
\end{eqnarray*}
Therefore, for $\omega_1,\ldots,\omega_M\in E_{g}\cap E_{cov}$,
\begin{eqnarray*}
\left\lVert C_{M,\lambda}^{-\frac{1}{2}}(C_M-\hat{C}_M)C_{M,\lambda}^{-\frac{1}{2}}\right\rVert\leq\frac{2H^2\log\frac{12\mathrm{tr}(C_{M,\lambda}^{-1}C_M)}{t}}{3n\lambda}+\sqrt{\frac{2H^2\log\frac{12\mathrm{tr}(C_{M,\lambda}^{-1}C_M)}{t}}{n\lambda}},
\end{eqnarray*}
holds with probability at least $1-t$.

Thus, if $n\geq 40H^2\lambda^{-1}\log\frac{12\mathrm{tr}(C_{M,\lambda}^{-1}C_M)}{t}$, the following holds:
\begin{eqnarray*}
    \mathbb{P}_{X|\omega_{1:M}}\left(\left\lVert C_{M,\lambda}^{-\frac{1}{2}}(C_M-\hat{C}_M)C_{M,\lambda}^{-\frac{1}{2}}\right\rVert\leq\frac{1}{4}\right)\geq 1-t.
\end{eqnarray*}
Finally, we show $\left\lVert S_M\hat{C}_{M,\lambda}^{-1}C_{M,\lambda}^{\frac{1}{2}}\right\rVert\leq\frac{4}{3}$ if $\left\lVert C_{M,\lambda}^{-\frac{1}{2}}(C_M-\hat{C}_M)C_{M,\lambda}^{-\frac{1}{2}}\right\rVert\leq\frac{1}{4}$:
\begin{align*}
\left\lVert S_M\hat{C}_{M,\lambda}^{-1}C_{M,\lambda}^{\frac{1}{2}}\right\rVert \leq&\left\lVert S_M\hat{C}_{M,\lambda}^{-\frac{1}{2}}\right\rVert\left\lVert \hat{C}_{M,\lambda}^{-\frac{1}{2}} C_{M,\lambda}^{\frac{1}{2}}\right\rVert\\
    =&\left\lVert \hat{C}_{M,\lambda}^{-\frac{1}{2}\top}C_M\hat{C}_{M,\lambda}^{-\frac{1}{2}}\right\rVert^{\frac{1}{2}}\left\lVert \hat{C}_{M,\lambda}^{-\frac{1}{2}} C_{M,\lambda}^{\frac{1}{2}}\right\rVert\\
    \leq&\left\lVert \hat{C}_{M,\lambda}^{-\frac{1}{2}\top}C_{M,\lambda}\hat{C}_{M,\lambda}^{-\frac{1}{2}}\right\rVert^{\frac{1}{2}}\left\lVert \hat{C}_{M,\lambda}^{-\frac{1}{2}} C_{M,\lambda}^{\frac{1}{2}}\right\rVert\\
    =&\left\lVert C_{M,\lambda}^{\frac{1}{2}}\hat{C}_{M,\lambda}^{-\frac{1}{2}}\right\rVert\left\lVert \hat{C}_{M,\lambda}^{-\frac{1}{2}} C_{M,\lambda}^{\frac{1}{2}}\right\rVert\\
    =&\left\lVert \hat{C}_{M,\lambda}^{-\frac{1}{2}} C_{M,\lambda}^{\frac{1}{2}}\right\rVert^2\\
    =&\left\lVert C_{M,\lambda}^{\frac{1}{2}}\hat{C}_{M,\lambda}^{-1} C_{M,\lambda}^{\frac{1}{2}}\right\rVert\\
    =&\left\lVert C_{M,\lambda}^{-\frac{1}{2}}\hat{C}_{M,\lambda} C_{M,\lambda}^{-\frac{1}{2}}\right\rVert^{-1}\\
    =&\left\lVert\left( I+C_{M,\lambda}^{-\frac{1}{2}}\left(\hat{C}_{M}-C_{M}\right) C_{M,\lambda}^{-\frac{1}{2}}\right)^{-1}\right\rVert\\
    \leq&\left(1-\left\lVert C_{M,\lambda}^{-\frac{1}{2}}\left(\hat{C}_{M}-C_{M}\right) C_{M,\lambda}^{-\frac{1}{2}}\right\rVert\right)^{-1}.
\end{align*}

\subsubsection{Proof of Lemma~\ref{lem:noise_err}}
We apply Proposition~\ref{prop:bernstein_vec} to the random vectors $\{C_{M,\lambda}^{-\frac{1}{2}}y_ih_M(x_i)\}_{i=1}^n$. Note that for $\omega_1,\ldots,\omega_M\in E_g$, we have:
\begin{align*}
&\mathbb{E}\left[\left\lVert C_{M,\lambda}^{-\frac{1}{2}}y_1h_M(x_1)-\mathbb{E}\left[C_{M,\lambda}^{-\frac{1}{2}}y_1h_M(x_1)\right]\right\rVert^p\right]\\
\leq&2^{p-1}\mathbb{E}\left[\left\lVert C_{M,\lambda}^{-\frac{1}{2}}y_1h_M(x_1)\right\rVert^p+\left\lVert\mathbb{E}\left[C_{M,\lambda}^{-\frac{1}{2}}y_1h_M(x_1)\right]\right\rVert^p\right]\\
    \leq&2^{p}\mathbb{E}\left[\left\lVert C_{M,\lambda}^{-\frac{1}{2}}y_1h_M(x_1)\right\rVert^p\right]\\
    \leq&2^{p}\mathbb{E}\left[\left\lVert C_{M,\lambda}^{-\frac{1}{2}}h_M(x_1)\right\rVert^p\mathbb{E}\left[|y_1|^p|x_1\right]\right]\\
    \leq&\frac{1}{2}p!(2\sigma)^2(2B)^{p-2}\mathbb{E}_{x_1}\left[\left\lVert C_{M,\lambda}^{-\frac{1}{2}}h_M(x_1)\right\rVert^p\right]\\
    \leq&\frac{1}{2}p!(2\sigma)^2(2BH\lambda^{-\frac{1}{2}})^{p-2}\mathbb{E}_{x_1}\left[\left\lVert C_{M,\lambda}^{-\frac{1}{2}}h_M(x_1)\right\rVert^2\right]\\
    =&\frac{1}{2}p!(2\sigma\sqrt{\mathrm{tr}\left(C_{M,\lambda}^{-1}C_M\right)})^2(2BH\lambda^{-\frac{1}{2}})^{p-2}.
\end{align*}
Applying Proposition~\ref{prop:bernstein_vec} yields the following:
\begin{align*}
    \mathbb{P}\left(\left\lVert C_{M,\lambda}^{-\frac{1}{2}}\left(\hat{S}_M^\top \hat{y}-S_M^\top f_0\right)\right\rVert_2\leq\frac{2BH\log\frac{2}{t}}{n\sqrt{\lambda}}+\sqrt{\frac{8\sigma^2 \mathrm{tr}\left(C_{M,\lambda}^{-1}C_M\right)\log\frac{2}{t}}{n}}\right)\geq 1-t.
\end{align*}

\subsubsection{Proof of Lemma~\ref{lem:rf_err}}
We apply Proposition~\ref{prop:bernstein_vec} to the random vectors $\{C_{M,\lambda}^{-\frac{1}{2}}h_M(x_i)h_M(x_i)^\top\}_{i=1}^n$ in $\mathbb{R}^{M\times M}$. Note that for $\omega_1,\ldots,\omega_M\in E_g$, we have:
\begin{align*}
&\mathbb{E}\left[\left\lVert C_{M,\lambda}^{-\frac{1}{2}}h_M(x_i)h_M(x_i)^\top-\mathbb{E}\left[C_{M,\lambda}^{-\frac{1}{2}}h_M(x_i)h_M(x_i)^\top\right]\right\rVert_F^p\right]\\
    \leq&2^{p-1}\mathbb{E}\left[\left\lVert C_{M,\lambda}^{-\frac{1}{2}}h_M(x_i)h_M(x_i)^\top\right\rVert_F^p+\left\lVert \mathbb{E}\left[C_{M,\lambda}^{-\frac{1}{2}}h_M(x_i)h_M(x_i)^\top\right]\right\rVert_F^p\right]\\
    \leq&2^{p}\mathbb{E}\left[\left\lVert C_{M,\lambda}^{-\frac{1}{2}}h_M(x_i)h_M(x_i)^\top\right\rVert_F^p\right]\\
    \leq&(2H)^{2}(2H\lambda^{-\frac{1}{2}})^{p-2}\mathbb{E}\left[\left\lVert C_{M,\lambda}^{-\frac{1}{2}}h_M(x_i)\right\rVert_2^2\right]\\
    =&\left(2H\sqrt{\mathrm{tr}\left(C_{M,\lambda}^{-1}C_M\right)}\right)^{2}(2H\lambda^{-\frac{1}{2}})^{p-2}\\
    \leq&\frac{1}{2}p!\left(2H\sqrt{\mathrm{tr}\left(C_{M,\lambda}^{-1}C_M\right)}\right)^{2}(2H\lambda^{-\frac{1}{2}})^{p-2}
\end{align*}
for $p\geq 2$. Therefore,
\begin{eqnarray*}
    \mathbb{P}\left(\left\lVert C_{M,\lambda}^{-\frac{1}{2}}\left(C_M-\hat{C}_M\right)\right\rVert_F\leq\frac{4H\log\frac{2}{t}}{n\sqrt{\lambda}}+\sqrt{\frac{8H^2\mathrm{tr}\left(C_{M,\lambda}^{-1}C_M\right)\log\frac{2}{t}}{n}}\right)\geq 1-t.
\end{eqnarray*}

\subsubsection{Proof of Lemma~\ref{lem:rf_dim}}
Proposition 7 in \cite{randomfeature} states:
    \begin{eqnarray*}
        \left|\mathrm{tr}(C_{M,\lambda}^{-1}C_{M})- N(\lambda)\right|\leq\left(\frac{d(\lambda)^2}{(1-c(\lambda))N(\lambda)}+\frac{\lambda e(\lambda)}{N(\lambda)}\right)N(\lambda)
    \end{eqnarray*}
    where $c(\lambda)=\lambda_{\max}\left( \widetilde{B}\right), d(\lambda)=\lVert \widetilde{B}\rVert_F$, and $e(\lambda)=|\mathrm{tr}(L_{\lambda}^{-\frac{1}{2}}\widetilde{B}L_{\lambda}^{-\frac{1}{2}})|$ where $\widetilde{B}=L_\lambda^{-\frac{1}{2}}(L-L_M)L_{\lambda}^{-\frac{1}{2}}$. We bound $c(\lambda),\lambda e(\lambda)$, and $d(\lambda)$.

    By Lemma~\ref{lem:full_operator}, we can bound $c(\lambda)=O(1)$ as $M=O(N(\lambda)\log\frac{1}{t})$. Note that,
    \begin{eqnarray*}
        \lambda e(\lambda)=\frac{1}{M}\sum_{i=1}^M\lVert\sqrt{\lambda} L_\lambda^{-1}h(\cdot,w_i)\rVert^2-\mathbb{E}\left[\lVert\sqrt{\lambda} L_\lambda^{-1}h(\cdot,w_i)\rVert^2\right],
    \end{eqnarray*}
    and
\begin{align*}
\mathbb{E}_\omega \left[\left(\lVert\sqrt{\lambda} L_\lambda^{-1}h(\cdot,w_i)\rVert^2-\mathbb{E}\left[\lVert\sqrt{\lambda} L_\lambda^{-1}h(\cdot,w_i)\rVert^2\right]\right)^p\right]
         \leq& 2p!\left(4\mathbb{E}\left[\lVert\sqrt{\lambda} L_\lambda^{-1}h(\cdot,w_i)\rVert^2\right]\right)^p\\
         \leq& 2p!\left(4\mathbb{E}\left[\lVert L_\lambda^{-\frac{1}{2}}h(\cdot,w_i)\rVert^2\right]^2\right)^p\\
         =&2p!\left(4N(\lambda)\right)^p\\
         =&\frac{1}{2}p!\left(8N(\lambda)\right)^p,
\end{align*}
    by Lemma~\ref{lem:gp_moments}. Thus,
    \begin{eqnarray*}
        \mathbb{P}\left(\lambda e(\lambda)\leq \frac{8N(\lambda)\log\frac{2}{t}}{M}+\frac{8N(\lambda)\log\frac{2}{t}}{\sqrt{M}}\right)\geq1-t,
    \end{eqnarray*}
    by Proposition~\ref{prop:bernstein_vec}.
    Therefore, $\frac{\lambda e(\lambda)}{N(\lambda)}=o(1)$.
    
    Next, we show the high-probability bound of $d(\lambda)$. Note that,
\begin{align*}
\mathbb{E}\left[\lVert L_\lambda^{-\frac{1}{2}}(L-h(\cdot,w_i)h(\cdot,w_i)^\top)L_\lambda^{-\frac{1}{2}}\rVert_F^p\right]=&2^{p}\mathbb{E}\left[\lVert L_\lambda^{-\frac{1}{2}}h(\cdot,w_i)h(\cdot,w_i)^\top L_\lambda^{-\frac{1}{2}}\rVert_F^p\right]\\
        \leq&2^{p}\mathbb{E}\left[\lVert L_\lambda^{-\frac{1}{2}}h(\cdot,w_i)\rVert_F^{2p}\right]\\
        \leq&2^{p+1}p!\left(8\mathbb{E}\left[\lVert L_\lambda^{-\frac{1}{2}}h(\cdot,w_i)\rVert_F^{2}\right]\right)^p\\
        \leq&2^{p+1}p!\left(8N(\lambda)\right)^p\\
        \leq&\frac{1}{2}p!\left(32N(\lambda)\right)^p.
\end{align*}
    Applying Proposition~\ref{prop:bernstein_vec} yields
    \begin{eqnarray*}
        \mathbb{P}\left(d(\lambda)\leq \frac{32N(\lambda)\log\frac{2}{t}}{M}+\frac{32N(\lambda)\log\frac{2}{t}}{\sqrt{M}}\right)\geq1-t.
    \end{eqnarray*}
    Then $\frac{d(\lambda)^2}{N(\lambda)}=O(1)$. Therefore, there exists a constant $c$ such that
     \begin{equation*}
        \mathbb{P}_\omega \left(\mathrm{tr}(C_{M,\lambda}^{-1}C_M)\leq 2N(\lambda)\right)\geq 1-3t
    \end{equation*}
    if $M\geq c N(\lambda)\log^2\frac{2}{t}$.

\subsection{Proof of Lemma~\ref{lem:priv_ridge}}
\begin{proposition}[Example 2.11 in \citet{wainwright2019high}]\label{prop:gauss_vec_concen}
    Let $v$ be an $m$-dimensional random vector with i.i.d. components following the standard normal distribution. Then
    \begin{eqnarray*}
        \mathbb{P}\left(\left\lVert v\right\rVert_2\geq\sqrt{m}+\sqrt{8\log\frac{2}{t}}\right)\leq t,
    \end{eqnarray*}
    holds for any $t\in (0,1]$.
\end{proposition}
\begin{proposition}[Theorem 4.4.5 in \citet{vershynin2018high}]\label{prop:gauss_mat_concen} 
    Let $A$ be an $m\times n$ random matrix whose entries $A_{ij}$ are independent, mean zero, standard Gaussian random variables. Then, for any $t\in(0,1]$, we have
    \begin{eqnarray*}
        \left\lVert A\right\rVert\leq C^\prime\left(\sqrt{m}+\sqrt{n}+\sqrt{2\log\frac{2}{t}}\right)
    \end{eqnarray*}
with probability at least $1-t$ where $C^\prime$ is a universal constant.
\end{proposition}
Define
\begin{align*}
E_1&:=\Bigg\{\left\lVert\widetilde{C}_{M,\lambda}-\hat{C}_{M,\lambda}\right\rVert\leq \frac{4C^\prime \Delta_1\sqrt{M\log\frac{1}{\delta}}}{n\epsilon}\Bigg\},\\
    E_2&:=\Bigg\{\left\lVert\widetilde{u}-\hat{u}_T\right\rVert\leq \frac{2\Delta_2\sqrt{M\log\frac{1}{\delta}}}{n\epsilon},[y_i]_T=y_i\textrm{ }\forall i\Bigg\}.
\end{align*}
From Proposition~\ref{prop:gauss_mat_concen}, we can establish that $\mathbb{P}\left(E_1\right)\geq 1-t$ if $M\geq \frac{1}{2}\log\frac{2}{t}$. This is shown by the following:
\begin{align*}
&\mathbb{P}\left(\left\lVert\widetilde{C}_{M,\lambda}-\hat{C}_{M,\lambda}\right\rVert\leq\frac{4C^\prime\Delta_1\sqrt{M\log\frac{1}{\delta}}}{n\epsilon}\right)\\\geq& \mathbb{P}\left(\left\lVert\widetilde{C}_{M,\lambda}-\hat{C}_{M,\lambda}\right\rVert\leq\frac{C^\prime\Delta_1\sqrt{\log\frac{1}{\delta}}}{n\epsilon}\left(2\sqrt{M}+\sqrt{2\log\frac{2}{t}}\right)\right)\\
    \geq&1-t.
\end{align*}
Also $\mathbb{P}\left(E_2\right)\geq 1-t$ if $M\geq 8\log\frac{2}{t}$ by Proposition~\ref{prop:gauss_vec_concen}. First, note that $\mathbb{P}\left([y_i]_T=y_i\textrm{ }\forall i\right)\geq 1-t$, because
\begin{equation*}
    \mathbb{P}\left(Y_i-\mathbb{E}\left[Y_i\right]\geq 2B\log\frac{n}{t}+\sqrt{2\log\frac{n}{t}}\sigma\right)\leq t/n.
\end{equation*}
Then,
\begin{align*}
&\mathbb{P}\left(\left\lVert\widetilde{u}-\hat{u}_T\right\rVert\leq \frac{2\Delta_2\sqrt{M\log\frac{1}{\delta}}}{n\epsilon}\Bigg|[y_i]_T=y_i\textrm{ }\forall i\right)\\\geq&\mathbb{P}\left(\left\lVert\widetilde{u}-\hat{u}_T\right\rVert\leq \frac{\Delta_2\sqrt{\log\frac{1}{\delta}}}{n\epsilon}\left(\sqrt{M}+\sqrt{8\log\frac{2}{t}}\right)\Bigg|[y_i]_T=y_i\textrm{ }\forall i\right)\\
    \geq&1-t.
\end{align*}
Therefore, $\mathbb{P}\left(E_1\cup E_2\right)\geq 1-3t$ if $M\geq 8\log\frac{2}{t}$.

Assume $M\gtrsim \max\left\{\left(\frac{\kappa}{\lVert L\rVert}+N(\lambda)\right)\log\frac{2}{t},8\log\frac{2}{t}\right\}$ and $n\gtrsim H^2\lambda^{-1}\log\frac{4N(\lambda)}{t}$.

Now we bound the excess risk of $\widetilde{f}$ in Algorithm~\ref{alg:dprp_reg}. Let $\hat{u}:=\frac{1}{n}\sum_{i=1}^ny_iz_i$. Then $\hat{\beta}_{M,\lambda}=C_{M,\lambda}^{-1}\hat{u}=C_{M,\lambda}^{-1}\hat{w}_T$.
\begin{align*}
    \lVert \widetilde{f}-\hat{f}_{M,\lambda}\rVert=&\left\lVert S_M(\widetilde{\beta}-\hat{\beta}_{M,\lambda})\right\rVert\\
    =&\left\lVert S_M\left(\widetilde{C}_{M,\lambda}^{-1}\widetilde{u}-\hat{C}_{M,\lambda}^{-1}\hat{u}_T\right)\right\rVert\\
    =&\left\lVert S_M\widetilde{C}_{M,\lambda}^{-1}(\hat{C}_{M,\lambda}-\widetilde{C}_{M,\lambda})\hat{C}_{M,\lambda}^{-1}\widetilde{u}\right\rVert+\left\lVert S_M\hat{C}_{M,\lambda}^{-1}(\widetilde{u}-\hat{u}_T)\right\rVert.
\end{align*}
We bound each term. Suppose $\omega_1,\ldots,\omega_M\in E_{g}\cap E_{half}\cap E_{full}\cap E_{cov}\cap E_{rf-dim}$, and $X_{1:n},Y_{1:n}\in E_{cov-mat}\cap E_{noise}\cap E_{cov-mat-half}$.

The first term is bounded as below:
\begin{align*}
&\left\lVert S_M\widetilde{C}_{M,\lambda}^{-1}(\hat{C}_{M,\lambda}-\widetilde{C}_{M,\lambda})\hat{C}_{M,\lambda}^{-1}\widetilde{u}\right\rVert\\
    \leq&\left\lVert S_M\widetilde{C}_{M,\lambda}^{-1}C_{M,\lambda}^{\frac{1}{2}}C_{M,\lambda}^{-\frac{1}{2}}(\hat{C}_{M,\lambda}-\widetilde{C}_{M,\lambda})\hat{C}_{M,\lambda}^{-1}C_{M,\lambda}C_{M,\lambda}^{-1}\widetilde{u}\right\rVert\\
    \leq&\lambda^{-\frac{1}{2}}\left\lVert S_M\widetilde{C}_{M,\lambda}^{-1}C_{M,\lambda}^{\frac{1}{2}}\right\rVert\left\lVert \hat{C}_{M,\lambda}-\widetilde{C}_{M,\lambda}\right\rVert\left\lVert\hat{C}_{M,\lambda}^{-1}C_{M,\lambda}\right\rVert\left\lVert C_{M,\lambda}^{-1}\widetilde{u}\right\rVert.
\end{align*}
First, we bound $\left\lVert S_M\widetilde{C}_{M,\lambda}^{-1}C_{M,\lambda}^{\frac{1}{2}}\right\rVert$. Note that
\begin{eqnarray*}
    \left\lVert\frac{\Delta_1\sqrt{\log\frac{1}{\delta}}}{n\epsilon}\varepsilon_{M\times M}\right\rVert\leq\frac{4C^\prime\Delta_1\sqrt{M\log\frac{1}{\delta}}}{n\epsilon}\leq \frac{\lambda}{2},
\end{eqnarray*}
as $M\leq \frac{n^2\lambda^2\epsilon^2}{64C^{\prime2}\Delta_1^2\log\frac{1}{\delta}}$. Thus, $\widetilde{C}_{M,\lambda}^{-1}$ is positive definite. Then,
\begin{align*}
\left\lVert S_M\widetilde{C}_{M,\lambda}^{-1}C_{M,\lambda}^{\frac{1}{2}}\right\rVert \leq&\left\lVert S_M\widetilde{C}_{M,\lambda}^{-\frac{1}{2}}\right\rVert\left\lVert \widetilde{C}_{M,\lambda}^{-\frac{1}{2}} C_{M,\lambda}^{\frac{1}{2}}\right\rVert\\
    =&\left\lVert \widetilde{C}_{M,\lambda}^{-\frac{1}{2}}C_M\widetilde{C}_{M,\lambda}^{-\frac{1}{2}}\right\rVert^{\frac{1}{2}}\left\lVert \widetilde{C}_{M,\lambda}^{-\frac{1}{2}} C_{M,\lambda}^{\frac{1}{2}}\right\rVert\\
    \leq&\left\lVert \widetilde{C}_{M,\lambda}^{-\frac{1}{2}}C_{M,\lambda}\widetilde{C}_{M,\lambda}^{-\frac{1}{2}}\right\rVert^{\frac{1}{2}}\left\lVert \widetilde{C}_{M,\lambda}^{-\frac{1}{2}} C_{M,\lambda}^{\frac{1}{2}}\right\rVert\\
    =&\left\lVert C_{M,\lambda}^{\frac{1}{2}}\widetilde{C}_{M,\lambda}^{-\frac{1}{2}}\right\rVert\left\lVert \widetilde{C}_{M,\lambda}^{-\frac{1}{2}} C_{M,\lambda}^{\frac{1}{2}}\right\rVert\\
    =&\left\lVert \widetilde{C}_{M,\lambda}^{-\frac{1}{2}} C_{M,\lambda}^{\frac{1}{2}}\right\rVert^2\\
    =&\left\lVert C_{M,\lambda}^{\frac{1}{2}}\widetilde{C}_{M,\lambda}^{-1} C_{M,\lambda}^{\frac{1}{2}}\right\rVert\\
    =&\left\lVert \left(C_{M,\lambda}^{-\frac{1}{2}}\widetilde{C}_{M,\lambda} C_{M,\lambda}^{-\frac{1}{2}}\right)^{-1}\right\rVert\\
    =&\left\lVert \left(I+C_{M,\lambda}^{-\frac{1}{2}}(\widetilde{C}_{M}-C_M) C_{M,\lambda}^{-\frac{1}{2}}\right)^{-1}\right\rVert\\
\leq&\left(1-\left\lVert C_{M,\lambda}^{-\frac{1}{2}}\left(\widetilde{C}_{M}-C_{M}\right) C_{M,\lambda}^{-\frac{1}{2}}\right\rVert\right)^{-1}\\
\leq&\left(1-\left\lVert C_{M,\lambda}^{-\frac{1}{2}}\left(\widetilde{C}_M-\hat{C}_{M}\right) C_{M,\lambda}^{-\frac{1}{2}}\right\rVert-\left\lVert C_{M,\lambda}^{-\frac{1}{2}}\left(\hat{C}_{M}-C_{M}\right) C_{M,\lambda}^{-\frac{1}{2}}\right\rVert\right)^{-1}\\
\leq&\left(\frac{1}{2}-\left\lVert C_{M,\lambda}^{-\frac{1}{2}}\left(\hat{C}_{M}-C_{M}\right) C_{M,\lambda}^{-\frac{1}{2}}\right\rVert\right)^{-1}.
\end{align*}
Lemma~\ref{lem:constants} yields $\left\lVert S_M\widetilde{C}_{M,\lambda}^{-1}C_{M,\lambda}^{\frac{1}{2}}\right\rVert\leq4$ if $\lambda\in[0,\lVert L\rVert]$.

Next, we bound $\left\lVert \hat{C}_{M,\lambda}^{-1}C_{M,\lambda}\right\rVert$.
\begin{align*}
\left\lVert \hat{C}_{M,\lambda}^{-1}C_{M,\lambda}\right\rVert\leq&1+
    \left\lVert \left(\hat{C}_{M,\lambda}^{-1}-C_{M,\lambda}^{-1}\right)C_{M,\lambda}\right\rVert\\
    =&1+
    \left\lVert C_{M,\lambda}^{-1}\left(\hat{C}_{M}-C_{M}\right)\hat{C}_{M,\lambda}^{-1}C_{M,\lambda}\right\rVert\\
    \leq&1+
    \left\lVert C_{M,\lambda}^{-1}\left(\hat{C}_{M}-C_{M}\right)\right\rVert\left\lVert\hat{C}_{M,\lambda}^{-1}C_{M,\lambda}\right\rVert.
\end{align*}
By Lemma~\ref{lem:rf_err}, we obtain $\left\lVert C_{M,\lambda}^{-1}\left(\hat{C}_{M}-C_{M}\right)\right\rVert\leq \frac{1}{2}$ and $\left\lVert \hat{C}_{M,\lambda}^{-1}C_{M,\lambda}\right\rVert\leq 2$ since $n\geq \max\{256H^2N(\lambda)\log\frac{2}{t},8BH\lambda^{-\frac{1}{2}}\log\frac{2}{t}\}$.

Next, we have $\left\lVert \hat{C}_{M,\lambda}-\widetilde{C}_{M,\lambda}\right\rVert\leq \frac{4C^\prime\Delta_1\sqrt{M\log\frac{1}{\delta}}}{n\epsilon}$, and
\begin{align*}
\left\lVert C_{M,\lambda}^{-1}\widetilde{u}\right\rVert_2\leq&\left\lVert C_{M,\lambda}^{-1}\hat{u}_T\right\rVert_2+\frac{\left\lVert \widetilde{u}-\hat{u}_T\right\rVert_2}{\lambda}\\\leq&\left\lVert C_{M,\lambda}^{-1}\hat{u}_T\right\rVert_2+\frac{2\Delta_2\sqrt{M\log\frac{1}{\delta}}}{n\lambda\epsilon}\\
    =&\left\lVert C_{M,\lambda}^{-1}\hat{u}\right\rVert_2+\frac{2\Delta_2\sqrt{M\log\frac{1}{\delta}}}{n\lambda\epsilon}\\
    \leq&\left\lVert C_{M,\lambda}^{-1}S_M^\top f_0\right\rVert_2+\left\lVert C_{M,\lambda}^{-1}(\hat{u}-S_M^\top f_0)\right\rVert_2+\frac{2\Delta_2\sqrt{M\log\frac{1}{\delta}}}{n\lambda\epsilon}.
\end{align*}
The first term is bounded as
\begin{align*}
\left\lVert C_{M,\lambda}^{-1}S_M^\top f_0\right\rVert \leq&\left\lVert C_{M,\lambda}^{-1}S_M^\top Pf_0\right\rVert+\left\lVert C_{M,\lambda}^{-1}S_M^\top(I-P) f_0\right\rVert\\\leq&\left\lVert C_{M,\lambda}^{-1}S_M^\top L_{M,\lambda}^{\frac{1}{2}}\right\rVert\left\lVert L_{M,\lambda}^{-\frac{1}{2}}L^{\frac{1}{2}}\right\rVert\left\lVert L^{-\frac{1}{2}}Pf_0\right\rVert\\
    =&\left\lVert C_{M,\lambda}^{-1}S_M^\top L_{M,\lambda}^{\frac{1}{2}}\right\rVert\left\lVert L_{M,\lambda}^{-\frac{1}{2}}L^{\frac{1}{2}}\right\rVert\left\lVert L^{r-\frac{1}{2}}g\right\rVert\\
    \leq&\left\lVert C_{M,\lambda}^{-1}S_M^\top L_{M,\lambda}^{\frac{1}{2}}\right\rVert\left\lVert L_{M,\lambda}^{-\frac{1}{2}}L^{\frac{1}{2}}\right\rVert\kappa^{2r-1}R\\
    \leq&\sqrt{2}\kappa^{2r-1}R,
\end{align*}
which follows from the fact that $\left\lVert C_{M,\lambda}^{-1}S_M^\top L_{M,\lambda}^{\frac{1}{2}}\right\rVert=\left\lVert S_M L_{M,\lambda}^{-1} L_{M,\lambda}^{\frac{1}{2}}\right\rVert=\left\lVert L_{M,\lambda}^{-\frac{1}{2}}L_ML_{M,\lambda}^{-\frac{1}{2}}\right\rVert^{\frac{1}{2}}\leq 1$, and Lemma~\ref{lem:constants}.

Next, we bound the second term by applying Lemma~\ref{lem:noise_err}:
\begin{align*}
\left\lVert C_{M,\lambda}^{-1}(\hat{u}_T-S_M^\top f_0)\right\rVert_2 \leq&\lambda^{-\frac{1}{2}}\left\lVert C_{M,\lambda}^{-\frac{1}{2}}(\hat{u}_T-S_M^\top f_0)\right\rVert_2
\\\leq&\frac{2BH\log\frac{2}{t}}{n\lambda}+\sqrt{\frac{16\sigma^2N(\lambda)\log\frac{2}{t}}{n\lambda}}.
\end{align*}
Thus,
\begin{eqnarray*}
    \left\lVert C_{M,\lambda}^{-1}\widetilde{u}\right\rVert\leq \sqrt{2}\kappa^{2r-1}R+\frac{2BH\log\frac{2}{t}}{n\lambda}+\sqrt{\frac{16\sigma^2N(\lambda)\log\frac{2}{t}}{n\lambda}}+\frac{2\Delta_2\sqrt{M\log\frac{1}{\delta}}}{n\lambda\epsilon}.
\end{eqnarray*}
Therefore,
\begin{align*}
&\left\lVert S_M\widetilde{C}_{M,\lambda}^{-1}(\hat{C}_{M,\lambda}-\widetilde{C}_{M,\lambda})\hat{C}_{M,\lambda}^{-1}\widetilde{u}\right\rVert\\
    \leq&\frac{16C^\prime \Delta_1\sqrt{M\log\frac{1}{\delta}}}{n\sqrt{\lambda}\epsilon}\left(\sqrt{2}\kappa^{2r-1}R+\frac{2BH\log\frac{2}{t}}{n\lambda}+\sqrt{\frac{16\sigma^2N(\lambda)\log\frac{2}{t}}{n\lambda}}+\frac{2\Delta_2\sqrt{M\log\frac{1}{\delta}}}{n\lambda\epsilon}\right).
\end{align*}
Also,
\begin{align*}
    \left\lVert S_M\hat{C}_{M,\lambda}^{-1}(\widetilde{u}-\hat{u})\right\rVert \leq &\frac{2\Delta_2\sqrt{M\log\frac{1}{\delta}}}{n\epsilon}\left\lVert S_M\hat{C}_{M,\lambda}^{-1}\right\rVert\\
     \leq &\frac{2\Delta_2\sqrt{M\log\frac{1}{\delta}}}{n\sqrt{\lambda}\epsilon}\left\lVert S_M\hat{C}_{M,\lambda}^{-1}C_{M,\lambda}^{\frac{1}{2}}\right\rVert\\
     \leq &\frac{8\Delta_2\sqrt{M\log\frac{1}{\delta}}}{n\sqrt{\lambda}\epsilon}.
\end{align*}
Therefore, the following holds:
\begin{align*}
    &\lVert\widetilde{f}-\hat{f}_{M,\lambda}\rVert\\
     \leq & \frac{4C^\prime \Delta_1\sqrt{M\log\frac{1}{\delta}}}{n\sqrt{\lambda}\epsilon}\left(\sqrt{2}\kappa^{2r-1}R+\frac{2BH\log\frac{2}{t}}{n\lambda}+\sqrt{\frac{16\sigma^2N(\lambda)\log\frac{2}{t}}{n\lambda}}+\frac{2\Delta_2\sqrt{M\log\frac{1}{\delta}}}{n\lambda\epsilon}\right)\\
    &+\frac{8\Delta_2\sqrt{M\log\frac{1}{\delta}}}{n\sqrt{\lambda}\epsilon}
\end{align*}
if $\frac{n^2\lambda^2\epsilon^2}{64C^{\prime2}\Delta_1^2\log^2\frac{1}{\delta}}\geq M\gtrsim \max\left\{\left(\frac{\kappa}{\lVert L\rVert}+N(\lambda)\right)\log\frac{2}{t},8\log\frac{2}{t}\right\}$ and $n\gtrsim H^2\lambda^{-1}\log\frac{4N(\lambda)}{t}$.

\subsection{Proof of Theorem~\ref{thm:dprff_reg_rate_simple}}
By setting $r=1/2$ in Theorem~\ref{thm:dprff_reg_rate}, we obtain Theorem~\ref{thm:dprff_reg_rate_simple}.
\subsection{Proof of Theorem~\ref{thm:dprff_reg_rate}}
As in the proof of Theorem~\ref{thm:dprp_reg_rate}, we prove the theorem under Assumption~\ref{assump:cap_real}.

Using a similar argument to that in the proof of Theorem~\ref{thm:dprp_reg_rate},
if $(n\lambda\epsilon)^2\gtrsim M\gtrsim \max\Big\{\mathcal{F}_\infty(\lambda)\log\frac{1}{\lambda t},\sqrt{\log\frac{1}{t}}\Big\}$ and $n\gtrsim \lambda^{-1}\log\frac{1}{\lambda t}$, 
{\footnotesize
\begin{eqnarray*}
    &&\sqrt{\mathcal{E}(\widetilde{f})-\min_{f\in\mathcal{H}_k}\mathcal{E}(f)}\\
    &\lesssim& \frac{\sqrt{M}}{n\sqrt{\lambda}\epsilon}+\left(\frac{1}{\sqrt{\lambda }n}+\sqrt{\frac{N(\lambda)}{n}}+\left(\frac{\sqrt{\lambda \mathcal{F}_{\infty}(\lambda)}}{M^r}+\sqrt{\frac{\lambda N(\lambda)^{2r-1}\mathcal{F}_{\infty}(\lambda)^{2-2r}}{M}}\right)+\lambda^r\right)\log\frac{18}{t}
\end{eqnarray*}}
by Lemma~\ref{lem:priv_ridge} and Theorem 5 in \cite{randomfeature}.

We set $M=\widetilde{O}\left(n^{\frac{2r-1+\gamma_{\alpha}}{2r+\gamma}}\wedge(n\epsilon)^{\frac{4r+2\gamma_{\alpha}-2}{4r+\gamma_{\alpha}}}\right)$, with $\lambda=n^{-\frac{1}{2r+\gamma}}\vee (n\epsilon)^{-\frac{2}{4r+\gamma_{\alpha}}}$, to obtain the rate of $\widetilde{O}\left(n^{-\frac{2r}{2r+\gamma}}+(n\epsilon)^{-\frac{4r}{4r+\gamma_{\alpha}}}\right)$.
\subsection{Proof of Theorem~\ref{thm:lower_bd_simple}}
To show the lower bound, we use Theorem~\ref{thm:lower_bd} for $r=1/2$.
\subsection{Proof of Theorem~\ref{thm:lower_bd}}\label{sec:lowerbd}
We prove a slightly stronger version of the theorem. \begin{assumption}[Fluctuation condition]\label{assump:fluct}
For some constants $c_3>0$ and $\beta\in (\gamma,1)$, the eigenfunction $\phi_l$ satisfies $\lVert \phi_l\rVert_{L^1(P_X)}^2\leq c_3\mu_l^{\beta}$ for all positive integers $l$.
\end{assumption}
Denote the collection of distributions satisfying $\mathcal{P}_4\subset\mathcal{P}_2$ as Assumptions~\ref{assump:bern},~\ref{assump:cap},~\ref{assump:kernel},~\ref{assump:source},~\ref{assump:fluct}. We derive the minimax lower bound $O(n^{-\frac{2r}{2r+\gamma}}+(n\epsilon)^{-\frac{4r}{2r+\gamma+\beta}})$ over $\mathcal{P}_4$.

We briefly discuss the fluctuation condition. Intuitively, as the parameter $\beta$ in Assumption~\ref{assump:fluct} increases, $\mathcal{P}_4$ encompasses more challenging learning scenarios. Similar to the eigenvalue assumption, the upper bound imposed by the fluctuation condition ensures the existence of a `worst-case' scenario for the minimax analysis. Without an additional assumption, there exists a kernel and a data distribution such that Assumption~\ref{assump:fluct} is satisfied for any $\beta<1$, which will be given at the end of the proof. This implies the minimax lower bound $O(n^{-\frac{2r}{2r+\gamma}}+(n\epsilon)^{-\frac{4r}{2r+\gamma+1}+\eta})$ for any $\eta>0$. However, under a more favorable setting, for example, if $\lVert \phi_l\rVert_{L^1(P_X)}^2\geq c_3\mu_l^{\beta}$ holds for every data distribution and kernel, then the minimax lower bound becomes faster. Additionally, we show the derived minimax lower bound can be achieved under an additional assumption and the knowledge of the underlying covariate distribution $P_X$.

Let $\nu$ be the distribution satisfying Assumption~\ref{assump:cap} and~\ref{assump:fluct}. We construct $p_v\in\mathcal{P}_4$ for $v\in\{-1,+1\}^{d}$ as follows. Define $p_v$ as below.
\begin{align*}
    f_{v}(x)&:=\frac{1}{\sqrt{d}}\sum_{l=1}^{d}R\mu_{l+d}^{r}v_{l}\phi_{l+d}(x)\\
    dP_v(x,y)&:=\frac{2\kappa^{2r}R+f_v(x)}{4\kappa^{2r}R}d\nu(x)dy_{+1}(y)+\frac{2\kappa^{2r}R-f_v(x)}{4\kappa^{2r}R}d\nu(x)dy_{-1}(y).
\end{align*}
Note that $f_v=\argmin_{f\in\mathcal{H}_k}\mathcal{E}(f)$ since $\mathbb{E}_{p_v}\left[Y|X\right]=f_v(x)$ and $f_v\in\mathcal{H}_k$. Also, $f_v$ satisfies Assumption~\ref{assump:source} since $\left\lVert L^{-r}f_v\right\rVert=R$. Finally, $p_v$ is a density of an actual probability measure since
\begin{eqnarray*}
    |f_v(x)|\leq \kappa\lVert f_v\rVert_{\mathcal{H}_k}=\kappa\lVert L^{-1/2}f_v\rVert\leq \kappa\lVert L^{r-1/2}\rVert\lVert L^{-r}f_v\rVert\leq \kappa^{2r}R.
\end{eqnarray*}
Here we used the fact that $f(x)=\langle k(x,\cdot),f\rangle_{\mathcal{H}_k}$ if $f\in \mathcal{H}_k$. Thus, $p_v$ is a probability measure on $\mathcal{X}\times\mathcal{Y}$ and $P_v\in\mathcal{P}_1$.
\begin{proposition}[\cite{acharya2021differentially}]\label{prop:priv_assouad} Let $\mathcal{P}$ be a distribution over $\mathcal{X}^n$ and $\mathcal{V} \subset \mathcal{P}$ be a collection of distributions indexed by $\{-1, +1\}^d$. Denote $\theta(P)$ the estimation target corresponding to the distribution $P$. For $u,v\in\{-1,+1\}^d$, suppose the loss function $l$ satisfies
\begin{equation*}
    l(\theta(P_{u}),\theta(P_v))\geq2\tau\sum_{l=1}^d\mathbf{1}(u_l\not=v_l)
\end{equation*}
for some $\tau$ depending on $d$. For each $1\leq l\leq d$, define $p_{\pm l}^{\otimes n}:=\frac{1}{2^{d-1}}\sum_{v_l=\pm 1} p_v$. Suppose there exists $D>0$ such that for every $l$, there exists a coupling $(X^+,X^-)$ between $p_{+l}^{\otimes n}$ and $p_{-l}^{\otimes n}$ satisfying $\mathbb{E}\left[d_{H}(X^+,X^-)\right]\leq D$ where $d_H$ is the Hamming distance. Then the minimax risk is lower bounded as:
\begin{equation*}
    \inf_{M\in\mathcal{M}_{\epsilon,\delta}}\sup_{P\in\mathcal{P}}\mathbb{E}_P[l(M(X_1,\ldots,X_n),\theta(P))]\geq \frac{d\tau}{2}\left(0.9e^{-10\epsilon D}-10D\delta\right).
\end{equation*}
\end{proposition}
We derive the minimax lower bound by applying Proposition~\ref{prop:priv_assouad} for $\mathcal{P}:=\{P_{X,Y}\in\mathcal{P}_1|P_X=\nu\}$, $\mathcal{V}=\{p_v^{\otimes n}\}_{v\in\{-1,+1\}^d}\subset\mathcal{P}$, and $\theta(P):=\argmin_{f\in\mathcal{H}_k}\mathcal{E}_P(f)$. To this end, we first represent the minimax risk $\mathcal{R}(\mathcal{P}_1,\mathcal{E},\epsilon,\delta)$ as follows. Recall that $\mathcal{M}_{\epsilon,\delta}$ is a collection of $(\epsilon,\delta)$-DP algorithm that releases a prediction function $\widetilde{f}\in\mathcal{H}_k$.
\begin{align*}
    \mathcal{R}(\mathcal{P}_1,\mathcal{E},\epsilon,\delta)=&\inf_{M\in\mathcal{M}_{\epsilon,\delta}}\sup_{P\in\mathcal{P}_1}\left(\mathbb{E}_P[\mathcal{E}(M(X_1,\ldots,X_n))]-\min_{f\in\mathcal{H}_k}\mathcal{E}(f)\right)\\
    \geq&\inf_{M\in\mathcal{M}_{\epsilon,\delta}}\sup_{P\in\mathcal{P}}\left(\mathbb{E}_P[\mathcal{E}(M(X_1,\ldots,X_n))]-\min_{f\in\mathcal{H}_k}\mathcal{E}(f)\right)\\
    =&\inf_{M\in\mathcal{M}_{\epsilon,\delta}}\sup_{P\in\mathcal{P}}\left(\mathbb{E}_P[(Y-M(X_1,\ldots,X_n))^2]-\mathbb{E}_P[(Y-f_{\mathcal{H}_k}(X))^2]\right)\\
    =&\inf_{M\in\mathcal{M}_{\epsilon,\delta}}\sup_{P\in\mathcal{P}}\Big(\mathbb{E}_P[(f_{\mathcal{H}_k}(X)-M(X_1,\ldots,X_n))^2]\\
    &+2\mathbb{E}_P[(f_{\mathcal{H}_k}(X)-M(X_1,\ldots,X_n))(Y-f_{\mathcal{H}_k}(X))]\Big)\\
    =&\inf_{M\in\mathcal{M}_{\epsilon,\delta}}\sup_{P\in\mathcal{P}}\left(\mathbb{E}_\nu[(f_{\mathcal{H}_k}(X)-M(X_1,\ldots,X_n))^2]\right),
\end{align*}
where the last equality holds since $\mathbb{E}[(Y-f_{\mathcal{H}_k}(X))g(X)]=0$ for all $g\in\mathcal{H}_k$. Thus, we can apply Proposition~\ref{prop:priv_assouad} for the loss function defined as $l(\theta(P),\theta(P^\prime)):=\lVert \theta(P)-\theta(P^\prime)\rVert_{L^2(\nu)}^2$ for $P,P^\prime\in\mathcal{P}$. Next, we verify the $\tau$ and $D$ that satisfy the condition of the proposition.

\textbf{1. The condition regarding the loss function.} Since
\begin{align*}
l(\theta(P_u),\theta(P_v))=&l(f_u,f_v)\\
=&\left\lVert f_u-f_v\right\rVert^2_{L^2(\nu)}\\
=&\sum_{l=1}^{d}\frac{4R^2\mu_{l+d}^{2r}}{d}\mathbf{1}(u_l\not= v_l)\\
\geq&\frac{4R^2\mu_{2d}^{2r}}{d}\sum_{l=1}^{d}\mathbf{1}(u_l\not= v_l),
\end{align*}
the condition holds for 
$\tau=\frac{2R^2\mu_{2d}^{2r}}{d}$.

\textbf{2. Existence of the coupling for each $1\leq l\leq d$.} We construct the coupling that actually satisfies the aforementioned condition. For a given $l$, let $X^+:=(X_1^+,Y_1^+,\ldots,X_n^+,Y_n^+)$ be a random vector such that $(X_i^+,Y_i^+)|V^+=v\sim p_v$ and $V^+\sim\mathrm{Unif}(\{v\in\{-1,+1\}^d:v_l=1\})$. Then $(X^+,Y^+)\sim p^{\otimes n}_{\pm l}$ where
\begin{equation*}
    p_{\pm l}^{\otimes n}:=\frac{1}{2^{d-1}}\sum_{v_l=\pm1}p_v^{\otimes n}.
\end{equation*}
Next, we construct $(X^-,V^-):=(X_1^-,Y_1^-,\ldots,X_n^-,Y_n^-,V^-)$ for a given $(X^+,V^+)$.  For each $1\leq i\leq n$, set $(X_i^-,Y_i^-)$ according to the following cases: For convenience, for $v\in\{-1,+1\}^{d}$, denote $v^{-l}$ to be the element of $\{-1,+1\}^{d}$ such that $[v^{-l}]_j=v_j$ if $j\not=l$ and $[v^{-l}]_l=-v_l$. For example, if $v=(1,1,-1,1)$ then $v^{3-}$ is $(1,1,1,1)$.
\begin{itemize}
    \item[Case 1.] Given $X_i^+=x$ and $V^+=v$, if $\phi_{l+d}(x)\geq 0$, set $V^-=v^{-l}$ and
    \begin{align*}
        \mathbb{P}\left(X_i^-=x|X_i^+=x,V^+=v\right)=&1\\\mathbb{P}\left(Y_i^{-}=-1|Y_i^+=-1,X_i^+=x,V^+=v\right)=&1\\
        \mathbb{P}\left(Y_i^{-}=-1|Y_i^+=1,X_i^+=x,V^+=v\right)=&\frac{2}{2\kappa^{2r}R+f_v(x)}\frac{R\mu_{l+d}^r}{\sqrt{d}}v_l\phi_{l+d}(x).
    \end{align*}\item[Case 2.] Given $X_i^+=x$ and $V=v$, if $\phi_{l+d}(x)< 0$, set $V^-=v^{-l}$ and
    \begin{align*}
        \mathbb{P}\left(X_i^-=x|X_i^+=x,V^+=v\right)=&1\\\mathbb{P}\left(Y_i^{-}=1|Y_i^+=1,X_i^+=x,V^+=v\right)=&1\\
        \mathbb{P}\left(Y_i^{-}=1|Y_i^+=-1,X_i^+=x,V^+=v\right)=&-\frac{2}{2\kappa^{2r}R-f_v(x)}\frac{R\mu_{l+d}^r}{\sqrt{d}}v_l\phi_{l+d}(x).
    \end{align*}
\end{itemize}
We note that the above construction is valid. That is, the probability of changing the sign is indeed less than or equal to 1. Since $|\phi_{l+d}(x)|\leq\kappa\lVert\phi_{l+d}\rVert_{\mathcal{H}_k}\leq\kappa \mu_{l+d}^{-1/2}$ and $\mu_l\leq\kappa^2$, we have
\begin{align*}
    \frac{2}{2\kappa^{2r}R-f_v(x)}\frac{R\mu_{l+d}^r}{\sqrt{d}}|\phi_{l+d}(x)|\leq&\frac{2}{\kappa^{2r}R}\frac{R\mu_{l+d}^r}{\sqrt{d}}|\phi_{l+d}(x)|\\
    \leq&\frac{2}{\kappa^{2r-1}R}\frac{R\mu_d^{r-1/2}}{\sqrt{d}}\\
    \leq&\frac{2}{\sqrt{d}}\\
    \leq &1,
\end{align*}
where the last inequality holds by choosing $d\geq4$. Next, we show that $X^-|V^-=v^{-l}\sim p_{v^{-l}}^{\otimes n}$ for $v\in\{-1,+1\}^d$ with $v_l=1$. 
\begin{itemize}
    \item[Case 1.] If $\phi_{l+d}(x)\geq 0$ then
    \begin{align*}
        \mathbb{P}\left(Y_i^{-}=-1|X_i^{-}=x,V^-=v^{-l}\right)=&\mathbb{P}\left(Y_i^{+}=-1|X_i^{-}=x,V^+=v\right)\\&+\mathbb{P}\left(Y_i^-=-1|Y_i^+=1\right)\mathbb{P}\left(Y_i^{+}=1|X_i^{-}=x,V^+=v\right)\\
        =&\frac{2\kappa^{2r}R-f_v(x)}{4\kappa^{2r}R}+\frac{1}{4\kappa^{2r}R}\left(\frac{2R\mu_{l+d}^r}{\sqrt{d}}\phi_{l+d}(x)\right)\\
        =&\frac{2\kappa^{2r}R-f_{v^{-l}}(x)}{4\kappa^{2r}R}.
    \end{align*}
    \item[Case 2.] If $\phi_{l+d}(x)< 0$ then
    \begin{align*}
        \mathbb{P}\left(Y_i^{-}=1|X_i^{-}=x,V^-=v^{-l}\right)=&\mathbb{P}\left(Y_i^{+}=1|X_i^{-}=x,V^+=v\right)\\&+\mathbb{P}\left(Y_i^-=1|Y_i^+=-1\right)\mathbb{P}\left(Y_i^{+}=-1|X_i^{-}=x,V^+=v\right)\\
        =&\frac{2\kappa^{2r}R+f_v(x)}{4\kappa^{2r}R}+\frac{1}{4\kappa^{2r}R}\left(-\frac{2R\mu_{l+d}^r}{\sqrt{d}}\phi_{l+d}(x)\right)\\
        =&\frac{2\kappa^{2r}R+f_{v^{-l}}(x)}{4\kappa^{2r}R}.
    \end{align*}
\end{itemize}
In any case, we have $(Y_i^-,X_i^-)|V^-=v^{-l}\sim p_{v^{-l}}$. Also, $\{(Y_j^-,X_j^-)|V^-\}_{j=1}^n$ are i.i.d. by construction. Since $(Y_j^-,X_j^-)|V^-=v^{-l}\sim p_{v^{-l}}$ and $V^-\sim\mathrm{Unif}(\{v\in\{-1,+1\}^d:v_l=-1\})$, the marginal distribution of $X^-=((X_1^-,Y^-_1),\ldots,(X_n^-,Y^-_n))$ is
\begin{equation*}
    \frac{1}{2^{d-1}}\sum_{v_l=1}p_{v^{-l}}^{\otimes n}=\frac{1}{2^{d-1}}\sum_{v_l=-1}p_{v}^{\otimes n}=p_{-l}^{\otimes n}.
\end{equation*}
Thus, $X^+$ and $X^-$ are couplings of $p_{+l}^{\otimes n}$ and $p_{-l}^{\otimes n}$.

The expected Hamming distance between $X^+$ and $X^-$ is
\begin{align*}
&\mathbb{E}\left[d_H(X^+,X^-)\right]\\
=&\sum_{i=1}^n\mathbb{E}[\mathbf{1}(X^+_i\not=X^-_i)]+\mathbb{E}[\mathbf{1}(Y^+_i\not=Y^-_i)]\\
=&\sum_{i=1}^n\mathbb{E}[\mathbf{1}(Y^+_i\not=Y^-_i)]\\
=&\sum_{i=1}^n\mathbb{E}[\mathbb{E}[\mathbf{1}(Y^+_i\not=Y^-_i)|X_i^+=x,V^+=v]]\\
=&\sum_{i=1}^n\mathbb{E}_{X^+,V^+}\Big[\mathbb{P}(Y^-_i=-1|Y^+_i=1,X_i^+,V^+)\mathbb{P}(Y^+_i=1|X_i^+,V^+)\mathbf{1}(\phi_{l+d}(X_i^+)\geq0)\\
&+\mathbb{P}(Y^-_i=1|Y^+_i=-1,X_i^+,V^+)\mathbb{P}(Y^+_i=-1|X_i^+,V^+)\mathbf{1}(\phi_{l+d}(X_i^+)<0)\Big]\\
=&\sum_{i=1}^n\mathbb{E}_{X^+,V^+}\Bigg[\frac{R\mu_{l+d}^r}{2\kappa^{2r}R\sqrt{d}}|\phi_{l+d}(X_i^+)|\Bigg]\\
\leq&\sum_{i=1}^n\frac{\sqrt{c_3}R\mu_{l+d}^{r+\beta/2}}{2\kappa^{2r}R\sqrt{d}}\\
\leq&\frac{\sqrt{c_3}nR\mu_d^{r+\beta/2}}{2\kappa^{2r}R\sqrt{d}}
\end{align*}
Thus, the coupling condition holds for $D=\frac{\sqrt{c_3}nR\mu_d^{r+\beta/2}}{2\kappa^{2r}R\sqrt{d}}$.

Finally, we apply Proposition~\ref{prop:priv_assouad} to derive the minimax lower bound.
\begin{align*}
&\mathfrak{R}(\mathcal{P},\mathcal{E},\epsilon,\delta)\\
     \geq &R^2\mu_{2d}^{2r}\left(0.9\mathrm{exp}\left(-\frac{10\sqrt{c_3}(n\epsilon) R\mu_d^{r+\beta/2}}{2\kappa^{2r}R\sqrt{d}}\right)-\frac{10\sqrt{c_3}(n\delta) R\mu_d^{r+\beta/2}}{2\kappa^{2r}R\sqrt{d}}\right)\\
     \geq&R^2a(2d)^{-2r/\gamma}\left(0.9\mathrm{exp}\left(-\frac{10\sqrt{c_3}(n\epsilon) R\mu_d^{r+\beta/2}}{2\kappa^{2r}R\sqrt{d}}\right)-\frac{10\sqrt{c_3}(n\delta) R\mu_d^{r+\beta/2}}{2\kappa^{2r}R\sqrt{d}}\right)
\end{align*}
Note that $\frac{(n\epsilon)\mu_d^{r+\beta/2}}{\sqrt{d}}=\Omega((n\epsilon)d^{-\frac{r+(\gamma+\beta)/2}{\gamma}})$ and $\frac{(n\delta)\mu_d^{r+\beta/2}}{\sqrt{d}}=O((n\delta)d^{-\frac{r+(\gamma+\beta)/2}{\gamma}})=o(d^{-\frac{r+(\gamma+\beta)/2}{\gamma}})$. Setting $d=O((n\epsilon)^{\frac{2\gamma}{2r+\gamma+\beta}}\vee1)$, we obtain the minimax risk lower bound
\begin{align*}
    \mathfrak{R}(\mathcal{P},\mathcal{E},\epsilon,\delta)=\Omega((n\epsilon)^{-\frac{4r}{2r+\gamma+\beta}}\wedge1).
\end{align*}
Combining this with the non-private minimax risk, we arrive at the desired lower bound:
\begin{equation*}
    \Omega\left(n^{-\frac{2r}{2r+\gamma}}+(n\epsilon)^{-\frac{4r}{\gamma+2r+\beta}}\wedge1\right).
\end{equation*}
Now, we construct an example of kernel and a data distribution satisfying Assumption~\ref{assump:cap},~\ref{assump:kernel}, and~\ref{assump:fluct} for arbitrary $\beta\in(\gamma,1)$ based on the bump function. For simplicity we assume $\mathcal{X}$ as a unit interval. However, the argument can be extended to any bounded subset of $\mathbb{R}^d$. The bump function on a unit interval is defined as follows:
\begin{align*}
    \psi(x):=\frac{\mathrm{exp}\left(-\frac{1}{1-x^2}\right)\mathbf{1}(|x|<1)}{\sqrt{\int_{-1}^1\mathrm{exp}\left(-\frac{2}{1-x^2}\right)dx}}.
\end{align*}
For $l\geq1$ define
\begin{align*}
    \mu_l:=&al^{-1/\gamma},\\
    s_l:=&\frac{\sum_{j<l}j^{-\beta/\gamma}}{s_\infty},&\\
    A_l:=&\left[s_l,s_{l+1}\right],\\
    \phi_l(x)=&\sqrt{2s_\infty}l^{\beta/2\gamma} \psi\left(2s_\infty l^{\beta/\gamma}\left(x-s_l\right)-1\right),
\end{align*}
where $a:=2s_\infty\kappa^2$ and $s_\infty:=\sum_{j=1}^\infty j^{-\beta/\gamma}$. By the definition, $\phi_l$ is a smooth function supported on $A_l$. Now define the marginal distribution $\nu$ and the kernel $k$ as
\begin{align*}
   \nu\sim\mathrm{Unif}[0,1],\\
    k(x,y)=&\sum_l \mu_l\phi_l(x)\phi_l(y).
\end{align*}
$k(x,y)$ is well-defined since the supports of $\phi_l$s are all distinct and $\sqrt{\mu_l}\phi_l$s are bounded. Also, by the definition of $a$, $\nu$ is indeed a probability measure. Simple calculation shows that $\langle\phi_l,\phi_{l^\prime}\rangle_{L^2(\nu)}=1$ if $l=l^\prime$ and 0 otherwise. Thus, $\phi_l$ and $\mu_l$ are the eigenfunction and the eigenvalue of $k$. Then Assumptions~\ref{assump:cap} and~\ref{assump:fluct} are verified since $\mu_l\asymp l^{-1/\gamma}$ and $\lVert\phi_l\rVert_{L^1(\nu)}\asymp l^{-\beta/2\gamma}$. Finally, we verify Assumption~\ref{assump:kernel}. The boundedness of the kernel is evident from its construction. Next, we show the sub-Gaussianity of $\sup_x|h(x,\omega)|$ where $h\sim\mathcal{GP}(0,k)$. We show $h$ is almost surely bounded. That is, $\mathbb{P}(\sup_x h(x,\omega)<\infty)=1$. We use the following fact.
\begin{lemma}\label{lem:rank1_gp}
    For a function $c:\mathcal{X}\rightarrow\mathbb{R}$, let $h(\cdot,\omega)\sim\mathcal{GP}(0,c(x)c(y))$. If $\sup_{x,y} |c(x)-c(y)|,\sup_xc^2(x)<\infty$ then
    \begin{align*}
        \mathbb{P}\left(\sup_x h(x,\omega)>\sqrt{2}K\sup_{x,y}|c(x)-c(y)|+\sqrt{2\sup_xc^2(x)\log\frac{1}{t}}\right)\leq t.
    \end{align*}
\end{lemma}
\begin{proof}
We use the following proposition.
\begin{proposition}\label{prop:gaussian_process_mean}[Theorem 1.3.3 in \citet{adler2007gaussian}] There exists a universal constant $K$ such that
    \begin{equation*}
        \mathbb{E}_\omega \left[\sup_{x\in\mathcal{X}}|h(x,\omega)|\right]\leq K\int_0^{\mathrm{diam}(\mathcal{X})/2}\sqrt{\log \mathcal{N}(\mathcal{X},d,s)}ds
    \end{equation*}
    where $\mathcal{N}(\mathcal{X},d,s)$ denotes the smallest number of $d$-balls of radius $s$ required to cover $\mathcal{X}$; the metric $d$ is defined as $d(x_1,x_2):=\sqrt{\mathbb{E}_\omega \left[(h(w,x_1)-h(w,x_2))^2\right]}=\left\lVert k(x_1,\cdot)-k(x_2,\cdot)\right\rVert_{\mathcal{H}_k}$, and $\mathrm{diam}(\mathcal{X}):=\sup_{x_1,x_2\in\mathcal{X}}d(x_1,x_2)$.
\end{proposition}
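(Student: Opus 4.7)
The plan is to prove Proposition \ref{prop:gaussian_process_mean} via Dudley's classical chaining argument. Centering of $h$ gives the symmetry $-h \stackrel{d}{=} h$, so it suffices to bound $\mathbb{E}[\sup_x (h(x,\omega) - h(x_0,\omega))]$ for an arbitrary reference point $x_0\in\mathcal{X}$; the two-sided quantity $\mathbb{E}[\sup_x |h(x,\omega)|]$ then follows at the cost of an absolute constant, after absorbing the $\mathbb{E}|h(x_0,\omega)| \leq \sqrt{k(x_0,x_0)}$ contribution into $K$. Sample-path separability of $h$—which follows from continuity of $k$ under Assumption \ref{assump:kernel}—permits us to replace the supremum over the uncountable $\mathcal{X}$ by a supremum over a countable dense subset, so that all suprema below are measurable.

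The core is a multi-scale chaining. Set $D = \mathrm{diam}(\mathcal{X})$ and $s_k = D \cdot 2^{-k}$ for $k \geq 0$. For each $k$, fix a minimal $s_k$-net $\mathcal{X}_k \subset \mathcal{X}$ under the canonical pseudo-metric $d$, so that $|\mathcal{X}_k| = \mathcal{N}(\mathcal{X}, d, s_k)$, and let $\pi_k:\mathcal{X}\to\mathcal{X}_k$ be a nearest-point map with $d(x, \pi_k(x)) \leq s_k$. Telescoping along the chain gives
\[
h(x,\omega) - h(\pi_0(x),\omega) = \sum_{k=1}^{\infty} \bigl[h(\pi_k(x),\omega) - h(\pi_{k-1}(x),\omega)\bigr].
\]
For each $k \geq 1$, the pair $(\pi_k(x), \pi_{k-1}(x))$ takes at most $|\mathcal{X}_k| \cdot |\mathcal{X}_{k-1}| \leq \mathcal{N}(\mathcal{X}, d, s_k)^2$ distinct values as $x$ varies over $\mathcal{X}$, while the triangle inequality bounds the variance of each increment by $(s_k + s_{k-1})^2 = 9 s_k^2$. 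Invoking the Gaussian maximal inequality $\mathbb{E}[\max_{i\leq N} Z_i] \leq \sigma\sqrt{2\log N}$ for centered Gaussians $\{Z_i\}$ with variance at most $\sigma^2$ yields
\[
\mathbb{E}\Bigl[\sup_{x\in\mathcal{X}} \bigl|h(\pi_k(x),\omega) - h(\pi_{k-1}(x),\omega)\bigr|\Bigr] \lesssim s_k \sqrt{\log \mathcal{N}(\mathcal{X}, d, s_k)}.
\]

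Summing over $k\geq 1$ and discretizing the Dudley integral via monotonicity of $s \mapsto \mathcal{N}(\mathcal{X},d,s)$ and the geometric identity $s_{k-1} - s_k = s_k$, one obtains
\[
\sum_{k=1}^{\infty} s_k \sqrt{\log \mathcal{N}(\mathcal{X}, d, s_k)} \leq 2 \int_0^{D/2} \sqrt{\log \mathcal{N}(\mathcal{X}, d, s)}\,ds,
\]
which delivers the claimed bound after absorbing the remaining absolute constants into $K$. The main obstacle is the technical one of legitimizing both the telescoping identity uniformly in $x$ and the interchange of supremum with expectation on an uncountable index set. This is handled by first establishing the entire estimate on any finite subset $F \subset \mathcal{X}$ (for which every step above is elementary), and then passing to the limit along an exhausting sequence of finite nets; the same finite-chaining estimate, combined with the Borel–TIS inequality, also yields almost-sure uniform continuity of the sample paths of $h$, which in turn justifies the pointwise telescoping used to assemble the scale-by-scale bounds into the Dudley integral.
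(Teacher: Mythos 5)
The paper does not actually prove this proposition; it is quoted verbatim as a cited result (Theorem 1.3.3 in Adler and Taylor's book), so there is no ``paper's own proof'' to compare against. Your proof attempt is the textbook chaining derivation of Dudley's entropy bound, and the chaining core — dyadic scales $s_k=D2^{-k}$, minimal $s_k$-nets, telescoping along $\pi_k$, the variance bound $9s_k^2$ for each increment, the Gaussian maximal inequality over $\mathcal{N}(s_k)^2$ variables, and the discretization $\sum_k s_k\sqrt{\log\mathcal{N}(s_k)}\le 2\int_0^{D/2}\sqrt{\log\mathcal{N}(s)}\,ds$ — is correct and gives exactly
\[
\mathbb{E}\Bigl[\sup_{x\in\mathcal{X}}\bigl(h(x,\omega)-h(x_0,\omega)\bigr)\Bigr]\le K'\int_0^{D/2}\sqrt{\log\mathcal{N}(\mathcal{X},d,s)}\,ds.
\]
That is what Adler and Taylor's Theorem 1.3.3 actually asserts (a bound on the one-sided $\mathbb{E}\sup h$, equivalently on $\mathbb{E}\sup(h-h(x_0))$ by centering).

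The genuine gap is the step where you claim to pass to $\mathbb{E}\sup_x|h(x,\omega)|$ by ``absorbing the $\mathbb{E}|h(x_0,\omega)|\le\sqrt{k(x_0,x_0)}$ contribution into $K$.'' That is not possible: $K$ must be universal, independent of the process, while $\sqrt{k(x_0,x_0)}$ is a process-dependent quantity that the entropy integral does not control. A concrete counterexample: take $\mathcal{X}=\{x_0,x_1\}$ with $k(x_0,x_0)=k(x_1,x_1)=1$ and $k(x_0,x_1)=1-\delta^2/2$, so the canonical metric gives $d(x_0,x_1)=\delta$. Then the Dudley integral is $\tfrac{\delta}{2}\sqrt{\log 2}\to 0$ as $\delta\to 0$, while $\mathbb{E}|h(x_0)|=\sqrt{2/\pi}$ stays bounded away from $0$. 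So no universal $K$ can make the stated two-sided inequality hold. In other words, the issue is not merely a looseness in your argument — the proposition as transcribed in the paper (with the absolute value) is slightly stronger than Adler--Taylor's theorem and is false as a general statement. The resolution you should have flagged: either prove the one-sided bound (what Adler--Taylor give, and what the chaining delivers cleanly), or add a term $+\,\sup_x\sqrt{k(x,x)}$ to the right-hand side, and then observe that in the paper's downstream application (Lemma~\ref{lem:general_bound}) the specific Dudley integral used is already of order $\kappa\sqrt{d}\sqrt{\log(\cdots)}\gtrsim\kappa$, so the extra $\kappa$-term is harmless there.

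One smaller nit: it is cleaner to quote the Gaussian maximal inequality in a two-sided form, $\mathbb{E}\max_{i\le N}|Z_i|\le\sigma\sqrt{2\log(2N)}$, since you bound $\sup_x|h(\pi_k(x))-h(\pi_{k-1}(x))|$; the one-sided $\mathbb{E}\max_i Z_i\le\sigma\sqrt{2\log N}$ you cite is also fine because of the symmetry of the increments, but the passage is left implicit. The separability and finite-subset exhaustion remarks at the end are correct and appropriate.
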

From Proposition~\ref{prop:gaussian_process_mean}, we have
    \begin{align*}
        \mathbb{E}_\omega\left[\sup_x|h(x,\omega)|\right]\leq& K\int_0^{\mathrm{diam}(\mathcal{X})/2}\sqrt{\log\mathcal{N}(\mathcal{X},d,s)}ds\\
        =&K\int_0^{\mathrm{diam}(\mathcal{X})/2}\sqrt{\log\mathcal{N}(c(\mathcal{X}),|\cdot|,s)}ds\\
        \leq&K\int_0^{\sup_{x,y}|c(x)-c(y)|/2}\sqrt{\log\left(1+\frac{\sup_{x,y}|c(x)-c(y)|}{s}\right)}ds\\
        =&K\int_2^\infty\frac{\sup_{x,y}|c(x)-c(y)|}{u^2}\sqrt{\log(1+u)}du\quad\textrm{ ($u=\sup_{x,y}|c(x)-c(y)|/s$)}\\
        \leq&K\int_2^\infty\frac{\sup_{x,y}|c(x)-c(y)|}{u^2}\sqrt{u}du\\
        =&\sqrt{2}K\sup_{x,y}|c(x)-c(y)|,
    \end{align*}
    where the second inequality holds since $d(x,y)=\sqrt{k(x,x)-2k(x,y)+k(y,y)}=|c(x)-c(y)|$. Applying Proposition~\ref{prop:borel_tis}, we obtain the desired.
\end{proof}
Note that $h(x,\omega)\sim\mathcal{GP}(0,\mu_l\phi_l(x)\phi_l(y))$ on $A_l$ and
\begin{align*}
    |\mu_l^{1/2}\phi_l(x)-\mu_l^{1/2}\phi_l(y)|\leq&\sqrt{2s_\infty a}l^{-(1-\beta)/2\gamma},\\
    \mu_l\phi_l(x)\phi_l(y)\leq&2s_\infty al^{-(1-\beta)/\gamma}.
\end{align*}
By Lemma~\ref{lem:rank1_gp}, we have
\begin{align*}
    &\mathbb{P}\left(\sup_x h(x,\omega)\geq \sqrt{4s_\infty a}K+\sqrt{4s_\infty a\left(\log \frac{1}{p}+\frac{\gamma}{e(1-\beta)}\right)}\right)\\
    =&\mathbb{P}\left(\sup_l\sup_{x\in A_l} h(x,\omega)\geq \sqrt{4s_\infty a}K+\sqrt{4s_\infty a\left(\log \frac{1}{p}+\frac{\gamma}{e(1-\beta)}\right)}\right)\\
    \leq&\sum_l\mathbb{P}\left(\sup_{x\in A_l} h(x,\omega)\geq \sqrt{4s_\infty a}K+\sqrt{4s_\infty a\left(\log \frac{1}{p}+\frac{\gamma}{e(1-\beta)}\right)}\right)\\
    \leq&\sum_l\mathbb{P}\left(\sup_{x\in A_l} h(x,\omega)\geq \sqrt{4s_\infty a}Kl^{-(1-\beta)/2\gamma}+ \sqrt{4s_\infty al^{-(1-\beta)/\gamma}\left(\log \frac{1}{p}+\frac{\gamma}{e(1-\beta)}\right)}\right)\\
    \leq&\sum_l\frac{p}{l^2},
\end{align*}
where the second inequality holds since $l^{-\frac{1-\beta}{\gamma}}\log l\leq \frac{\gamma}{e(1-\beta)}$. Thus,
\begin{align*}
    \mathbb{P}(\sup_x h(x,\omega)<\infty)\geq&1-\mathbb{P}\left(\sup_x h(x,\omega)\geq \sqrt{4s_\infty a}K+\sqrt{4s_\infty a\left(\log \frac{1}{p}+\frac{\gamma}{e(1-\beta)}\right)}\right)\\
    \geq&1-\sum_l\frac{p}{l^2},
\end{align*}
for all $p>0$. Taking $p\rightarrow0^+$, we obtain $\mathbb{P}(\sup_x h(x,\omega)<\infty)=1$.

\subsection{Proof of Theorem~\ref{thm:dprp_cls_smooth}}\label{sec:proof_dprp_cls_smooth}
As in the proof of Theorem~\ref{thm:dprp_reg_rate}, we prove the theorem under Assumption~\ref{assump:cap_real}.

We assume $\lambda\geq\frac{4c_2\Delta_3^2}{n\epsilon}\geq \frac{c_2\Delta_3^2}{n}\left(e^{\epsilon/4}-1\right)^{-1}$. Throughout the proof, we show that the excess risk bound in Theorem~\ref{thm:dprp_cls_smooth} can still be attained under this choice of $\lambda$.

We combine the approaches of \citet{li2019towards} and \citet{liu2025improved}. However, the underlying theoretical settings differ. For example, we assume smoothness of the loss function, which enables more efficient dimension reduction. For instance, the $\sqrt{\lambda}$ dependence in Theorem 15 of \citet{Li2021unified} is replaced by $\lambda$ in our setting.

For $\lambda>0$, let $\mathcal{G}_\lambda$ be a map that takes a self-adjoint operator with eigenvalues $\{\xi_l\}_{l=1}^\infty$ to a similar operator with eigenvalues $\{\mathbf{1}(\xi_l\geq\lambda)/\xi_l\}_{l=1}^\infty$.

Denote the non-private random projection estimator as
\begin{eqnarray*}
    \hat{\beta}_{M,\lambda}:=\argmin_{f\in\mathcal{H}_M}\frac{1}{n}\sum_{i=1}^nl(y_i,f(x_i))+\frac{\lambda}{2}\left\lVert f\right\rVert_{\mathcal{H}_M}^2.
\end{eqnarray*}
Denote $w:=\mathcal{G}_\lambda(C_M)S_M^\top f_{\mathcal{H}_k}$.

We decompose the excess risk into three terms: \begin{align}
        \mathcal{E}(S_M\widetilde{\beta})-\mathcal{E}(f_{\mathcal{H}_k})=&\left[\mathcal{E}(S_M\widetilde{\beta})-\mathcal{E}(S_M\hat{\beta}_{M,\lambda})\right]+\left[\mathcal{E}(S_M\hat{\beta}_{M,\lambda})-\mathcal{E}(S_M w)\right]\notag\\
        &+\left[\mathcal{E}(S_M w)-\mathcal{E}(f_{\mathcal{H}_k})\right].
\label{eq:err_smooth}
\end{align}
Each term can be bounded by the following lemma:
\begin{lemma}\label{lem:last_term} If $\omega_{1:M}\in E_{half}\cap E_{full}$, $M\geq \max\{(1024\kappa^2 +4C^2+2C)N(\lambda)\lambda^{-(2r-1)}\log \frac{2}{t},1\}$, and  $\lambda\leq 1$ then the following bound holds:
\begin{equation}\label{eq:err_smooth_first}
    \mathcal{E}(S_M w)-\mathcal{E}(f_{\mathcal{H}_k})\leq 2^{2r}c_2\lambda^{2r}R^2.
\end{equation}
\end{lemma}
\begin{lemma}\label{lem:err_smooth_second_third2}
Denote
\begin{align}
    r_{non-priv}:=&\frac{4c_1H\log\frac{2}{t}}{n\lambda}+\frac{2c_1H}{\lambda}\sqrt{\frac{2\log\frac{2}{t}}{n}}+\sqrt{2^{2r+1}\left(\frac{c_2}{2}\lambda^{2r-1}+5^{2r}\kappa^{4r-2}\right)\lambda }R\label{eq:smooth_R_non_priv},\\
    r_{priv}:=&r_{non-priv}+\frac{\Delta_3\left(\sqrt{M}+\sqrt{8\log\frac{2}{t}}\right)}{(n\epsilon)\lambda}\label{eq:smooth_r_priv},\\
    E_{regul}:=&\Bigg\{(X_{1:n},Y_{1:n}):\left\lVert \beta_{M,\lambda}-\hat{\beta}_{M,\lambda}\right\rVert_{2}\leq\frac{4c_1H\log\frac{2}{t}}{n\lambda}+\frac{2c_1H}{\lambda}\sqrt{\frac{2\log\frac{2}{t}}{n}}\Bigg\}\notag,\\
    E_b:=&\left\{\left\lVert b\right\rVert_2\leq \Delta_3\left(\frac{\sqrt{M}}{\epsilon}+\frac{\sqrt{8\log\frac{2}{t}}}{\epsilon}\right)\right\}\notag,\\
    E_{priv}:=&\Bigg\{(X_{1:n},Y_{1:n}):\sup_{\left\lVert f\right\rVert_{\mathcal{H}_M}\leq r_{priv}}(\mathcal{E}-\mathcal{E}_n)(f)\leq \frac{c_1Hr_{priv}}{\sqrt{n}}+\sqrt{\frac{2\log\frac{4}{t}}{n}}\Bigg\}\notag.
\end{align}
If $\omega_1,\ldots,\omega_M\in E_{half}\cap E_{full}\cap E_{cov}$, $b\in E_b$, $(X_{1:n},Y_{1:n})\in E_{regul}\cap E_{priv}$, $\lambda\leq \min\{1,\lVert L\rVert\}$ and $M\geq \max\{(1024\kappa^2 +4C^2+2C)N(\lambda)\lambda^{-(2r-1)}\log \frac{2}{t},1\}$ then the following bound holds:
\begin{equation}\label{eq:err_smooth_second_third2_final}
    \mathcal{E}(S_M\hat{\beta}_{M,\lambda})-\mathcal{E}(S_M w)\leq 2^25^{2r}\lambda\kappa^{4r-2}R^2+\frac{2c_1 Hr_{priv}}{\sqrt{n}}+2\sqrt{\frac{2\log\frac{4}{t}}{n}}.
\end{equation}
\end{lemma}
\begin{lemma}\label{lem:err_smooth_second_third1}
    If $\omega_1,\ldots,\omega_M\in E_{half}\cap E_{full}\cap E_{cov}$, $b\in E_b$, and $(X_{1:n},Y_{1:n})\in E_{regul}\cap E_{priv}$ then the following bound holds: \begin{equation}\label{eq:err_smooth_second_third1_final}
    \mathcal{E}(S_M\widetilde{\beta})-\mathcal{E}(S_M\hat{\beta}_{M,\lambda})\leq \frac{\lambda}{2}r_{non-priv}^2+\frac{2\Delta_3^2\left(M+8\log\frac{2}{t}\right)}{(n\epsilon)^2\lambda}+\frac{2c_1Hr_{priv}}{\sqrt{n}}+2\sqrt{\frac{2\log\frac{4}{t}}{n}}.
\end{equation}
\end{lemma}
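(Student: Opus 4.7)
\textbf{Proof proposal for Lemma~\ref{lem:err_smooth_second_third1}.}

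The plan is a standard optimization-plus-concentration decomposition that exploits the $\lambda$-strong convexity of both the private and non-private regularized objectives. Write
\[
F_{non}(\beta):=\tfrac{1}{n}\sum_{i}l(y_i,\beta^\top h_M(x_i))+\tfrac{\lambda}{2}\|\beta\|_2^2,\qquad F_{priv}(\beta):=F_{non}(\beta)+\tfrac{b^\top\beta}{n},
\]
noting that under the running assumption $\lambda\ge 4c_2\Delta_3^2/(n\epsilon)$ we have $\lambda_0=\lambda$ in Algorithm~\ref{alg:dprp_cls}. I would split
\[
\mathcal{E}(S_M\widetilde{\beta})-\mathcal{E}(S_M\hat{\beta}_{M,\lambda})=\underbrace{\bigl[\mathcal{E}(S_M\widetilde{\beta})-\mathcal{E}_n(S_M\widetilde{\beta})\bigr]}_{\text{(I)}}+\underbrace{\bigl[\mathcal{E}_n(S_M\widetilde{\beta})-\mathcal{E}_n(S_M\hat{\beta}_{M,\lambda})\bigr]}_{\text{(II)}}+\underbrace{\bigl[\mathcal{E}_n(S_M\hat{\beta}_{M,\lambda})-\mathcal{E}(S_M\hat{\beta}_{M,\lambda})\bigr]}_{\text{(III)}},
\]
and bound each piece separately.

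For (II), the minimality of $\widetilde{\beta}$ for $F_{priv}$ gives $F_{non}(\widetilde{\beta})-F_{non}(\hat{\beta}_{M,\lambda})\le \tfrac{b^\top(\hat{\beta}_{M,\lambda}-\widetilde{\beta})}{n}$, so
\[
\mathcal{E}_n(S_M\widetilde{\beta})-\mathcal{E}_n(S_M\hat{\beta}_{M,\lambda})\le \tfrac{\lambda}{2}\|\hat{\beta}_{M,\lambda}\|_2^2-\tfrac{\lambda}{2}\|\widetilde{\beta}\|_2^2+\tfrac{b^\top(\hat{\beta}_{M,\lambda}-\widetilde{\beta})}{n}\le \tfrac{\lambda}{2}\|\hat{\beta}_{M,\lambda}\|_2^2+\tfrac{b^\top(\hat{\beta}_{M,\lambda}-\widetilde{\beta})}{n}.
\]
On $E_{regul}$, the triangle inequality with $\|\beta_{M,\lambda}\|_2$ (whose bound is baked into the last term of $r_{non-priv}$) yields $\|\hat{\beta}_{M,\lambda}\|_2\le r_{non-priv}$, which controls the first summand by $\tfrac{\lambda}{2}r_{non-priv}^2$. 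For the residual inner-product term I would combine the two strong-convexity inequalities $F_{priv}(\hat{\beta}_{M,\lambda})\ge F_{priv}(\widetilde{\beta})+\tfrac{\lambda}{2}\|\hat{\beta}_{M,\lambda}-\widetilde{\beta}\|_2^2$ and $F_{non}(\widetilde{\beta})\ge F_{non}(\hat{\beta}_{M,\lambda})+\tfrac{\lambda}{2}\|\hat{\beta}_{M,\lambda}-\widetilde{\beta}\|_2^2$; subtracting and applying Cauchy--Schwarz gives $\|\widetilde{\beta}-\hat{\beta}_{M,\lambda}\|_2\le \|b\|_2/(n\lambda)$, so $\tfrac{b^\top(\hat{\beta}_{M,\lambda}-\widetilde{\beta})}{n}\le \tfrac{\|b\|_2^2}{n^2\lambda}$. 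On $E_b$ we have $\|b\|_2^2\le \Delta_3^2(\sqrt{M}+\sqrt{8\log(2/t)})^2/\epsilon^2\le 2\Delta_3^2(M+8\log(2/t))/\epsilon^2$, producing the second term of the claimed bound.

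For (I) and (III), I would use the event $E_{priv}$, which uniformly controls $(\mathcal{E}-\mathcal{E}_n)$ over the ball $\{\|f\|_{\mathcal{H}_M}\le r_{priv}\}$. Both $\hat{\beta}_{M,\lambda}$ and $\widetilde{\beta}$ lie in this ball: $\|\hat{\beta}_{M,\lambda}\|_2\le r_{non-priv}\le r_{priv}$ was already established, and $\|\widetilde{\beta}\|_2\le \|\hat{\beta}_{M,\lambda}\|_2+\|b\|_2/(n\lambda)\le r_{priv}$ by definition of $r_{priv}$ in \eqref{eq:smooth_r_priv}. Applying the uniform-deviation bound once at $\widetilde{\beta}$ and once at $\hat{\beta}_{M,\lambda}$ (the latter with the sign of $\mathcal{E}-\mathcal{E}_n$ flipped, which absorbs a factor of 2 in the failure probability and accounts for the $\log(4/t)$ in the statement) produces the remaining terms $\tfrac{2c_1 Hr_{priv}}{\sqrt{n}}+2\sqrt{\tfrac{2\log(4/t)}{n}}$. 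Summing (I)+(II)+(III) yields \eqref{eq:err_smooth_second_third1_final}.

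The only non-routine step is the control of $\|\widetilde{\beta}-\hat{\beta}_{M,\lambda}\|_2$, which requires carefully pairing the two strong-convexity inequalities so that the $\lambda\|\beta\|_2^2$ terms cancel, leaving only the linear perturbation $b^\top\beta/n$; once this dimension-free distance bound is in hand, the rest of the argument is a mechanical concatenation of optimality conditions and the high-probability events in the hypothesis.
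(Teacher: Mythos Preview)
Your proposal is correct and follows essentially the same route as the paper: the basic inequality $\mathcal{E}(S_M\widetilde{\beta})-\mathcal{E}(S_M\hat{\beta}_{M,\lambda})\le\tfrac{\lambda}{2}\|\hat\beta_{M,\lambda}\|_2^2+\tfrac{b^\top(\hat\beta_{M,\lambda}-\widetilde\beta)}{n}+(\mathcal{E}-\mathcal{E}_n)(S_M\widetilde{\beta})-(\mathcal{E}-\mathcal{E}_n)(S_M\hat{\beta}_{M,\lambda})$ from optimality of $\widetilde\beta$, the bound $\|\hat\beta_{M,\lambda}\|_2\le r_{non\text{-}priv}$ via $E_{regul}$ and the $\|\beta_{M,\lambda}\|_2$ bound, the stability estimate $\|\widetilde\beta-\hat\beta_{M,\lambda}\|_2\le\|b\|_2/(n\lambda)$ combined with $E_b$, and finally two applications of the uniform deviation event $E_{priv}$ after checking $\|\widetilde\beta\|_2\le r_{priv}$. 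The only cosmetic difference is that the paper invokes a cited lemma (Lemma~7 of \cite{chaudhuri2011differentially}) for the stability bound, whereas you obtain the same $\|b\|_2/(n\lambda)$ estimate directly by pairing the two strong-convexity inequalities; the resulting constants and structure are identical.
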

Combining \eqref{eq:err_smooth}, \eqref{eq:err_smooth_first}, \eqref{eq:err_smooth_second_third2_final}, and \eqref{eq:err_smooth_second_third1_final}, yields the excess risk bound:
\begin{align*}
    &\mathcal{E}(S_M\widetilde{\beta})-\mathcal{E}(f_{\mathcal{H}_k})\\
     \leq &2^{2r}c_2\lambda^{2r}R^2+2^25^{2r}\kappa^{4r-2}\lambda R^2+\frac{\lambda}{2}r_{non-priv}^2+\frac{2\Delta_3(M+8\log\frac{2}{t})}{(n\epsilon)^2\lambda}+\frac{4c_1Hr_{priv}}{\sqrt{n}}+4\sqrt{\frac{2\log\frac{4}{t}}{n}},
\end{align*}
if $\omega_1,\ldots,\omega_M\in E_{half}\cap E_{full}\cap E_{cov}$, $b\in E_b$, $(X_{1:n},Y_{1:n})\in E_{regul}\cap E_{priv}$, $\lambda\leq \min\{1,\lVert L\rVert\}$ and $M\geq \max\{(1024\kappa^2 +4C^2+2C)N(\lambda)\lambda^{-(2r-1)}\log \frac{2}{t},1\}$.

Note that if
\begin{align}
    \max\left\{\frac{4c_1H}{\sqrt{2^{2r-1}\left(\frac{c_2}{2}+5^{2r}\kappa^{4r-2}\right)}R}\left(\frac{1}{n}+\sqrt{\frac{1}{2n}}\right),\frac{\Delta_3\left(\sqrt{M/\log\frac{2}{t}}+\sqrt{8}\right)}{\sqrt{2^{5-2r}\left(\frac{c_2}{2}+5^{2r}\kappa^{4r-2}\right)}R}\frac{1}{n\epsilon}\right\}\leq\lambda,\label{eq:thecondition}
\end{align}
then $r_{priv}\leq 3\sqrt{2^{2r+1}\left(\frac{c_2}{2}+5^{2r}\kappa^{4r-2}\right)}R\log\frac{2}{t}$. Denote $\widetilde{R}:=\sqrt{2^{2r+1}\left(\frac{c_2}{2}+5^{2r}\kappa^{4r-2}\right)}R$. Thus, choosing $\lambda=O(n^{-\frac{1}{2}}\vee (n\epsilon)^{-\frac{2}{2r+\gamma+1}})$ and $M=O(N(\lambda)\lambda^{-(2r-1)}\log\frac{1}{t})=O(\lambda^{-\gamma-(2r-1)}\log\frac{1}{t})$ ensures that condition~\eqref{eq:thecondition} is satisfied. Consequently, we have $r_{priv}=O\left(\widetilde{R}\log\frac{1}{t}\right)$, and the resulting error is bounded by
\begin{eqnarray}\label{eq:smooth_general_error}
    O\left(\lambda\log^2\frac{1}{t}+\frac{M}{(n\epsilon)^2\lambda}+\frac{\sqrt{\log\frac{1}{t}}}{\sqrt{n}}\right),
\end{eqnarray}
which simplifies to $O((n^{-\frac{1}{2}}+ (n\epsilon)^{-\frac{2}{2r+\gamma+1}})\log^2\frac{1}{t})$ if $\omega_1,\ldots,\omega_M\in E_{half}\cap E_{full}\cap E_{cov}$, $b\in E_b$, and $(X_{1:n},Y_{1:n})\in E_{regul}\cap E_{priv}$. 

Finally, we establish that the event that the bound holds occurs with high probability by applying the lemmas stated below:
\begin{lemma}\label{lem:empi_pop_difference}
For $(\omega_1,\ldots,\omega_M)\in E_g$ and $t\in (0,1]$ the following inequality holds:
    \begin{eqnarray}\label{eq:regul}
    \mathbb{P}_{(X_{1:n},Y_{1:n})|\omega_{1:M}}\left(E_{regul}\right)\geq 1-t.
\end{eqnarray}
\end{lemma}
\begin{lemma}\label{lem:rademacher_rkhs}
For $\omega_1,\ldots,\omega_M\in E_g$ and $t\in (0,1]$ the following inequality holds:
\begin{eqnarray*}
    \mathbb{P}_{(X_{1:n},Y_{1:n})|\omega_{1:M}}\left(\sup_{\left\lVert f\right\rVert_{\mathcal{H}_M}\leq r}|(\mathcal{E}-\mathcal{E}_n)(f)|\leq \frac{rc_1H}{\sqrt{n}}+\sqrt{\frac{2\log\frac{4}{t}}{n}}\right)\geq 1-t.
\end{eqnarray*}
\end{lemma}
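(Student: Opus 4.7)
The plan is to prove this by the standard symmetrization/contraction chain from empirical process theory, tailored to the random-projection ball in $\mathcal{H}_M$. The key observation is that conditional on $\omega_{1:M}\in E_g$, the feature map $h_M$ is uniformly bounded: since $|h(x,\omega_j)|\le H$ for every $j$ and every $x\in\mathcal{X}$, we have
\begin{equation*}
\lVert h_M(x)\rVert_2^2 \;=\; \frac{1}{M}\sum_{j=1}^M h(x,\omega_j)^2 \;\le\; H^2,
\end{equation*}
so the problem reduces to a bounded-feature linear learning problem, which is the regime where the sharp Rademacher bound $rH/\sqrt{n}$ applies.

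\textbf{Step 1 (Concentration around the mean).} Since $l:\mathbb{R}^2\to[0,1]$ by Assumption~\ref{assump:lip}, the map $(X_{1:n},Y_{1:n})\mapsto \sup_{\lVert f\rVert_{\mathcal{H}_M}\le r}|(\mathcal{E}-\mathcal{E}_n)(f)|$ has bounded differences of order $1/n$. Apply McDiarmid's inequality twice (once to $\sup(\mathcal{E}-\mathcal{E}_n)$ and once to $\sup(\mathcal{E}_n-\mathcal{E})$) and union-bound with level $t/2$ each to obtain, with probability at least $1-t$,
\begin{equation*}
\sup_{\lVert f\rVert_{\mathcal{H}_M}\le r}|(\mathcal{E}-\mathcal{E}_n)(f)| \;\le\; \mathbb{E}\!\!\left[\sup_{\lVert f\rVert_{\mathcal{H}_M}\le r}|(\mathcal{E}-\mathcal{E}_n)(f)|\right] + \sqrt{\tfrac{2\log(4/t)}{n}}.
\end{equation*}

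\textbf{Step 2 (Symmetrization and contraction).} By the standard symmetrization lemma with i.i.d.\ Rademacher variables $\sigma_1,\ldots,\sigma_n$,
\begin{equation*}
\mathbb{E}\!\!\left[\sup_{\lVert f\rVert_{\mathcal{H}_M}\le r}|(\mathcal{E}-\mathcal{E}_n)(f)|\right] \;\lesssim\; \mathbb{E}\!\!\left[\sup_{\lVert f\rVert_{\mathcal{H}_M}\le r}\frac{1}{n}\Big|\sum_{i=1}^n \sigma_i\, l(y_i,f(x_i))\Big|\right].
\end{equation*}
Since $l(y,\cdot)$ is $c_1$-Lipschitz by Assumption~\ref{assump:lip}, the Ledoux--Talagrand contraction principle removes the loss:
\begin{equation*}
\le\; c_1\,\mathbb{E}\!\!\left[\sup_{\lVert f\rVert_{\mathcal{H}_M}\le r}\frac{1}{n}\Big|\sum_{i=1}^n \sigma_i f(x_i)\Big|\right].
\end{equation*}

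\textbf{Step 3 (Linear Rademacher complexity on a bounded Euclidean ball).} Writing $f=\beta^\top h_M$ with $\lVert\beta\rVert_2\le r$, Cauchy--Schwarz gives
\begin{equation*}
\sup_{\lVert\beta\rVert_2\le r}\Big|\beta^\top \tfrac{1}{n}\sum_{i=1}^n \sigma_i h_M(x_i)\Big| \;=\; \frac{r}{n}\Big\lVert\sum_{i=1}^n \sigma_i h_M(x_i)\Big\rVert_2.
\end{equation*}
Applying Jensen's inequality and the orthogonality of Rademacher variables,
\begin{equation*}
\mathbb{E}_\sigma\Big\lVert\sum_{i=1}^n \sigma_i h_M(x_i)\Big\rVert_2 \;\le\; \sqrt{\sum_{i=1}^n \lVert h_M(x_i)\rVert_2^2} \;\le\; H\sqrt{n},
\end{equation*}
using the boundedness of $\lVert h_M(x_i)\rVert_2$ under $E_g$ as noted above. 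Thus the Rademacher average is bounded by $rH/\sqrt{n}$, yielding $\mathbb{E}[\sup|(\mathcal{E}-\mathcal{E}_n)|]\le c_1 rH/\sqrt{n}$.

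\textbf{Combining and main obstacle.} Plugging the bound from Step 3 into Step 1 gives the claim. The only nontrivial point is bookkeeping of constants: the statement has a factor $c_1$ (not $2c_1$) in front of $rH/\sqrt{n}$, which requires using the sharper variant of Ledoux--Talagrand contraction (or absorbing the factor $2$ into the McDiarmid slack by a slightly tighter union bound, e.g., splitting the confidence as $t/4,t/4$ and invoking Bousquet/Talagrand-type bounded-difference concentration). The conceptually delicate part is verifying that the boundedness $\lVert h_M(x)\rVert_2\le H$ passes cleanly through symmetrization and contraction given that the randomness in $h_M$ has already been fixed by conditioning on $\omega_{1:M}\in E_g$; everything else is a routine application of standard Rademacher tools.
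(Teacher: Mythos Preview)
Your approach is correct and essentially identical to the paper's. The paper packages your Steps~1--2 into a single citation of Theorem~8 in \citet{bartlett2002rademacher} (which is itself proved by bounded differences plus symmetrization), invokes Lipschitz contraction just as you do, and packages your Step~3 into a citation of Lemma~22 in the same reference (the standard Cauchy--Schwarz bound for the Rademacher complexity of an RKHS ball), then uses $\sup_x k_M(x,x)\le H^2$ on $E_g$ exactly as you note; the paper also ends up with a constant mismatch ($2rc_1H/\sqrt{n}$ versus the stated $rc_1H/\sqrt{n}$), so your bookkeeping concern is well placed but not something the paper resolves either.
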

\begin{lemma}\label{lem:b} The event $E_b$ occurs with probability at least $1-t$.
\end{lemma}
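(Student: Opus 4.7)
The plan is to observe that $b$ is, by its construction in Algorithm~\ref{alg:dprp_cls}, an $M$-dimensional centered isotropic Gaussian vector, specifically $b \sim \mathcal{N}(0,\sigma_b^2 I_M)$ with $\sigma_b = 2c_1\Delta_3\sqrt{2\log(4/\delta)+\epsilon}/\epsilon$. Writing $b = \sigma_b v$ for $v \sim \mathcal{N}(0,I_M)$, the norm $\lVert b\rVert_2 = \sigma_b\lVert v\rVert_2$ reduces the claim to a standard norm concentration bound for an isotropic Gaussian in $\mathbb{R}^M$.

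The main step is a direct application of Proposition~\ref{prop:gauss_vec_concen}, which yields
\begin{equation*}
    \mathbb{P}\left(\lVert v\rVert_2 \leq \sqrt{M} + \sqrt{8\log\tfrac{2}{t}}\right) \geq 1 - t.
\end{equation*}
Multiplying through by $\sigma_b$ then gives a high-probability envelope for $\lVert b\rVert_2$ of the form $\sigma_b(\sqrt{M}+\sqrt{8\log(2/t)})$, which (after absorbing the $2c_1\sqrt{2\log(4/\delta)+\epsilon}$ factor implicit in $\sigma_b$ into the constant in front of $\Delta_3/\epsilon$) matches the definition of $E_b$.

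I expect no real obstacle here: the result is essentially a restatement of the Gaussian norm tail bound already recorded as Proposition~\ref{prop:gauss_vec_concen}, with the only bookkeeping being to verify that the stated radius in $E_b$ correctly reflects the noise scale used to generate $b$. The only subtlety is making the constants line up; if the statement of $E_b$ is interpreted as containing the full standard deviation $\sigma_b$ (rather than just $\Delta_3/\epsilon$), the argument is immediate, and no further probabilistic machinery beyond the cited proposition is required.
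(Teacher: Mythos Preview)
Your approach is correct and matches the paper's intended argument: the paper does not give a separate proof of this lemma, but from its use of Proposition~\ref{prop:gauss_vec_concen} in analogous places (e.g., the proof of Lemma~\ref{lem:priv_ridge}), the intended derivation is exactly the one you outline---write $b=\sigma_b v$ with $v\sim\mathcal{N}(0,I_M)$ and apply the norm tail bound. Your observation about the constant mismatch between $\sigma_b=2c_1\Delta_3\sqrt{2\log(4/\delta)+\epsilon}/\epsilon$ and the $\Delta_3/\epsilon$ appearing in the stated $E_b$ is well-taken; the paper silently absorbs the factor $2c_1\sqrt{2\log(4/\delta)+\epsilon}$ into its $\widetilde{O}$ bookkeeping, so interpreting the radius in $E_b$ as $\sigma_b(\sqrt{M}+\sqrt{8\log(2/t)})$ is the right reading.
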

Combining Lemmas~\ref{lem:half_operator}, \ref{lem:full_operator}, \ref{lem:constants}, \ref{lem:rf_dim}, \ref{lem:priv_bound}, \ref{lem:b}, and \ref{lem:empi_pop_difference}, we establish that the event occurs with high probability:
\begin{align*}
    \mathbb{P}\left(\omega_{1:M}\in E_g\cap  E_{half}\cap E_{full}\cap E_{cov}\cap E_{rf-dim}, (X_{1:n},Y_{1:n})\in E_{regul},b\in E_b\right)\geq 1-7t.
\end{align*}
Thus, the excess risk bound holds with high probability. Setting $r=1/2$, we obtain the desired rate.
\subsubsection{Proof of Lemma~\ref{lem:last_term}}
The smooth assumption in Assumption \ref{assump:lip} and Lemma~\ref{lem:zero_grad} yields:
\begin{align*}
    \mathcal{E}(S_M w)-\mathcal{E}(f_{\mathcal{H}_k}) \leq &\mathbb{E}\left[l_{\hat{y}}(Y,f_{\mathcal{H}_k}(X))(S_M w(X)-f_{\mathcal{H}_k}(X))\right]+\frac{c_2}{2}\lVert S_M w-f_{\mathcal{H}_k}\rVert^2\\
    =&\frac{c_2}{2}\lVert S_M w-f_{\mathcal{H}_k}\rVert^2.
\end{align*}
Applying Lemmas~\ref{lem:liu_prop1} and \ref{lem:liu_lem6}, we obtain the following:
\begin{align*}
\frac{c_2}{2}\lVert S_M w-f_{\mathcal{H}_k}\rVert^2 \leq &\frac{c_2}{2}\lambda^{2r}\left\lVert L_{M,\lambda}^{-r}L^r\right\rVert^2R^2\\
 \leq &\frac{c_2}{2}\lambda^{2r}\left(1-\lambda^{-r+\frac{1}{2}}\left\lVert L_\lambda^{-\frac{1}{2}}(L_M-L)\right\rVert^{2r-1}\left\lVert L_\lambda^{-\frac{1}{2}}(L_M-L)L_\lambda^{-\frac{1}{2}}\right\rVert^{2-2r}\right)^{-2}\\
&\cdot\left(1-\left\lVert L_\lambda^{-\frac{1}{2}}(L_M-L)L_\lambda^{-\frac{1}{2}}\right\rVert\right)^{2-2r}R^2.
\end{align*}
Since $\omega_1,\ldots,\omega_M\in E_{half}$ and $M\geq \max\left\{(1024\kappa^2+4C^2+2C) N(\lambda)\lambda^{-(2r-1)}\log \frac{2}{t},1\right\}$, we can bound the $\left\lVert L_{\lambda}^{-\frac{1}{2}}(L_M-L)\right\rVert$ as below:
\begin{align*}
    \left\lVert L_{\lambda}^{-\frac{1}{2}}(L_M-L)\right\rVert \leq &\frac{2\kappa\sqrt{32N(\lambda)\log\frac{2}{t}}}{M}+\sqrt{\frac{64\kappa^2N(\lambda)\log\frac{2}{t}}{M}}\\
     \leq &\frac{2\kappa\sqrt{32N(\lambda)\log\frac{2}{t}}}{M}+\sqrt{\frac{1}{16}}\lambda^{r-\frac{1}{2}}\\
     \leq &\frac{1}{4}\lambda^{2r-1}+\frac{1}{4}\lambda^{r-\frac{1}{2}}\\
    <&\frac{1}{2}\lambda^{r-\frac{1}{2}}.
\end{align*}
Also, since $\omega_1,\ldots,\omega_M\in E_{full}$ and $M\geq (4C^2+4C)N(\lambda)\lambda^{-(2r-1)}\log\frac{2}{t}$, we can bound the term $\left\lVert L_\lambda^{-\frac{1}{2}}\left(L_M-L\right)L_\lambda^{-\frac{1}{2}}\right\rVert$ as below:
\begin{align*}
    \left\lVert L_\lambda^{-\frac{1}{2}}\left(L_M-L\right)L_\lambda^{-\frac{1}{2}}\right\rVert \leq & C\max\left(\sqrt{\frac{N(\lambda)\log\frac{1}{t}}{M}},\frac{N(\lambda)\log\frac{1}{t}}{M}\right)\\
     \leq & \frac{1}{2}\lambda^{r-\frac{1}{2}}
\end{align*}
Therefore, $\lVert L_{M,\lambda}^{-r}L_\lambda^r\rVert^2\leq 2^{2r+1}$ and
\begin{eqnarray*}
    \mathcal{E}(S_M w)-\mathcal{E}(f_{\mathcal{H}_k})\leq c_22^{2r}\lambda^{2r}R^2.
\end{eqnarray*}
\subsubsection{Proof of Lemma~\ref{lem:err_smooth_second_third2}}
By the definition of $\hat{\beta}_{M,\lambda}$, we obtain
\begin{equation}\label{eq:err_smooth_second_third2}
    \mathcal{E}(S_M \hat{\beta}_{M,\lambda})-\mathcal{E}(S_M w)\leq\frac{\lambda}{2}\left\lVert w\right\rVert_2^2+(\mathcal{E}-\mathcal{E}_n)(S_M \hat{\beta}_{M,\lambda})-(\mathcal{E}-\mathcal{E}_n)(S_M w).
\end{equation}
As in the proof of Lemma~\ref{lem:last_term}, applying Lemmas~\ref{lem:liu_prop1}, \ref{lem:liu_lem6}, and \ref{lem:constants} we obtain the bound $\lVert L_{M,\lambda}^{-r}L_\lambda^r\rVert\leq 2^{r+\frac{1}{2}}$. Thus, we bound the first term as below:
\begin{align*}
    \frac{\lambda}{2}\left\lVert w\right\rVert_2^2 \leq & \frac{\lambda}{2}\left(\lVert L_M\rVert+\lambda\right)^{2r}\lVert L_M\rVert^{-1} \lVert L_{M,\lambda}^{-r}L^r\rVert^2 R^2\\
     \leq & 2^{2r+1}\lambda\left(\lVert L_M\rVert+\lambda\right)^{2r}\lVert L_M\rVert^{-1}R^2\\
     \leq &2^{2r+2}\lambda\left(\frac{3}{2}\lVert L\rVert+\lambda\right)^{2r}\lVert L\rVert^{-1}R^2\\
     \leq & 2^{2r+2}\lambda\left(\frac{5}{2}\lVert L\rVert\right)^{2r}\lVert L\rVert^{-1}R^2\\
    =&2^25^{2r}\lambda\lVert L\rVert^{2r-1}R^2\\
     \leq &2^25^{2r}\lambda\kappa^{4r-2}R^2.
\end{align*}
Thus,
\begin{equation}\label{eq:smooth_w}
    \frac{\lambda}{2}\left\lVert w\right\rVert_2^2\leq 2^25^{2r}\lambda\kappa^{4r-2}R^2.
\end{equation}
Next, we bound $\left\lVert w\right\rVert_{2}$ and $\left\lVert \hat{\beta}_{M,\lambda}\right\rVert_{2}$.

Denote
\begin{eqnarray*}
    \beta_{M,\lambda}:=\argmin_{\beta\in\mathbb{R}^M}\mathcal{E}(S_M \beta)+\frac{\lambda}{2}\left\lVert \beta\right\rVert_2^2.
\end{eqnarray*}
By the definition of $\beta_{M,\lambda}$, we obtain
\begin{eqnarray*}
    \mathcal{E}(S_M \beta_{M,\lambda})+\frac{\lambda}{2}\left\lVert \beta_{M,\lambda}\right\rVert_{2}^2\leq \mathcal{E}(S_M w)+\frac{\lambda}{2}\left\lVert w\right\rVert_{2}^2.
\end{eqnarray*}
Then
\begin{align*}
    \frac{\lambda}{2}\left\lVert\beta_{M,\lambda}\right\rVert_2^2 \leq & \mathcal{E}(S_M w)-\mathcal{E}(S_M \beta_{M,\lambda})+\frac{\lambda}{2}\left\lVert w\right\rVert_2^2\\
     \leq & \mathcal{E}(S_M w)-\mathcal{E}(f_{\mathcal{H}_k})+\mathcal{E}(f_{\mathcal{H}_k})-\mathcal{E}(S_M \beta_{M,\lambda})+\frac{\lambda}{2}\left\lVert w\right\rVert_2^2\\
    =& \mathcal{E}(S_M w)-\mathcal{E}(f_{\mathcal{H}_k})+\mathcal{E}(f_{\mathcal{H}_k})-\mathcal{E}(PS_M \beta_{M,\lambda})+\frac{\lambda}{2}\left\lVert w\right\rVert_2^2\\
     \leq & \frac{c_2}{2}\lVert S_M w-f_{\mathcal{H}_k}\rVert^2+5^{2r}\lambda\kappa^{4r-2}R^2\\
     \leq & 2^{2r}c_2\lambda^{2r}R^2+2^25^{2r}\lambda\kappa^{4r-2}R^2,
\end{align*}
where equality in the third line follows from the fact that $(I-P)S_M \beta\equiv0$ almost surely, as stated in Lemma~\ref{lem:secondterm}.

Thus,
\begin{align*}
    \frac{\lambda}{2}\left\lVert\beta_{M,\lambda}\right\rVert_2^2\leq2^{2r+1}\left(\frac{c_2}{2}\lambda^{2r-1}+5^{2r}\kappa^{4r-2}\right)\lambda R^2\label{eq:smooth_beta},
\end{align*}
As $(X_{1:n},Y_{1:n})\in E_{regul}$, we bound $\lVert\hat{\beta}_{M,\lambda}\rVert_2$ as below:
\begin{eqnarray}\label{eq:smooth_beta_hat}
    \lVert \hat{\beta}_{M,\lambda}\rVert_{2}\leq r_{non-priv}<r_{priv}.
\end{eqnarray}
Therefore, we arrive at the following bounds of $|(\mathcal{E}-\mathcal{E}_n)(S_M w)|,|(\mathcal{E}-\mathcal{E}_n)(S_M \hat{\beta}_{M,\lambda})|$.
\begin{eqnarray}\label{eq:err_smooth_rademacheR_nonpriv}
    |(\mathcal{E}-\mathcal{E}_n)(S_M w)|,|(\mathcal{E}-\mathcal{E}_n)(S_M \hat{\beta}_{M,\lambda})|\leq \frac{c_1Hr_{priv}}{\sqrt{n}}+\sqrt{\frac{2\log\frac{4}{t}}{n}},
\end{eqnarray}
by \eqref{eq:smooth_w} and \eqref{eq:smooth_beta_hat}.

Combining \eqref{eq:err_smooth_second_third2}, \eqref{eq:smooth_beta_hat}, \eqref{eq:smooth_priv_noise}, \eqref{eq:err_smooth_rademacheR_nonpriv}, we obtain the desired.
\subsubsection{Proof of Lemma~\ref{lem:err_smooth_second_third1}}
By the definition of $\widetilde{\beta}$, we obtain
{\small\begin{equation}\label{eq:err_smooth_second_third1}
    \mathcal{E}(S_M\widetilde{\beta})-\mathcal{E}(S_M \hat{\beta}_{M,\lambda})\leq\frac{\lambda}{2}\lVert \hat{\beta}_{M,\lambda}\rVert_2^2+\frac{b^\top(\hat{\beta}_{M,\lambda}-\widetilde{\beta})}{n}-(\mathcal{E}-\mathcal{E}_n)(S_M \hat{\beta}_{M,\lambda})+(\mathcal{E}-\mathcal{E}_n)(S_M \widetilde{\beta})
\end{equation}}
The $\lVert\hat{\beta}_{M,\lambda}\rVert$ is bounded in \eqref{eq:smooth_beta_hat}. We bound $\frac{b^\top(\hat{\beta}_{M,\lambda}-\widetilde{\beta})}{n}$ using the lemma below:
\begin{lemma}[Lemma 7 in \citet{chaudhuri2011differentially}]\label{lem:chaudhuri}
    Let $G_1,G_2:\mathbb{R}^M\rightarrow\mathbb{R}$ be continuous, differentiable, $\lambda$-strongly convex functions. If $\beta_j:=\argmin_{\beta}G_j(\beta)$ for $j=1,2$, then
    \begin{equation*}
        \lVert \beta_1-\beta_2\rVert_2\leq\frac{1}{\lambda}\max_{\beta}\lVert\nabla (G_1-G_2)(\beta)\rVert.
    \end{equation*}
\end{lemma}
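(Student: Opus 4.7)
The plan is to reduce the claim to a gradient-monotonicity argument driven by the first-order optimality of the two minimizers. First, I would record the first-order conditions $\nabla G_1(\beta_1) = 0$ and $\nabla G_2(\beta_2) = 0$; these hold on all of $\mathbb{R}^M$ because each $G_j$ is differentiable and $\lambda$-strongly convex, hence coercive, so its unique minimizer is an interior stationary point. Setting $H \coloneqq G_1 - G_2$, the identity for $\beta_2$ rewrites as $\nabla G_1(\beta_2) = \nabla H(\beta_2)$, which converts the problem of comparing two optima into a problem about a single $\lambda$-strongly convex function evaluated at one point.

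Next, I would invoke the strong-monotonicity characterization of $\lambda$-strong convexity for a differentiable convex function, namely
\begin{equation*}
\langle \nabla G_1(\beta_2) - \nabla G_1(\beta_1),\, \beta_2 - \beta_1 \rangle \geq \lambda \lVert \beta_2 - \beta_1 \rVert_2^2,
\end{equation*}
which follows by adding the defining inequality $G_1(y) \geq G_1(x) + \langle \nabla G_1(x), y - x \rangle + \tfrac{\lambda}{2}\lVert y - x \rVert_2^2$ to its reflection under $x \leftrightarrow y$. Substituting $\nabla G_1(\beta_1) = 0$ and $\nabla G_1(\beta_2) = \nabla H(\beta_2)$ on the left and applying Cauchy--Schwarz produces $\lVert \nabla H(\beta_2) \rVert_2 \cdot \lVert \beta_2 - \beta_1 \rVert_2 \geq \lambda \lVert \beta_2 - \beta_1 \rVert_2^2$. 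Dividing through by $\lambda \lVert \beta_2 - \beta_1 \rVert_2$ (handling $\beta_1 = \beta_2$ as a trivial base case) yields $\lVert \beta_2 - \beta_1 \rVert_2 \leq \lambda^{-1}\lVert \nabla H(\beta_2) \rVert_2$, and relaxing the right-hand side to the supremum $\sup_\beta \lVert \nabla H(\beta) \rVert$ delivers the stated inequality.

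There is no genuine analytic obstacle: the result is a textbook consequence of strong convexity paired with first-order optimality, which is why it has become a workhorse in the differential-privacy literature. The only points requiring care are (i) deriving the strong-monotonicity inequality cleanly from the definition of strong convexity so that the argument remains self-contained, and (ii) noting that $\sup_\beta \lVert \nabla H(\beta) \rVert$ could in principle be $+\infty$, in which case the bound is vacuous. In the intended application to Lemma~\ref{lem:err_smooth_second_third1} this is not an issue, since there $H(\beta) = b^\top \beta / n$ so $\nabla H \equiv b/n$ is constant and the supremum collapses to $\lVert b \rVert_2 / n$, immediately producing the displacement bound needed to control $\lVert \hat\beta_{M,\lambda} - \widetilde\beta \rVert_2$.
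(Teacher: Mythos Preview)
Your proof is correct and self-contained. The paper does not supply its own proof of this statement; it simply cites it as Lemma~7 of \citet{chaudhuri2011differentially} and applies it directly to obtain the displacement bound $\lVert \widetilde{\beta}-\hat{\beta}_{M,\lambda}\rVert_2\leq\lVert b\rVert/(n\lambda)$. Your strong-monotonicity argument is the standard route to this inequality and matches what one finds in the original reference.
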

Applying Lemma~\ref{lem:chaudhuri} to $G_1=\frac{1}{n}\sum_{i}l(y_i,\beta^\top h_M(x_i))+\frac{\lambda_0}{2}\lVert\beta\rVert_2^2$ and $G_2=G_1+\frac{b^\top\beta}{n}$, we derive the following bound:
\begin{eqnarray}\label{eq:obj_pert}
    \left\lVert \widetilde{\beta}-\hat{\beta}_{M,\lambda}\right\rVert_2\leq\frac{\left\lVert b\right\rVert}{n\lambda}.
\end{eqnarray}
Thus,
\begin{equation}\label{eq:smooth_priv_noise}
 \frac{b^\top(\widetilde{\beta}-\hat{\beta}_{M,\lambda})}{n}\leq\frac{\left\lVert b\right\rVert_2^2}{n^2\lambda}\leq\frac{2\Delta_3^2(M+8\log\frac{2}{t})}{(n\epsilon)^2\lambda}
\end{equation}
since $b\in E_b$.

To use Lemma~\ref{lem:rademacher_rkhs}, we bound $\left\lVert S_M\widetilde{\beta}\right\rVert_{\mathcal{H}_M}$.
Combining \eqref{eq:smooth_beta_hat} and \eqref{eq:obj_pert} yields
\begin{equation}\label{eq:smooth_beta_tilt}
    \left\lVert S_M\widetilde{\beta}\right\rVert_{\mathcal{H}_M}\leq r_{priv}.
\end{equation}
Thus,
\begin{eqnarray}\label{eq:err_smooth_rademacher_priv}
|(\mathcal{E}-\mathcal{E}_n)(S_M \widetilde{\beta})|\leq \frac{c_1Hr_{priv}}{\sqrt{n}}+\sqrt{\frac{2\log\frac{4}{t}}{n}}
\end{eqnarray}
by \eqref{eq:smooth_beta_tilt}.

Combining \eqref{eq:err_smooth_second_third1}, \eqref{eq:smooth_beta_hat}, \eqref{eq:smooth_priv_noise}, \eqref{eq:err_smooth_rademacher_priv}, we establish the desired lemma.
\subsubsection{Proof of Lemma~\ref{lem:rademacher_rkhs}}
Let $\mathcal{F}$ be a family of real functions defined on $\mathcal{X}\times\mathcal{Y}$ and $P_{X,Y}$ a distribution over $\mathcal{X}\times\mathcal{Y}$. The Rademacher complexity $R_n(\mathcal{F})\in\mathbb{R}$ for $\mathcal{F}$ and $P_{X,Y}$ is defined as
\begin{equation*}
R_n(\mathcal{F}):=\mathbb{E}_{(X_{1:n},Y_{1:n})\sim P_{X,Y}^{\otimes n}}\left[\mathbb{E}_{\sigma_i}\left[\sup_{f\in\mathcal{F}}\frac{1}{n}\sum_{i=1}^n \sigma_if(X_i,Y_i)\Bigg|X_{1:n},Y_{1:n}\right]\right]
\end{equation*}
where $\sigma_i$s are i.i.d. Bernoulli random variables with $\mathbb{P}(\sigma_i=1)=\mathbb{P}(\sigma_i=-1)=\frac{1}{2}$.
\begin{proposition}[Theorem 8 in \citet{bartlett2002rademacher}]\label{prop:Rademacher}
Consider a loss function $l:\mathcal{Y}\times \mathcal{Y}\rightarrow[0,1]$. Let $\mathcal{F}$ be a family of functions mapping from $\mathcal{X}$ to $\mathcal{Y}$ and let $\{(X_i,Y_i)\}_{i=1}^n$ be i.i.d. samples drawn from $P_{X,Y}$. Then, for $n\geq 1$ and $t\in(0,1)$,
\begin{eqnarray}
    \mathbb{P}_{(X_{1:n},Y_{1:n})}\left(\sup_{f\in\mathcal{F}}(\mathcal{E}-\mathcal{E}_n)(f)\leq R_n(l\circ \mathcal{F})+\sqrt{\frac{8\log\frac{2}{t} }{n}}\right)\geq 1-t
\end{eqnarray}
where $l\circ\mathcal{F}:=\{h:(x,y)\mapsto l(y,f(x))-l(y,0)|f\in\mathcal{F}\}$.
\end{proposition}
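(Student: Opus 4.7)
The plan is the standard two-step uniform deviation argument: McDiarmid concentration of the supremum around its mean, followed by a symmetrization argument that bounds the mean by the Rademacher complexity. Write $Z_i \coloneqq (X_i, Y_i)$, and for $f \in \mathcal{F}$ let $h_f(z) \coloneqq l(y, f(x)) - l(y, 0)$, so that $l \circ \mathcal{F} = \{h_f : f \in \mathcal{F}\}$. Two features of this centered class are essential. First, since $l(y,0)$ does not depend on $f$, the shift cancels in the empirical process: $\mathbb{E}[h_f] - \tfrac{1}{n}\sum_{i=1}^n h_f(Z_i) = (\mathcal{E} - \mathcal{E}_n)(f)$, so the stated supremum equals $\Phi(Z_{1:n}) \coloneqq \sup_{h \in l \circ \mathcal{F}} \bigl(\mathbb{E}[h] - \tfrac{1}{n}\sum_{i=1}^n h(Z_i)\bigr)$. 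Second, because $l$ takes values in $[0,1]$, each $h \in l \circ \mathcal{F}$ lies in $[-1,1]$, which delivers a tight bounded-difference constant.

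\textbf{Concentration step.} Replacing one observation $Z_i$ by an independent copy changes $\tfrac{1}{n}\sum_j h(Z_j)$ by at most $2/n$ uniformly in $h$; the supremum preserves this, so $\Phi$ satisfies the bounded-differences condition with constant $2/n$ in each of its $n$ coordinates. McDiarmid's inequality therefore gives
\[
\mathbb{P}\bigl(\Phi \geq \mathbb{E}[\Phi] + s\bigr) \;\leq\; \exp\!\bigl(-n s^2/2\bigr)
\]
for every $s > 0$. Inverting at confidence level $t/2$ produces a $\sqrt{\log(2/t)/n}$-type fluctuation term whose constant, together with the allocation of the remaining probability budget to the symmetrization step, yields the $\sqrt{8\log(2/t)/n}$ factor appearing in the statement.

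\textbf{Symmetrization step.} To bound $\mathbb{E}[\Phi]$, introduce an independent ghost sample $Z_{1:n}'$ drawn from the same distribution, so that $\mathbb{E}[h] = \mathbb{E}\bigl[\tfrac{1}{n}\sum_{i=1}^n h(Z_i')\bigr]$. Jensen's inequality gives
\[
\mathbb{E}[\Phi] \;\leq\; \mathbb{E}\sup_{h \in l \circ \mathcal{F}} \tfrac{1}{n}\sum_{i=1}^n \bigl(h(Z_i') - h(Z_i)\bigr).
\]
Each difference $h(Z_i') - h(Z_i)$ is symmetric in distribution, so multiplying by an i.i.d.\ Rademacher sign $\sigma_i$ leaves the joint law unchanged. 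A triangle-inequality split and the definition of $R_n$ in the excerpt then bound the right-hand side by (at most twice) $R_n(l \circ \mathcal{F})$; the residual factor is reconciled with the quoted statement via the bookkeeping convention of Bartlett and Mendelson (2002).

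\textbf{Main obstacle.} Neither ingredient is conceptually difficult---both are textbook---so the real care lies in the constants. Specifically, one must verify that (i) centering by $l(y,0)$ is exactly what permits the tight bounded-difference constant $2/n$ while preserving $(\mathcal{E}-\mathcal{E}_n)$, and (ii) the factor of $2$ arising from symmetrization, together with the failure-probability allocation across the McDiarmid step, matches the advertised constant in $R_n(l \circ \mathcal{F}) + \sqrt{8\log(2/t)/n}$. No additional tools beyond McDiarmid's inequality and the symmetrization lemma are needed.
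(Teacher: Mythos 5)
The paper does not prove this proposition: it is imported verbatim as Theorem~8 of \citet{bartlett2002rademacher}, and the paper's use of it (in Lemma~\ref{lem:rademacher_rkhs}) treats it as a black box. Your proof sketch follows exactly the route Bartlett and Mendelson use — McDiarmid on $\Phi := \sup_f(\mathcal{E}-\mathcal{E}_n)(f)$, then symmetrization with a ghost sample and Rademacher signs to bound $\mathbb{E}[\Phi]$ — so the conceptual approach matches the source. No alternative decomposition or different lemma is involved.

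That said, your constant bookkeeping has two genuine problems. First, symmetrization is a deterministic inequality on the expectation $\mathbb{E}[\Phi]$; it consumes no probability budget. The phrase about allocating the remaining budget to the symmetrization step is not right. In Bartlett and Mendelson the $\log(2/t)$ arises from a union bound over \emph{two} McDiarmid events (one controlling $\Phi$ around $\mathbb{E}[\Phi]$, the other controlling the empirical Rademacher complexity around $R_n$, since Theorem~8 there states both the population and empirical versions simultaneously), and the factor $\sqrt{8}$ is deliberately generous rather than forced. Indeed, the tight bounded-difference constant $2/n$ you correctly derive gives $\exp(-ns^2/2)$ and hence a fluctuation term $\sqrt{2\log(1/t)/n}$, which is \emph{sharper} than the stated $\sqrt{8\log(2/t)/n}$; your argument therefore does not ``yield'' the advertised constant, it beats it, and the reconciliation you defer to bookkeeping never actually needs to happen. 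Second, the ``residual factor'' of two you wave toward is real: Bartlett and Mendelson define $R_n$ with the normalization $\tfrac{2}{n}\sum_i\sigma_i f(X_i)$, which absorbs the factor of two from symmetrization into the definition. The present paper, however, defines $R_n$ with a $\tfrac{1}{n}$ normalization just above this proposition, so under the paper's own convention the right-hand side should read $2R_n(l\circ\mathcal{F})$. You correctly noticed the ambiguity, but you attributed it to a harmless convention when it is in fact an inconsistency in the paper's internal notation; writing $R_n(l\circ\mathcal{F})\leq 2R_n(l\circ\mathcal{F})$ or making the factor explicit would be the honest resolution rather than pointing to Bartlett--Mendelson bookkeeping.
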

In our case, $\mathcal{Y}=[-Hr,Hr]$ since $|f(x)|=|\langle k_M(x,\cdot),f\rangle_{\mathcal{H}_M}|\leq \lVert k_M(x,\cdot)\rVert_{\mathcal{H}_M}\lVert f\rVert_{\mathcal{H}_M}\leq Hr$. Also, $l$ is bounded by $c_1|Hr|+c_0$ since $l$ is Lipschitz.
\begin{proposition}[Lemma 22 in \citet{bartlett2002rademacher}]\label{prop:empirical_rademacher}Let $k$ be a positive definite kernel. If $\mathcal{F}$ is the ball of radius r in $\mathcal{H}_k$ then
    \begin{equation*}
\mathbb{E}_{\sigma_i}\left[\sup_{f\in\mathcal{F}}\frac{1}{n}\sum_{i=1}^n \sigma_if(X_i)\Bigg|X_1,\ldots,X_n\right]\leq\frac{2r}{n}\sqrt{\sum_{i=1}^nk(X_i,X_i)}.
\end{equation*}
\end{proposition}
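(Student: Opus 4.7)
The plan is to prove the bound via the standard reproducing-kernel+Cauchy--Schwarz argument, with the randomness in the Rademacher variables handled by Jensen's inequality.

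First, I would use the reproducing property of $\mathcal{H}_k$: for every $x\in\mathcal{X}$ and $f\in\mathcal{H}_k$, $f(x)=\langle f,k(x,\cdot)\rangle_{\mathcal{H}_k}$. This lets me rewrite the empirical Rademacher sum as an inner product in the RKHS,
\begin{equation*}
\frac{1}{n}\sum_{i=1}^n\sigma_i f(X_i)=\left\langle f,\frac{1}{n}\sum_{i=1}^n\sigma_i k(X_i,\cdot)\right\rangle_{\mathcal{H}_k}.
\end{equation*}
Cauchy--Schwarz together with $\lVert f\rVert_{\mathcal{H}_k}\le r$ then yields, uniformly in $f\in\mathcal{F}$,
\begin{equation*}
\frac{1}{n}\sum_{i=1}^n\sigma_i f(X_i)\le r\,\Bigl\lVert\tfrac{1}{n}\sum_{i=1}^n\sigma_i k(X_i,\cdot)\Bigr\rVert_{\mathcal{H}_k},
\end{equation*}
so the supremum over $\mathcal{F}$ is bounded by the same right-hand side.

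Next, I would take the conditional expectation over $\sigma_{1:n}$ (with $X_{1:n}$ fixed) and push it inside the norm by Jensen's inequality, obtaining the bound
\begin{equation*}
\mathbb{E}_{\sigma}\!\left[\sup_{f\in\mathcal{F}}\tfrac{1}{n}\sum_{i=1}^n\sigma_i f(X_i)\,\Big|\,X_{1:n}\right]\le r\sqrt{\mathbb{E}_{\sigma}\!\left[\bigl\lVert\tfrac{1}{n}\sum_{i=1}^n\sigma_i k(X_i,\cdot)\bigr\rVert_{\mathcal{H}_k}^{2}\,\Big|\,X_{1:n}\right]}.
\end{equation*}
Expanding the squared norm using the reproducing property gives
$\lVert\tfrac{1}{n}\sum_i\sigma_i k(X_i,\cdot)\rVert_{\mathcal{H}_k}^{2}=\tfrac{1}{n^{2}}\sum_{i,j}\sigma_i\sigma_j k(X_i,X_j)$,
and since $\mathbb{E}[\sigma_i\sigma_j]=\mathbf{1}\{i=j\}$, the cross terms vanish, leaving $\tfrac{1}{n^{2}}\sum_{i=1}^n k(X_i,X_i)$. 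Substituting back yields the stated inequality (with constant $1$; the factor $2$ in the statement is a cosmetic slack, coming from the convention of bounding the symmetric Rademacher supremum by twice the one-sided version, and is harmless here).

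There is no real obstacle: all steps are one-line applications of (i) the reproducing property, (ii) Cauchy--Schwarz in $\mathcal{H}_k$, (iii) Jensen's inequality, and (iv) independence of the Rademacher signs. The only point requiring minor care is the measurability of $\sup_{f\in\mathcal{F}}\langle f,\cdot\rangle_{\mathcal{H}_k}$, which follows because the supremum is attained at $f^{\ast}=r\,v/\lVert v\rVert_{\mathcal{H}_k}$ with $v=\tfrac{1}{n}\sum_i\sigma_i k(X_i,\cdot)$, so the supremum equals $r\lVert v\rVert_{\mathcal{H}_k}$ pointwise and no delicate measurable-selection argument is needed.
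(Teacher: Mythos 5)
Your proof is correct, and it establishes the sharper bound with constant $1$ instead of $2$. The paper does not prove this proposition; it simply cites Lemma~22 of Bartlett and Mendelson (2002), so there is no in-paper argument to compare against. Your chain (reproducing property, Cauchy--Schwarz, Jensen, $\mathbb{E}[\sigma_i\sigma_j]=\mathbf{1}\{i=j\}$) is the standard and correct derivation, and the measurability remark is also right since the supremum is attained explicitly at $f^\ast = r\,v/\lVert v\rVert_{\mathcal{H}_k}$.

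One small correction to your aside about the factor of $2$: it does not arise from replacing a two-sided supremum by twice a one-sided one. The RKHS ball is symmetric (closed under $f\mapsto -f$), so $\sup_{f\in\mathcal{F}}\sum_i\sigma_i f(X_i)$ already equals $\sup_{f\in\mathcal{F}}\bigl|\sum_i\sigma_i f(X_i)\bigr|$, with no factor lost. The extra $2$ is a relic of Bartlett and Mendelson's convention, which defines empirical Rademacher complexity with a normalizing factor of $2/n$ rather than the $1/n$ used in this paper's restatement; the present paper carried over their right-hand side without adjusting for the change of convention, which makes the stated bound a factor of $2$ looser than necessary (and your proof shows the slack is real). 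None of this affects the downstream use in Lemma~\ref{lem:rademacher_rkhs}.
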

\textit{Proof of Lemma~\ref{lem:rademacher_rkhs}}. It is known that $R_n(l\circ \mathcal{F})\leq c_1 R_n(\mathcal{F})$ if $l$ is $c_1$-Lipschitz, for example, Theorem 7 in \citet{meir2003generalization}. By Proposition~\ref{prop:empirical_rademacher}, we have
\begin{align*}
R_n(l\circ \mathcal{H}_M)\leq c_1 R_n(\mathcal{H}_M) \leq &\frac{2rc_1}{n}\mathbb{E}_{(X_{1:n},Y_{1:n})}\left[\sqrt{\sum_{i=1}^nk_M(X_i,X_i)}\right]\\
 \leq &\frac{2rc_1\sup_{x\in\mathcal{X}}\sqrt{k_M(x,x)}}{\sqrt{n}}.
\end{align*}
The final term is bounded by $\frac{2rc_1H}{\sqrt{n}}$ if $\omega_1,\ldots,\omega_M\in E_g$.
    Applying Proposition~\ref{prop:Rademacher}, we obtain the desired inequality.
\subsubsection{Proof of Lemma~\ref{lem:empi_pop_difference}}
We follow the approach of \citet{christmann2008support}.
\begin{proposition}[Corollary 5.10 in \citet{christmann2008support}]\label{prop:emp_pop_reg}
    For Lipschitz loss $l$ and kernel $k$ with RKHS $\mathcal{H}_k$ such that $\sup_{x\in\mathcal{X}}k(x,x)\leq\kappa^2$ satisfies
\begin{eqnarray*}
    \left\lVert f_\lambda-\hat{f}_\lambda\right\rVert_{\mathcal{H}_k}\leq \frac{1}{\lambda}\left\lVert \mathbb{E}_{P_{X,Y}}\left[l_{\hat{y}}(Y,f_\lambda(X))k(X,\cdot)\right]-\frac{1}{n}\sum_{i=1}^nl_{\hat{y}}(y_i,f_\lambda(x_i))k(x_i,\cdot)\right\rVert_{\mathcal{H}_k}
\end{eqnarray*}
holds where
\begin{eqnarray*}
    \hat{f}_\lambda:=\argmin_{f\in\mathcal{H}_k}\frac{1}{n}\sum_{i=1}^n l(y_i,f(x_i))+\frac{\lambda}{2}\left\lVert f\right\rVert_{\mathcal{H}_k}^2
\end{eqnarray*}
and
\begin{eqnarray*}
    f_\lambda:=\argmin_{f\in\mathcal{H}_k}\mathbb{E}_{P_{X,Y}}\left[l(Y,f(X))\right]+\frac{\lambda}{2}\left\lVert f\right\rVert_{\mathcal{H}_k}^2.
\end{eqnarray*}
\end{proposition}
Since the original statement involves technical details for full generality, we provide a simplified proof tailored to our setting.
\begin{proof}
    Denote $\delta_f:=\hat{f}_\lambda-f_\lambda$. By the definition of $f_\lambda$ and $\hat{f}_\lambda$, for $t\in [0,1]$, the following inequalities hold:
\begin{align*}
\mathbb{E}\left[l(Y,f_\lambda(X))\right]+\frac{\lambda}{2}\lVert f_\lambda\rVert_{\mathcal{H}_k}^2 \leq &\mathbb{E}\left[l(Y,(f_\lambda+t\delta_f)(X))\right]+\frac{\lambda}{2}\lVert f_\lambda+t\delta_f\rVert_{\mathcal{H}_k}^2\\
    \frac{1}{n}\sum_{i=1}^nl(y_i,\hat{f}_\lambda(x_i))+\frac{\lambda}{2}\lVert \hat{f}_\lambda\rVert_{\mathcal{H}_k}^2 \leq &\frac{1}{n}\sum_{i=1}^nl(y_i,(\hat{f}_\lambda-t\delta_f)(x_i))+\frac{\lambda}{2}\lVert \hat{f}_\lambda-t\delta_f\rVert_{\mathcal{H}_k}^2.
\end{align*}
They are simplified to
\begin{align*}
    \lambda(t-t^2)\lVert \delta_f\rVert_{\mathcal{H}_k}^2=&\frac{\lambda}{2}\left(\lVert f_\lambda\rVert_{\mathcal{H}_k}^2-\lVert f_\lambda+t\delta_f\rVert_{\mathcal{H}_k}^2+\lVert \hat{f}_\lambda\rVert_{\mathcal{H}_k}^2-\lVert \hat{f}_\lambda-t\delta_f\rVert_{\mathcal{H}_k}^2\right)\\
     \leq &\mathbb{E}\left[l(Y,(f_\lambda+t\delta_f)(X))-l(Y,f_\lambda(X))\right]\\
    &+\frac{1}{n}\sum_{i=1}^nl(y_i,(\hat{f}_\lambda-t\delta_f)(x_i))-l(y_i,\hat{f}_\lambda(x_i))\\
     \leq & t\mathbb{E}\left[l_{\hat{y}}(Y,(f_\lambda+t\delta_f)(X))\delta_f(X)\right]\\
    &-\frac{t}{n}\sum_{i=1}^nl_{\hat{y}}(y_i,(\hat{f}_\lambda-t\delta_f)(x_i))\delta_f(x_i).
\end{align*}
Dividing by $t$ yields
\begin{eqnarray*}
    \lambda(1-t)\lVert \delta_f\rVert_{\mathcal{H}_k}^2\leq \mathbb{E}\left[l_{\hat{y}}(Y,(f_\lambda+t\delta_f)(X))\delta_f(X)\right]-\frac{1}{n}\sum_{i=1}^nl_{\hat{y}}(y_i,(\hat{f}_\lambda-t\delta_f)(x_i))\delta_f(x_i).
\end{eqnarray*}
Taking $t\rightarrow0^{+}$ we obtain
\begin{align*}
    \lVert \delta_f\rVert_{\mathcal{H}_k}^2 \leq & \frac{1}{\lambda}\mathbb{E}\left[l_{\hat{y}}(Y,f_\lambda(X))\delta_f(X)\right]-\frac{1}{n}\sum_{i=1}^nl_{\hat{y}}(y_i,\hat{f}_\lambda(x_i))\delta_f(x_i)\\
     \leq & \frac{1}{\lambda}\left\langle\mathbb{E}\left[l_{\hat{y}}(Y,f_\lambda(X))k(X,\cdot)\right]-\frac{1}{n}\sum_{i=1}^nl_{\hat{y}}(y_i,\hat{f}_\lambda(x_i))k(x_i,\cdot),\delta_f\right\rangle_{\mathcal{H}_k}\\
     \leq & \frac{1}{\lambda}\left\langle\mathbb{E}\left[l_{\hat{y}}(Y,f_\lambda(X))k(X,\cdot)\right]-\frac{1}{n}\sum_{i=1}^nl_{\hat{y}}(y_i,f_\lambda(x_i))k(x_i,\cdot),\delta_f\right\rangle_{\mathcal{H}_k}\\
     \leq &\frac{1}{\lambda}\left\lVert\mathbb{E}\left[l_{\hat{y}}(Y,f_\lambda(X))k(X,\cdot)\right]-\frac{1}{n}\sum_{i=1}^nl_{\hat{y}}(y_i,f_\lambda(x_i))k(x_i,\cdot)\right\rVert_{\mathcal{H}_k}\lVert \delta_f\rVert_{\mathcal{H}_k}.
\end{align*}
where the third inequality holds since
\begin{align*}
    &\left\langle\frac{1}{n}\sum_{i=1}^n(l_{\hat{y}}(y_i,\hat{f}_\lambda(x_i))-l_{\hat{y}}(y_i,f_\lambda(x_i)))k(x_i,\cdot),\delta_f\right\rangle_{\mathcal{H}_k}\\
    =&\frac{1}{n}\sum_{i=1}^n(l_{\hat{y}}(y_i,\hat{f}_\lambda(x_i))-l_{\hat{y}}(y_i,f_\lambda(x_i)))(\hat{f}_\lambda(x_i)-f_\lambda(x_i))\\
    \geq&0,
\end{align*}
by the convexity of $l$. Therefore,
\begin{eqnarray*}
    \left\lVert f_{\lambda}-\hat{f}_\lambda\right\rVert_{\mathcal{H}_k}\leq \frac{1}{\lambda}\left\lVert\mathbb{E}\left[l_{\hat{y}}(Y,f_\lambda(X))k(X,\cdot)\right]-\frac{1}{n}\sum_{i=1}^nl_{\hat{y}}(y_i,f_\lambda(x_i))k(x_i,\cdot)\right\rVert_{\mathcal{H}_k}.
\end{eqnarray*}
\end{proof}

Proposition~\ref{prop:emp_pop_reg} gives
\begin{align*}
&\left\lVert S_M(\beta_{M,\lambda}-\hat{\beta}_{M,\lambda})\right\rVert_{\mathcal{H}_{M}}\\
 \leq & \frac{1}{\lambda}\left\lVert \mathbb{E}_{P_{X,Y}}\left[l_{\hat{y}}(Y,S_M \beta_\lambda(X))k_M(X,\cdot)\right]-\frac{1}{n}\sum_{i=1}^nl_{\hat{y}}(y_i,S_M \hat{\beta}_\lambda(x_i))k_M(x_i,\cdot)\right\rVert_{\mathcal{H}_{M}}
\end{align*}
Since $\left\lVert l_{\hat{y}}(y,S_M \beta_\lambda(x))k_M(x,\cdot)\right\rVert_{\mathcal{H}_{M}}\leq c_1H$, we can apply Proposition~\ref{prop:bernstein_vec}, which yields the following inequality
\begin{align*}
&\left\lVert \mathbb{E}_{P_{X,Y}}\left[l_{\hat{y}}(Y,S_M \beta_\lambda(X))k_M(X,\cdot)\right]-\frac{1}{n}\sum_{i=1}^nl_{\hat{y}}(y_i,S_M \beta_\lambda(x_i))k_M(x_i,\cdot)\right\rVert_{\mathcal{H}_{M}}\\
 \leq &\frac{4c_1H\log\frac{2}{t}}{n}+2c_1H\sqrt{\frac{2\log\frac{2}{t}}{n}}
\end{align*}
holds with probability at least $1-t$. Therefore,
\begin{eqnarray*}
    \mathbb{P}\left(\left\lVert S_M \beta_\lambda-S_M\hat{\beta}_{M,\lambda}\right\rVert_{\mathcal{H}_M}\leq\frac{4c_1H\log\frac{2}{t}}{n\lambda}+\frac{2c_1H}{\lambda}\sqrt{\frac{2\log\frac{2}{t}}{n}}\right)\leq 1-t.
\end{eqnarray*}

\subsection{Proof of Theorem~\ref{thm:dprp_cls_smooth} for random Fourier features}\label{subsec:dprff_cls_smooth}
As in the proof of Theorem~\ref{thm:dprp_reg_rate}, we prove the theorem under Assumption~\ref{assump:cap_real}.

The argument is almost the same as in the proof of Theorem~\ref{thm:dprp_cls_smooth}, with only two adjustments required. First, Lemmas~\ref{lem:half_operator} and \ref{lem:full_operator} should be replaced by Lemmas~\ref{lem:half_operator_rff} and \ref{lem:full_operator_rff}, since the original versions do not hold for general random features. Second, the constant $H$ in the proof of Theorem~\ref{thm:dprp_cls_smooth} should be replaced by $\sqrt{2}$, as we have $\max_{x\in\mathcal{X}}|\varphi(x,\omega)|\leq \sqrt{2}$. Moreover, the event $E_g$ is no longer needed in excess risk analysis.

Setting $\lambda=O(n^{-\frac{1}{2}}\vee (n\epsilon)^{-\frac{2}{2r+1+\gamma_\alpha}})$ and $M=O((\lambda^{-1} N(\lambda))^{2r-1}\mathcal{F}_\infty(\lambda)^{2-2r}\log\frac{1}{\lambda t})$, the $r_{non-priv}$ and $r_{priv}$ for general random feature map become $\widetilde{O}(1)$. Therefore, the excess risk is bounded as follows:
\begin{align*}
\widetilde{O}\left(\lambda\log^2\frac{1}{t}+\frac{M}{(n\epsilon)^2\lambda}+\frac{\sqrt{\log\frac{1}{t}}}{\sqrt{n}}\right)=&\widetilde{O}\left(\left(n^{-\frac{1}{2}}\vee (n\epsilon)^{-\frac{2}{2r+1+\gamma_\alpha}}\right)\log^2\frac{1}{t}\right)\\
=&\widetilde{O}\left(\left(n^{-\frac{1}{2}}+ (n\epsilon)^{-\frac{2}{2r+1+\gamma_\alpha}}\right)\log^2\frac{1}{t}\right).
\end{align*}
Setting $r=1/2$, we obtain the desired rate.
\subsection{Proof of Theorem~\ref{thm:dprp_cls_rate}}\label{pf:lsc}
As in the proof of Theorem~\ref{thm:dprp_reg_rate}, we prove the theorem under Assumption~\ref{assump:cap_real}.

Theorem~\ref{thm:dprp_cls_rate} can be shown using a similar argument to that in Section~\ref{sec:proof_dprp_cls_smooth}. We decompose the excess risk as below:
\begin{align}
    \mathcal{E}(S_M\widetilde{\beta})-\mathcal{E}(f_{\mathcal{H}_k}) \leq &\frac{c_2}{2}\lVert S_M\widetilde{\beta}-f_{\mathcal{H}_k}\rVert^2\notag\\
     \leq & c_2\left(\lVert S_M \widetilde{\beta}-S_M w\rVert^2+\lVert S_M w-f_{\mathcal{H}_k}\rVert^2\right)
    \label{eq:err_lsc}.
\end{align}
Denote
\begin{align*}
    E_{rad}:=\Bigg\{&(X_{1:n},Y_{1:n}):\\
    &|(\mathcal{E}-\mathcal{E}_n)(f)-(\mathcal{E}-\mathcal{E}_n)(g)|\\
     \leq &  \left(4ec_1\sqrt{\frac{2N(\lambda)}{n}}+ec_1\sqrt{\frac{2\log\frac{2 e \log(R_{\mathrm{lsc}} n)}{t}}{n}}+\frac{8ec_1H \log\frac{2 e \log(R_{\mathrm{lsc}} n)}{t}}{3n\sqrt{\lambda}}\right)\lVert L_{M,\lambda}^{\frac{1}{2}}(f-g)\rVert_{\mathcal{H}_M}\\&\textrm{ if }\lVert L_\lambda^{\frac{1}{2}}(f-g)\rVert_{\mathcal{H}_M}\in[n^{-1},\alpha]\Bigg\}.
\end{align*}
We assume $\omega_{1:M}\in E_g\cap  E_{half}\cap E_{full}\cap E_{cov}\cap E_{rf-dim}$, $(X_{1:n},Y_{1:n})\in E_{rad}$, $b\in E_b$.

We bound the first term as follows:
\begin{align*}
    \frac{c_2}{2}\lVert (S_M\widetilde{\beta}-S_Mw)\rVert^2=&\frac{c_2}{2}\lVert L_{M}^{\frac{1}{2}}(S_M\widetilde{\beta}-S_Mw)\rVert_{\mathcal{H}_M}^2\\
     \leq &\frac{c_2}{2}\lVert L_{M,\lambda}^{\frac{1}{2}}(S_M\widetilde{\beta}-S_Mw)\rVert_{\mathcal{H}_M}^2.
\end{align*}
The latter can be bounded by using the following lemma:
\begin{lemma}\label{lem:priv_err_lsc_first}
For $\omega_{1:M}\in E_{half}\cap E_{full}\cap E_{cov}$, $(X_{1:n},Y_{1:n})\in E_{rad}$, $b\in E_b$, and
\begin{align*}
\lambda \geq & \frac{450e^2c_1^2}{(\mu\wedge2)^2R_{\mathrm{lsc}}^2}\left(\frac{32\kappa^2+2(1+4H\sqrt{2}n^{-1/2}/3)^2\log\frac{2e\log (R_{\mathrm{lsc}} n)}{t}}{n}\right)\vee\sqrt{\frac{27}{2}\lVert L\rVert}\frac{\Delta_3\left(\sqrt{M}+\sqrt{8\log\frac{2}{t}}\right)}{R_{\mathrm{lsc}}n\epsilon}\\
    \lambda \leq &\frac{1600e^2c_1^2 H^2 }{9(\mu\wedge 2)^2}\log^2\frac{2 e \log(R_{\mathrm{lsc}} n)}{t}\wedge\left(\frac{R_{\mathrm{lsc}}}{3R}\right)^{1/r}\left(2^{r+\frac{1}{2}}+\frac{2^{\frac{r+7}{2}}5^r\lVert L\rVert^{-\frac{1-r}{2}}}{\mu\wedge2}+\frac{2^{r+1}\sqrt{c_2+\mu}}{\sqrt{\mu\wedge2}}\right)^{-1/r}\wedge\lVert L\rVert\wedge1\\
    M \geq &(1024\kappa^2+4C^2+2C) N(\lambda)\lambda^{-(2r-1)}\log \frac{2}{t}\vee1,
\end{align*}
the following inequality holds:
\begin{align*}
    &\left\lVert L_{M,\lambda}^{\frac{1}{2}}S_M (\widetilde{\beta}-w)\right\rVert_{\mathcal{H}_M}\\
    \leq & \frac{4}{\mu\wedge 2}\left(4ec_1\sqrt{\frac{2N(\lambda)}{n}}+ec_1\sqrt{\frac{2\log\frac{2 e \log(R_{\mathrm{lsc}} n)}{t}}{n}}+\frac{8ec_1H \log\frac{2 e \log(R_{\mathrm{lsc}} n)}{t}}{3n\sqrt{\lambda}}\right)\\
    &+\frac{\Delta_3\left(\sqrt{M}+\sqrt{8\log\frac{2}{t}}\right)}{n\epsilon\sqrt{\lambda}}\\
    &+\left(\frac{2^{\frac{5-r}{2}}5^r\lVert L\rVert^{-\frac{1-r}{2}}}{\mu\wedge2}+\frac{2^{r+1}\sqrt{c_2+\mu}}{\sqrt{\mu\wedge2}}\right)\lambda^rR.
\end{align*}
\end{lemma}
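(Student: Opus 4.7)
The plan is to split $S_M\widetilde{\beta}-S_M w$ into three pieces by introducing the intermediate estimators
\[
\hat{\beta}_{M,\lambda}:=\argmin_{\beta\in\mathbb{R}^M}\mathcal{E}_n(S_M\beta)+\tfrac{\lambda}{2}\lVert\beta\rVert_2^2,\qquad
\beta_{M,\lambda}:=\argmin_{\beta\in\mathbb{R}^M}\mathcal{E}(S_M\beta)+\tfrac{\lambda}{2}\lVert\beta\rVert_2^2,
\]
and bounding
\[
\lVert L_{M,\lambda}^{1/2}S_M(\widetilde{\beta}-w)\rVert_{\mathcal{H}_M}\leq \underbrace{\lVert L_{M,\lambda}^{1/2}S_M(\widetilde{\beta}-\hat{\beta}_{M,\lambda})\rVert_{\mathcal{H}_M}}_{\text{(I) privacy noise}}+\underbrace{\lVert L_{M,\lambda}^{1/2}S_M(\hat{\beta}_{M,\lambda}-\beta_{M,\lambda})\rVert_{\mathcal{H}_M}}_{\text{(II) statistical}}+\underbrace{\lVert L_{M,\lambda}^{1/2}S_M(\beta_{M,\lambda}-w)\rVert_{\mathcal{H}_M}}_{\text{(III) approximation}}.
\]
Term~(I) is handled as in the proof of Lemma~\ref{lem:err_smooth_second_third1}: Lemma~\ref{lem:chaudhuri} applied to the $\lambda_0$-strongly convex objectives of Algorithm~\ref{alg:dprp_cls} gives $\lVert\widetilde{\beta}-\hat{\beta}_{M,\lambda}\rVert_2\leq\lVert b\rVert/(n\lambda)$, and $L_{M,\lambda}^{1/2}S_M=(L_M+\lambda I)^{1/2}S_M$ has operator norm at most $\sqrt{\lVert L_M\rVert+\lambda}/1$ on $\mathbb{R}^M$; on $E_{cov}$ this is absorbed into the $\sqrt{\lambda}$ factor so that term~(I) is bounded by $\Delta_3(\sqrt{M}+\sqrt{8\log(2/t)})/(n\epsilon\sqrt{\lambda})$ on $E_b$.

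Term~(III) is the analogue of Lemma~\ref{lem:last_term}: since $w=\mathcal{G}_\lambda(C_M)S_M^\top f_{\mathcal{H}_k}$ is the minimum-norm element of $\mathcal{H}_M$ representing $f_{\mathcal{H}_k}$ in the sense of the operator $L_M$, the difference $\beta_{M,\lambda}-w$ satisfies a first-order optimality identity which, combined with the source condition $f_{\mathcal{H}_k}=L^r g$ and the operator concentration of Lemmas~\ref{lem:half_operator}--\ref{lem:constants} on $E_{half}\cap E_{full}\cap E_{cov}$, yields a bias of order $\lambda^r R$; the explicit constants in the last line of the claimed bound come from carefully tracking $\lVert L_{M,\lambda}^{-r}L^r\rVert\leq 2^{r+1/2}$ (as in the proof of Lemma~\ref{lem:last_term}) together with the norm comparison $\lVert L_M\rVert\le\tfrac{3}{2}\lVert L\rVert$ that holds on $E_{cov}$.

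The heart of the argument—and the main obstacle—is term~(II), where local strong convexity must be activated. The idea is the standard basic-inequality-plus-LSC scheme: by first-order optimality of $\hat{\beta}_{M,\lambda}$ and $\beta_{M,\lambda}$,
\[
\big(\mathcal{E}(S_M\hat{\beta}_{M,\lambda})-\mathcal{E}(S_M\beta_{M,\lambda})\big)+\tfrac{\lambda}{2}\lVert\hat{\beta}_{M,\lambda}-\beta_{M,\lambda}\rVert_2^2\leq (\mathcal{E}-\mathcal{E}_n)(S_M\beta_{M,\lambda})-(\mathcal{E}-\mathcal{E}_n)(S_M\hat{\beta}_{M,\lambda}),
\]
and the LHS is lower bounded via Assumption~\ref{assump:lsc} by $\tfrac{\mu}{2}\lVert S_M\hat{\beta}_{M,\lambda}-S_M\beta_{M,\lambda}\rVert^2+\tfrac{\lambda}{2}\lVert\hat{\beta}_{M,\lambda}-\beta_{M,\lambda}\rVert_2^2\geq \tfrac{\mu\wedge 2}{4}\lVert L_{M,\lambda}^{1/2}S_M(\hat{\beta}_{M,\lambda}-\beta_{M,\lambda})\rVert_{\mathcal{H}_M}^2$, provided that $\lVert S_M\hat{\beta}_{M,\lambda}-f_{\mathcal{H}_k}\rVert\leq R_{\mathrm{lsc}}$. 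The RHS is controlled on $E_{rad}$, but $E_{rad}$ is only informative for $\lVert L_\lambda^{1/2}(\cdot)\rVert\in[n^{-1},\alpha]$, so a peeling/truncation argument is needed: one first bounds $\lVert S_M\beta_{M,\lambda}-f_{\mathcal{H}_k}\rVert$ via terms~(III) and the source condition to verify that $\beta_{M,\lambda}$ lies in the LSC ball under the stated upper bound on $\lambda$, then uses a continuation/bootstrap along the path $t\mapsto S_M((1-t)\beta_{M,\lambda}+t\hat{\beta}_{M,\lambda})$ to ensure $\hat{\beta}_{M,\lambda}$ also lies in the LSC ball. The stated lower bound on $\lambda$ and on $M$ (in terms of $N(\lambda)$, $\kappa^2$, $\Delta_3$ and the Rademacher deviation) is precisely what is needed to absorb the statistical fluctuation into $\tfrac{\mu\wedge 2}{8}\lVert L_{M,\lambda}^{1/2}S_M(\hat{\beta}_{M,\lambda}-\beta_{M,\lambda})\rVert^2$ via the elementary inequality $ab\leq a^2/2+b^2/2$; solving the resulting quadratic in $\lVert L_{M,\lambda}^{1/2}S_M(\hat{\beta}_{M,\lambda}-\beta_{M,\lambda})\rVert$ gives the $\tfrac{4}{\mu\wedge 2}(\cdots)$ prefactor in the stated bound. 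Summing the three contributions yields the lemma.
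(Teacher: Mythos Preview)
Your decomposition is natural, but the treatment of term~(I) contains a genuine gap that prevents you from reaching the stated bound. The stability argument via Lemma~\ref{lem:chaudhuri} only controls $\lVert\widetilde{\beta}-\hat{\beta}_{M,\lambda}\rVert_2\le\lVert b\rVert/(n\lambda)$, and the operator $L_{M,\lambda}^{1/2}S_M$ has norm $\sqrt{\lVert L_M\rVert+\lambda}$ on $\mathbb{R}^M$; on $E_{cov}$ with $\lambda\le\lVert L\rVert$ this is of order $\sqrt{\lVert L\rVert}$, not $\sqrt{\lambda}$. So your term~(I) is only $\sqrt{\lVert L\rVert}\,\lVert b\rVert/(n\lambda)$, a factor $\sqrt{\lVert L\rVert/\lambda}$ worse than the $\lVert b\rVert/(n\sqrt{\lambda})$ claimed in the lemma. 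This is exactly the bound the paper uses, but \emph{only} to verify that $\lVert S_M\widetilde{\beta}-f_{\mathcal{H}_k}\rVert\le R_{\mathrm{lsc}}$ (and it is why the hypothesis $\lambda\ge\sqrt{27\lVert L\rVert/2}\,\Delta_3(\sqrt{M}+\sqrt{8\log(2/t)})/(R_{\mathrm{lsc}}n\epsilon)$ appears).

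For the sharp bound the paper does \emph{not} use your three-term triangle inequality. After establishing LSC containment (via your decomposition, with the cruder constant), it applies the basic inequality directly between $\widetilde{\beta}$ and $w$:
\[
\mathcal{E}(S_M\widetilde{\beta})-\mathcal{E}(S_M w)\le\tfrac{\lambda}{2}(\lVert w\rVert_2^2-\lVert\widetilde{\beta}\rVert_2^2)+\tfrac{b^\top(\hat{\beta}_{M,\lambda}-\widetilde{\beta})}{n}+(\mathcal{E}-\mathcal{E}_n)(S_M\widetilde{\beta})-(\mathcal{E}-\mathcal{E}_n)(S_M w).
\]
Here the privacy noise enters as $b^\top(\hat{\beta}_{M,\lambda}-\widetilde{\beta})/n\le\lVert b\rVert_2^2/(n^2\lambda)$, a \emph{squared} quantity; the regularization term is handled by Lemma~\ref{lem:liu_lem1} (an identity for $\lVert w\rVert_2^2-\lVert\widetilde{\beta}\rVert_2^2$ that produces both the negative $-\tfrac{\lambda}{2}\lVert\widetilde{\beta}-w\rVert_2^2$ and the $\lambda^r$ bias); and the left side is lower bounded via LSC by $\tfrac{\mu\wedge2}{4}\lVert L_{M,\lambda}^{1/2}S_M(\widetilde{\beta}-w)\rVert_{\mathcal{H}_M}^2$. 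Solving the resulting quadratic $x^2\le ax+c$ turns $\lVert b\rVert_2^2/(n^2\lambda)$ into the additive $\lVert b\rVert/(n\sqrt{\lambda})$ in the final bound. Your linear triangle-inequality route cannot reproduce this quadratic cancellation. A related issue is that you route through $\beta_{M,\lambda}$ for terms~(II)--(III), whereas the paper (both here and in the auxiliary Lemma~\ref{lem:nonpriv_lsc}) compares $\hat{\beta}_{M,\lambda}$ and $\widetilde{\beta}$ directly to $w$; there is no clean first-order identity relating $\beta_{M,\lambda}$ to $w$, and the $\lambda^r$ constants in the statement come instead from Lemma~\ref{lem:liu_lem1} and Lemma~\ref{lem:liu_prop1} applied to $w=\mathcal{G}_\lambda(C_M)S_M^\top f_{\mathcal{H}_k}$.
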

To establish the excess risk bound, we consider two cases.

First, if $r=1/2$ and $\gamma=0$, then setting $t=n^{-c}$, $\lambda= O\left(\frac{\log n}{n}\vee\frac{\log n}{n\epsilon}\right)$, and $M=O\left(\log n\right)$ satisfies the assumption of Lemma~\ref{lem:priv_err_lsc_first} for sufficiently large $n$. Thus, we have
\begin{align*}
    \frac{c_2}{2}\lVert S_M(\widetilde{\beta}-w)\rVert^2=O\left(\left(n^{-1}+(n\epsilon)^{-1}\right)\log^2 n\right).
\end{align*}
Second, if $r\not=1/2$ or $\gamma\not=0$, setting $t=n^{-c}$, $\lambda= O\left(n^{-\frac{1}{2r+\gamma}}\vee(n\epsilon)^{-\frac{2}{4r+\gamma}}\right)$, and $M=O\left(\left(n^{\frac{2r+\gamma-1}{2r+\gamma}}\wedge (n\epsilon)^{\frac{4r+2\gamma-2}{4r+\gamma}}\right)\log n\right)$ satisfies the assumption of Lemma~\ref{lem:priv_err_lsc_first} for sufficiently large $n$. Thus, we have
\begin{align*}
    \frac{c_2}{2}\lVert S_M(\widetilde{\beta}-w)\rVert^2=O\left(\left(n^{-\frac{2r}{2r+\gamma}}+(n\epsilon)^{-\frac{4r}{4r+\gamma}}\right)\log^2 n\right).
\end{align*}
Recall that we bounded the second term in the proof of Lemma~\ref{lem:last_term}:
\begin{align*}
    \frac{c_2}{2}\lVert S_M w-f_{\mathcal{H}_k}\rVert^2\leq 2^{3-2r}c_2\lambda^{2r}R^2.
\end{align*}
The following lemma shows that $E_{\mathrm{rad}}$ is an event that occurs with high probability:
\begin{lemma}\label{lem:rad}
    For $\omega_1,\ldots,\omega_{M}\in E_g\cap E_{rf-dim}$, the following holds:
    \begin{align*}
        \mathbb{P}\left(E_{rad}^c\right)\leq t,
    \end{align*}
    for $t\in (0,1]$.
\end{lemma}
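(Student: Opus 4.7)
}
The plan is to establish the uniform localized empirical-process bound through a \emph{peeling} argument over dyadic scales of the local norm $\|L_{M,\lambda}^{1/2}(f-g)\|_{\mathcal{H}_M}$, with each scale handled by a local Rademacher complexity estimate plus a Talagrand-type concentration inequality. Since the loss is convex and $c_1$-Lipschitz and both $f,g\in\mathcal{H}_M$, the increment $(\mathcal{E}-\mathcal{E}_n)(f)-(\mathcal{E}-\mathcal{E}_n)(g)$ is controlled by the empirical process indexed by the difference class $\{\Delta\coloneqq f-g:\Delta\in\mathcal{H}_M\}$, and by Ledoux--Talagrand contraction we may absorb the factor $c_1$ and reduce to bounding $\sup_{\Delta\in\mathcal{F}_r}|\frac{1}{n}\sum_i\sigma_i\Delta(x_i)|$ for suitable classes $\mathcal{F}_r$.

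First I would fix $r\in[n^{-1},\alpha]$ and set $\mathcal{F}_r\coloneqq\{\Delta\in\mathcal{H}_M:\|L_{M,\lambda}^{1/2}\Delta\|_{\mathcal{H}_M}\leq r\}$. Writing $\Delta=\beta^\top h_M$, this constraint becomes $\|C_{M,\lambda}^{1/2}\beta\|_2\leq r$, so after the change of variable $\gamma=C_{M,\lambda}^{1/2}\beta$, by Cauchy--Schwarz and Jensen the expected Rademacher width is
\begin{equation*}
\mathbb{E}\sup_{\Delta\in\mathcal{F}_r}\Bigl|\tfrac{1}{n}\textstyle\sum_i\sigma_i\Delta(x_i)\Bigr|
\leq \tfrac{r}{\sqrt{n}}\sqrt{\mathbb{E}\|C_{M,\lambda}^{-1/2}h_M(X)\|_2^2}
= \tfrac{r}{\sqrt{n}}\sqrt{\mathrm{tr}(C_{M,\lambda}^{-1}C_M)}.
\end{equation*}
On the event $E_{rf-dim}$ this is at most $r\sqrt{2N(\lambda)/n}$. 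For every $\Delta\in\mathcal{F}_r$, the event $E_g$ yields $\|\Delta\|_\infty\le H\|\Delta\|_{\mathcal{H}_M}\le Hr/\sqrt{\lambda}$, while $\mathrm{Var}(\Delta(X))\le\|\Delta\|^2\le r^2$. These are precisely the boundedness and weak-variance hypotheses for Bousquet's version of Talagrand's inequality applied to the (Lipschitz-contracted) class $c_1\mathcal{F}_r$, which would give, with probability $\ge 1-u$,
\begin{equation*}
\sup_{\Delta\in\mathcal{F}_r}|(\mathcal{E}-\mathcal{E}_n)(f)-(\mathcal{E}-\mathcal{E}_n)(g)|
\lesssim c_1 r\sqrt{\tfrac{N(\lambda)}{n}}+c_1 r\sqrt{\tfrac{\log(1/u)}{n}}+\tfrac{c_1 Hr\log(1/u)}{n\sqrt{\lambda}},
\end{equation*}
with absolute constants in front; the factor $r$ in the last two terms comes from symmetrization and contraction combined with the variance bound, matching exactly the three terms stated in the lemma.

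Second, I would \emph{peel}: set $r_k=2^k/n$ for $k=0,1,\dots,K$ with $K=\lceil\log_2(\alpha n)\rceil\le\log_2(R_{\mathrm{lsc}}n)$, apply the previous display to $\mathcal{F}_{r_k}$ at level $u_k=t/(e\log(R_{\mathrm{lsc}}n))$, and union-bound over $k$. For any pair $(f,g)$ with $\|L_\lambda^{1/2}(f-g)\|_{\mathcal{H}_M}\in[n^{-1},\alpha]$, choose $k$ so that $r_k\le\|L_{M,\lambda}^{1/2}(f-g)\|_{\mathcal{H}_M}\le r_{k+1}=2r_k$ (the transition between $L_\lambda$ and $L_{M,\lambda}$ is harmless under $E_{full}$, which is implicit once $N(\lambda)$ is small enough; for the simpler reading where both norms agree one may take the statement verbatim); the factor $2$ is absorbed into the constants $4e$ and the $8e/3$ in the displayed bound, and the union-bound cost shows up as $\log(2e\log(R_{\mathrm{lsc}}n)/t)$ inside the two concentration terms.

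The step I expect to be the most delicate is the Talagrand/Bousquet concentration with optimal constants: getting the leading-order constants $4e c_1$ (in front of $\sqrt{2N(\lambda)/n}$), $e c_1$, and $8ec_1/3$ in the three summands requires a careful split of the symmetrized, desymmetrized, and Bernstein remainder terms, combined with the dyadic peeling that produces the $\log\log$ factor without inflating the leading constants. Boundedness via $E_g$ and the trace bound via $E_{rf-dim}$ are the ingredients that keep the localized complexity under control; assembling these estimates with the precise constants is the main technical burden, while the remainder of the argument is standard local Rademacher analysis.
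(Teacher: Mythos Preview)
Your proposal is correct and follows essentially the same route as the paper: the paper reduces Lemma~\ref{lem:rad} to its Lemma~\ref{lem:liu_lem5}, whose proof is exactly a localized Rademacher bound $R_n(\mathcal{F}_r)\le c_1 r\sqrt{N_M(\lambda)/n}$ combined with a Talagrand--Bernstein concentration (Proposition~\ref{prop:bern_rademacher}) and a base-$e$ peeling over $[n^{-1},\alpha]$, and then uses $E_{rf\text{-}dim}$ to replace $N_M(\lambda)$ by $2N(\lambda)$ and $E_g$ to replace $\sup_x\sqrt{k_M(x,x)}$ by $H$. Your base-$2$ versus the paper's base-$e$ peeling and your Bousquet versus the paper's ``Corollary~6.34'' are cosmetic differences only.
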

Combining Lemmas~\ref{lem:half_operator}, \ref{lem:full_operator}, \ref{lem:constants}, \ref{lem:rf_dim}, \ref{lem:priv_bound}, \ref{lem:b}, and \ref{lem:rad} we obtain
\begin{align*}
    \mathbb{P}\left(\omega_{1:M}\in E_g\cap  E_{half}\cap E_{full}\cap E_{cov}\cap E_{rf-dim}, (X_{1:n},Y_{1:n})\in E_{rad},b\in E_b\right)\geq 1-7t.
\end{align*}
Therefore, the excess risk is bounded by $O\left(\left(n^{-\frac{2r}{2r+\gamma}}+(n\epsilon)^{-\frac{4r}{4r+\gamma}}\right)\log^2 n\right)$ with probability $1-n^{-c}$.
\subsubsection{Proof of Lemma~\ref{lem:priv_err_lsc_first}}
We first show $\lVert L_{M,\lambda}^{\frac{1}{2}}S_M(\widetilde{\beta}-w)\rVert_{\mathcal{H}_M}\leq R_{\mathrm{lsc}}$ and $\lVert S_M \widetilde{\beta}-f_{\mathcal{H}_k}\rVert\leq  R_{\mathrm{lsc}}$. Note that these quantities can be bounded as follows:
\begin{align*}
    \lVert L_{M,\lambda}^{\frac{1}{2}}S_M(\widetilde{\beta}-w)\rVert_{\mathcal{H}_M} \leq &\lVert L_{M,\lambda}^{\frac{1}{2}}S_M (\widetilde{\beta}-\hat{\beta}_{M,\lambda})\rVert_{\mathcal{H}_M}+\lVert L_{M,\lambda}^{\frac{1}{2}}S_M (\hat{\beta}_{M,\lambda}-w)\rVert_{\mathcal{H}_M}\\
    \lVert S_M \widetilde{\beta}-f_{\mathcal{H}_k}\rVert \leq & 
    \lVert S_M (\widetilde{\beta}-\hat{\beta}_{M,\lambda})\rVert+
    \lVert S_M (\hat{\beta}_{M,\lambda}-w)\rVert+\lVert S_M w-f_{\mathcal{H}_k}\rVert\\
    =&\lVert L_M^{\frac{1}{2}} S_M (\widetilde{\beta}-\hat{\beta}_{M,\lambda})\rVert_{\mathcal{H}_M}+
    \lVert L_M^{\frac{1}{2}}S_M (\hat{\beta}_{M,\lambda}-w)\rVert_{\mathcal{H}_M}+\lVert S_M w-f_{\mathcal{H}_k}\rVert\\
     \leq &\lVert L_{M,\lambda}^{\frac{1}{2}} S_M (\widetilde{\beta}-\hat{\beta}_{M,\lambda})\rVert_{\mathcal{H}_M}+
    \lVert L_{M,\lambda}^{\frac{1}{2}}S_M (\hat{\beta}_{M,\lambda}-w)\rVert_{\mathcal{H}_M}+\lVert S_M w-f_{\mathcal{H}_k}\rVert.
\end{align*}
Thus, it suffices to show
\begin{align*}\lVert L_{M,\lambda}^{\frac{1}{2}} S_M (\widetilde{\beta}-\hat{\beta}_{M,\lambda})\rVert_{\mathcal{H}_M}+
    \lVert L_{M,\lambda}^{\frac{1}{2}}S_M (\hat{\beta}_{M,\lambda}-w)\rVert_{\mathcal{H}_M}+\lVert S_M w-f_{\mathcal{H}_k}\rVert\leq R_{\mathrm{lsc}}.
\end{align*}
We bound each term. The first term is bounded as follows: \begin{align*}
    \lVert L_{M,\lambda}^{\frac{1}{2}} S_M (\widetilde{\beta}-\hat{\beta}_{M,\lambda})\rVert_{\mathcal{H}_M} \leq & \lVert L_{M,\lambda}\rVert_{\mathcal{H}_M}^{\frac{1}{2}}\lVert S_M (\widetilde{\beta}-\hat{\beta}_{M,\lambda})\rVert_{\mathcal{H}_M}\\
    =&\lVert L_{M,\lambda}\rVert^{\frac{1}{2}}\lVert \widetilde{\beta}-\hat{\beta}_{M,\lambda}\rVert_2\\
     \leq &\sqrt{\frac{3}{2}\lVert L\rVert+\lambda}\lVert \widetilde{\beta}-\hat{\beta}_{M,\lambda}\rVert_2\\
     \leq & \sqrt{\frac{5}{2}\lVert L\rVert}\frac{\lVert b\rVert}{n\lambda}\\
     \leq & \sqrt{\frac{5}{2}\lVert L\rVert}\frac{\Delta_3\left(\sqrt{M}+\sqrt{8\log\frac{2}{t}}\right)}{n\epsilon\lambda},
\end{align*}
where the second inequality holds since $\omega_1,\ldots,\omega_M\in E_{cov}$, the third inequality holds by Lemma~\ref{lem:chaudhuri}, and the final inequality holds since $b\in E_b$.

The second term is bounded by the following lemma:
\begin{lemma}\label{lem:nonpriv_lsc}
For $\omega_{1:M}\in E_{half}\cap E_{full}\cap E_{cov}$, $(X_{1:n},Y_{1:n})\in E_{rad}$, if
\begin{align*}
    \lambda&\in [0,\lVert L\rVert\wedge1]\\
    M \geq & (1024\kappa^2 +4C^2+2C)N(\lambda)\lambda^{-(2r-1)}\log \frac{2}{t}\vee1\\
    \mathrm{err}(n,\lambda,t) \leq & R_{\mathrm{lsc}}-2^{r+\frac{1}{2}}\lambda^rR,
\end{align*}
then
\begin{eqnarray*}
    \left\lVert L_{M,\lambda}^{\frac{1}{2}}S_M (\hat{\beta}_{M,\lambda}-w)\right\rVert_{\mathcal{H}_M}<\mathrm{err}(n,\lambda,t),
\end{eqnarray*}
where
\begin{align*}
    \mathrm{err}(n,\lambda,t)=&\Bigg[\frac{5}{\mu\wedge 2} \left(4ec_1\sqrt{\frac{2N(\lambda)}{n}}+ec_1\sqrt{\frac{2\log\frac{2 e \log(R_{\mathrm{lsc}} n)}{t}}{n}}+\frac{8ec_1H \log\frac{2 e \log(R_{\mathrm{lsc}} n)}{t}}{3n\sqrt{\lambda}}\right)\\
    &+\left(\frac{2^{\frac{r+7}{2}}5^r\lVert L\rVert^{-\frac{1-r}{2}}}{\mu\wedge2}+\frac{2^{r+1}\sqrt{c_2+\mu}}{\sqrt{\mu\wedge2}}\right)\lambda^rR\Bigg]\vee\frac{1}{n}.
\end{align*}
\end{lemma}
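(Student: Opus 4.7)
The plan is to derive a quadratic inequality in the scalar
$$r \;:=\; \lVert L_{M,\lambda}^{1/2} S_M(\hat{\beta}_{M,\lambda} - w)\rVert_{\mathcal{H}_M},$$
using the decomposition $r^2 = \lVert S_M(\hat{\beta}_{M,\lambda} - w)\rVert^2 + \lambda \lVert \hat{\beta}_{M,\lambda} - w\rVert_2^2$, which lets me simultaneously control both the $L^2(P_X)$ and $\ell_2$ components. The starting point is the optimality of $\hat{\beta}_{M,\lambda}$ in the empirical regularized objective, which after adding and subtracting the population risk at $\hat\beta$ and at $w$ and expanding the regularizer yields
$$\mathcal{E}(S_M\hat{\beta}_{M,\lambda}) - \mathcal{E}(S_M w) + \tfrac{\lambda}{2}\lVert \hat{\beta}_{M,\lambda} - w\rVert_2^2 \;\leq\; (\mathcal{E}-\mathcal{E}_n)(S_M\hat{\beta}_{M,\lambda}) - (\mathcal{E}-\mathcal{E}_n)(S_M w) + \lambda \langle w,\, w - \hat{\beta}_{M,\lambda}\rangle.$$

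To bound the LHS from below I work under the hypothesis $\lVert S_M\hat{\beta}_{M,\lambda} - f_{\mathcal{H}_k}\rVert \leq R_{\mathrm{lsc}}$ (verified by bootstrap at the end). Assumption~\ref{assump:lsc} then gives $\mathcal{E}(S_M\hat{\beta}_{M,\lambda}) - \mathcal{E}(f_{\mathcal{H}_k}) \geq \tfrac{\mu}{2}\lVert S_M\hat{\beta}_{M,\lambda} - f_{\mathcal{H}_k}\rVert^2$, while the smoothness in Assumption~\ref{assump:lip} combined with $\mathbb{E}[l_{\hat{y}}(Y,f_{\mathcal{H}_k}(X))g(X)]=0$ bounds $\mathcal{E}(S_M w) - \mathcal{E}(f_{\mathcal{H}_k}) \leq \tfrac{c_2}{2}\lVert S_M w - f_{\mathcal{H}_k}\rVert^2 \leq 2^{2r-1}c_2\lambda^{2r}R^2$ exactly as in Lemma~\ref{lem:last_term} (which uses $\lVert L_{M,\lambda}^{-r}L^r\rVert \leq 2^{r+1/2}$ on $E_{half}\cap E_{full}$ when $M$ meets the stated lower bound). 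Applying $\lVert S_M\hat\beta - f_{\mathcal{H}_k}\rVert^2 \geq \tfrac{1}{2}\lVert S_M(\hat\beta - w)\rVert^2 - \lVert S_M w - f_{\mathcal{H}_k}\rVert^2$ converts this into a lower bound of the form $\tfrac{\mu\wedge 2}{4}r^2 - C\lambda^{2r}R^2$, where the $\mu\wedge 2$ factor also absorbs the $\tfrac{\lambda}{2}\lVert\hat\beta-w\rVert_2^2$ term via $r^2$.

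For the RHS I exploit $E_{rad}$: on $E_{half}\cap E_{full}\cap E_{cov}$ the operators $L_{M,\lambda}^{1/2}$ and $L_\lambda^{1/2}$ are comparable on $\mathcal{H}_M$, so $\lVert L_\lambda^{1/2}S_M(\hat\beta-w)\rVert$ lies in the regime $[n^{-1},\alpha]$ required by $E_{rad}$ (the lower cap $\tfrac{1}{n}$ built into $\mathrm{err}(n,\lambda,t)$ absorbs the edge case), and the stochastic gap is bounded by $\rho(n,\lambda,t)\cdot r$ where $\rho$ is the explicit prefactor from the definition of $E_{rad}$. The regularizer drift is handled by Cauchy--Schwarz: $\lambda\langle w, w - \hat\beta\rangle \leq \sqrt{\lambda}\lVert w\rVert_2 \cdot r$, and $\sqrt{\lambda}\lVert w\rVert_2 = O(\lambda^{r+1/2}\kappa^{2r-1}R)$ follows from the same spectral bound used in \eqref{eq:smooth_w}. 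Substituting gives
$$\tfrac{\mu\wedge 2}{4}\, r^2 \;\leq\; \bigl(\rho(n,\lambda,t) + C\,\lambda^r R\bigr)\, r \;+\; C'\,\lambda^{2r}R^2,$$
and solving the quadratic yields $r \leq \tfrac{5}{\mu\wedge 2}\rho(n,\lambda,t) + \bigl(\tfrac{2^{(r+7)/2}5^r\lVert L\rVert^{-(1-r)/2}}{\mu\wedge 2} + \tfrac{2^{r+1}\sqrt{c_2+\mu}}{\sqrt{\mu\wedge 2}}\bigr)\lambda^r R$, matching $\mathrm{err}(n,\lambda,t)$ once the explicit $\rho$ is substituted and the lower cap of $\tfrac{1}{n}$ is taken.

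The main obstacle is the bootstrap closure of the local strong convexity hypothesis. I would resolve it with a standard peeling/continuity argument: the estimator $\hat\beta_s$ obtained by replacing $\lambda$ with $s\lambda$ (or by interpolating the empirical objective from $s=0$, where $\hat\beta_0 = w$ and the bound is trivial, to $s=1$) depends continuously on $s$, and the quadratic inequality derived above implies that whenever $\lVert S_M\hat\beta_s - f_{\mathcal{H}_k}\rVert \leq R_{\mathrm{lsc}}$ one in fact has $\lVert S_M\hat\beta_s - f_{\mathcal{H}_k}\rVert \leq r + 2^{r+1/2}\lambda^r R \leq \mathrm{err}(n,\lambda,t) + 2^{r+1/2}\lambda^r R$, which is \emph{strictly} less than $R_{\mathrm{lsc}}$ by the hypothesis $\mathrm{err}(n,\lambda,t) \leq R_{\mathrm{lsc}} - 2^{r+1/2}\lambda^r R$. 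Hence the set of $s$ where the hypothesis holds is both open and closed in $[0,1]$, so it equals $[0,1]$ and the desired bound holds at $s=1$.
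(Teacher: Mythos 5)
There is a genuine gap in your handling of the regularizer drift term. You bound $\lambda\langle w,\,w-\hat{\beta}\rangle \le \sqrt{\lambda}\,\lVert w\rVert_2\cdot r$ by naive Cauchy--Schwarz, and then assert $\sqrt{\lambda}\lVert w\rVert_2 = O(\lambda^{r+1/2}\kappa^{2r-1}R)$, i.e.\ $\lVert w\rVert_2 = O(\lambda^{r-1/2}\kappa^{2r-1}R)$. But the bound in \eqref{eq:smooth_w} (and Lemma~\ref{lem:liu_prop1} with $a=0$, using $\lambda^{(r-1/2)\wedge 0}=1$ for $r\geq 1/2$) only gives $\lVert w\rVert_2 = O(\kappa^{2r-1}R)$, with no $\lambda$-dependence. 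Thus your drift term is $O(\sqrt{\lambda}\,\kappa^{2r-1}R)\cdot r$, not $O(\lambda^r R)\cdot r$; for $r>1/2$ this is strictly worse and cannot match the $\lambda^r R$ contribution in $\mathrm{err}(n,\lambda,t)$. The paper's Lemma~\ref{lem:liu_lem1} avoids this by a spectral split of the inner product: it writes $\langle S_M(\widetilde\beta - w),\,S_M w\rangle_{\mathcal H_M}$ as $\langle L_M^{r/2}S_M(\cdot),\,L_M^{-r/2}S_M w\rangle$, putting the negative power on $w$ (which lies in the range of $L^r$ by the source condition, so $\lVert L_M^{-r/2}S_M w\rVert_{\mathcal H_M}=O(\lambda^{-(1-r)/2}R)$ via Lemma~\ref{lem:liu_prop1} with $a=r/2$) and the positive power on the error (which is absorbed by the weighted norm since $\lVert L_M^{r/2}S_M(\cdot)\rVert_{\mathcal H_M}\lesssim \sqrt{\lambda}\lVert\cdot\rVert_2+\lVert S_M(\cdot)\rVert$). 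After multiplying by $\lambda$ and collecting powers, this produces exactly the $\lambda^r R\cdot r$ term needed; your Cauchy--Schwarz loses the extraction of smoothness from $w$.

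On the bootstrap closure: your continuity/open-closed argument can in principle be made to work, but the interpolation you propose is underspecified (replacing $\lambda$ by $s\lambda$ does not give $\hat\beta_0 = w$, and the alternative phrasing of ``interpolating the empirical objective'' needs a concrete definition along with a continuity-of-argmin argument). The paper instead defines a single clipped interpolant $\hat{\beta}_{M,\lambda}^c = c_0\hat{\beta}_{M,\lambda}+(1-c_0)w$ with $c_0 = \min\{c/\lVert L_{M,\lambda}^{1/2}S_M(\hat\beta_{M,\lambda}-w)\rVert_{\mathcal H_M},\,1\}$: by convexity of the empirical objective, $\hat\beta_{M,\lambda}^c$ inherits the optimality inequality; the clipping guarantees $\lVert L_{M,\lambda}^{1/2}S_M(\hat\beta_{M,\lambda}^c - w)\rVert_{\mathcal H_M}\leq c$, which together with $c \leq R_{\mathrm{lsc}}-2^{r+1/2}\lambda^r R$ places $S_M\hat\beta_{M,\lambda}^c$ inside the LSC neighborhood so the lower bound applies; the resulting quadratic inequality yields a \emph{strict} improvement $<c$, which is only consistent with the clipping factor being $1$. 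This is more direct and avoids a topological argument entirely, so you may want to adopt it rather than prove continuity of $s\mapsto\hat\beta_s$.
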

Note that, $\frac{5}{\mu\wedge2}\frac{8ec_1H \log\frac{2 e \log(R_{\mathrm{lsc}} n)}{t}}{3n\sqrt{\lambda}}\geq\frac{1}{n}$ since $\lambda\leq \frac{1600e^2c_1^2 H^2 }{9(\mu\wedge 2)^2}\log^2\frac{2 e \log(R_{\mathrm{lsc}} n)}{t}$. Thus,
\begin{align*}
    \mathrm{err}(n,\lambda,t)=&\frac{5}{\mu\wedge 2}\left(4ec_1\sqrt{\frac{2N(\lambda)}{n}}+ec_1\sqrt{\frac{2\log\frac{2 e \log(R_{\mathrm{lsc}} n)}{t}}{n}}+\frac{8ec_1H \log\frac{2 e \log(R_{\mathrm{lsc}} n)}{t}}{3n\sqrt{\lambda}}\right)\\
    &+\left(\frac{2^{\frac{r+7}{2}}5^r\lVert L\rVert^{-\frac{1-r}{2}}}{\mu\wedge2}+\frac{2^{r+1}\sqrt{c_2+\mu}}{\sqrt{\mu\wedge2}}\right)\lambda^rR.
\end{align*}
Also,
\begin{align*}
    \mathrm{err}(n,\lambda,t)+2^{r+\frac{1}{2}}\lambda^rR \leq &\frac{5}{\mu\wedge2}\left(4ec_1\sqrt{\frac{2\kappa^2\lambda^{-1}}{n}}+ec_1\sqrt{\frac{2\log\frac{2 e \log(R_{\mathrm{lsc}} n)}{t}}{n}}+\frac{8ec_1H \log\frac{2 e \log(R_{\mathrm{lsc}} n)}{t}}{3n\sqrt{\lambda}}\right)\\
    &+\left(2^{r+\frac{1}{2}}+\frac{2^{\frac{r+7}{2}}5^r\lVert L\rVert^{-\frac{1-r}{2}}}{\mu\wedge2}+\frac{2^{r+1}\sqrt{c_2+\mu}}{\sqrt{\mu\wedge2}}\right)\lambda^rR\\
    =&\frac{5}{\mu\wedge2}\left(\frac{4\sqrt{2}\kappa+(1+4H\sqrt{2}n^{-1/2}/3)\sqrt{2\log\frac{2e\log (R_{\mathrm{lsc}} n)}{t}}}{\sqrt{n}}\right)\frac{ec_1}{\sqrt{\lambda}}\\
    &+\left(2^{r+\frac{1}{2}}+\frac{2^{\frac{r+7}{2}}5^r\lVert L\rVert^{-\frac{1-r}{2}}}{\mu\wedge2}+\frac{2^{r+1}\sqrt{c_2+\mu}}{\sqrt{\mu\wedge2}}\right)\lambda^rR\\
     \leq &\frac{2}{3}R_{\mathrm{lsc}}\\
    <&R_{\mathrm{lsc}}
\end{align*}
since
\begin{align*}
    \lambda\geq\frac{450e^2c_1^2}{(\mu\wedge2)^2R_{\mathrm{lsc}}^2}\left(\frac{32\kappa^2+2(1+4H\sqrt{2}n^{-1/2}/3)^2\log\frac{2e\log (R_{\mathrm{lsc}} n)}{t}}{n}\right)
\end{align*}
implies
\begin{align*}
  \frac{5}{\mu\wedge2}\left(\frac{4\sqrt{2}\kappa+(1+4H\sqrt{2}n^{-1/2}/3)\sqrt{2\log\frac{2e\log (R_{\mathrm{lsc}} n)}{t}}}{\sqrt{n}}\right)\frac{ec_1}{\sqrt{\lambda}}\leq\frac{R_{\mathrm{lsc}}}{3},
\end{align*}
and
\begin{align*}
\lambda\leq \left(\frac{R_{\mathrm{lsc}}}{3R}\right)^{1/r}\left(2^{r+\frac{1}{2}}+\frac{2^{\frac{r+7}{2}}5^r\lVert L\rVert^{-\frac{1-r}{2}}}{\mu\wedge2}+\frac{2^{r+1}\sqrt{c_2+\mu}}{\sqrt{\mu\wedge2}}\right)^{-1/r}
\end{align*}
implies
\begin{align*}
    \left(2^{r+\frac{1}{2}}+\frac{2^{\frac{r+7}{2}}5^r\lVert L\rVert^{-\frac{1-r}{2}}}{\mu\wedge2}+\frac{2^{r+1}\sqrt{c_2+\mu}}{\sqrt{\mu\wedge2}}\right)\lambda^rR\leq\frac{R_{\mathrm{lsc}}}{3}.
\end{align*}
Thus, the second term is bounded as
\begin{align*}
    \left\lVert L_{M,\lambda}^{\frac{1}{2}}S_M (\hat{\beta}_{M,\lambda}-w)\right\rVert_{\mathcal{H}_M} \leq & \frac{5}{\mu\wedge 2}\Bigg(4ec_1\sqrt{\frac{2N(\lambda)}{n}}+ec_1\sqrt{\frac{2\log\frac{2 e \log(R_{\mathrm{lsc}} n)}{t}}{n}}\\
    &+\frac{8ec_1H \log\frac{2 e \log(R_{\mathrm{lsc}} n)}{t}}{3n\sqrt{\lambda}}\Bigg)\\
    &+\left(\frac{2^{\frac{r+7}{2}}5^r\lVert L\rVert^{-\frac{1-r}{2}}}{\mu\wedge2}+\frac{2^{r+1}\sqrt{c_2+\mu}}{\sqrt{\mu\wedge2}}\right)\lambda^rR.
\end{align*}
The final term is bounded in the proof of Lemma~\ref{lem:last_term} as
\begin{align*}
    \lVert S_M w-f_{\mathcal{H}_k}\rVert \leq & 2^{r+\frac{1}{2}}\lambda^r R.
\end{align*}
Thus,
\begin{align*}
    &\lVert L_{M,\lambda}^{\frac{1}{2}} S_M (\widetilde{\beta}-\hat{\beta}_{M,\lambda})\rVert_{\mathcal{H}_M}+
    \lVert L_{M,\lambda}^{\frac{1}{2}}S_M (\hat{\beta}_{M,\lambda}-w)\rVert_{\mathcal{H}_M}+\lVert S_M w-f_{\mathcal{H}_k}\rVert\\
     \leq &\sqrt{\frac{3}{2}\lVert L\rVert}\frac{\Delta_3\left(\sqrt{M}+\sqrt{8\log\frac{2}{t}}\right)}{n\lambda\epsilon}+\mathrm{err}(n,\lambda,t)+2^{r+\frac{1}{2}}\lambda^r R \\
     \leq &\sqrt{\frac{3}{2}\lVert L\rVert}\frac{\Delta_3\left(\sqrt{M}+\sqrt{8\log\frac{2}{t}}\right)}{n\lambda\epsilon}\\
    &+\frac{5}{\mu\wedge2}\left(\frac{4\sqrt{2}\kappa+(1+4H\sqrt{2}n^{-1/2}/3)\sqrt{2\log\frac{2e\log (R_{\mathrm{lsc}} n)}{t}}}{\sqrt{n}}\right)\frac{ec_1}{\sqrt{\lambda}}\\
    &+\left(2^{r+\frac{1}{2}}+\frac{2^{\frac{r+7}{2}}5^r\lVert L\rVert^{-\frac{1-r}{2}}}{\mu\wedge2}+\frac{2^{r+1}\sqrt{c_2+\mu}}{\sqrt{\mu\wedge2}}\right)\lambda^rR\\
     \leq &R_{\mathrm{lsc}}
\end{align*}
since
\begin{align*}
    \lambda\geq\sqrt{\frac{27}{2}\lVert L\rVert}\frac{\Delta_3\left(\sqrt{M}+\sqrt{8\log\frac{2}{t}}\right)}{R_{\mathrm{lsc}}n\epsilon},
\end{align*}
implies
\begin{align*}
    \sqrt{\frac{3}{2}\lVert L\rVert}\frac{\Delta_3\left(\sqrt{M}+\sqrt{8\log\frac{2}{t}}\right)}{n\lambda\epsilon}\leq\frac{R_{\mathrm{lsc}}}{3}.
\end{align*}
By the definitions of $\widetilde{\beta}$ and $\hat{\beta}_{M,\lambda}$, we have
\begin{align*}
\mathcal{E}_n(S_M\widetilde{\beta})+\frac{\lambda}{2}\lVert\widetilde{\beta}\rVert_2^2+\frac{b^\top\widetilde{\beta}}{n} \leq &\mathcal{E}_n(S_M w)+\frac{\lambda}{2}\lVert w\rVert_2^2+\frac{b^\top w}{n},
\end{align*}
which can be represented as below:
\begin{align}
\mathcal{E}(S_M\widetilde{\beta})-\mathcal{E}(S_M w) \leq &\left[\frac{\lambda}{2}(\left\lVert w\right\rVert_2^2-\lVert\widetilde{\beta}\rVert_2^2)+\frac{b^\top(\widetilde{\beta}-\hat{\beta}_{M,\lambda})}{n}\right]\notag\\
&+(\mathcal{E}-\mathcal{E}_n)(S_M\widetilde{\beta})-(\mathcal{E}-\mathcal{E}_n)(S_M w).\label{eq:priv_err_lsc_first}
\end{align}
We bound each term of \eqref{eq:priv_err_lsc_first}.

The first term can be bounded as follows.
\begin{align*}
&\frac{\lambda}{2}\left\lVert w\right\rVert_2^2-\frac{\lambda}{2}\lVert \widetilde{\beta}\rVert_2^2\\
\leq&-\frac{\lambda}{2}\left\lVert \widetilde{\beta}- w\right\rVert_2^2+ \lambda\langle S_Mw,S_M(w-\widetilde{\beta})\rangle_{\mathcal{H}_M}\\
=&-\frac{\lambda}{2}\left\lVert \widetilde{\beta}- w\right\rVert_2^2+ \lambda\langle L_{M,\lambda}^{-r/2}S_Mw,L_{M,\lambda}^{r/2}S_M(w-\widetilde{\beta})\rangle_{\mathcal{H}_M}\\
\leq&-\frac{\lambda}{2}\left\lVert \widetilde{\beta}- w\right\rVert_2^2+ \sqrt{2}\lambda^{\frac{1+r}{2}}\lVert L_M^{-r/2}S_Mw\rVert_{\mathcal{H}_M}\lVert L_{M,\lambda}^{1/2}S_M(w-\widetilde{\beta})\rVert_{\mathcal{H}_M},
\end{align*}
where the last inequality holds since
\begin{align*}
    \lambda^{\frac{1-r}{2}}L_M^{r/2}\preceq(1-r)\lambda^{1/2}+rL_M^{1/2}\leq \sqrt{2}L_{M,\lambda}^{1/2}.
\end{align*}
Also,
\begin{align*}
\left\lVert L_M^{-\frac{r}{2}}S_M w\right\rVert_{\mathcal{H}_M}\leq&\lambda^{-\frac{1-r}{2}}(\lVert L_M\rVert+\lambda)^r\lVert L_M\rVert^{-\frac{1+r}{2}}\lVert L_{M,\lambda}^{-r}L^r\rVert R\\
    \leq&\lambda^{-\frac{1-r}{2}}\left(\frac{5}{2}\lVert L\rVert\right)^r\left(\frac{\lVert L\rVert}{2}\right)^{-\frac{1+r}{2}}\lVert L_{M,\lambda}^{-r}L^r\rVert R\\
    \leq&2^{-\frac{r}{2}}5^r\lVert L\rVert^{-\frac{1-r}{2}}\lambda^{-\frac{1-r}{2}}R.
\end{align*}
The first inequality holds by Lemma~\ref{lem:liu_prop1}. The second inequality holds since $\omega_1,\ldots,\omega_M\in E_{cov}$ and $\lambda\in [0,\lVert L\rVert]$. The final inequality holds by Lemma~\ref{lem:liu_lem6}, and by the fact that $\omega_1,\ldots,\omega_M\in E_{half}\cap E_{full}$, and $M\geq (1024\kappa^2 N(\lambda)+4C^2+2C)\lambda^{-(2r-1)}\log \frac{2}{t}\vee1$.

Thus, we have
\begin{align}
&\frac{\lambda}{2}\lVert w\rVert_2^2-\frac{\lambda}{2}\lVert\widetilde{\beta}\rVert_2^2\notag\\
     \leq &-\frac{\lambda}{2}\lVert\widetilde{\beta}-w\rVert_2^2\notag\\
    &+2^{-\frac{r}{2}}5^r\lVert L\rVert^{-\frac{1-r}{2}}\lambda^rR\left(\sqrt{\lambda}\left\lVert w-\widetilde{\beta}\right\rVert_2+\left\lVert S_M (w-\widetilde{\beta})\right\rVert\right)\label{eq:priv_err_lsc_first_first}.
\end{align}
Next, we bound the second term of \eqref{eq:priv_err_lsc_first}:
\begin{align}
    \frac{b^\top(\widetilde{\beta}-\hat{\beta}_{M,\lambda})}{n} \leq & \frac{\lVert b\rVert^2}{n\lambda}\notag\\
     \leq &\left(\frac{\Delta_3\left(\sqrt{M}+\sqrt{8\log\frac{2}{t}}\right)}{n\epsilon\sqrt{\lambda}}\right)^2\label{eq:priv_err_lsc_first_second}.
\end{align}
Finally, we bound the last term using the following inequality:
\begin{align}
    &|(\mathcal{E}-\mathcal{E}_n)(S_M\widetilde{\beta})-(\mathcal{E}-\mathcal{E}_n)(S_M w)|\notag\\
     \leq &\left(4ec_1\sqrt{\frac{2N(\lambda)}{n}}+ec_1\sqrt{\frac{2\log\frac{2 e \log(R_{\mathrm{lsc}} n)}{t}}{n}}+\frac{8ec_1H \log\frac{2 e \log(R_{\mathrm{lsc}} n)}{t}}{3n\sqrt{\lambda}}\right)\left\lVert L_{M,\lambda}^{\frac{1}{2}}S_M(\widetilde{\beta}-w)\right\rVert_{\mathcal{H}_M},\label{eq:priv_err_lsc_first_third}
\end{align}
which holds since $\left\lVert L_{M,\lambda}^{\frac{1}{2}}S_M(\widetilde{\beta}-w)\right\rVert_{\mathcal{H}_M}\in[n^{-1},R_{\mathrm{lsc}}]$ and $(X_{1:n},Y_{1:n})\in E_{rad}$.

Next, we establish a lower bound of the left-hand-side of \eqref{eq:priv_err_lsc_first}:
\begin{align}
\mathcal{E}(S_M \widetilde{\beta})-\mathcal{E}(S_M w)=&\mathcal{E}(S_M \widetilde{\beta})-\mathcal{E}(f_{\mathcal{H}_k})+\mathcal{E}(f_{\mathcal{H}_k})-\mathcal{E}(S_M w)\notag\\
 \geq &\frac{\mu}{2}\left\lVert S_M \widetilde{\beta}-f_{\mathcal{H}_k}\right\rVert^2-\frac{c_2}{2}\lVert S_M w-f_{\mathcal{H}_k}\rVert^2\notag\\
 \geq &\frac{\mu}{4}\left\lVert S_M (\widetilde{\beta}-w)\right\rVert^2-\frac{c_2+\mu}{2}\lVert S_M w-f_{\mathcal{H}_k}\rVert^2\notag\\
 \geq &\frac{\mu}{4}\left\lVert S_M (\widetilde{\beta}-w)\right\rVert^2-2^{2r}(c_2+\mu)\lambda^{2r}R^2\label{eq:priv_lsc_lhs},
\end{align}
where the first inequality holds since $\lVert S_M \widetilde{\beta}-f_{\mathcal{H}_k}\rVert=\lVert PS_M \widetilde{\beta}-f_{\mathcal{H}_k}\rVert\leq R_{\mathrm{lsc}}$ and $S_M \widetilde{\beta}=PS_M\widetilde{\beta}$ almost surely, the second inequality holds since $\left\lVert a_1+a_2\right\rVert^2\geq\frac{1}{2}\left\lVert a_1\right\rVert^2-\left\lVert a_2\right\rVert^2$ for $a_1,a_2\in L^2(P_X)$, and the last inequality holds since $\frac{c_2}{2}\lVert S_M w-f_{\mathcal{H}_k}\rVert^2\leq c_22^{2r}\lambda^{2r}R^2$ was shown in the proof of Lemma~\ref{lem:last_term}.

Combining \eqref{eq:priv_err_lsc_first}, \eqref{eq:priv_lsc_lhs}, \eqref{eq:priv_err_lsc_first_first}, \eqref{eq:priv_err_lsc_first_second}, and \eqref{eq:priv_err_lsc_first_third},
we have
\begin{align*}
&\frac{\mu}{4}\left\lVert S_M (\widetilde{\beta}-w)\right\rVert^2+\frac{\lambda}{2}\left\lVert\widetilde{\beta}-w\right\rVert_2^2\\
 \leq &2^{2r}(c_2+\mu)\lambda^{2r}R^2\\
&+\left(\frac{\Delta_3\left(\sqrt{M}+\sqrt{8\log\frac{2}{t}}\right)}{n\epsilon\sqrt{\lambda}}\right)^2\\
&+2^{-\frac{r}{2}}5^r\lVert L\rVert^{-\frac{1-r}{2}}\lambda^rR\left(\sqrt{\lambda}\left\lVert w-\widetilde{\beta}\right\rVert_2+\left\lVert S_M (w-\widetilde{\beta})\right\rVert\right)\\
&+\left(4ec_1\sqrt{\frac{2N(\lambda)}{n}}+ec_1\sqrt{\frac{2\log\frac{2 e \log(R_{\mathrm{lsc}} n)}{t}}{n}}+\frac{8ec_1H \log\frac{2 e \log(R_{\mathrm{lsc}} n)}{t}}{3n\sqrt{\lambda}}\right)\left\lVert L_{M,\lambda}^{\frac{1}{2}}S_M(\widetilde{\beta}-w)\right\rVert_{\mathcal{H}_M}.
\end{align*}
Also, note that
\begin{align*}
    &\frac{\mu}{4}\left\lVert S_M (\widetilde{\beta}-w)\right\rVert^2+\frac{\lambda}{2}\left\lVert\widetilde{\beta}-w\right\rVert_2^2\\
     \geq &\frac{\mu\wedge 2}{4}\left\lVert S_M (\widetilde{\beta}-w)\right\rVert^2+\frac{\mu\wedge2}{4}\lambda\left\lVert S_M (w-\widetilde{\beta})\right\rVert_{\mathcal{H}_M}^2\notag\\
    =&\frac{\mu\wedge2}{4}\left(\left\lVert S_M (\widetilde{\beta}-w)\right\rVert^2+\lambda\left\lVert S_M (w-\widetilde{\beta})\right\rVert_{\mathcal{H}_M}^2\right)\\
    =&\frac{\mu\wedge2}{4}\left(\left\lVert L_M^{\frac{1}{2}}S_M (\widetilde{\beta}-w)\right\rVert_{\mathcal{H}_M}^2+\lambda\left\lVert S_M (w-\widetilde{\beta})\right\rVert_{\mathcal{H}_M}^2\right)\\
    =&\frac{\mu\wedge2}{4}\left\lVert L_{M,\lambda}^{\frac{1}{2}}S_M (\widetilde{\beta}-w)\right\rVert_{\mathcal{H}_M}^2,
\end{align*}
and
\begin{align*}
    &\sqrt{\lambda}\left\lVert w-\widetilde{\beta}\right\rVert_2+\left\lVert S_M (w-\widetilde{\beta})\right\rVert
    \\ \leq &\sqrt{2\left(\lambda\left\lVert S_M (w-\widetilde{\beta})\right\rVert_{\mathcal{H}_M}^2+\left\lVert S_M (\widetilde{\beta}-w)\right\rVert^2\right)}\\
    =&\sqrt{2}\left\lVert L_{M,\lambda}^{\frac{1}{2}}S_M (\widetilde{\beta}-w)\right\rVert_{\mathcal{H}_M}.
\end{align*}
Thus, we have
\begin{align*}
    &\frac{\mu\wedge2}{4}\left\lVert L_{M,\lambda}^{\frac{1}{2}}S_M (\widetilde{\beta}-w)\right\rVert_{\mathcal{H}_M}^2\\
     \leq & 2^{2r}(c_2+\mu)\lambda^{2r}R^2\\
    &+\left(\frac{\Delta_3\left(\sqrt{M}+\sqrt{8\log\frac{2}{t}}\right)}{n\epsilon\sqrt{\lambda}}\right)^2\\
    &+2^{\frac{1-r}{2}}5^r\lVert L\rVert^{-\frac{1-r}{2}}\lambda^rR\left\lVert L_{M,\lambda}^{\frac{1}{2}}S_M (\widetilde{\beta}-w)\right\rVert_{\mathcal{H}_M}\\
    &+\left(4ec_1\sqrt{\frac{2N(\lambda)}{n}}+ec_1\sqrt{\frac{2\log\frac{2 e \log(R_{\mathrm{lsc}} n)}{t}}{n}}+\frac{8ec_1H \log\frac{2 e \log(R_{\mathrm{lsc}} n)}{t}}{3n\sqrt{\lambda}}\right)\left\lVert L_{M,\lambda}^{\frac{1}{2}}S_M(\widetilde{\beta}-w)\right\rVert_{\mathcal{H}_M}.
\end{align*}
Using $x^2\leq ax+b\Rightarrow x\leq a+\sqrt{b}$, we obtain the desired bound:
\begin{align*}
    &\left\lVert L_{M,\lambda}^{\frac{1}{2}}S_M (\widetilde{\beta}-w)\right\rVert_{\mathcal{H}_M}\\
     \leq & \frac{4}{\mu\wedge 2}\left(4ec_1\sqrt{\frac{2N(\lambda)}{n}}+ec_1\sqrt{\frac{2\log\frac{2 e \log(R_{\mathrm{lsc}} n)}{t}}{n}}+\frac{8ec_1H \log\frac{2 e \log(R_{\mathrm{lsc}} n)}{t}}{3n\sqrt{\lambda}}\right)\\
    &+\frac{\Delta_3\left(\sqrt{M}+\sqrt{8\log\frac{2}{t}}\right)}{n\epsilon\sqrt{\lambda}}\\
    &+\left(\frac{2^{\frac{5-r}{2}}5^r\lVert L\rVert^{-\frac{1-r}{2}}}{\mu\wedge2}+\frac{2^{r+1}\sqrt{c_2+\mu}}{\sqrt{\mu\wedge2}}\right)\lambda^rR.
\end{align*}
\subsubsection{Proof of Lemma~\ref{lem:nonpriv_lsc}}
Let
\begin{align*}
    c=&\Bigg[\frac{5}{\mu\wedge 2}\left(4ec_1\sqrt{\frac{2N(\lambda)}{n}}+ec_1\sqrt{\frac{2\log\frac{2 e \log(R_{\mathrm{lsc}} n)}{t}}{n}}+\frac{8ec_1H \log\frac{2 e \log(R_{\mathrm{lsc}} n)}{t}}{3n\sqrt{\lambda}}\right)\\
    &+\left(\frac{2^{15/2-3r}5^r\lVert L\rVert^{-\frac{1-r}{2}}}{\mu\wedge2}+\frac{2^{7/2-r}\sqrt{c_2+\mu}}{\sqrt{\mu\wedge2}}\right)\lambda^rR\Bigg]\vee\frac{1}{n}.
\end{align*}
Define
\begin{align*}
\hat{\beta}^c_\lambda&:=\left(\frac{c}{\left\lVert L_{M,\lambda}^{\frac{1}{2}} S_M (\hat{\beta}_{M,\lambda}-w)\right\rVert_{\mathcal{H}_M}}\wedge1\right)\hat{\beta}_{M,\lambda}\\
&+\left(1-\frac{c}{\left\lVert L_{M,\lambda}^{\frac{1}{2}}S_M (\hat{\beta}_{M,\lambda}-w)\right\rVert_{\mathcal{H}_M}}\wedge1\right)w.
\end{align*}
We claim $\left\lVert L_{M,\lambda}^{\frac{1}{2}} S_M (\hat{\beta}_{M,\lambda}^c-w)\right\rVert_{\mathcal{H}_M}< c$, which implies $\left\lVert L_{M,\lambda}^{\frac{1}{2}} S_M (\hat{\beta}_{M,\lambda}-w)\right\rVert_{\mathcal{H}_M}<c$.

We assume $\left\lVert L_{M,\lambda}^{\frac{1}{2}} S_M (\hat{\beta}_{M,\lambda}^c-w)\right\rVert_{\mathcal{H}_M}\geq \frac{1}{n}$, as the claim holds trivially otherwise. Also, $c\leq R_{\mathrm{lsc}}-2^{r+\frac{1}{2}}\lambda^rR$ implies $\left\lVert L_{M,\lambda}^{\frac{1}{2}} S_M (\hat{\beta}_{M,\lambda}^c-w)\right\rVert_{\mathcal{H}_M}\in [n^{-1},R_{\mathrm{lsc}}]$.

Definition of $\hat{\beta}_{M,\lambda}$ implies
\begin{eqnarray*}
    \mathcal{E}_n(S_M\hat{\beta}_{M,\lambda})+\frac{\lambda}{2}\lVert \hat{\beta}_{M,\lambda}\rVert_2^2\leq\mathcal{E}_n(S_M w)+\frac{\lambda}{2}\left\lVert w\right\rVert_2^2.
\end{eqnarray*}
This implies
\begin{equation*}
    \mathcal{E}_n(S_M\hat{\beta}_{M,\lambda}^c)+\frac{\lambda}{2}\lVert \hat{\beta}_{M,\lambda}^c\rVert_2^2\leq\mathcal{E}_n(S_M w)+\frac{\lambda}{2}\left\lVert w\right\rVert_2^2
\end{equation*}
by the convexity of the map $\beta\mapsto\mathcal{E}_n(S_M \beta)+\frac{\lambda}{2}\left\lVert\beta\right\rVert_2^2$. As in the proof of Theorem~\ref{thm:dprp_cls_smooth}, we obtain the upper bound of the excess risk as follows:
\begin{align}
    \mathcal{E}(S_M\hat{\beta}_{M,\lambda}^c)-\mathcal{E}(S_M w) \leq &\left[\frac{\lambda}{2}\left\lVert w\right\rVert_2^2-\frac{\lambda}{2}\lVert \hat{\beta}_{M,\lambda}^c\rVert_2^2\right]\notag\\
    &+\left[(\mathcal{E}-\mathcal{E}_n)(S_M\hat{\beta}_{M,\lambda}^c)-(\mathcal{E}-\mathcal{E}_n)(S_M w)\right]\label{eq:nonpriv_err_lsc_first}.
\end{align}
The first term can be bounded by Lemma~\ref{lem:liu_lem1}:
\begin{align*}
&\frac{\lambda}{2}\left\lVert w\right\rVert_2^2-\frac{\lambda}{2}\lVert \hat{\beta}_{M,\lambda}^c\rVert_2^2\\
    \leq&-\frac{\lambda}{2}\left\lVert \hat{\beta}_{M,\lambda}^c- w\right\rVert_2^2\\
    &+ \lambda^{\frac{1+r}{2}}\left\lVert L_M^{-\frac{r}{2}}S_M w\right\rVert_{\mathcal{H}_M}\left(\sqrt{\lambda}\left\lVert w-\hat{\beta}_{M,\lambda}^c\right\rVert_2+\left\lVert S_M (w-\hat{\beta}_{M,\lambda}^c)\right\rVert\right).
\end{align*}
Also,
\begin{align*}
\left\lVert L_M^{-\frac{r}{2}}S_M w\right\rVert_{\mathcal{H}_M}\leq&\lambda^{-\frac{1-r}{2}}(\lVert L_M\rVert+\lambda)^r\lVert L_M\rVert^{-\frac{1+r}{2}}\lVert L_{M,\lambda}^{-r}L^r\rVert R\\
    \leq&\lambda^{-\frac{1-r}{2}}\left(\frac{5}{2}\lVert L\rVert\right)^r\left(\frac{\lVert L\rVert}{2}\right)^{-\frac{1+r}{2}}\lVert L_{M,\lambda}^{-r}L^r\rVert R\\
    =&2^{\frac{1-r}{2}}5^r\lVert L\rVert^{-\frac{1-r}{2}}\lambda^{-\frac{1-r}{2}}\lVert L_{M,\lambda}^{-r}L^r\rVert R\\
    \leq&2^{\frac{r+2}{2}}5^r\lVert L\rVert^{-\frac{1-r}{2}}\lambda^{-\frac{1-r}{2}}R.
\end{align*}
The first inequality holds by Lemma~\ref{lem:liu_prop1}. The second inequality holds since $\omega_1,\ldots,\omega_M\in E_{cov}$ and $\lambda\in [0,\lVert L\rVert]$. The final inequality holds by Lemma~\ref{lem:liu_lem6}, which gives $\lVert L_{M,\lambda}^{-r}L_\lambda^r\rVert\leq 2^{r+\frac{1}{2}}$.

Thus, we have
\begin{align}
&\frac{\lambda}{2}\lVert w\rVert_2^2-\frac{\lambda}{2}\lVert\hat{\beta}_{M,\lambda}^c\rVert_2^2\notag\\
     \leq &-\frac{\lambda}{2}\lVert\hat{\beta}_{M,\lambda}^c-w\rVert_2^2\notag\\
    &+2^{\frac{r+2}{2}}5^r\lVert L\rVert^{-\frac{1-r}{2}}\lambda^rR\left(\sqrt{\lambda}\left\lVert w-\hat{\beta}_{M,\lambda}^c\right\rVert_2+\left\lVert S_M (w-\hat{\beta}_{M,\lambda}^c)\right\rVert\right)\label{eq:nonpriv_err_lsc_first_first}.
\end{align}
Next, we bound the second term of \eqref{eq:nonpriv_err_lsc_first}.
\begin{align}
    &|(\mathcal{E}-\mathcal{E}_n)(S_M\hat{\beta}_{M,\lambda}^c)-(\mathcal{E}-\mathcal{E}_n)(S_M w)|\notag\\
     \leq &\left(4ec_1\sqrt{\frac{2N(\lambda)}{n}}+ec_1\sqrt{\frac{2\log\frac{2 e \log(R_{\mathrm{lsc}} n)}{t}}{n}}+\frac{8ec_1H \log\frac{2 e \log(R_{\mathrm{lsc}} n)}{t}}{3n\sqrt{\lambda}}\right)\lVert L_{M,\lambda}^{\frac{1}{2}}(f-g)\rVert_{\mathcal{H}_M}\label{eq:nonpriv_err_lsc_first_second}
\end{align}
since $\left\lVert L_{M,\lambda}^{\frac{1}{2}}S_M (\hat{\beta}_{M,\lambda}^c-w)\right\rVert_{\mathcal{H}_M}\in[n^{-1},R_{\mathrm{lsc}}]$ and $(X_{1:n},Y_{1:n})\in E_{rad}$.

Next, we apply Assumption~\ref{assump:lsc} to provide a lower bound of $\mathcal{E}(S_M w)-\mathcal{E}(S_M \hat{\beta}_{M,\lambda}^c)$. To do so, we show that $\left\lVert S_M \hat{\beta}_{M,\lambda}^c-f_{\mathcal{H}_k}\right\rVert\leq R_{\textrm{lsc}}$. From the proof of Lemma~\ref{lem:last_term}, we have
\begin{equation*}
    \lVert S_M w-f_{\mathcal{H}_k}\rVert\leq 2^{r+\frac{1}{2}}\lambda^r R.
\end{equation*}
Thus,
\begin{align*}
\lVert S_M\hat{\beta}_{M,\lambda}^c-f_{\mathcal{H}_k}\rVert\leq&
    \lVert S_M (\hat{\beta}_{M,\lambda}^c-w)\rVert+\lVert S_M w-f_{\mathcal{H}_k}\rVert\\
    =&
    \lVert L_M^{\frac{1}{2}}S_M (\hat{\beta}_{M,\lambda}^c-w)\rVert_{\mathcal{H}_M}+\lVert S_M w-f_{\mathcal{H}_k}\rVert\\
    \leq&c+2^{r+\frac{1}{2}}\lambda^r R\\
    \leq&R_{\mathrm{lsc}}.
\end{align*}
Now we can lower bound $\mathcal{E}(S_M w)-\mathcal{E}(S_M \hat{\beta}_{M,\lambda}^c)$ using Assumption~\ref{assump:lsc}.
\begin{align}
\mathcal{E}(S_M \hat{\beta}_{M,\lambda}^c)-\mathcal{E}(S_M w)=&\mathcal{E}(S_M \hat{\beta}_{M,\lambda}^c)-\mathcal{E}(f_{\mathcal{H}_k})+\mathcal{E}(f_{\mathcal{H}_k})-\mathcal{E}(S_M w)\notag\\
 \geq &\frac{\mu}{2}\left\lVert S_M \hat{\beta}_{M,\lambda}^c-f_{\mathcal{H}_k}\right\rVert^2-\frac{c_2}{2}\lVert S_M w-f_{\mathcal{H}_k}\rVert^2\notag\\
 \geq &\frac{\mu}{4}\left\lVert S_M (\hat{\beta}_{M,\lambda}^c-w)\right\rVert^2-\frac{c_2+\mu}{2}\lVert S_M w-f_{\mathcal{H}_k}\rVert^2\notag\\
 \geq &\frac{\mu}{4}\left\lVert S_M (\hat{\beta}_{M,\lambda}^c-w)\right\rVert^2-2^{2r}(c_2+\mu)\lambda^{2r}R^2,\label{eq:nonpriv_lsc_lhs}
\end{align}
where the last inequality is from $\left\lVert a+b\right\rVert^2\geq\frac{1}{2}\left\lVert a\right\rVert^2-\left\lVert b\right\rVert^2$.
Combining \eqref{eq:nonpriv_err_lsc_first}, \eqref{eq:nonpriv_lsc_lhs}, \eqref{eq:nonpriv_err_lsc_first_first}, and \eqref{eq:nonpriv_err_lsc_first_second} we get
\begin{align}
&\frac{\mu}{4}\left\lVert S_M (\hat{\beta}_{M,\lambda}^c-w)\right\rVert^2+\frac{\lambda}{2}\left\lVert\hat{\beta}_{M,\lambda}^c-w\right\rVert_2^2\notag\\
 \leq &2^{2r}(c_2+\mu)\lambda^{2r}R^2\notag\\
&+2^{\frac{r+2}{2}}5^r\lVert L\rVert^{-\frac{1-r}{2}}\lambda^rR\left(\sqrt{\lambda}\left\lVert w-\hat{\beta}_{M,\lambda}^c\right\rVert_2+\left\lVert S_M (w-\hat{\beta}_{M,\lambda}^c)\right\rVert\right)\notag\\
&+\left(4ec_1\sqrt{\frac{2N(\lambda)}{n}}+ec_1\sqrt{\frac{2\log\frac{2 e \log(R_{\mathrm{lsc}} n)}{t}}{n}}+\frac{8ec_1H \log\frac{2 e \log(R_{\mathrm{lsc}} n)}{t}}{3n\sqrt{\lambda}}\right)\\
&\cdot\left\lVert L_{M,\lambda}^{\frac{1}{2}}S_M (\hat{\beta}_{M,\lambda}^c-w)\right\rVert_{\mathcal{H}_M}\notag.
\end{align}
\newpage
Also, note that
\begin{align*}
    &\frac{\mu}{4}\left\lVert S_M (\hat{\beta}_{M,\lambda}^c-w)\right\rVert^2+\frac{\lambda}{2}\left\lVert\hat{\beta}_{M,\lambda}^c-w\right\rVert_2^2\\
     \geq &\frac{\mu\wedge 2}{4}\left\lVert S_M (\hat{\beta}_{M,\lambda}^c-w)\right\rVert^2+\frac{\mu\wedge2}{4}\lambda\left\lVert S_M (w-\hat{\beta}_{M,\lambda}^c)\right\rVert_{\mathcal{H}_M}^2\notag\\
    =&\frac{\mu\wedge2}{4}\left(\left\lVert S_M (\hat{\beta}_{M,\lambda}^c-w)\right\rVert^2+\lambda\left\lVert S_M (w-\hat{\beta}_{M,\lambda}^c)\right\rVert_{\mathcal{H}_M}^2\right)\\
    =&\frac{\mu\wedge2}{4}\left(\left\lVert L_M^{\frac{1}{2}}S_M (\hat{\beta}_{M,\lambda}^c-w)\right\rVert_{\mathcal{H}_M}^2+\lambda\left\lVert S_M (w-\hat{\beta}_{M,\lambda}^c)\right\rVert_{\mathcal{H}_M}^2\right)\\
    =&\frac{\mu\wedge2}{4}\left\lVert L_{M,\lambda}^{\frac{1}{2}}S_M (\hat{\beta}_{M,\lambda}^c-w)\right\rVert_{\mathcal{H}_M}^2,
\end{align*}
and
\begin{align*}
    &\sqrt{\lambda}\left\lVert w-\hat{\beta}_{M,\lambda}^c\right\rVert_2+\left\lVert S_M (w-\hat{\beta}_{M,\lambda}^c)\right\rVert
    \\ \leq &\sqrt{2\left(\left\lVert S_M (\hat{\beta}_{M,\lambda}^c-w)\right\rVert^2+\lambda\left\lVert S_M (w-\hat{\beta}_{M,\lambda}^c)\right\rVert_{\mathcal{H}_M}^2\right)}\\
    =&\sqrt{2}\left\lVert L_{M,\lambda}^{\frac{1}{2}}S_M (\hat{\beta}_{M,\lambda}^c-w)\right\rVert_{\mathcal{H}_M}.
\end{align*}
Combining the above, we obtain
\begin{align*}
    &\frac{\mu\wedge2}{4}\left\lVert L_{M,\lambda}^{\frac{1}{2}}S_M (\hat{\beta}_{M,\lambda}^c-w)\right\rVert_{\mathcal{H}_M}^2\\
     \leq & 2^{2r}(c_2+\mu)\lambda^{2r}R^2\\
    &+2^{\frac{r+3}{2}}5^r\lVert L\rVert^{-\frac{1-r}{2}}\lambda^rR\left\lVert L_{M,\lambda}^{\frac{1}{2}}S_M (\hat{\beta}_{M,\lambda}^c-w)\right\rVert_{\mathcal{H}_M}\\
    &+\left(4ec_1\sqrt{\frac{2N(\lambda)}{n}}+ec_1\sqrt{\frac{2\log\frac{2 e \log(R_{\mathrm{lsc}} n)}{t}}{n}}+\frac{8ec_1H \log\frac{2 e \log(R_{\mathrm{lsc}} n)}{t}}{3n\sqrt{\lambda}}\right)\\
    &\cdot\left\lVert L_{M,\lambda}^{\frac{1}{2}}S_M (\hat{\beta}_{M,\lambda}^c-w)\right\rVert_{\mathcal{H}_M}.
\end{align*}
Using $x^2\leq ax+b\Rightarrow x\leq a+\sqrt{b}$ we have
\begin{align*}
    &\left\lVert L_{M,\lambda}^{\frac{1}{2}}S_M (\hat{\beta}_{M,\lambda}^c-w)\right\rVert_{\mathcal{H}_M}\\
     \leq & \frac{4}{\mu\wedge 2}\left(4ec_1\sqrt{\frac{2N(\lambda)}{n}}+ec_1\sqrt{\frac{2\log\frac{2 e \log(R_{\mathrm{lsc}} n)}{t}}{n}}+\frac{8ec_1H \log\frac{2 e \log(R_{\mathrm{lsc}} n)}{t}}{3n\sqrt{\lambda}}\right)\\
    &+\left(\frac{2^{\frac{r+7}{2}}5^r\lVert L\rVert^{-\frac{1-r}{2}}}{\mu\wedge2}+\frac{2^{r+1}\sqrt{c_2+\mu}}{\sqrt{\mu\wedge2}}\right)\lambda^rR\\
    &<c.
\end{align*}
Thus, $\left\lVert L_{M,\lambda}^{\frac{1}{2}}S_M (\hat{\beta}_{M,\lambda}^c-w)\right\rVert<c$ so $\left\lVert L_{M,\lambda}^{\frac{1}{2}}S_M (\hat{\beta}_{M,\lambda}-w)\right\rVert<c$ holds.
\subsubsection{Proof of Lemma~\ref{lem:rad}}
Lemma~\ref{lem:liu_lem5} shows that the following inequality holds:
\begin{align*}
    &\sup_{\lVert L_\lambda^{\frac{1}{2}}(f-g)\rVert_{\mathcal{H}_M}\leq r}|(\mathcal{E}-\mathcal{E}_n)(f)-(\mathcal{E}-\mathcal{E}_n)(g)|\\
     \leq &\left(8ec_1\sqrt{\frac{N_M(\lambda)}{n}}+ec_1\sqrt{\frac{2\log\frac{2 e \log(R_{\mathrm{lsc}} n)}{t}}{n}}+\frac{8ec_1\sup_{x\in\mathcal{X}}k_M(x,x) \log\frac{2 e \log(R_{\mathrm{lsc}} n)}{t}}{3n\sqrt{\lambda}}\right)\\
     &\cdot\lVert L_{\lambda}^{\frac{1}{2}}(f-g)\rVert_{\mathcal{H}_M} \\
     &\leq\left(4ec_1\sqrt{\frac{2N(\lambda)}{n}}+ec_1\sqrt{\frac{2\log\frac{2 e \log(R_{\mathrm{lsc}} n)}{t}}{n}}+\frac{8ec_1H \log\frac{2 e \log(R_{\mathrm{lsc}} n)}{t}}{3n\sqrt{\lambda}}\right)\lVert L_{\lambda}^{\frac{1}{2}}(f-g)\rVert_{\mathcal{H}_M}
\end{align*}
holds with probability at least $1-t$, where the last inequality holds by the definitions of $E_g$ and $E_{rf-dim}$.
\subsection{Excess risk analysis under $\ell_\infty$-based local strong convexity assumption}
Theorem~\ref{thm:dprp_cls_rate} holds when Assumption~\ref{assump:lsc} is replaced by an alternative of Assumption~\ref{assump:lsc} below.
\begin{assumption}[$\ell_\infty$ version of Assumption~\ref{assump:lsc}]\label{assump:new_lsc}
There exists $    e_{\infty}>4$ and $\mu>0$ such that
\begin{align*}
\mathcal{E}(f)-\mathcal{E}(f_{\mathcal{H}_k})\geq \frac{\mu}{2}\lVert f-f_{\mathcal{H}_k}\rVert^2,
\end{align*}
for all $f\in L^2(P_X)$ such that $\lVert f\rVert_{\infty}\leq     e_{\infty}\lVert f_{\mathcal{H}_k}\rVert_{\mathcal{H}_k}$.
\end{assumption}
\begin{theorem}\label{thm:dprp_cls_alt_rate} Under Assumptions~\ref{assump:lip}, \ref{assump:cap_real}-\ref{assump:kernel}, and \ref{assump:new_lsc}, there exist choices of $\lambda$ and $M$ such that Algorithm~\ref{alg:dprp_cls} achieves an excess risk bound $O((n^{-\frac{2r}{2r+\gamma}}+(n\epsilon)^{-\frac{4r}{4r+\gamma}})\log^2n)$ if $\epsilon\geq n^{-1}$.
\end{theorem}
\begin{proof}
The proof is almost the same as that of Theorem~\ref{thm:dprp_cls_rate}. We decompose the excess risk into
\begin{align}
    \mathcal{E}(S_M\widetilde{\beta})-\mathcal{E}(f_{\mathcal{H}_k}) \leq &\frac{c_2}{2}\lVert S_M\widetilde{\beta}-f_{\mathcal{H}_k}\rVert^2\\
     \leq & c_2\left(\lVert S_M \widetilde{\beta}-S_M w\rVert^2+\lVert S_M w-f_{\mathcal{H}_k}\rVert^2\right).
\end{align}
The second term was shown to be bounded by $O(\lambda^{2r})$. The first term can be bounded using the lemma below.
\begin{lemma}\label{lem:priv_err_lsc_alt_first}
Under the same assumptions as in Lemma~\ref{lem:priv_err_lsc_first}, if the following conditions hold:
\begin{align*}
    \lambda \geq &\frac{2\Delta_3}{(    e_{\infty}-4)\lVert f_{\mathcal{H}_k}\rVert_{\mathcal{H}_k}}\frac{\sqrt{M}+\sqrt{8\log\frac{2}{t}}}{n\epsilon}\vee\frac{102400e^2c_1^2}{(    e_{\infty}-4)^2(\mu\wedge2)^2\lVert f_{\mathcal{H}_k}\rVert_{\mathcal{H}_k}^2}\frac{2N(\lambda)}{n}\\
    &\vee\frac{1600e^1c_1^2}{(    e_{\infty}-4)^2(\mu\wedge2)^2\lVert f_{\mathcal{H}_k}\rVert_{\mathcal{H}_k}^2}\frac{2\log\frac{23\log(\alpha n)}{t}}{n}
    \vee\frac{320ec_1H}{(    e_{\infty}-4)(\mu\wedge2)\lVert f_{\mathcal{H}_k}\rVert_{\mathcal{H}_k}}\frac{\log\frac{2 e \log(R_{\mathrm{lsc}} n)}{t}}{n}\\
    \lambda \leq &\left(2^{r+\frac{5}{2}}+\frac{2^{\frac{r+11}{2}}5^r\lVert L\rVert^{-\frac{1-r}{2}}}{\mu\wedge2}+\frac{2^{r+3}\sqrt{c_2+\mu}}{\sqrt{\mu\wedge2}}\right)^{\frac{1}{r-1/2}}\left(\frac{2R}{(    e_{\infty}-4)\lVert f_{\mathcal{H}_k}\rVert_{\mathcal{H}_k}}\right)^{\frac{1}{r-1/2}}\\
    M \geq &(1024\kappa^2+4C^2+2C) N(\lambda)\lambda^{-(2r-1)}\log \frac{2}{t}\vee1,
\end{align*}
then the following inequality holds:
\begin{align*}
    &\left\lVert L_{M,\lambda}^{\frac{1}{2}}S_M (\widetilde{\beta}-w)\right\rVert_{\mathcal{H}_M}\\
     \leq & \frac{4}{\mu\wedge 2}\left(\sqrt{\frac{2N(\lambda)}{n}}+\sqrt{\frac{\log\frac{\log (R_{\mathrm{lsc}} n)}{t}}{n}}+\frac{\log\frac{\log (R_{\mathrm{lsc}} n)}{t}}{n\sqrt{\lambda}}\right)\\
    &+\frac{\Delta_3\left(\sqrt{M}+\sqrt{8\log\frac{2}{t}}\right)}{n\epsilon\sqrt{\lambda}}\\
    &+\left(\frac{2^{\frac{r+7}{2}}5^r\lVert L\rVert^{-\frac{1-r}{2}}}{\mu\wedge2}+\frac{2^{r+1}\sqrt{c_2+\mu}}{\sqrt{\mu\wedge2}}\right)\lambda^r.
\end{align*}
\end{lemma}
As in Section~\ref{pf:lsc}, we obtain
\begin{align*}
    \mathcal{E}(S_M\widetilde{\beta})-\mathcal{E}(f_{\mathcal{H}_k})=O\left(\left(n^{-\frac{2r}{2r+\gamma}}+(n\epsilon)^{-\frac{4r}{4r+\gamma}}\right)\log^2n\right),
\end{align*}
with probability at least $1-n^{-c}$ for some $c>0$.
\end{proof}
\subsubsection{Proof of Lemma~\ref{lem:priv_err_lsc_alt_first}}
It suffices to show that $\lVert S_M \widetilde{\beta}\rVert_{\infty}\leq     e_{\infty}\lVert f_{\mathcal{H}_k}\rVert_{\mathcal{H}_k}$ and then repeat the argument from the proof of Lemma~\ref{lem:priv_err_lsc_first}.
\begin{align*}
    \lVert S_M \widetilde{\beta}\rVert_{\infty} \leq &\lVert S_M (\widetilde{\beta}-\hat{\beta}_{M,\lambda})\rVert_{\infty}+\lVert S_M\hat{\beta}_{M,\lambda}\rVert_\infty.
\end{align*}
We bound each term. The first term is bounded as follows:
\begin{align*}
    \lVert S_M (\widetilde{\beta}-\hat{\beta}_{M,\lambda})\rVert_{\infty} \leq &\lVert S_M (\widetilde{\beta}-\hat{\beta}_{M,\lambda})\rVert_{\mathcal{H}_M}\\
    =&\lVert\widetilde{\beta}-\hat{\beta}_{M,\lambda}\rVert_{2}\\
     \leq &\frac{\lVert b\rVert}{n\lambda}\\
     \leq & \frac{\Delta_3\left(\sqrt{M}+\sqrt{8\log\frac{2}{t}}\right)}{(n\epsilon)\lambda}\\
     \leq &\frac{    e_{\infty}-4}{2}\lVert f_{\mathcal{H}_k}\rVert_{\mathcal{H}_k},
\end{align*}
as $\lambda\geq\frac{2\Delta_3}{(    e_{\infty}-4)\lVert f_{\mathcal{H}_k}\rVert_{\mathcal{H}_k}}\frac{\sqrt{M}+\sqrt{8\log\frac{2}{t}}}{n\epsilon}$.

The second term is bounded as follows:
\begin{align*}
    \lVert S_M\hat{\beta}_{M,\lambda}\rVert_\infty \leq & \lVert S_M\hat{\beta}_{M,\lambda}\rVert_{\mathcal{H}_M}\\
     \leq &\lVert S_M (\hat{\beta}_{M,\lambda}-w)\rVert_{\mathcal{H}_M}+\lVert S_M w\rVert_{\mathcal{H}_M}\\
     \leq &\lVert L_{M,\lambda}^{-\frac{1}{2}}\rVert_{\mathcal{H}_M}
    \left\lVert L_{M,\lambda}^{\frac{1}{2}} S_M (\hat{\beta}_{M,\lambda}-w)\right\rVert_{\mathcal{H}_M}+\lVert w\rVert_2\\
     \leq &\lambda^{-\frac{1}{2}}\mathrm{err}(n,\lambda,t)+4\lVert f_{\mathcal{H}_k}\rVert_{\mathcal{H}_k},
\end{align*}
as
\begin{align*}
    \lVert w\rVert_2=&\lVert \mathcal{G}_\lambda(C_M)S_M^\top f_{\mathcal{H}_k}\rVert_2\\
    =&\lVert S_M\mathcal{G}_\lambda(L_M)f_{\mathcal{H}_k}\rVert_2\\
     \leq &\lVert S_M\mathcal{G}_\lambda(L_M)L_{M,\lambda}^{\frac{1}{2}}\rVert\lVert L_{M,\lambda}^{-\frac{1}{2}}L^{\frac{1}{2}}\rVert\lVert L^{-\frac{1}{2}}f_{\mathcal{H}_k}\rVert\\
    =&\lVert S_M\mathcal{G}_\lambda(L_M)L_{M,\lambda}^{\frac{1}{2}}\rVert\lVert L_{M,\lambda}^{-\frac{1}{2}}L^{\frac{1}{2}}\rVert\lVert f_{\mathcal{H}_k}\rVert_{\mathcal{H}_k}\\
    =&\lVert L_{M,\lambda}^{\frac{1}{2}}\mathcal{G}_\lambda(L_M)L_M\mathcal{G}_\lambda(L_M)L_{M,\lambda}^{\frac{1}{2}}\rVert^{\frac{1}{2}}\lVert L_{M,\lambda}^{-\frac{1}{2}}L^{\frac{1}{2}}\rVert\lVert f_{\mathcal{H}_k}\rVert_{\mathcal{H}_k}\\
    =&\lVert L_{M,\lambda}L_M\mathcal{G}_\lambda (L_M)^2\rVert^{\frac{1}{2}}\lVert L_{M,\lambda}^{-\frac{1}{2}}L^{\frac{1}{2}}\rVert\lVert f_{\mathcal{H}_k}\rVert_{\mathcal{H}_k}\\
     \leq &\sqrt{2}\lVert L_{M,\lambda}^{-\frac{1}{2}}L^{\frac{1}{2}}\rVert\lVert f_{\mathcal{H}_k}\rVert_{\mathcal{H}_k}\\
     \leq & 4\lVert f_{\mathcal{H}_k}\rVert_{\mathcal{H}_k},
\end{align*}
where the first inequality follows from the fact that the eigenvalues of $L_{M,\lambda}L_M\mathcal{G}_\lambda (L_M)^2$ are of the form $u(u+\lambda)\frac{\mathbf{1}(u\geq\lambda)}{u^2}=\frac{u+\lambda}{u}\mathbf{1}(u\geq\lambda)\leq 2$, and the second inequality follows from Lemma~\ref{lem:liu_lem6}.

Thus,
\begin{align*}
    \lVert S_M\widetilde{\beta}\rVert_\infty \leq & \frac{\Delta_3\left(\sqrt{M}+\sqrt{8\log\frac{2}{t}}\right)}{(n\epsilon)\lambda}+\lambda^{-\frac{1}{2}}\mathrm{err}(n,\lambda,t)+4\lVert f_{\mathcal{H}_k}\rVert_{\mathcal{H}_k}.
\end{align*}
\begin{lemma}\label{lem:nonpriv_lsc_alt}
Under Assumption~\ref{assump:new_lsc}, if
\begin{align*}
    \lambda&\in [0,\lVert L\rVert\wedge1]\\
    M \geq & (1024\kappa^2+4C^2+2C) N(\lambda)\lambda^{-(2r-1)}\log \frac{2}{t}\vee1\\
    \mathrm{err}(n,\lambda,t) \leq & \frac{    e_{\infty}-4}{2}\sqrt{\lambda}\lVert f_{\mathcal{H}_k}\rVert_{\mathcal{H}_k}\leq \alpha,
\end{align*}
then
\begin{eqnarray*}
    \left\lVert L_{M,\lambda}^{\frac{1}{2}}S_M (\hat{\beta}_{M,\lambda}-w)\right\rVert_{\mathcal{H}_M}<\mathrm{err}(n,\lambda,t).
\end{eqnarray*}
\end{lemma}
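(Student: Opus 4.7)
The plan is to mirror the proof of Lemma~\ref{lem:nonpriv_lsc}, changing only the step where local strong convexity is invoked. I would define the constant
\[
c \coloneqq \frac{5}{\mu\wedge 2}\!\left(4ec_1\sqrt{\tfrac{2N(\lambda)}{n}}+ec_1\sqrt{\tfrac{2\log(2e\log(\alpha n)/t)}{n}}+\tfrac{8ec_1H\log(2e\log(\alpha n)/t)}{3n\sqrt{\lambda}}\right)+\left(\tfrac{2^{\frac{r+7}{2}}5^r\lVert L\rVert^{-\frac{1-r}{2}}}{\mu\wedge 2}+\tfrac{2^{r+1}\sqrt{c_2+\mu}}{\sqrt{\mu\wedge 2}}\right)\lambda^rR,
\]
form the truncation
\[
\hat\beta^c_\lambda \coloneqq \Big(\tfrac{c}{\lVert L_{M,\lambda}^{1/2}S_M(\hat\beta_{M,\lambda}-w)\rVert_{\mathcal H_M}}\wedge 1\Big)\hat\beta_{M,\lambda}+\Big(1-\tfrac{c}{\lVert L_{M,\lambda}^{1/2}S_M(\hat\beta_{M,\lambda}-w)\rVert_{\mathcal H_M}}\wedge 1\Big)w,
\]
and aim to prove $\lVert L_{M,\lambda}^{1/2}S_M(\hat\beta^c_\lambda-w)\rVert_{\mathcal H_M}<c$, which then bootstraps to the same bound on $\hat\beta_{M,\lambda}$. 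Convexity of $\beta\mapsto\mathcal E_n(S_M\beta)+\tfrac{\lambda}{2}\lVert\beta\rVert_2^2$ transfers the optimality inequality from $\hat\beta_{M,\lambda}$ to $\hat\beta^c_\lambda$, exactly as in the original argument.

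The only substantive difference is how to put $S_M\hat\beta^c_\lambda$ into the region on which Assumption~\ref{assump:new_lsc} is valid. Instead of bounding $\lVert S_M\hat\beta^c_\lambda-f_{\mathcal H_k}\rVert$ in $L^2$, I would bound $\lVert S_M\hat\beta^c_\lambda\rVert_\infty$ via the split
\[
\lVert S_M\hat\beta^c_\lambda\rVert_\infty \leq \lVert S_M(\hat\beta^c_\lambda-w)\rVert_\infty+\lVert S_M w\rVert_\infty.
\]
For the first term, I would use the pointwise bound $|S_M\beta(x)|\leq\lVert h_M(x)\rVert_2\lVert\beta\rVert_2$, which under the event $E_g$ gives $\lVert S_M g\rVert_\infty\leq \lVert g\rVert_{\mathcal H_M}$ after the kernel normalization; combined with $\lVert g\rVert_{\mathcal H_M}\leq\lambda^{-1/2}\lVert L_{M,\lambda}^{1/2}g\rVert_{\mathcal H_M}$, this yields $\lVert S_M(\hat\beta^c_\lambda-w)\rVert_\infty\leq\lambda^{-1/2}\,\mathrm{err}(n,\lambda,t)$. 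The hypothesis $\mathrm{err}(n,\lambda,t)\leq\tfrac{e_\infty-4}{2}\sqrt{\lambda}\lVert f_{\mathcal H_k}\rVert_{\mathcal H_k}$ controls this by $\tfrac{e_\infty-4}{2}\lVert f_{\mathcal H_k}\rVert_{\mathcal H_k}$. For the second term I would reuse the computation from the proof of Lemma~\ref{lem:priv_err_lsc_alt_first}, which shows $\lVert S_M w\rVert_\infty\leq\lVert w\rVert_2\leq 4\lVert f_{\mathcal H_k}\rVert_{\mathcal H_k}$ under $E_{\mathrm{half}}\cap E_{\mathrm{full}}\cap E_{\mathrm{cov}}$ via Lemma~\ref{lem:liu_lem6}. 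Adding the two yields $\lVert S_M\hat\beta^c_\lambda\rVert_\infty\leq e_\infty\lVert f_{\mathcal H_k}\rVert_{\mathcal H_k}$, placing $S_M\hat\beta^c_\lambda$ inside the local strong convexity region of Assumption~\ref{assump:new_lsc}.

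Once local strong convexity activates, the proof proceeds verbatim as in Lemma~\ref{lem:nonpriv_lsc}: the lower bound $\mathcal E(S_M\hat\beta^c_\lambda)-\mathcal E(S_M w)\geq \tfrac{\mu}{4}\lVert S_M(\hat\beta^c_\lambda-w)\rVert^2-2^{2r}(c_2+\mu)\lambda^{2r}R^2$ combined with the upper bound $\tfrac{\lambda}{2}(\lVert w\rVert_2^2-\lVert\hat\beta^c_\lambda\rVert_2^2)+(\mathcal E-\mathcal E_n)(S_M\hat\beta^c_\lambda)-(\mathcal E-\mathcal E_n)(S_M w)$, using Lemma~\ref{lem:liu_lem1}, Lemma~\ref{lem:liu_lem6}, and the definition of $E_{\mathrm{rad}}$, reduces to a quadratic inequality in $\lVert L_{M,\lambda}^{1/2}S_M(\hat\beta^c_\lambda-w)\rVert_{\mathcal H_M}$ that solves via $x^2\leq ax+b\Rightarrow x\leq a+\sqrt{b}$ to give a bound strictly less than $c$, closing the bootstrap.

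The main obstacle I anticipate is calibrating the sup-norm step cleanly: the inequality $\lVert S_M g\rVert_\infty\leq\lVert g\rVert_{\mathcal H_M}$ requires the kernel normalization implicit in the assumption $\sup_x k_M(x,x)\leq 1$ to hold on the event $E_g$, and replacing this with the correct multiplicative factor $H$ (from Lemma~\ref{lem:general_bound}) forces rescaling the lower-bound hypotheses on $\lambda$ accordingly. Once these constants are tracked carefully, the bootstrap closes and the bound on $\mathrm{err}(n,\lambda,t)$ stated in the lemma is exactly what guarantees the sup-norm inclusion.
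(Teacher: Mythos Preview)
Your proposal is correct and follows essentially the same route as the paper. The paper's proof is even terser: it states that the lemma follows by repeating the argument of Lemma~\ref{lem:nonpriv_lsc} once one verifies $\lVert S_M\hat\beta_{M,\lambda}^c\rVert_\infty\le e_\infty\lVert f_{\mathcal H_k}\rVert_{\mathcal H_k}$, and obtains this via
\[
\lVert S_M\hat\beta_{M,\lambda}^c\rVert_\infty \le \lVert S_M\hat\beta_{M,\lambda}^c\rVert_{\mathcal H_M}
\le \lVert L_{M,\lambda}^{-1/2}\rVert\,\lVert L_{M,\lambda}^{1/2}S_M(\hat\beta_{M,\lambda}^c-w)\rVert_{\mathcal H_M}+\lVert w\rVert_2
\le \lambda^{-1/2}\mathrm{err}(n,\lambda,t)+4\lVert f_{\mathcal H_k}\rVert_{\mathcal H_k},
\]
which is exactly your two-term split, just applied after first passing to the $\mathcal H_M$ norm rather than splitting at the sup-norm level; your observation about the implicit factor $H$ in the step $\lVert\cdot\rVert_\infty\le\lVert\cdot\rVert_{\mathcal H_M}$ is a legitimate constant-tracking point that the paper does not address explicitly.
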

Note that $\frac{5}{\mu\wedge2}\frac{8ec_1H \log\frac{2 e \log(R_{\mathrm{lsc}} n)}{t}}{3n\sqrt{\lambda}}\geq\frac{1}{n}$ since $\lambda\leq \frac{1600e^2c_1^2 H^2 }{9(\mu\wedge 2)^2}\log^2\frac{2 e \log(R_{\mathrm{lsc}} n)}{t}$. Thus,
\begin{align*}
    \mathrm{err}(n,\lambda,t)=&\frac{5}{\mu\wedge 2}\left(4ec_1\sqrt{\frac{2N(\lambda)}{n}}+ec_1\sqrt{\frac{2\log\frac{2 e \log(R_{\mathrm{lsc}} n)}{t}}{n}}+\frac{8ec_1H \log\frac{2 e \log(R_{\mathrm{lsc}} n)}{t}}{3n\sqrt{\lambda}}\right)\\
    &+\left(\frac{2^{\frac{r+7}{2}}5^r\lVert L\rVert^{-\frac{1-r}{2}}}{\mu\wedge2}+\frac{2^{r+1}\sqrt{c_2+\mu}}{\sqrt{\mu\wedge2}}\right)\lambda^rR.
\end{align*}
Also,
\begin{align*}
    \lambda^{-\frac{1}{2}}\mathrm{err}(n,\lambda,t) \leq &\frac{5}{\mu\wedge2}\left(8ec_1\sqrt{\frac{2N(\lambda)}{n\lambda}}+ec_1\sqrt{\frac{2\log\frac{2 e \log(R_{\mathrm{lsc}} n)}{t}}{n\lambda}}+\frac{8ec_1H \log\frac{2 e \log(R_{\mathrm{lsc}} n)}{t}}{3n\lambda}\right)\\
    &+\left(2^{r+\frac{1}{2}}+\frac{2^{\frac{r+7}{2}}5^r\lVert L\rVert^{-\frac{1-r}{2}}}{\mu\wedge2}+\frac{2^{r+1}\sqrt{c_2+\mu}}{\sqrt{\mu\wedge2}}\right)\lambda^{r-\frac{1}{2}}R\\
     \leq &(    e_{\infty}-4)\lVert f_{\mathcal{H}_k}\rVert_{\mathcal{H}_k},
\end{align*}
as
\begin{align*}
    \lambda \geq & \frac{102400e^2c_1^2}{(    e_{\infty}-4)^2(\mu\wedge2)^2\lVert f_{\mathcal{H}_k}\rVert_{\mathcal{H}_k}^2}\frac{2N(\lambda)}{n}\vee\frac{1600ec_1^2}{(    e_{\infty}-4)^2(\mu\wedge2)^2\lVert f_{\mathcal{H}_k}\rVert_{\mathcal{H}_k}^2}\frac{2\log\frac{23\log(\alpha n)}{t}}{n} \\
    &\vee\frac{320ec_1H}{(    e_{\infty}-4)(\mu\wedge2)\lVert f_{\mathcal{H}_k}\rVert_{\mathcal{H}_k}}\frac{\log\frac{2 e \log(R_{\mathrm{lsc}} n)}{t}}{n}
\end{align*}
implies
\begin{align*}
    \frac{5}{\mu\wedge2}\left(8ec_1\sqrt{\frac{2N(\lambda)}{n\lambda}}+ec_1\sqrt{\frac{2\log\frac{2 e \log(R_{\mathrm{lsc}} n)}{t}}{n\lambda}}+\frac{8ec_1H \log\frac{2 e \log(R_{\mathrm{lsc}} n)}{t}}{3n\lambda}\right)\leq\frac{3(    e_{\infty}-4)}{8}\lVert f_{\mathcal{H}_k}\rVert_{\mathcal{H}_k},
\end{align*}
and
\begin{align*}
    \lambda\leq \left(2^{r+\frac{5}{2}}+\frac{2^{\frac{r+11}{2}}5^r\lVert L\rVert^{-\frac{1-r}{2}}}{\mu\wedge2}+\frac{2^{r+3}\sqrt{c_2+\mu}}{\sqrt{\mu\wedge2}}\right)^{\frac{1}{r-1/2}}\left(\frac{2R}{(    e_{\infty}-4)\lVert f_{\mathcal{H}_k}\rVert_{\mathcal{H}_k}}\right)^{\frac{1}{r-1/2}}
\end{align*}
implies
\begin{align*}
    \left(2^{r+\frac{1}{2}}+\frac{2^{\frac{r+7}{2}}5^r\lVert L\rVert^{-\frac{1-r}{2}}}{\mu\wedge2}+\frac{2^{r+1}\sqrt{c_2+\mu}}{\sqrt{\mu\wedge2}}\right)\lambda^{r-\frac{1}{2}}R\leq\frac{(    e_{\infty}-4)\lVert f_{\mathcal{H}_k}\rVert_{\mathcal{H}_k}}{8}.
\end{align*}
Finally, $\lambda\leq \frac{4\alpha^2}{(    e_{\infty}-4)^2}\lVert f_{\mathcal{H}_k}\rVert^{-2}$ implies $\mathrm{err}(n,\lambda,t)\leq\sqrt{\lambda}\lVert f_{\mathcal{H}_k}\rVert_{\mathcal{H}_k}\leq\alpha$.

Therefore,
\begin{align*}
    \lVert S_M\widetilde{\beta}\rVert_\infty\leq    e_{\infty}\lVert f_{\mathcal{H}_k}\rVert_{\mathcal{H}_k}.
\end{align*}

\subsubsection{Proof of Lemma~\ref{lem:nonpriv_lsc_alt}}
Lemma~\ref{lem:nonpriv_lsc_alt} can be established by showing that $\left\lVert S_M \hat{\beta}_{M,\lambda}^c\right\rVert_\infty\leq    e_{\infty}\left\lVert f_{\mathcal{H}_k}\right\rVert_{\mathcal{H}_k}$ and by repeating the argument in the proof of Lemma~\ref{lem:nonpriv_lsc}. We verify that this condition holds as follows:
\begin{align*}
    \lVert S_M\hat{\beta}_{M,\lambda}^c\rVert_\infty \leq & \lVert S_M\hat{\beta}_{M,\lambda}^c\rVert_{\mathcal{H}_M}\\
     \leq &\lVert S_M (\hat{\beta}_{M,\lambda}^c-w)\rVert_{\mathcal{H}_M}+\lVert S_M w\rVert_{\mathcal{H}_M}\\
     \leq &\lVert L_{M,\lambda}^{-\frac{1}{2}}\rVert_{\mathcal{H}_M}
    \left\lVert L_{M,\lambda}^{\frac{1}{2}} S_M (\hat{\beta}_{M,\lambda}^c-w)\right\rVert_{\mathcal{H}_M}+\lVert w\rVert_2\\
     \leq &\lambda^{-\frac{1}{2}}\mathrm{err}(n,\lambda,t)+4\lVert f_{\mathcal{H}_k}\rVert_{\mathcal{H}_k}\\
     \leq &     e_{\infty}\lVert f_{\mathcal{H}_k}\rVert_{\mathcal{H}_k}.
\end{align*}

\subsection{Proof of Theorem~\ref{thm:dprff_cls_rate}}
As in the proof of Theorem~\ref{thm:dprp_reg_rate}, we prove the theorem under Assumption~\ref{assump:cap_real}.

The argument is almost the same as that in the proof of Theorem~\ref{thm:dprp_cls_rate}. The main difference lies in the order of $M$: we replace Lemmas~\ref{lem:half_operator} and~\ref{lem:full_operator} with Lemmas~\ref{lem:half_operator_rff} and~\ref{lem:full_operator_rff}, which yield a similar high-probability bound when $M\gtrsim\mathcal{F}_\infty(\lambda)^{2-2r}(\lambda^{-1} N(\lambda))^{2r-1}\log n$. Following the argument in Section~\ref{pf:lsc}, we obtain the rate $O\left(\left(n^{-\frac{2r}{2r+\gamma}}+(n\epsilon)^{-\frac{4r}{4r+\gamma_\alpha}}\right)\log^2n\right)$ with probability at least $1 - n^{-c}$.
\subsection{Proof of Theorem~\ref{thm:dphall_cls_smooth_rate}}
As in the proof of Theorem~\ref{thm:dprp_reg_rate}, we prove the theorem under Assumption~\ref{assump:cap_real}.

The algorithm of \citet{hall2013differential} is presented in Algorithm~\ref{alg:hall_cls}.
\begin{algorithm}
    \caption{Differentially private kernel learning algorithm of \citet{hall2013differential}}\label{alg:hall_cls}
\begin{algorithmic}
  \Input:\textrm{  }given data $\{(x_i,y_i)\}_{i=1}^n$, kernel $k$.
  \EndInput

  \Output:\textrm{  }a differentially private solution $\widetilde{f}$.
  \EndOutput
\begin{enumerate}
    \item $\hat{f}:=\argmin_{f\in\mathcal{H}_k}\frac{1}{n}\sum_{i=1}^n l(y_i,f(x_i))+\frac{\lambda}{2}\left\lVert f\right\rVert_{\mathcal{H}_k}^2$.
    \item From a Gaussian process $h:\mathcal{X}\times\Omega\rightarrow\mathbb{R}$ with mean 0 and covariance function $k$, draw a random function $h(\cdot,\omega_0)$.
\end{enumerate}
\Return $\widetilde{f}:=\hat{f}+\frac{c_1\kappa\left(1+\sqrt{2\log\frac{1}{\delta}}\right)}{n\lambda\epsilon}h(\cdot,\omega_0)$.
\end{algorithmic}
\end{algorithm}
Then
\begin{align*}
\mathcal{E}(\widetilde{f})-\mathcal{E}(\hat{f})\leq&\frac{c_2}{2}\left\lVert\frac{c_1\kappa\left(1+\sqrt{2\log\frac{1}{\delta}}\right)}{n\lambda\epsilon} h(\cdot,\omega_0)\right\rVert^2\\
    =&\frac{c_2\kappa^2\left(1+\sqrt{2\log\frac{1}{\delta}}\right)^2}{2(n\epsilon)^2\lambda^2}\left\lVert h(\cdot,\omega_0)\right\rVert^2.
\end{align*}
We apply Proposition~\ref{prop:gaussian_vec} to the finite-dimensional approximation $h_N=\sum_{j=1}^N\sqrt{\mu_j}Z_j\phi_j$ of $\left\lVert h(\cdot,\omega)\right\rVert^2$ to obtain a high-probability bound, where $\{Z_j\}_{j=1}^N$ are i.i.d. standard normal variables. As the random variables $\langle h,\phi_j\rangle$ are independent Gaussian random variables with distribution $\mathcal{N}(0,\mu_j)$, we may apply Proposition~\ref{prop:gaussian_vec} to conclude that
\begin{eqnarray*}
    \mathbb{P}\left(\left\lVert h_N\right\rVert^2\geq \kappa^2+2\kappa^2\sqrt{\log\frac{1}{t}}+2\kappa^2 \log\frac{1}{t}\right)\leq t,
\end{eqnarray*}
which follows from the fact that $\mu_l\leq \sum_l\mu_l\leq\kappa^2$ and $\sum_l\mu_l^2\leq (\sum_l\mu_l)^2$. Taking $N\rightarrow\infty$, we obtain
\begin{eqnarray*}
    \mathbb{P}\left(\left\lVert h(\cdot,\omega)\right\rVert^2\geq \kappa^2+2\kappa^2\sqrt{\log\frac{1}{t}}+2\kappa^2 \log\frac{1}{t}\right)\leq t.
\end{eqnarray*}
Denote $c=\kappa^2+2\kappa^2\sqrt{\log\frac{1}{t}}+2\kappa^2 \log\frac{1}{t}$. Then
\begin{eqnarray*}
    \mathcal{E}(\widetilde{f})-\mathcal{E}(\hat{f})\leq\frac{c_2\kappa^2\left(1+\sqrt{2\log\frac{1}{\delta}}\right)^2}{2(n\epsilon)^2\lambda^2}c
\end{eqnarray*}
with probability at least $1-t$. Following the proof of Theorem~\ref{thm:dprff_cls_smooth}, we can show
\begin{eqnarray*}
    \left\lVert \hat{f}-f_0\right\rVert^2=O\left(\left(\lambda^{2r}+\frac{N(\lambda)}{n}\right)\log^2\frac{1}{t}\right),
\end{eqnarray*}
with probability at least $1-t$ for $t\in(0,e^{-1}]$. Let the random feature be defined as $\varphi(x,\omega):=(\sqrt{\mu_1}\phi_1(x),\sqrt{\mu_2}\phi_2(x),\cdots)$. Then, by Mercer's theorem, we have $\mathbb{E}_\omega[\varphi(x,\omega)\varphi(y,\omega)^\top]=k(x,y)$, and
\begin{align*}
    \lVert L_\lambda^{-\frac{1}{2}}\varphi(\cdot,\omega)\rVert^2=N(\lambda).
\end{align*}
Therefore, $\varphi$ satisfies Assumption~\ref{assump:compat} with $\alpha = \gamma$, and thus falls under the setting of Theorem~\ref{thm:dprff_cls_rate}.

Thus, the excess risk of Algorithm~\ref{alg:hall_cls} is given as follows:
\begin{eqnarray*}
    \mathcal{E}(\widetilde{f})-\mathcal{E}(f_0)=\widetilde{O}\left(\left(\lambda^{2r}+\frac{N(\lambda)}{n}+\frac{1}{(n\epsilon)^2\lambda^2}\right)\log^2\frac{1}{t}\right),
\end{eqnarray*}
with probability $1-t$. Setting $\lambda=O(n^{-\frac{1}{2r+\gamma}}\vee (n\epsilon)^{-\frac{2}{2r+2}})$ gives the excess risk bound $\widetilde{O}\left(\left(n^{-\frac{2r}{2r+\gamma}}+ (n\epsilon)^{-\frac{4r}{2r+2}}\right)\log^2\frac{1}{t}\right)$.

\subsection{Proof of Theorem~\ref{thm:dphall_cls_rate}}
As in the proof of Theorem~\ref{thm:dphall_cls_smooth_rate}, we can leverage the proof of Theorem~\ref{thm:dprff_cls_rate} to derive the excess risk of Algorithm~\ref{alg:hall_cls} under Assumption~\ref{assump:lsc}:
\begin{eqnarray*}
    \mathcal{E}(\widetilde{f})-\mathcal{E}(f_0)=O\left(\left(\lambda^{2r}+\frac{N(\lambda)}{n}+\frac{1}{(n\epsilon)^2\lambda^2}\right)\log^2\frac{1}{t}\right),
\end{eqnarray*}
with probability $1-t$, where $t=n^{-c}$. Setting $\lambda=O(n^{-\frac{1}{2r+\gamma}}\vee (n\epsilon)^{-\frac{2}{2r+2}})$ yields the excess risk bound $O\left(\left(n^{-\frac{2r}{2r+\gamma}}+ (n\epsilon)^{-\frac{4r}{2r+2}}\right)\log^2n\right)$.

\subsection{Proof of Theorem~\ref{thm:lower_bd_general}}
If a loss function is $\mu$-strongly convex, then the minimax risk can be lower bounded as
\begin{eqnarray*}
    \mathcal{R}(\mathcal{P},\mathcal{E},\epsilon,\delta)\geq\frac{\mu}{2}\inf_{M\in\mathcal{M}_{\epsilon,\delta}}\sup_{P\in\mathcal{P}}\mathbb{E}\left[(\widetilde{f}(X)-f_{\mathcal{H}_k}(X))^2\right].
\end{eqnarray*}
The problem becomes an estimation problem, so we can apply the argument in the proof of Theorem~\ref{thm:lower_bd} to obtain the minimax lower bound. For example, define
\begin{align*}
    l_\mu(y,\hat{y})=\log(1+e^{-y\hat{y}})+\frac{\mu}{2}\hat{y}^2
\end{align*}
then $l_\mu$ is $\mu$-strongly convex and satisfies Assumption~\ref{assump:lsc}. Then for $R>0$, for any $f^\star\in\mathcal{H}_k$ satisfying $\lVert f^\star\rVert_{\mathcal{H}_k}\leq \kappa^{-1}\sup\{R : (1+e^R)R\leq\mu^{-1}\}$, the distribution $P_{X,Y}$ such that
\begin{align*}
    \mathbb{P}(Y=1|X=x)=&\frac{e^{f^\star(x)}}{1+e^{f^\star(x)}}+\mu f^\star(x)\\
    \mathbb{P}(Y=-1|X=x)=&\frac{1}{1+e^{f^\star(x)}}-\mu f^\star(x)
\end{align*}
satisfies $f^\star=\argmin_{f\in\mathcal{H}_k}\mathbb{E}[l_\mu(Y,f(X))]$. This holds since $f^\star(x)=\argmin_{\hat{y}\in\mathbb{R}} \mathbb{E}[l_\mu(Y,\hat{y})|X=x]$. Indeed, for fixed $x$, the derivative of the conditional risk at $\hat y$ is
\[
    \sigma(\hat y)-\mathbb P(Y=1\mid X=x)+\mu \hat y,
\]
where $\sigma(t)=e^t/(1+e^t)$. Hence, with
\[
    \mathbb P(Y=1\mid X=x)=\sigma(f^\star(x))+\mu f^\star(x),
\]
the derivative vanishes at $\hat y=f^\star(x)$. Since $l_\mu$ is $\mu$-strongly convex in $\hat y$, this minimizer is unique. Thus, we can choose $f_v$ and $\nu$ as in the proof of Theorem~\ref{thm:lower_bd}.
\section{Alternative algorithm for DP linear regression}\label{sec:linear}
As a special case, our results recover differentially private linear ERM. Setting $k$ as the inner product kernel and $\mathcal{X}$ as the Euclidean space, Algorithms~\ref{alg:dprp_reg} and~\ref{alg:dprp_cls} become random projection-based differentially private linear ERM algorithms. Our theoretical analysis can be directly applied to derive dimension-free excess risk bounds for these algorithms. For instance, Theorem \ref{thm:dprp_linear_reg} states the excess risk bound of random projection-based sufficient statistics perturbation for private linear regression. The bound can be obtained by combining the following fact and Lemmas~\ref{lem:nonpriv_rp} and~\ref{lem:priv_ridge}:
\begin{equation*}
    N(\lambda)\leq\mathrm{tr}(\Sigma)\lambda^{-1} \wedge \mathrm{rank}(\Sigma),
\end{equation*}
where $\Sigma\coloneqq \mathbb{E}\left[XX^\top\right]$. 
\begin{theorem}\label{thm:dprp_linear_reg}
Under Assumption \ref{assump:kernel}, for $t\in(0,e^{-1}]$, there exist $n_0>0$ and choices of $T,M$, and $\lambda$ such that Algorithm \ref{alg:dprp_reg} achieves an excess risk bound of 
\begin{equation*}
    O\left(\left(\frac{\mathrm{rank}(\Sigma)}{n}\wedge\frac{1}{\sqrt{n}}+\frac{\sqrt{\mathrm{rank}(\Sigma)}}{n\epsilon}\wedge\frac{1}{(n\epsilon)^{2/3}}\right)\log^2\frac{1}{t}\right),
\end{equation*}
with probability at least $1-t$ if $\epsilon\geq n^{-1}$ and $n\geq n_0$.
\end{theorem}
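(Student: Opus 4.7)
My plan is to specialize Theorem~\ref{thm:dprp_reg_rate}, i.e.\ the combination of Lemmas~\ref{lem:nonpriv_rp} and~\ref{lem:priv_ridge}, to the inner-product kernel $k(x,y)=\langle x,y\rangle$, and then exploit the two alternative upper bounds on the effective dimension. Under this kernel, the integral operator $L$ coincides with the covariance $\Sigma=\mathbb{E}[XX^\top]$ and $\mathcal{H}_k$ is (a subspace of) $\mathbb{R}^d$. Every $f\in\mathcal{H}_k$ admits a representation $f=L^{1/2}g$ with $g\in L^2(P_X)$, so Assumption~\ref{assump:source} holds automatically with $r=1/2$ and $R=\lVert f_{\mathcal{H}_k}\rVert_{\mathcal{H}_k}$; Assumption~\ref{assump:bern} is inherited from the hypotheses of Theorem~\ref{thm:dprp_reg_rate} (implicit in the constants), and Assumption~\ref{assump:cap} is verified by the inequality below.

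The driving inequality is
$$
N(\lambda)=\mathrm{tr}((\Sigma+\lambda I)^{-1}\Sigma)\;\leq\;\min\!\left\{\frac{\mathrm{tr}(\Sigma)}{\lambda},\;\mathrm{rank}(\Sigma)\right\},
$$
so Assumption~\ref{assump:cap} holds simultaneously with $(Q,\gamma)=(\mathrm{tr}(\Sigma),1)$ and with $(Q,\gamma)=(\mathrm{rank}(\Sigma),0)$. Squaring Lemmas~\ref{lem:nonpriv_rp} and~\ref{lem:priv_ridge} at $r=1/2$ and absorbing lower-order factors into an overall $\log^2\frac{1}{t}$ term, the excess risk is controlled by
$$
\lambda\;+\;\frac{N(\lambda)}{n}\;+\;\frac{M}{(n\epsilon)^2\lambda},\qquad M\gtrsim N(\lambda)\log\tfrac{1}{t}.
$$
Two complementary tunings then yield the two candidate rates. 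The rank-based choice $M\asymp\mathrm{rank}(\Sigma)\log\frac{1}{t}$, $\lambda\asymp\sqrt{\mathrm{rank}(\Sigma)}/(n\epsilon)$ balances the first and third terms, producing $O(\mathrm{rank}(\Sigma)/n+\sqrt{\mathrm{rank}(\Sigma)}/(n\epsilon))$; the dimension-free choice, identical to that in the proof of Theorem~\ref{thm:dprp_reg_rate} at $(r,\gamma)=(1/2,1)$, produces $O(n^{-1/2}+(n\epsilon)^{-2/3})$. Taking $N(\lambda)\leq\min\{\mathrm{tr}(\Sigma)/\lambda,\mathrm{rank}(\Sigma)\}$ inside a single bound and tuning $(\lambda,M)$ across the three regimes determined by the positions of $\mathrm{rank}(\Sigma)$ relative to $(n\epsilon)^{2/3}$ and $\sqrt{n}$ (with an intermediate choice such as $\lambda\asymp(n\epsilon)^{-2/3}$, $M\asymp\min\{\mathrm{rank}(\Sigma),\mathrm{tr}(\Sigma)(n\epsilon)^{2/3}\}$ covering the middle regime) permits the term-by-term minimum to be attained, yielding the claim.

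The main obstacle is purely bookkeeping: for each parameter choice I must verify the side conditions of Lemma~\ref{lem:priv_ridge}, namely $n^2\lambda^2\epsilon^2/(\Delta_1^2\log^2\frac{1}{\delta})\gtrsim M\gtrsim(\kappa/\lVert L\rVert+N(\lambda))\log\frac{2}{t}$, $n\gtrsim H^2\lambda^{-1}\log\frac{N(\lambda)}{t}$, and $\lambda\in[0,\lVert L\rVert]$, together with the truncation $T=\kappa R+2B\log\frac{n}{t}+\sqrt{2\log\frac{n}{t}}\,\sigma$ prescribed in Lemma~\ref{lem:priv_ridge}. The quantity $H$ from Lemma~\ref{lem:general_bound} enters only through sample-size thresholds and multiplicative $\log$ factors; the improved $H$-dependence available under the inner-product kernel (discussed in the remark following Lemma~\ref{lem:priv_ridge}) is absorbed into the constant $n_0$ and into the $\log^2\frac{1}{t}$ factor in the final rate. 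No new analytical ingredient is required beyond what is already developed for Theorem~\ref{thm:dprp_reg_rate}.
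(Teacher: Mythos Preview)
Your proposal is correct and takes essentially the same approach as the paper: specialize Lemmas~\ref{lem:nonpriv_rp} and~\ref{lem:priv_ridge} to the inner-product kernel with $r=1/2$ and exploit $N(\lambda)\leq\min\{\mathrm{tr}(\Sigma)/\lambda,\mathrm{rank}(\Sigma)\}$. The only minor difference is that the paper first balances the two $M$-dependent terms by setting $M=(n\epsilon)\lambda\sqrt{N(\lambda)}\log\frac{1}{t}$, reducing the bound to $\sqrt{N(\lambda)}/(n\epsilon)+N(\lambda)/n+\lambda$ before choosing a single $\lambda$, whereas you fix $M\asymp N(\lambda)$ and split into regimes; both routes arrive at the same rate, and in fact your two extreme tunings already cover the ``intermediate'' regimes you mention, since in those cases the target sum is dominated by a single term that one of the two tunings achieves.
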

Similarly, the following result holds for random projection-based objective perturbation methods applied to Lipschitz-smooth loss functions.
\begin{theorem}\label{thm:dprp_linear_cls_smooth}
Under Assumptions~\ref{assump:lip} and \ref{assump:kernel}, there exist $n_0>0$ and choices of $M$ and $\lambda$ such that Algorithm \ref{alg:dprp_reg} achieves an excess risk bound of 
\begin{equation*}
    O\left(\left(\frac{1}{\sqrt{n}}+\frac{\sqrt{\mathrm{rank}(\Sigma)}}{n\epsilon}\wedge\frac{1}{(n\epsilon)^{2/3}}\right)\log^2\frac{1}{t}\right),
\end{equation*}
with probability at least $1-t$ if $\epsilon\geq n^{-1}$ and $n\geq n_0$.
\end{theorem}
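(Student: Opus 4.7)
The strategy is to specialize Theorem~\ref{thm:dprp_cls_smooth} to the linear-kernel setting and to observe that the effective dimension admits two complementary upper bounds. With $k(x,y) = \langle x, y \rangle$ on $\mathcal{X} \subset \mathbb{R}^d$, the integral operator $L$ shares its nonzero spectrum with $\Sigma = \mathbb{E}[XX^\top]$, so
\[
N(\lambda) = \mathrm{tr}(L_\lambda^{-1} L) \leq \min\bigl\{\mathrm{rank}(\Sigma),\ \mathrm{tr}(\Sigma)/\lambda\bigr\} \leq \min\bigl\{\mathrm{rank}(\Sigma),\ \kappa^2/\lambda\bigr\}.
\]
Assumption~\ref{assump:kernel} holds with the usual constants, and Assumption~\ref{assump:source} is automatically satisfied with $r = 1/2$, since every element of $\mathcal{H}_k$ lies in the range of $L^{1/2}$.

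Next I would invoke the generic excess-risk expression~\eqref{eq:smooth_general_error} extracted inside the proof of Theorem~\ref{thm:dprp_cls_smooth}: provided $M \gtrsim N(\lambda)\log(1/t)$ together with the other side conditions of Lemma~\ref{lem:priv_err_lsc_first} (and of Lemmas~\ref{lem:nonpriv_lsc} and~\ref{lem:b}), the excess risk of Algorithm~\ref{alg:dprp_cls} is at most a constant times
\[
\lambda\log^2\tfrac{1}{t} + \tfrac{M}{(n\epsilon)^2 \lambda} + \tfrac{\sqrt{\log(1/t)}}{\sqrt{n}}.
\]
Minimizing separately under each bound on $N(\lambda)$ yields the two privacy rates. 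Along the finite-rank branch, take $M \asymp \mathrm{rank}(\Sigma)\log(1/t)$, so $M$ is independent of $\lambda$; balancing $\lambda$ against $M/((n\epsilon)^2\lambda)$ gives $\lambda^\star \asymp \sqrt{\mathrm{rank}(\Sigma)}/(n\epsilon)$ and a privacy contribution of order $\sqrt{\mathrm{rank}(\Sigma)}/(n\epsilon)$. Along the dimension-free branch, take $M \asymp \kappa^2\log(1/t)/\lambda$; balancing then gives $\lambda^\star \asymp (n\epsilon)^{-2/3}$ and a privacy contribution of order $(n\epsilon)^{-2/3}$. Taking the smaller of the two and adding the unconditional statistical term $n^{-1/2}$ yields the claimed rate.

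The main obstacle is bookkeeping rather than conceptual: I must check that both tuning choices simultaneously satisfy the conditions imposed in Lemmas~\ref{lem:priv_err_lsc_first}, \ref{lem:nonpriv_lsc}, and~\ref{lem:b} (the upper bounds $\lambda \leq \|L\| \wedge 1$, the lower bounds on $\lambda$ and $M$, and the threshold $n \geq n_0$), and that the constants hidden in $O(\cdot)$ depend only on $\kappa$, $c_1$, $c_2$, $\mu$ and not on the ambient dimension $d$. The dimension independence is exactly what the random projection buys: the concentration bounds of Lemmas~\ref{lem:priv_bound}, \ref{lem:half_operator}, \ref{lem:full_operator}, and the Gaussian-process moment estimates of Lemma~\ref{lem:gp_moments} all depend on $N(\lambda)$ (hence on $\mathrm{rank}(\Sigma) \wedge \kappa^2/\lambda$) rather than on $d$, because the Gaussian processes with covariance $k$ live effectively in the column space of $\Sigma$. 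Once these verifications are made, the two regimes combine to give the stated bound.
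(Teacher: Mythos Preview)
Your approach is essentially the paper's: specialize the generic bound~\eqref{eq:smooth_general_error} from the proof of Theorem~\ref{thm:dprp_cls_smooth} to the linear kernel, use the two-sided estimate $N(\lambda)\leq\min\{\mathrm{rank}(\Sigma),\kappa^2/\lambda\}$ (this is Lemma~\ref{lem:eff_dim_linear}), and optimize $\lambda$ along each branch separately. The paper does exactly this, taking $\lambda\asymp\bigl(\frac{\sqrt{\mathrm{rank}(\Sigma)}}{n\epsilon}\wedge(n\epsilon)^{-2/3}\bigr)\vee n^{-1/2}$ and $M\asymp N(\lambda)\log\frac{1}{t}$.

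One bookkeeping slip: you cite Lemmas~\ref{lem:priv_err_lsc_first} and~\ref{lem:nonpriv_lsc} for the side conditions, but those belong to the local-strong-convexity analysis (Theorem~\ref{thm:dprp_cls_rate}), not to Theorem~\ref{thm:dprp_cls_smooth}. The relevant supporting lemmas here are \ref{lem:last_term}, \ref{lem:err_smooth_second_third2}, \ref{lem:err_smooth_second_third1}, \ref{lem:empi_pop_difference}, \ref{lem:rademacher_rkhs}, and~\ref{lem:b}. Correspondingly, the constant $\mu$ should not appear in your list of dependencies, since Assumption~\ref{assump:lsc} is not in force for this theorem. These are citation errors, not conceptual ones; the argument goes through once you swap in the correct lemmas.
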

Theorems~\ref{thm:dprp_linear_reg} and~\ref{thm:dprp_linear_cls_smooth} bound the privacy cost of the algorithms by $O\left(\frac{\sqrt{\mathrm{rank}(\Sigma)}}{n\epsilon} \wedge \frac{1}{(n\epsilon)^{2/3}}\right)$ for both squared loss and Lipschitz-smooth loss functions. These match the optimal bounds established in \cite{song2020characterizing} and \cite{arora2022differentially}, which were achieved by noisy gradient descent-based private algorithms. To the best of our knowledge, this is the first result achieving a dimension-free privacy cost of $O\left(\frac{\sqrt{\mathrm{rank}(\Sigma)}}{n\epsilon} \wedge \frac{1}{(n\epsilon)^{2/3}}\right)$ using a technique other than noisy gradient descent. Furthermore, we note that Theorem~\ref{thm:dprp_linear_reg} is not covered by \cite{arora2022differentially, song2020characterizing}, as the loss function is not Lipschitz due to the unbounded response. Nevertheless, it still achieves a privacy cost of $O\left(\frac{\sqrt{\mathrm{rank}(\Sigma)}}{n\epsilon}\right)$. This demonstrates that the Bernstein condition alone suffices to guarantee the optimal rate.

Finally, we provide the excess risk bound for random projection-based objective perturbation under the local strong convexity condition.
\begin{theorem}\label{thm:dprp_linear_cls_lsc}
Under Assumptions~\ref{assump:lip}, \ref{assump:kernel}, and \ref{assump:lsc}, for $c>0$, there exist $n_0>0$ and choices of $M$ and $\lambda$ such that Algorithm \ref{alg:dprp_reg} for Euclidean space achieves an excess risk bound of 
\begin{equation*}
    O\left(\left(\frac{\mathrm{rank}(\Sigma)}{n}\wedge\frac{1}{\sqrt{n}}+\frac{\sqrt{\mathrm{rank}(\Sigma)}}{n\epsilon}\wedge\frac{1}{(n\epsilon)^{2/3}}\right)\log^2n\right),
\end{equation*}
with probability at least $1-n^{-c}$ if $\epsilon\geq n^{-1}$ and $n\geq n_0$.
\end{theorem}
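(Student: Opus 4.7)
The plan is to derive Theorem~\ref{thm:dprp_linear_cls_lsc} as a specialization of Theorem~\ref{thm:dprp_cls_rate} to the linear setting by instantiating the kernel and the structural assumptions in two complementary ways. Taking $k(x,y) = \langle x, y\rangle$ and $\mathcal{X}\subset\mathbb{R}^d$ bounded, the integral operator $L$ is, on the range of $\Sigma^{1/2}$, unitarily equivalent to $\Sigma = \mathbb{E}[XX^\top]$; hence Assumption~\ref{assump:kernel} holds with $\kappa\asymp\lVert\mathcal{X}\rVert$, and every $f\in\mathcal{H}_k$ takes the form $L^{1/2}g$ for $g\in L^2(P_X)$ with $\lVert g\rVert = \lVert f\rVert_{\mathcal{H}_k}$. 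In particular, Assumption~\ref{assump:source} is satisfied with $r = 1/2$ and $R = \lVert\beta^\star\rVert_2$, where $\beta^\star$ is the optimal linear predictor; Assumptions~\ref{assump:lip} and~\ref{assump:lsc} are given by hypothesis.

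The capacity condition admits two parametrizations: since $L$ has rank at most $\mathrm{rank}(\Sigma)$, one has $N(\lambda)\leq \mathrm{rank}(\Sigma)$ for all $\lambda>0$, giving Assumption~\ref{assump:cap} with $(\gamma,Q) = (0,\mathrm{rank}(\Sigma))$; alternatively, $N(\lambda)\leq\mathrm{tr}(\Sigma)\lambda^{-1}\leq \kappa^2\lambda^{-1}$, giving $(\gamma,Q) = (1,O(1))$. Feeding the second parametrization into Theorem~\ref{thm:dprp_cls_rate} at $r = 1/2$ immediately yields the excess-risk bound $O\bigl((n^{-1/2} + (n\epsilon)^{-2/3})\log^2 n\bigr)$, which is the dimension-free branch of the claimed rate. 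For the first parametrization, rather than black-boxing Theorem~\ref{thm:dprp_cls_rate} and tracking its hidden $Q$-constants, I plan to re-run its proof (Lemmas~\ref{lem:priv_err_lsc_first} and~\ref{lem:nonpriv_lsc} combined with the decomposition in Section~\ref{pf:lsc}) under the explicit substitutions $N(\lambda)\leq\mathrm{rank}(\Sigma)$ and $M\asymp\mathrm{rank}(\Sigma)$. With these substitutions the squared excess risk reduces to an expression of the form $\lambda + \mathrm{rank}(\Sigma)/n + \mathrm{rank}(\Sigma)/((n\epsilon)^2\lambda)$, and the tuning $\lambda\asymp \mathrm{rank}(\Sigma)/n \vee \sqrt{\mathrm{rank}(\Sigma)}/(n\epsilon)$ yields $\mathrm{rank}(\Sigma)/n + \sqrt{\mathrm{rank}(\Sigma)}/(n\epsilon)$, up to logarithmic factors. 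Running Algorithm~\ref{alg:dprp_cls} with whichever of the two tunings produces the smaller bound for the given $(n,\epsilon,\mathrm{rank}(\Sigma))$ then delivers the minimum-style bound stated in the theorem.

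The main technical subtlety, and the bulk of the remaining work, lies in the $\gamma = 0$ branch. One must verify that the admissibility requirement $M\gtrsim N(\lambda)\lambda^{-(2r-1)}\log(1/t)$ of Lemma~\ref{lem:nonpriv_lsc} is respected by $M\asymp\mathrm{rank}(\Sigma)$—this is immediate at $r = 1/2$ since $\lambda^{-(2r-1)} = 1$—and that the effective-dimension and privacy-noise contributions are the only terms in the squared excess risk that scale with $Q$, so that taking the square root produces $\sqrt{\mathrm{rank}(\Sigma)}/(n\epsilon)$ rather than a larger power. At $r = 1/2$ the approximation and bias terms in Lemma~\ref{lem:priv_err_lsc_first} decouple cleanly from the rank, so this bookkeeping reduces to substituting explicit rank-dependent constants in the variance and privacy terms. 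Once this is in place, the minimum of the two upper bounds yields the claimed rate.
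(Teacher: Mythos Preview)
Your proposal is correct and follows essentially the same route as the paper: specialize to the inner-product kernel at $r=1/2$, invoke the two-sided effective-dimension estimate $N(\lambda)\leq\mathrm{rank}(\Sigma)\wedge\mathrm{tr}(\Sigma)\lambda^{-1}$ (the paper's Lemma~\ref{lem:eff_dim_linear}), feed this into the intermediate excess-risk expression $O\bigl((N(\lambda)/n+\lambda+M/((n\epsilon)^2\lambda))\log^2 n\bigr)$ extracted from the proof of Theorem~\ref{thm:dprp_cls_rate}, and optimize over $\lambda$ and $M$. The only organizational difference is that you split the analysis into two separate runs (one per capacity parametrization $(\gamma,Q)$) and take the minimum at the end, whereas the paper keeps a single bound with the $\wedge$ inside the privacy term and then chooses a single $\lambda$ of the form $\bigl(\sqrt{\mathrm{rank}(\Sigma)}/(n\epsilon)\wedge(n\epsilon)^{-2/3}\bigr)\vee n^{-1}$; the two presentations are equivalent.
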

Theorem~\ref{thm:dprp_linear_cls_lsc} achieves the same order of privacy cost as in the least squares regression case.
\subsection{Proof of Theorem~\ref{thm:dprp_linear_reg}}
\begin{lemma}\label{lem:eff_dim_linear}
    Let $\mathcal{X}\subset\mathbb{R}^d$ such that $\sup_{x\in\mathcal{X}}\left\lVert x\right\rVert_2\leq \kappa$. Denote $\Sigma:=\mathbb{E}\left[XX^\top\right]$. Then $N(\lambda)\leq \kappa^2\lambda^{-1}\wedge\mathrm{rank}(\Sigma)$.
\end{lemma}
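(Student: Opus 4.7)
The plan is to exploit the fact that for the inner product kernel $k(x,y)=\langle x, y\rangle$ on $\mathcal{X}\subset\mathbb{R}^d$, the integral operator $L$ is essentially the covariance matrix $\Sigma$ in disguise. First, I would compute
\begin{equation*}
  L(f)(x') \;=\; \int \langle x, x'\rangle f(x)\, dP_X(x) \;=\; \bigl\langle \mathbb{E}[X f(X)],\, x'\bigr\rangle,
\end{equation*}
so $L$ maps $L^2(P_X)$ into the (at most) $d$-dimensional subspace of linear functions $\{f_w : f_w(x)=\langle w,x\rangle\}$. Moreover, restricted to $f_w$, one has $L(f_w) = f_{\Sigma w}$, so $L$ is unitarily equivalent to $\Sigma$ on its range via the identification $f_w \leftrightarrow w$. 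In particular, the nonzero spectrum of $L$ on $L^2(P_X)$ coincides with the nonzero spectrum $\{\sigma_i\}$ of $\Sigma$, since any eigenfunction of $L$ with nonzero eigenvalue lies in $\mathrm{range}(L)$ and is therefore linear, and any eigenvector of $\Sigma$ produces such an eigenfunction.

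From this identification,
\begin{equation*}
  N(\lambda) \;=\; \mathrm{tr}(L_\lambda^{-1} L) \;=\; \sum_{i:\,\sigma_i>0} \frac{\sigma_i}{\sigma_i + \lambda}.
\end{equation*}
Now I bound each summand in two elementary ways: $\sigma_i/(\sigma_i+\lambda)\le 1$ yields $N(\lambda)\le \mathrm{rank}(\Sigma)$, and $\sigma_i/(\sigma_i+\lambda)\le \sigma_i/\lambda$ yields $N(\lambda)\le \lambda^{-1}\sum_i\sigma_i = \lambda^{-1}\mathrm{tr}(\Sigma)$. Since $\sup_{x\in\mathcal{X}}\lVert x\rVert_2\le 1$ forces $\mathrm{tr}(\Sigma)=\mathbb{E}[\lVert X\rVert_2^2]\le 1$, and because $\Sigma\succeq 0$ means $\mathrm{tr}(\Sigma)$ coincides with the nuclear norm of $\Sigma$, the bound can equivalently be stated in terms of $\lVert\Sigma\rVert$ under the nuclear-norm interpretation used in the statement. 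Taking the minimum of the two bounds delivers the claim.

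The only subtlety worth articulating is the spectral identification between $L$ acting on the infinite-dimensional space $L^2(P_X)$ and $\Sigma$ acting on $\mathbb{R}^d$. This will be easy because $L$ has finite rank (at most $d$) by the argument above, so only finitely many eigenvalues are nonzero and the trace in the definition of $N(\lambda)$ reduces to a finite sum. No deeper spectral-theoretic machinery is required; the proof amounts to the two one-line inequalities above after this identification is made explicit.
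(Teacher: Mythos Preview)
Your proof is correct and follows essentially the same route as the paper: identify $N(\lambda)=\mathrm{tr}(\Sigma_\lambda^{-1}\Sigma)=\sum_{i:\sigma_i>0}\sigma_i/(\sigma_i+\lambda)$, then bound each summand by $1$ for the rank bound and by $\sigma_i/\lambda$ for the trace (nuclear-norm) bound. One terminological caveat: the identification $w\leftrightarrow f_w$ is not \emph{unitary} (since $\|f_w\|_{L^2(P_X)}^2=w^\top\Sigma w$), but your spectral conclusion still holds because $L=SS^*$ and $\Sigma=S^*S$ share the same nonzero eigenvalues with multiplicities, which is exactly what the paper uses when it writes $\mathrm{tr}(L_\lambda^{-1}L)=\mathrm{tr}(\Sigma_\lambda^{-1}\Sigma)$.
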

\begin{proof}
\begin{align*}
    N(\lambda)=&\mathrm{tr}(L_\lambda^{-1}L)\\
    =&\mathrm{tr}(\Sigma_{\lambda}^{-1}\Sigma)\\
    =&\sum_{l=1}^{\mathrm{rank}(\Sigma)}\frac{\mu_l}{\mu_l+\lambda}\\
    \leq&\mathrm{rank}(\Sigma).
\end{align*}
Also,
\begin{align*}
N(\lambda)=&\mathrm{tr}(\Sigma_{\lambda}^{-1}\Sigma)\\
    =&\mathrm{tr}(\Sigma^{\frac{1}{2}}\Sigma_{\lambda}^{-1}\Sigma^{\frac{1}{2}})\\
    \leq&\lambda^{-1}\mathrm{tr}(\Sigma)\\
    \leq&\kappa^2\lambda^{-1}.
\end{align*}
\end{proof}
Applying Lemmas~\ref{lem:priv_ridge} and \ref{lem:nonpriv_rp} in the linear setting (that is, $r = \frac{1}{2}$ and $\gamma = 1$), we obtain the bound
\begin{eqnarray*}
    O\left(\frac{M}{(n\epsilon)^2\lambda}+\frac{1}{n^2\lambda}\log^2\frac{1}{t}+\frac{N(\lambda)}{n}\log\frac{1}{t}+\frac{\lambda N(\lambda)}{M}\log^2\frac{1}{t}+\lambda\right),
\end{eqnarray*}
with probability at least $1-t$. Setting $M=(n\epsilon)\lambda \sqrt{N(\lambda)}\log\frac{1}{t}$, we have
\begin{eqnarray*}
    O\left(\left(\frac{\sqrt{N(\lambda)}}{n\epsilon}+\frac{1}{n^2\lambda}+\frac{N(\lambda)}{n}+\lambda\right)\log^2\frac{1}{t}\right).
\end{eqnarray*}
By Lemma~\ref{lem:eff_dim_linear}, setting $\lambda=\left(\frac{\sqrt{\mathrm{rank}(\Sigma)}}{n\epsilon}\wedge\frac{1}{(n\epsilon)^{2/3}}\right)\vee\left(\frac{1}{\sqrt{n}}\wedge\frac{\mathrm{rank}(\Sigma)}{n}\right)$ yields the following excess risk bound:
\begin{equation*}
    O\left(\left(\left(\frac{\sqrt{\mathrm{rank}(\Sigma)}}{n\epsilon}\wedge\frac{1}{(n\epsilon)^{2/3}}\right)\vee\left(\frac{1}{\sqrt{n}}\wedge\frac{\mathrm{rank}(\Sigma)}{n}\right)\right)\log^2\frac{1}{t}\right).
\end{equation*}
\subsection{Proof of Theorem~\ref{thm:dprp_linear_cls_smooth}}
It was shown in the proof of Theorem~\ref{thm:dprp_cls_smooth} that for $r = 1/2$ and $\gamma = 1$, the choices of $\lambda=O(n^{-\frac{1}{2}}\vee\frac{\sqrt{M}}{n\epsilon})$ and $M=O(N(\lambda)\log\frac{1}{t})$ yield the excess risk bound
\begin{align*}
    O\left(\lambda\log^2\frac{1}{t}+\frac{M}{(n\epsilon)^2\lambda}+\frac{\sqrt{\log \frac{1}{t}}}{\sqrt{n}}\right),
\end{align*}
with probability at least $1-t$. Applying Lemma~\ref{lem:eff_dim_linear} and setting $\lambda=O\Big(\Big(\frac{\sqrt{\mathrm{rank}(\Sigma)}}{n\epsilon}\wedge\frac{1}{(n\epsilon)^{2/3}}\Big)\vee\frac{1}{\sqrt{n}}\Big)$, we obtain the excess risk bound as follows:
\begin{equation*}
    \widetilde{O}\left(\left(\frac{\sqrt{\mathrm{rank}(\Sigma)}}{n\epsilon}\wedge\frac{1}{(n\epsilon)^{2/3}}\right)\vee\frac{1}{\sqrt{n}}\right).
\end{equation*}
\subsection{Proof of Theorem~\ref{thm:dprp_linear_cls_lsc}}
It was shown in the proof of Theorem~\ref{thm:dprp_cls_rate} that for $r=1/2$ and $\gamma=1$, the choices of $\lambda=O(n^{-\frac{1}{1+\gamma}}\vee(n\epsilon)^{-\frac{2}{2+\gamma}})$ and $M=O(N(\lambda)\log n)$ yield the excess risk bound:
\begin{eqnarray*}
    O\left(\left(\frac{N(\lambda)}{n}+\lambda+\frac{\mathrm{rank}(\Sigma)}{(n\epsilon)^2\lambda}\wedge\frac{1}{(n\epsilon)^2\lambda^2}\right)\log^2n\right),
\end{eqnarray*}
with probability at least $1-n^{-c}$. Choosing $\lambda=O\left(\left(\frac{\sqrt{\mathrm{rank}(\Sigma)}}{n\epsilon}\wedge\frac{1}{(n\epsilon)^{2/3}}\right)\vee\frac{1}{n}\right)$ leads to the following excess risk bound:
\begin{equation*}
    O\left(\left(\left(\frac{\sqrt{\mathrm{rank}(\Sigma)}}{n\epsilon}\wedge\frac{1}{(n\epsilon)^{2/3}}\right)\vee\frac{1}{n}\right)\log^2n\right).
\end{equation*}
\newpage
\section{Auxiliary lemmas}
In this section, we provide proofs of the auxiliary lemmas. 
Some of these results have appeared in the literature, but minor differences in the theoretical setting require slight modifications. 
For completeness, we include full proofs.
\begin{proposition}[Corollary 6.34 in \cite{zhang2023mathematical}]\label{prop:bern_rademacher}
    Suppose the given family of functions $\mathcal{F}$ satisfies
    \begin{align*}
        \mathrm{Var}\left(f(X)\right) \leq &\sigma^2\\
        \sup_{x\in\mathcal{X}}\left(\mathbb{E}\left[f(X)\right]-f(x)\right) \leq & M
    \end{align*}
    for some constants $\sigma,M>0$. Then, for i.i.d. random variables $X_1, \ldots, X_n$, the following inequality holds:
            \begin{align*}
                \mathbb{P}\left(\left|\frac{1}{n}\sum_{i=1}^n f(X_i)-\mathbb{E}\left[f(X)\right]\right|\leq 4R_n(\mathcal{F})+\frac{4M\log\frac{2}{t}}{3n}+\sqrt{\frac{2\sigma^2\log\frac{2}{t}}{n}},\forall f\in\mathcal{F}\right)\geq 1-t.
            \end{align*}
    \end{proposition}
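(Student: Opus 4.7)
}
I read the statement as a uniform bound on the empirical process indexed by $\mathcal{F}$ (otherwise the Rademacher complexity $R_n(\mathcal{F})$ on the right–hand side would play no role); that is, the left–hand quantity should be $\sup_{f\in\mathcal{F}}|\tfrac{1}{n}\sum_{i=1}^n f(X_i)-\mathbb{E} f(X)|$ and the bounds on variance and one–sided envelope hold uniformly over $f\in\mathcal{F}$. With this reading, the natural strategy is Talagrand's concentration inequality for suprema of empirical processes combined with symmetrization.

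My plan is as follows. First, write $Z^+\coloneqq\sup_{f\in\mathcal F}\bigl(\tfrac{1}{n}\sum_i f(X_i)-\mathbb E f\bigr)$ and $Z^-$ for the analogous supremum over $-\mathcal F$, so that the two–sided supremum is $Z=\max\{Z^+,Z^-\}$. Both families inherit the same variance bound $\sigma^2$, and the one–sided envelope $M$ controls, for $\mathcal F$, the quantity $\mathbb E f(X)-f(x)$, while for $-\mathcal F$ it controls $f(x)-\mathbb E f(X)$; in each case this is exactly the condition needed to apply Bousquet's version of Talagrand's inequality. Second, I would invoke Bousquet's inequality to obtain, with probability at least $1-t/2$,
\begin{equation*}
    Z^{\pm}\le \mathbb E[Z^{\pm}]+\sqrt{\frac{2\log(2/t)\bigl(\sigma^2+2M\,\mathbb E[Z^{\pm}]\bigr)}{n}}+\frac{M\log(2/t)}{3n},
\end{equation*}
then apply $\sqrt{a+b}\le\sqrt a+\sqrt b$ and the AM–GM bound $\sqrt{2M\mathbb E[Z^\pm]\cdot 2\log(2/t)/n}\le \mathbb E[Z^\pm]+\tfrac{M\log(2/t)}{n}$ to absorb the $\mathbb E[Z^{\pm}]$ inside the square root into an additive $\mathbb E[Z^{\pm}]$ plus a harmless $M\log(2/t)/n$ term.

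Third, I would upper bound $\mathbb E[Z^{\pm}]$ by the Rademacher complexity through the standard symmetrization inequality $\mathbb E[Z^{\pm}]\le 2R_n(\mathcal F)$. Putting the pieces together yields
\begin{equation*}
    Z^{\pm}\le 4R_n(\mathcal F)+\frac{4M\log(2/t)}{3n}+\sqrt{\frac{2\sigma^2\log(2/t)}{n}}
\end{equation*}
with probability at least $1-t/2$; a union bound over $Z^+$ and $Z^-$ gives the claim with probability at least $1-t$, which matches exactly the constants in the statement.

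The main obstacle I anticipate is bookkeeping of constants: the factor $4$ in front of $R_n(\mathcal F)$ (as opposed to the more familiar $2$) must come from combining the symmetrization factor $2$ with a factor $2$ produced when loosening the $\sqrt{2M\mathbb E[Z^\pm]\log(2/t)/n}$ cross term via AM–GM. One has to choose the weights in that AM–GM step so that the residual $M\log(2/t)/n$ absorbs cleanly into $\tfrac{4M\log(2/t)}{3n}$ (using $\tfrac13+1=\tfrac43$) rather than producing stray numerical factors. Everything else—applying Bousquet, symmetrization, and the two–sided union bound—is standard once this constant tracking is organized carefully.
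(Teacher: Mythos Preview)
The paper does not actually prove this proposition: it is listed among the auxiliary lemmas with the tag ``Corollary 6.34,'' i.e., it is quoted as a known result from an external reference and used without proof in the proof of Lemma~\ref{lem:liu_lem5}. So there is no ``paper's own proof'' to compare against.

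That said, your plan is the standard derivation of such a statement and is correct. Talagrand's inequality in Bousquet's form, followed by symmetrization $\mathbb{E}[Z^{\pm}]\le 2R_n(\mathcal F)$ and the AM--GM trick to absorb the cross term $\sqrt{2M\,\mathbb{E}[Z^\pm]\cdot 2\log(2/t)/n}$, is exactly how the constants $4R_n(\mathcal F)$ and $\tfrac{4M\log(2/t)}{3n}$ arise; the union bound over the $\pm$ directions accounts for the $\log(2/t)$. Your reading of the statement as a uniform bound over $\mathcal F$ is also the right one, since that is precisely how the paper invokes the proposition (with $\sup_{f,g}$ over a ball in $\mathcal H_M$). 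One minor point: the one-sided envelope hypothesis $\sup_x(\mathbb{E}f-f(x))\le M$ is asymmetric, so to get the two-sided conclusion you are tacitly assuming the symmetric bound $\sup_x|f(x)-\mathbb{E}f|\le M$ (or that $\mathcal F$ is symmetric); the paper's application satisfies this, but it is worth stating explicitly.
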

\begin{lemma}[Lemma 5 in \citet{liu2025improved}]\label{lem:liu_lem5}
For a given positive definite kernel $k$, there exists a constant $c_1>0$ such that the following inequality holds
\begin{align*}
    &|(\mathcal{E}-\mathcal{E}_n)(f)-(\mathcal{E}-\mathcal{E}_n)(g)|\\
    \leq &\left(4ec_1\sqrt{\frac{N(\lambda)}{n}}+ec_1\sqrt{\frac{2\log\frac{2 e \log(R_{\mathrm{lsc}} n)}{t}}{n}}+\frac{8ec_1H \log\frac{2 e \log(R_{\mathrm{lsc}} n)}{t}}{3n\sqrt{\lambda}}\right)\lVert L_{\lambda}^{\frac{1}{2}}(f-g)\rVert_{\mathcal{H}_k},
\end{align*}
for all $f,g\in\mathcal{H}_M$ such that $\left\lVert L_\lambda^{\frac{1}{2}}(f-g)\right\rVert_{\mathcal{H}}\in[n^{-1},\alpha]$ with probability at least $1-t$ for any $t\in (0,1)$.
\end{lemma}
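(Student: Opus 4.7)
The proof is a localized empirical-process concentration bound. The plan is to first establish a uniform Bernstein--Talagrand inequality on a ball of fixed radius $r$ in the $\lVert L_\lambda^{1/2}\cdot\rVert_{\mathcal{H}_k}$-seminorm, and then peel over geometrically-scaled radii covering $[n^{-1},\alpha]$ with $\alpha\lesssim R_{\mathrm{lsc}}$, which is the regime in which the lemma is used downstream.

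First I would reduce to a Lipschitz-contracted function class. Setting $h := f-g$ and $\psi_h(x,y):=l(y,f(x))-l(y,g(x))$, the target quantity is exactly $(\mathcal{E}-\mathcal{E}_n)(\psi_h)$. The $c_1$-Lipschitz property of $l$ in its second argument (Assumption~\ref{assump:lip}) combined with the reproducing identity and the kernel bound yields $\|\psi_h\|_\infty \leq c_1 H\|h\|_{\mathcal{H}_k}$ and $\mathrm{Var}(\psi_h)\leq c_1^2\|h\|_{L^2(P_X)}^2$. Using $\|h\|_{\mathcal{H}_k}^2\leq\lambda^{-1}\|L_\lambda^{1/2}h\|_{\mathcal{H}_k}^2$ together with the operator identity $\|h\|_{L^2(P_X)}^2=\langle Lh,h\rangle_{\mathcal{H}_k}\leq\|L_\lambda^{1/2}h\|_{\mathcal{H}_k}^2$ shows that the localized class $\mathcal{G}_r:=\{\psi_h:\|L_\lambda^{1/2}h\|_{\mathcal{H}_k}\leq r\}$ admits envelope $M_r=c_1 Hr/\sqrt{\lambda}$ and variance proxy $\sigma_r^2=c_1^2 r^2$.

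Next I would establish the shell bound at fixed $r$. By Ledoux--Talagrand Lipschitz contraction, $R_n(\mathcal{G}_r)\leq c_1 R_n(\mathcal{H}_r)$ with $\mathcal{H}_r:=\{h:\|L_\lambda^{1/2}h\|_{\mathcal{H}_k}\leq r\}$. Writing $h(x)=\langle L_\lambda^{1/2}h,L_\lambda^{-1/2}k(x,\cdot)\rangle_{\mathcal{H}_k}$ and applying Jensen's inequality to the Rademacher sum gives $R_n(\mathcal{H}_r)\leq r\sqrt{\mathrm{tr}(L_\lambda^{-1}L)/n}=r\sqrt{N(\lambda)/n}$. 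Substituting $M_r$, $\sigma_r$, and this Rademacher bound into Proposition~\ref{prop:bern_rademacher} produces, with probability at least $1-u$, the shell inequality
\[
\sup_{\psi\in\mathcal{G}_r}|(\mathcal{E}-\mathcal{E}_n)(\psi)|\leq 4c_1 r\sqrt{\tfrac{N(\lambda)}{n}}+c_1 r\sqrt{\tfrac{2\log(2/u)}{n}}+\tfrac{4c_1 Hr\log(2/u)}{3n\sqrt{\lambda}}.
\]

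Finally I would peel. Partition $[n^{-1},\alpha]$ into geometric shells $I_j=[e^{j-1}/n,e^j/n]$ for $j=1,\ldots,J$, where $J=\lceil\log(\alpha n)\rceil\leq\log(e\log(R_{\mathrm{lsc}} n))$ in the regime $\alpha\leq R_{\mathrm{lsc}}$. Apply the shell bound at radius $r_j=e^j/n$ with confidence $u=t/J$, union-bound over $j$, and observe that for any $h$ with $\|L_\lambda^{1/2}h\|_{\mathcal{H}_k}\in I_j$ the ratio $r_j/\|L_\lambda^{1/2}h\|_{\mathcal{H}_k}\leq e$ absorbs into the factor $e$ in the announced inequality, while $\log(2/u)=\log(2e\log(R_{\mathrm{lsc}} n)/t)$ matches the claimed logarithmic factors. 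The main technical hurdle is Step~2's control of the localized Rademacher complexity by $\sqrt{N(\lambda)/n}$: one must move from the data-dependent trace $\frac{1}{n}\sum_i\|L_\lambda^{-1/2}k(x_i,\cdot)\|_{\mathcal{H}_k}^2$ to the population quantity $N(\lambda)$. In expectation this is Jensen, but a clean high-probability statement requires either absorbing the sampling deviation into the Bernstein remainder or invoking a moment estimate for $\|L_\lambda^{-1/2}k(X,\cdot)\|^2$ in the spirit of Lemma~\ref{lem:gp_moments}; the remaining peeling/union-bound steps are routine.
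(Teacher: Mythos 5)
Your proposal takes essentially the same route as the paper: Lipschitz contraction to bound the envelope ($c_1 H r/\sqrt{\lambda}$), variance ($c_1^2 r^2$) and Rademacher complexity ($c_1 r \sqrt{N(\lambda)/n}$) of the localized class, then Proposition~\ref{prop:bern_rademacher} for the fixed-radius concentration, then geometric peeling of $[n^{-1},R_{\mathrm{lsc}}]$ into $O(\log(R_{\mathrm{lsc}} n))$ shells with a union bound, absorbing the shell-to-radius ratio into the factor $e$. The one place you flag a ``technical hurdle'' --- converting the data-dependent quantity $\frac{1}{n}\sum_i\|L_\lambda^{-1/2}k(X_i,\cdot)\|_{\mathcal{H}_k}^2$ to $N(\lambda)$ --- is in fact a non-issue: Proposition~\ref{prop:bern_rademacher} is stated with the \emph{population} Rademacher complexity $R_n(\mathcal{F})=\mathbb{E}_{X,\sigma}[\cdot]$, so the paper simply applies Jensen under the full expectation, giving $\mathbb{E}_X\bigl[\sqrt{\tfrac{1}{n^2}\sum_i\|L_\lambda^{-1/2}k(X_i,\cdot)\|^2}\bigr]\le\sqrt{N(\lambda)/n}$, and no high-probability control of the empirical trace is needed. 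Aside from that over-caution your argument matches the paper's.
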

\begin{proof}
     Let $\mathcal{F}_0$ be the family of functions such that $l(y,f(x))-l(y,g(x))$ for some $f,g\in\mathcal{H}_M$ satisfying $\lVert L_{M,\lambda}^{\frac{1}{2}}(f-g)\rVert_{\mathcal{H}_M}\leq r$. We can apply Proposition~\ref{prop:bern_rademacher} and find that
     \begin{align*}
         &\left|\frac{1}{n}\sum_{i=1}^n(l(y_i,f(x_i))-l(y_i,g(x_i)))-\mathbb{E}\left[(l(Y,f(X))-l(Y,g(X)))\right]\right|\\
          \leq & 4R_n(\mathcal{F}_0)+c_1r\sqrt{\frac{2\log\frac{2}{t}}{n}}+\frac{8c_1\kappa r\log\frac{2}{t}}{3n\sqrt{\lambda}}
     \end{align*}
     holds for every $f,g$ satisfying $\lVert L_\lambda^{\frac{1}{2}}(f-g)\rVert_{\mathcal{H}_M}\leq r$ with probability at least $1-t$, as
    \begin{align*}
        \sup_{x,y}l(y,f(x))-l(y,g(x)) \leq &\sup_{x}c_1|f(x)-g(x)|\\
        =&\sup_{x}c_1|\langle f-g,k(x,\cdot)\rangle|\\
         \leq &\sup_x c_1\lVert f-g\rVert_{\mathcal{H}_M}\lVert k(x,\cdot)\rVert_{\mathcal{H}_M}\\
         \leq & c_1\kappa\lVert f-g\rVert_{\mathcal{H}_M}\\
         \leq & c_1\kappa\lVert L_{M,\lambda}^{-\frac{1}{2}}\rVert\lVert L_{M,\lambda}^{\frac{1}{2}}(f-g)\rVert_{\mathcal{H}_M}\\
         \leq &\frac{c_1\kappa r}{\sqrt{\lambda}},
    \end{align*}
    and
    \begin{align*}
        \mathbb{E}\left[\left(l(Y,f(X))-l(Y,g(X))\right)^2\right] \leq & c_1^2\mathbb{E}\left[(f(X)-g(X))^2\right]\\
        =&c_1^2\lVert f-g\rVert^2\\
        =&c_1^2\lVert L_M^{\frac{1}{2}}(f-g)\rVert_{\mathcal{H}_M}^2\\
         \leq & c_1^2\lVert L_{M,\lambda}^{\frac{1}{2}}(f-g)\rVert_{\mathcal{H}_M}^2\\
         \leq & c_1^2r^2.
    \end{align*}
    Then the following holds:
        \begin{align*}
            &\mathbb{E}\left[\sup_{\lVert L_\lambda^{\frac{1}{2}} f\rVert_{\mathcal{H}_k}\leq r}\frac{1}{n}\sum_{i=1}^n\sigma_if(X_i)\Bigg|X_1,\ldots,X_n\right]\\
            =&\mathbb{E}\left[\sup_{\lVert L_\lambda^{\frac{1}{2}} f\rVert_{\mathcal{H}_k}\leq r}\frac{1}{n}\left\langle\sum_{i=1}^n\sigma_ik(x_i,\cdot),f\right\rangle_{\mathcal{H}_k}\Bigg|X_1,\ldots,X_n\right]\\
            =&\mathbb{E}\left[\sup_{\lVert L_\lambda^{\frac{1}{2}} f\rVert_{\mathcal{H}_k}\leq r}\frac{1}{n}\left\langle\sum_{i=1}^n\sigma_iL_\lambda^{-\frac{1}{2}}k(x_i,\cdot),L_\lambda^{\frac{1}{2}}f\right\rangle_{\mathcal{H}_k}\Bigg|X_1,\ldots,X_n\right]\\
             \leq &\frac{r}{n}\mathbb{E}\left[\left\lVert\sum_{i=1}^n\sigma_iL_\lambda^{-\frac{1}{2}}k(x_i,\cdot)\right\rVert_{\mathcal{H}_k}\Bigg|X_1,\ldots,X_n\right]\\
             \leq &\frac{r}{n}\sqrt{\mathbb{E}\left[\left\lVert\sum_{i=1}^n\sigma_iL_\lambda^{-\frac{1}{2}}k(x_i,\cdot)\right\rVert_{\mathcal{H}_k}^2\Bigg|X_1,\ldots,X_n\right]}\\=&r\sqrt{\frac{N(\lambda)}{n}}.
        \end{align*}
        Then
        \begin{align*}
            R_n(\mathcal{F}_0)=&\mathbb{E}\left[\sup_{\lVert L_\lambda^{\frac{1}{2}}(f-g)\rVert_{\mathcal{H}_M}\leq r}\frac{1}{n}\sum_{i=1}^n\sigma_i(l(Y_i,f(X_i))-l(Y_i,g(X_i)))\right]\\
             \leq & \mathbb{E}\left[\sup_{\lVert L_\lambda^{\frac{1}{2}}(f-g)\rVert_{\mathcal{H}_M}\leq r}\frac{1}{n}\sum_{i=1}^n\sigma_i c_1(f(X_i)-g(X_i))\right]\\
             \leq & \mathbb{E}\left[\sup_{\lVert L_\lambda^{\frac{1}{2}}f\rVert_{\mathcal{H}_M}\leq r}\frac{1}{n}\sum_{i=1}^n\sigma_i c_1f(X_i)\right]\\
             \leq & c_1r\sqrt{\frac{N(\lambda)}{n}},
        \end{align*}
        where the second inequality is from Lemma 5 in \cite{meir2003generalization}. Thus, the following inequality holds:
        \begin{align*}
            \sup_{\lVert L_\lambda^{\frac{1}{2}}(f-g)\rVert_{\mathcal{H}_M}\leq e^{i+1}n^{-1}}|(\mathcal{E}-\mathcal{E}_n)(f)-(\mathcal{E}-\mathcal{E}_n)(g)|\leq 4c_1r\sqrt{\frac{N(\lambda)}{n}}+c_1r\sqrt{\frac{2\log\frac{2}{t}}{n}}+\frac{8c_1\kappa r\log\frac{2}{t}}{3n\sqrt{\lambda}},
        \end{align*}
        with probability at least $1-t$. Next, we partition the interval $[n^{-1},\alpha]$ into $I=\{[e^in^{-1},e^{i+1}n^{-1}]|i=0,\ldots, \lfloor \log (R_{\mathrm{lsc}} n)\rfloor\}$. Then
        \begin{align*}
            &\sup_{\lVert L_{M,\lambda}^{\frac{1}{2}}(f-g)\rVert_{\mathcal{H}_M}\leq e^{i+1}n^{-1}}|(\mathcal{E}-\mathcal{E}_n)(f)-(\mathcal{E}-\mathcal{E}_n)(g)|
            \\ \leq & \left(4c_1\sqrt{\frac{N(\lambda)}{n}}+c_1\sqrt{\frac{2\log\frac{2|I|}{t}}{n}}+\frac{8c_1\kappa \log\frac{2|I|}{t}}{3n\sqrt{\lambda}}\right)e^{i+1}n^{-1}\\
             \leq & \left(4ec_1\sqrt{\frac{N(\lambda)}{n}}+ec_1\sqrt{\frac{2\log\frac{2|I|}{t}}{n}}+\frac{8ec_1\kappa \log\frac{2|I|}{t}}{3n\sqrt{\lambda}}\right)e^{i}n^{-1}\\
             \leq & \left(4ec_1\sqrt{\frac{N(\lambda)}{n}}+ec_1\sqrt{\frac{2\log\frac{2|I|}{t}}{n}}+\frac{8ec_1\kappa \log\frac{2|I|}{t}}{3n\sqrt{\lambda}}\right)\lVert L_{M,\lambda}^{\frac{1}{2}}(f-g)\rVert_{\mathcal{H}_M}\\
             \leq & \left(4ec_1\sqrt{\frac{2N(\lambda)}{n}}+ec_1\sqrt{\frac{2\log\frac{2 e \log(R_{\mathrm{lsc}} n)}{t}}{n}}+\frac{8ec_1H \log\frac{2 e \log(R_{\mathrm{lsc}} n)}{t}}{3n\sqrt{\lambda}}\right)\lVert L_{M,\lambda}^{\frac{1}{2}}(f-g)\rVert_{\mathcal{H}_M},
        \end{align*}
        holds for every $0\leq i\leq |I|$ with probability at least $1-t$.
\end{proof}
\begin{lemma}[Lemma 6 in \citet{liu2025improved}]\label{lem:liu_lem6}
\begin{align*}
&\left\lVert L_{M,\lambda}^{-r}L_\lambda^r\right\rVert
\\ \leq &\left(1-\lambda^{-r+\frac{1}{2}}\left\lVert L_\lambda^{-\frac{1}{2}}(L_M-L)\right\rVert^{2r-1}\left\lVert L_\lambda^{-\frac{1}{2}}(L_M-L)L_\lambda^{-\frac{1}{2}}\right\rVert^{2-2r}\right)^{-1}\\
&\cdot\left(1-\left\lVert L_\lambda^{-\frac{1}{2}}(L_M-L)L_\lambda^{-\frac{1}{2}}\right\rVert\right)^{1-r}
\end{align*}
\end{lemma}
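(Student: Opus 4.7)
The plan is to prove the bound by first verifying it at the endpoints $r = 1/2$ and $r = 1$, then interpolating. Writing $\Delta := L_\lambda^{-1/2}(L_M-L)L_\lambda^{-1/2}$, the identity $L_\lambda^{-1/2} L_{M,\lambda} L_\lambda^{-1/2} = I + \Delta$ yields, upon inversion and taking norms, $\|L_{M,\lambda}^{-1/2} L_\lambda^{1/2}\|^2 = \|L_\lambda^{1/2} L_{M,\lambda}^{-1} L_\lambda^{1/2}\| \leq (1-\|\Delta\|)^{-1}$, which is exactly the claim evaluated at $r = 1/2$. At $r = 1$, the Neumann expansion of $L_{M,\lambda}^{-1} L_\lambda = (I + (L_M-L) L_\lambda^{-1})^{-1}$, together with the bound $\|(L_M-L) L_\lambda^{-1}\| \leq \|L_\lambda^{-1/2}\| \cdot \|L_\lambda^{-1/2}(L_M-L)\| \leq \lambda^{-1/2} \|L_\lambda^{-1/2}(L_M-L)\|$, gives $\|L_{M,\lambda}^{-1} L_\lambda\| \leq (1 - \lambda^{-1/2}\|L_\lambda^{-1/2}(L_M-L)\|)^{-1}$, matching the claim at $r = 1$ since the correction factor $(1-\|\Delta\|)^{1-r}$ collapses to $1$ there.

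For general $r \in (1/2, 1)$, the first step is to recognize the interpolated perturbation inside the first factor. I would apply the Hadamard three-lines theorem to the analytic operator-valued family $F(z) := L_\lambda^{-1/2}(L_M-L) L_\lambda^{-z/2}$ on the strip $\operatorname{Re} z \in [0,1]$. Its endpoint norms are $\|F(0)\| = \|L_\lambda^{-1/2}(L_M-L)\|$ and $\|F(1)\| = \|\Delta\|$, so evaluation at $z = 2(1-r)$ produces
\[
\|L_\lambda^{-1/2}(L_M-L) L_\lambda^{-(1-r)}\| \leq \|L_\lambda^{-1/2}(L_M-L)\|^{2r-1}\|\Delta\|^{2-2r},
\]
which is precisely the operator form of the scalar inside the first factor, modulo an extra $\|L_\lambda^{-(r-1/2)}\| \leq \lambda^{1/2-r}$ that supplies the $\lambda^{1/2-r}$ prefactor.

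To assemble these ingredients into the claimed multiplicative bound, the plan is to start from $L_{M,\lambda}^{-r} L_\lambda^r = I + L_{M,\lambda}^{-r}(L_\lambda^r - L_{M,\lambda}^r)$ and invoke the Balakrishnan representation
\[
L_\lambda^r - L_{M,\lambda}^r = \frac{\sin \pi r}{\pi}\int_0^\infty t^r (tI+L_\lambda)^{-1}(L-L_M)(tI+L_{M,\lambda})^{-1}\,dt.
\]
Into each resolvent I would insert a pair $L_\lambda^{\pm 1/2}$ so as to expose (i) the factor $L_\lambda^{-1/2}(L_M-L)L_\lambda^{-(1-r)}$ controlled by the Hadamard bound above, (ii) the factor $L_{M,\lambda}^{-1/2} L_\lambda^{1/2}$ appearing to the power $2(1-r)$, which by the $r = 1/2$ endpoint contributes $(1-\|\Delta\|)^{-(1-r)}$ that ultimately reassembles into the stated $(1-\|\Delta\|)^{1-r}$ on the correct side, and (iii) a scalar integral in $t$ that yields the $\lambda^{1/2-r}$ scaling. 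Summing the implicit geometric series in the perturbation then converts the additive expansion into the outer $(1-\,\cdots\,)^{-1}$ factor.

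The main obstacle will be book-keeping the exponents so that the final bound is \emph{sharp} at both endpoints simultaneously: a plain Cordes bound $\|L_{M,\lambda}^{-r} L_\lambda^r\| \leq \|L_{M,\lambda}^{-1} L_\lambda\|^r$ reproduces the $r = 1$ side but is strictly too loose at $r = 1/2$, whereas a plain symmetrization $L_\lambda^{1/2} L_{M,\lambda}^{-1} L_\lambda^{1/2} = (I+\Delta)^{-1}$ is sharp at $r = 1/2$ but does not see the half-perturbation $\|L_\lambda^{-1/2}(L_M-L)\|$ needed at $r = 1$. The decomposition $r = \tfrac{1}{2} + (r - \tfrac{1}{2})$, applying the symmetrized identity to the $\tfrac{1}{2}$ piece and Hadamard interpolation to the $r - \tfrac{1}{2}$ piece, is the natural way to reconcile the two; arranging the Balakrishnan integrand so that precisely this split occurs, without losing a power of $(1-\|\Delta\|)$, is the technical crux of the argument.
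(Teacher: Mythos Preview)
Your building blocks are the right ones --- the endpoint checks and, in particular, the interpolation
\[
\|L_\lambda^{-1/2}(L_M-L)L_\lambda^{-(1-r)}\| \leq \|L_\lambda^{-1/2}(L_M-L)\|^{2r-1}\,\|\Delta\|^{2-2r}
\]
are exactly what the paper uses (their Lemma~\ref{lem:prop9} is the operator form of your Hadamard step). Where you diverge is in the assembly: you propose going through the Balakrishnan integral for $L_\lambda^r - L_{M,\lambda}^r$, then extracting a fixed-point/geometric-series structure from the integrand, and you explicitly flag the exponent book-keeping as the unresolved ``technical crux.''

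The paper bypasses all of this with a single algebraic identity. From $L_\lambda^{-r}L_{M,\lambda}L_\lambda^{-(1-r)} = I + L_\lambda^{-r}(L_M-L)L_\lambda^{-(1-r)}$ one reads off
\[
L_{M,\lambda}^{-r}L_\lambda^r \;=\; L_{M,\lambda}^{1-r}L_\lambda^{-(1-r)}\bigl(I + L_\lambda^{-r}(L_M-L)L_\lambda^{-(1-r)}\bigr)^{-1},
\]
which already \emph{is} the product of the two factors in the claim. The inverse factor gives the $(1-\cdots)^{-1}$ term after pulling out $\|L_\lambda^{-(r-1/2)}\|\leq\lambda^{1/2-r}$ and applying the interpolation above; the prefactor is handled by Heinz--Kato, $\|L_{M,\lambda}^{1-r}L_\lambda^{-(1-r)}\|\leq\|L_{M,\lambda}^{1/2}L_\lambda^{-1/2}\|^{2(1-r)}=\|I+\Delta\|^{1-r}$. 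No integral representation, no implicit series, and the $r=\tfrac12+(r-\tfrac12)$ split you were searching for is exactly the split between the prefactor (carrying exponent $1-r$) and the $L_\lambda^{-(r-1/2)}$ pulled out of the inverse. So your route may be salvageable, but it is working much harder than necessary; the factorization above is the shortcut. (A minor aside: both your sketch and the paper's displayed bound write the prefactor as $(1-\|\Delta\|)^{1-r}$, but the derivation actually yields $(1+\|\Delta\|)^{1-r}$; this sign slip is harmless for the downstream constant bounds.)
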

\begin{proof}
    \begin{align*}
        \left\lVert L_{M,\lambda}^{-r}L_\lambda^r\right\rVert
        =&
        \left\lVert \left(\left(I+L_\lambda^{-r}(L_M-L)L_\lambda^{-(1-r)}\right)L_\lambda^{1-r}L_{M,\lambda}^{-(1-r)}\right)^{-1}\right\rVert\\
        =&
        \left\lVert L_{M,\lambda}^{1-r}L_\lambda^{-(1-r)}\left(I+L_\lambda^{-r}(L_M-L)L_\lambda^{-(1-r)}\right)^{-1}\right\rVert\\
         \leq &\left\lVert L_{M,\lambda}^{1-r}L_\lambda^{-(1-r)}\right\rVert\left(1-\left\lVert L_\lambda^{-r}(L_M-L)L_\lambda^{-(1-r)}\right\rVert\right)^{-1}.
        \end{align*}
    The first term is bounded as follows:
    \begin{align*}
        \left\lVert L_{M,\lambda}^{1-r}L_\lambda^{-(1-r)} \right\rVert\leq&
        \left\lVert L_{M,\lambda}^{\frac{1}{2}}L_\lambda^{-\frac{1}{2}}\right\rVert^{2-2r}\\
        =&\left\lVert L_{\lambda}^{-\frac{1}{2}} L_{M,\lambda}L_\lambda^{-\frac{1}{2}}\right\rVert^{1-r}\\
        =&\left\lVert L_{\lambda}^{-\frac{1}{2}} (L_{M}-L)L_\lambda^{-\frac{1}{2}}+I\right\rVert^{1-r}\\
         \leq & \left(1-\left\lVert L_{\lambda}^{-\frac{1}{2}} (L_{M}-L)L_\lambda^{-\frac{1}{2}}\right\rVert\right)^{1-r}.
    \end{align*}
    Similarly, the quantity $\left\lVert L_\lambda^{-r}(L_M-L)L_\lambda^{-(1-r)}\right\rVert$ in the second term is bounded as follows:
    \begin{align*}
        &\left\lVert L_\lambda^{-r}(L_M-L)L_\lambda^{-(1-r)}\right\rVert\\
        \leq&\left\lVert L_\lambda^{-r+\frac{1}{2}}\right\rVert\left\lVert L_\lambda^{-\frac{1}{2}}(L_M-L)L_\lambda^{-(1-r)}\right\rVert\\
        \leq&\lambda^{-r+\frac{1}{2}}\left\lVert L_\lambda^{-\frac{1}{2}}(L_M-L)L_\lambda^{-(1-r)}\right\rVert\\
        \leq&\lambda^{-r+\frac{1}{2}}\left\lVert L_\lambda^{-\frac{1}{2}}(L_M-L)\right\rVert^{2r-1}\left\lVert L_\lambda^{-\frac{1}{2}}(L_M-L)L_\lambda^{-\frac{1}{2}}\right\rVert^{2-2r}
    \end{align*}
    where the last inequality follows from applying Lemma~\ref{lem:prop9} with $X=L_\lambda^{-\frac{1}{2}}(L_M-L)$, $A=L_\lambda^{-\frac{1}{2}}$, and $\sigma=2-2r$.
    \begin{lemma}[Proposition 9 in \cite{randomfeature}]\label{lem:prop9}
        Let $\mathcal{H}$ and $\mathcal{K}$ be two separable Hilbert spaces and $X:\mathcal{H}\rightarrow\mathcal{K}, A:\mathcal{H}\rightarrow\mathcal{H}$ be bounded linear operators, where $A$ is positive semidefinite. Then
        \begin{align*}
            \lVert XA^\sigma\rVert\leq\lVert X\rVert^{1-\sigma}\lVert XA\rVert^\sigma
        \end{align*}
        holds for $\sigma\in [0,1]$.
    \end{lemma}
\end{proof}
\begin{lemma}[Lemma 1 in \citet{liu2025improved}]\label{lem:liu_lem1}
\begin{align*}
&\left\lVert S_M w\right\rVert_{\mathcal{H}_M}^2-\left\lVert S_M\widetilde{\beta}\right\rVert_{\mathcal{H}_M}^2\\
 \leq &-\left\lVert S_M\widetilde{\beta}-S_M w\right\rVert_{\mathcal{H}_M}^2\\
&+ 2\lambda^{-\frac{1-r}{2}}\left\lVert L_M^{-\frac{r}{2}}S_M w\right\rVert_{\mathcal{H}_M}\left(\sqrt{\lambda}\left\lVert S_M (w-\widetilde{\beta})\right\rVert_{\mathcal{H}_M}+\left\lVert S_M (w-\widetilde{\beta})\right\rVert\right).
\end{align*}
\end{lemma}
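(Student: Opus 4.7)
The plan is to combine a polarization identity, Cauchy--Schwarz after a spectral rescaling, and a simple operator-interpolation bound. Using the isometry $\|S_M v\|_{\mathcal{H}_M}=\|v\|_2$ built into the definition of $\mathcal{H}_M$, I would first write
\[
\|S_M w\|_{\mathcal{H}_M}^2-\|S_M\widetilde{\beta}\|_{\mathcal{H}_M}^2 = -\|S_M w-S_M\widetilde{\beta}\|_{\mathcal{H}_M}^2+2\langle S_M w,\, S_M(w-\widetilde{\beta})\rangle_{\mathcal{H}_M},
\]
so that it suffices to control the cross term by the positive quantity appearing in the lemma.

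Next, I would insert $L_M^{-r/2}L_M^{r/2}$ into the inner product and apply Cauchy--Schwarz, obtaining
\[
\langle S_M w,\,S_M(w-\widetilde{\beta})\rangle_{\mathcal{H}_M}\le \|L_M^{-r/2}S_M w\|_{\mathcal{H}_M}\,\|L_M^{r/2}S_M(w-\widetilde{\beta})\|_{\mathcal{H}_M}.
\]
The fractional power $L_M^{-r/2}$ is well-defined when acting on $S_M w$ because the truncation operator $\mathcal{G}_\lambda$ in $w=\mathcal{G}_\lambda(C_M)S_M^{\top}f_{\mathcal{H}_k}$ annihilates eigendirections of $C_M$ with eigenvalue below $\lambda$; using the standard intertwining $S_M\mathcal{G}_\lambda(C_M)=\mathcal{G}_\lambda(L_M)S_M$, we see that $S_M w=\mathcal{G}_\lambda(L_M)L_M f_{\mathcal{H}_k}$ lies in the spectral subspace of $L_M$ where $L_M\succeq \lambda I$.

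The main (and only non-trivial) step is the interpolation bound
\[
\|L_M^{r/2}S_M(w-\widetilde{\beta})\|_{\mathcal{H}_M}\le \lambda^{(r-1)/2}\|S_M(w-\widetilde{\beta})\|+\lambda^{r/2}\|w-\widetilde{\beta}\|_2.
\]
I would derive this from the scalar inequality $\xi^r\le \lambda^{r-1}\xi+\lambda^r$ for $\xi\ge 0$ and $r\in[1/2,1]$, which is immediate by casework on whether $\xi\le\lambda$ (so $\xi^r\le \lambda^r$) or $\xi>\lambda$ (so $\xi^r=\xi\cdot\xi^{r-1}\le \lambda^{r-1}\xi$). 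Functional calculus then yields the operator inequality $L_M^r\preceq \lambda^{r-1}L_M+\lambda^r I$, and taking the quadratic form against $S_M(w-\widetilde{\beta})$ gives
\[
\|L_M^{r/2}S_M(w-\widetilde{\beta})\|_{\mathcal{H}_M}^2\le \lambda^{r-1}\|S_M(w-\widetilde{\beta})\|_{L^2(P_X)}^2+\lambda^r\|w-\widetilde{\beta}\|_2^2,
\]
where I have used the identity $\|L_M^{1/2}S_M v\|_{\mathcal{H}_M}^2=\langle L_M S_M v,S_M v\rangle_{\mathcal{H}_M}=\langle C_M v,v\rangle_2=\|S_M v\|_{L^2(P_X)}^2$ together with $\|S_M v\|_{\mathcal{H}_M}=\|v\|_2$. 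The elementary bound $\sqrt{a^2+b^2}\le a+b$ for $a,b\ge 0$ converts this into the displayed estimate.

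Combining the three steps and factoring $\lambda^{(r-1)/2}=\lambda^{-(1-r)/2}$ reproduces the claimed inequality exactly. I expect no real obstacle beyond carefully invoking functional calculus on $L_M$; the argument is essentially rescaled Cauchy--Schwarz, and the interpolation of $L_M^r$ by $L_M$ and $\lambda^r I$ is the only ingredient that goes beyond a direct algebraic manipulation.
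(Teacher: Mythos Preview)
Your proposal is correct and follows essentially the same approach as the paper: polarization identity, Cauchy--Schwarz with the insertion of $L_M^{\pm r/2}$, and an elementary scalar interpolation of $L_M^r$ between $L_M$ and $\lambda^r I$ via functional calculus. The only cosmetic difference is that the paper uses Young's inequality $\lambda^{(1-r)/2}t^{r/2}\le (1-r)\lambda^{1/2}+rt^{1/2}$ directly on the square root and then applies the triangle inequality, whereas you use the case-split bound $t^r\le \lambda^{r-1}t+\lambda^r$ on the squares and then invoke $\sqrt{a^2+b^2}\le a+b$; both routes land on the identical final expression.
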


\begin{proof}
\begin{align*}
    &\left\lVert S_M w\right\rVert_{\mathcal{H}_M}^2-\left\lVert S_M\widetilde{\beta}\right\rVert_{\mathcal{H}_M}^2\\=&-\left\lVert S_M\widetilde{\beta}-S_M w\right\rVert_{\mathcal{H}_M}^2+2\langle S_M (\widetilde{\beta}-w),S_M w\rangle_{\mathcal{H}_M}\\
    =&-\left\lVert S_M\widetilde{\beta}-S_M w\right\rVert_{\mathcal{H}_M}^2+2|\langle S_M (w-\widetilde{\beta}),S_M w\rangle_{\mathcal{H}_M}|\\
    =&-\left\lVert S_M\widetilde{\beta}-S_M w\right\rVert_{\mathcal{H}_M}^2+2|\langle L_M^{-\frac{r}{2}} S_M (w-\widetilde{\beta}),L_M^{\frac{r}{2}}S_M w\rangle_{\mathcal{H}_M}|\\
     \leq &-\left\lVert S_M\widetilde{\beta}-S_M w\right\rVert_{\mathcal{H}_M}^2+2\left\lVert L_M^{-\frac{r}{2}}S_M w\right\rVert_{\mathcal{H}_M}\left\lVert L_M^{\frac{r}{2}} S_M (w-\widetilde{\beta})\right\rVert_{\mathcal{H}_M}\\
    =&-\left\lVert S_M\widetilde{\beta}-S_M w\right\rVert_{\mathcal{H}_M}^2\\
    &+2\lambda^{-\frac{1-r}{2}}\left\lVert L_M^{-\frac{r}{2}}S_M w\right\rVert_{\mathcal{H}_M}\left\lVert \lambda^{\frac{1-r}{2}}L_M^{\frac{r}{2}} S_M (w-\widetilde{\beta})\right\rVert_{\mathcal{H}_M}\\
     \leq &-\left\lVert S_M\widetilde{\beta}-S_M w\right\rVert_{\mathcal{H}_M}^2\\
    &+2\lambda^{-\frac{1-r}{2}}\left\lVert L_M^{-\frac{r}{2}}S_M w\right\rVert_{\mathcal{H}_M}\left\lVert ((1-r)\lambda^{\frac{1}{2}}+rL_M^{\frac{1}{2}}) S_M (w-\widetilde{\beta})\right\rVert_{\mathcal{H}_M}\\
     \leq &-\left\lVert S_M\widetilde{\beta}-S_M w\right\rVert_{\mathcal{H}_M}^2\\
    &+2\lambda^{-\frac{1-r}{2}}\left\lVert L_M^{-\frac{r}{2}}S_M w\right\rVert_{\mathcal{H}_M}\left(\sqrt{\lambda}\left\lVert S_M (w-\widetilde{\beta})\right\rVert_{\mathcal{H}_M}+\left\lVert L_{M}^{\frac{1}{2}}S_M (w-\widetilde{\beta})\right\rVert_{\mathcal{H}_M}\right)\\
     \leq &-\left\lVert S_M\widetilde{\beta}-S_M w\right\rVert_{\mathcal{H}_M}^2\\
    &+2\lambda^{-\frac{1-r}{2}}\left\lVert L_M^{-\frac{r}{2}}S_M w\right\rVert_{\mathcal{H}_M}\left(\sqrt{\lambda}\left\lVert S_M (w-\widetilde{\beta})\right\rVert_{\mathcal{H}_M}+\left\lVert S_M (w-\widetilde{\beta})\right\rVert\right).
\end{align*}
\end{proof}
\begin{lemma}[Proposition 1 in \citet{liu2025improved}]\label{lem:liu_prop1}
For $r\leq 2a+1$,
\begin{align*}
\left\lVert L_M^{-a}S_M w\right\rVert_{\mathcal{H}_M}\leq&\lambda^{\left(\left(r-\frac{1}{2}\right)-a\right)\wedge0}(\left\lVert L_M\right\rVert+\lambda)^r\left\lVert L_M\right\rVert^{-a-\frac{1}{2}}\left\lVert L_{M,\lambda}^{-r}L^{r}\right\rVert R\\
    \lVert S_M w-f_{\mathcal{H}_k}\rVert\leq&\lambda^r\left\lVert L_{M,\lambda}^{-r}L^r\right\rVert R.
\end{align*}
\end{lemma}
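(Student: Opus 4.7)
My plan is to prove both estimates by first reducing $S_M w$ to a spectral projection of $f_{\mathcal{H}_k}$ onto the large-eigenvalue subspace of $L_M$, and then handling each bound by functional calculus after a factorization through $L_{M,\lambda}$.

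The first ingredient is the intertwining identity
\[
S_M\,\mathcal{G}_\lambda(C_M) \;=\; \mathcal{G}_\lambda(L_M)\,S_M .
\]
I would obtain this from a singular value decomposition $S_M = U\Sigma V^\top$: then $C_M = V(\Sigma^\top\Sigma)V^\top$ and $L_M = U(\Sigma\Sigma^\top)U^\top$, and since $\mathcal{G}_\lambda$ vanishes at the origin, the standard functional-calculus argument gives $S_M\phi(C_M)=\phi(L_M)S_M$ for every Borel $\phi$ with $\phi(0)=0$. Applying it to $w=\mathcal{G}_\lambda(C_M)S_M^\top f_{\mathcal{H}_k}$ yields $S_M w=\mathcal{G}_\lambda(L_M)L_M\,f_{\mathcal{H}_k}=\Pi_\lambda f_{\mathcal{H}_k}$, where $\Pi_\lambda$ denotes the spectral projection of $L_M$ onto the eigenspaces with eigenvalue at least $\lambda$.

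For the second bound, the above identity gives $f_{\mathcal{H}_k}-S_M w=(I-\Pi_\lambda)L^r g$ with $\lVert g\rVert\le R$ via Assumption~\ref{assump:source}. I would factor
\[
(I-\Pi_\lambda)L^r \;=\; \bigl[(I-\Pi_\lambda)L_{M,\lambda}^r\bigr]\bigl[L_{M,\lambda}^{-r}L^r\bigr]
\]
and observe that the first factor is diagonal on the low-eigenvalue part of $L_M$, so its operator norm is controlled by $\sup_{\xi<\lambda}(\xi+\lambda)^r$, which is of order $\lambda^r$; combined with $\lVert L_{M,\lambda}^{-r}L^r\rVert R$ this yields the claimed bound. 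For the first bound I would first convert the $\mathcal{H}_M$-norm into an $L^2$-operator quantity: since $L_M S_M=S_M C_M$ gives $L_M^{-a}S_M w=S_M(C_M^{-a}w)$ and $S_M$ is an $\mathcal{H}_M$-isometry of the Euclidean norm, one has $\lVert L_M^{-a}S_M w\rVert_{\mathcal{H}_M}=\lVert C_M^{-a}w\rVert_2$. Applying the intertwining identity together with the transpose identity $\lVert S_M^\top B\rVert=\lVert L_M^{1/2}B\rVert$, the estimate reduces to
\[
\bigl\lVert \Pi_\lambda L_M^{-a-1/2}L_{M,\lambda}^r\bigr\rVert\cdot\bigl\lVert L_{M,\lambda}^{-r}L^r\bigr\rVert\,R,
\]
and the first factor is again diagonal on the range of $\Pi_\lambda$, with scalar eigenvalues $\xi^{-a-1/2}(\xi+\lambda)^r$ for $\xi\in[\lambda,\lVert L_M\rVert]$.

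The main obstacle is therefore the scalar spectral inequality
\[
\sup_{\xi\in[\lambda,\lVert L_M\rVert]}\xi^{-a-1/2}(\xi+\lambda)^r \;\le\; \lambda^{((r-1/2)-a)\wedge 0}(\lVert L_M\rVert+\lambda)^r\lVert L_M\rVert^{-a-1/2},
\]
which requires splitting according to the sign of $r-1/2-a$: this is precisely where the $\wedge 0$ in the exponent of $\lambda$ enters, since it is only when $a>r-1/2$ that the interior critical point of $\xi\mapsto\xi^{-a-1/2}(\xi+\lambda)^r$ lies in $(\lambda,\lVert L_M\rVert)$ and forces a negative power of $\lambda$, whereas the monotone regime lets one trade the bound purely for endpoint values. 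The hypothesis $r\le 2a+1$ is what keeps this endpoint–critical-point analysis free of additional losses, and its role in the proof is essentially to ensure that the derivative test closes cleanly.
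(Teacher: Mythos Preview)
Your approach is essentially identical to the paper's: both use the intertwining identity to rewrite $S_M w$ in terms of $L_M\mathcal{G}_\lambda(L_M)$, factor through $L_{M,\lambda}^r$, and then bound the scalar spectral function $(\xi+\lambda)^r\xi^{-a-1/2}\mathbf{1}(\xi\geq\lambda)$ by a case split on the sign of $r-\tfrac12-a$. One small correction that does not affect your conclusion: when $a>r-\tfrac12$ the function is actually monotone decreasing on $[\lambda,\infty)$ (no interior critical point), with maximum $(2\lambda)^r\lambda^{-a-1/2}$ at the left endpoint $\xi=\lambda$; the interior critical point (a minimum) appears in the other regime $a<r-\tfrac12$, where the maximum is then taken at an endpoint.
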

\begin{proof}
\begin{align*}
\left\lVert L_M^{-a}S_M w\right\rVert_{\mathcal{H}_M}=&\left\lVert L_M^{-a-\frac{1}{2}}S_M \mathcal{G}_\lambda(C_M)S_M^\top f_{\mathcal{H}_k}\right\rVert\\
    =&\left\lVert L_M^{-a-\frac{1}{2}}S_M \mathcal{G}_\lambda(C_M)S_M^\top L^r g\right\rVert\\
    =&\left\lVert L_M^{-a+\frac{1}{2}}\mathcal{G}_\lambda(L_M) L^r g\right\rVert.
\end{align*}
Then
\begin{eqnarray*}
    \left\lVert L_M^{-a}S_M w\right\rVert_{\mathcal{H}_M}\leq\left\lVert L_M^{-a+\frac{1}{2}}\mathcal{G}_\lambda(L_M)L_{M,\lambda}^{r}\right\rVert\left\lVert L_{M,\lambda}^{-r}L^r\right\rVert\left\lVert g\right\rVert.
\end{eqnarray*}
The eigenvalues of $L_M^{-a+\frac{1}{2}}\mathcal{G}_\lambda(L_M)L_{M,\lambda}^{r}$ are of the form $(t+\lambda)^rt^{-a-\frac{1}{2}}\mathbf{1}(t\geq \lambda)$. If $r\geq a+1/2$ then
\begin{equation*}
    (t+\lambda)^rt^{-a-\frac{1}{2}}\mathbf{1}(t\geq \lambda)\leq (\left\lVert L_M\right\rVert+\lambda)^r\left\lVert L_M\right\rVert^{-a-\frac{1}{2}}.
\end{equation*}
If $r\leq a+1/2$ then
\begin{equation*}
    (t+\lambda)^rt^{-a-\frac{1}{2}}\mathbf{1}(t\geq \lambda)\leq (2\lambda)^r\lambda^{-a-\frac{1}{2}}\leq 2\lambda^{r-a-1/2}.
\end{equation*}
Also,
\begin{align*}
    \left\lVert L_M^{-a}S_M w\right\rVert_{\mathcal{H}_M} \leq & \left\lVert L_M^{-a+\frac{1}{2}}\mathcal{G}_\lambda(L_M)L_{M,\lambda}^{r-1/2}\right\rVert\left\lVert L_{M,\lambda}^{-r+1/2}L^{r-1/2}\right\rVert\left\lVert f\right\rVert_{\mathcal{H}_k}\\
     \leq & \lambda^{\left(a-\left(r-1\right)\right)\wedge0}(\left\lVert L_M\right\rVert+\lambda)^{r-1/2}\left\lVert L_M\right\rVert^{-a-\frac{1}{2}}\left\lVert L_{M,\lambda}^{-r+1/2}L^{r-1/2}\right\rVert\left\lVert f\right\rVert_{\mathcal{H}_k}
\end{align*}
if $r\leq 2a+3/2$.
\end{proof}
\begin{lemma}[Proof in Lemma 8 in \citet{randomfeature}]\label{lem:half_operator_rff}
Define $L_M:=\frac{1}{M}\sum_{i=1}^M\varphi_M\varphi_M^\top$. There exists a universal constant $C,c>0$ such that
    \begin{eqnarray*}
        \mathbb{P}_\omega \left(E_{half-rff}\right)\geq 1-t
    \end{eqnarray*}
    where
    \begin{equation*}
        E_{half-rff}:=\Bigg\{w:\left\lVert L_\lambda^{-\frac{1}{2}}(L-L_M)\right\rVert\leq \sqrt{\frac{4\kappa^2N(\lambda)\log\frac{2}{t}}{M}}+\frac{4\kappa\sqrt{\mathcal{F}_\infty(\lambda)}\log\frac{2}{t}}{M}\Bigg\}.
    \end{equation*}
\end{lemma}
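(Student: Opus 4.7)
The plan is to mirror the proof of Lemma~\ref{lem:half_operator}, but exploit the uniform boundedness implied by Assumption~\ref{assump:compat} (which replaces the Gaussian-tail arguments needed for the Gaussian-process construction) to obtain sharper moment bounds and then invoke the Hilbert-space Bernstein inequality in Proposition~\ref{prop:bernstein_vec}.

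Concretely, for each $i=1,\dots,M$ I would introduce the Hilbert–Schmidt-valued random element
\begin{equation*}
\xi_i \;\coloneqq\; L_\lambda^{-1/2}\bigl(\varphi(\cdot,\omega_i)\otimes \varphi(\cdot,\omega_i)\bigr),
\end{equation*}
so that $\mathbb{E}[\xi_i]=L_\lambda^{-1/2} L$ and $\frac{1}{M}\sum_{i=1}^M \xi_i = L_\lambda^{-1/2} L_M$. Thus the quantity to bound is $\bigl\|\frac{1}{M}\sum_i \xi_i - \mathbb{E}\xi_1\bigr\|$. The almost-sure bound is
\begin{equation*}
\lVert \xi_i\rVert \;\le\; \lVert L_\lambda^{-1/2}\varphi(\cdot,\omega_i)\rVert\,\lVert \varphi(\cdot,\omega_i)\rVert \;\le\; \kappa\sqrt{\mathcal{F}_\infty(\lambda)},
\end{equation*}
using Assumption~\ref{assump:kernel} for $\lVert \varphi\rVert\le \kappa$ (since $\mathbb{E}_\omega[\varphi(x,\omega)^2]=k(x,x)\le\kappa^2$) and Assumption~\ref{assump:compat} for the $L_\lambda^{-1/2}$-weighted factor. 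The second moment satisfies
\begin{equation*}
\mathbb{E}\lVert \xi_i\rVert^2 \;\le\; \kappa^2\,\mathbb{E}\lVert L_\lambda^{-1/2}\varphi(\cdot,\omega)\rVert^2 \;=\; \kappa^2 N(\lambda).
\end{equation*}

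From these two ingredients I would derive the Bernstein moment condition $\mathbb{E}\lVert \xi_i - \mathbb{E}\xi_i\rVert^p \le \tfrac{1}{2} p!\,\sigma^2 B^{p-2}$ with $\sigma = 2\kappa\sqrt{N(\lambda)}$ and $B = 2\kappa\sqrt{\mathcal{F}_\infty(\lambda)}$ via the elementary inequalities $\lVert \xi_i-\mathbb{E}\xi_i\rVert^p \le 2^p\lVert\xi_i\rVert^p$ and $\lVert \xi_i\rVert^p\le (\kappa\sqrt{\mathcal{F}_\infty(\lambda)})^{p-2}\lVert\xi_i\rVert^2$. Plugging $\sigma$ and $B$ into Proposition~\ref{prop:bernstein_vec} gives
\begin{equation*}
\left\lVert \tfrac{1}{M}\sum_i \xi_i - \mathbb{E}\xi_1 \right\rVert \;\le\; \frac{4\kappa\sqrt{\mathcal{F}_\infty(\lambda)}\log\frac{2}{t}}{M} + \sqrt{\frac{4\kappa^2 N(\lambda)\log\frac{2}{t}}{M}}
\end{equation*}
with probability at least $1-t$, which is the desired inequality (after absorbing the constant into $\sigma$ — the tightness of the leading constant in the square-root term simply requires choosing $\sigma^2$ as small as the moment bound allows).

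There is no real obstacle here beyond careful bookkeeping: unlike Lemma~\ref{lem:half_operator}, which had to handle the unbounded Gaussian process through chi-square tail integration, the bounded random feature map makes every moment estimate trivial. The only subtlety is making sure the constants $\sigma$ and $B$ are optimized so that the constant ``$4$'' under the square root in the stated bound is recovered; this is achieved by verifying the moment condition at $p=2$ with equality (which pins down $\sigma^2 = 4\kappa^2 N(\lambda)$ up to the factor of $\tfrac{1}{2} p!$) and noting that for $p\ge 3$ the factorial growth easily absorbs the $2^p$ from the centering inequality.
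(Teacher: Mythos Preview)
Your approach is correct and is precisely the standard argument the paper relies on (the paper does not give its own proof but defers to Lemma~8 in \citet{randomfeature}, which proceeds exactly via Bernstein on the Hilbert--Schmidt operators $L_\lambda^{-1/2}\varphi(\cdot,\omega_i)\otimes\varphi(\cdot,\omega_i)$ using the uniform bound from $\mathcal F_\infty(\lambda)$). One correction: your justification for $\lVert\varphi(\cdot,\omega)\rVert\le\kappa$ is a non sequitur --- the identity $\mathbb{E}_\omega[\varphi(x,\omega)^2]=k(x,x)\le\kappa^2$ averages over $\omega$ and says nothing about $\lVert\varphi(\cdot,\omega)\rVert_{L^2(P_X)}$ for a fixed $\omega$. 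The correct reason is the \emph{uniform} boundedness of $\varphi$ built into Definition~\ref{def:rff} (e.g.\ $|\varphi|\le\sqrt 2$ for RFF, or $|\varphi|\le\kappa$ in the Rudi--Rosasco convention), which immediately bounds the $L^2(P_X)$ norm. Also, be careful with the constant: if you use the crude bound $\mathbb{E}\lVert\xi_i-\mathbb{E}\xi_i\rVert^2\le 2^2\mathbb{E}\lVert\xi_i\rVert^2$ you indeed get $\sigma^2=4\kappa^2N(\lambda)$ and hence $\sqrt{8\kappa^2N(\lambda)\log(2/t)/M}$, not $\sqrt{4\kappa^2N(\lambda)\log(2/t)/M}$; to recover the stated constant you must use the exact variance $\mathbb{E}\lVert\xi_i-\mathbb{E}\xi_i\rVert^2\le\mathbb{E}\lVert\xi_i\rVert^2\le\kappa^2N(\lambda)$ at $p=2$ and take $\sigma^2=2\kappa^2N(\lambda)$, then verify the $2^p$ bound separately for $p\ge 3$ (where $4\le p!$ suffices).
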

\begin{lemma}[Proof in Lemma 8 in \citet{randomfeature}]\label{lem:full_operator_rff}Define $L_M:=\frac{1}{M}\sum_{i=1}^M\varphi_M\varphi_M^\top$. There exists a universal constant $C>0$ such that
\begin{equation*}\mathbb{P}_\omega \left(
    E_{full-rff}\right)\geq1-t
\end{equation*}
where
\begin{equation*}
    E_{full-rff}:=\left\{w:\left\lVert L_\lambda^{-\frac{1}{2}}(L-L_M)L_\lambda^{-\frac{1}{2}}\right\rVert\leq \frac{2(1+\mathcal{F}_\infty(\lambda))\log\frac{8\kappa^2}{\lambda t}}{3M}+\sqrt{\frac{2\mathcal{F}_\infty(\lambda)\log\frac{8\kappa^2}{\lambda t}}{M}}\right\}.
\end{equation*}
\end{lemma}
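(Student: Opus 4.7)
The plan is to apply a noncommutative operator Bernstein inequality (a self-adjoint version of Proposition~\ref{prop:Bernstein_matrx}) to the centered random operators
\begin{equation*}
Y_i := L_\lambda^{-\tfrac{1}{2}}\varphi(\cdot,\omega_i)\varphi(\cdot,\omega_i)^\top L_\lambda^{-\tfrac{1}{2}} - L_\lambda^{-\tfrac{1}{2}} L L_\lambda^{-\tfrac{1}{2}},
\end{equation*}
so that $L_\lambda^{-1/2}(L-L_M)L_\lambda^{-1/2} = -\tfrac{1}{M}\sum_{i=1}^M Y_i$. By construction $\mathbb{E}[Y_i]=0$ and each $Y_i$ is self-adjoint. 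For the almost-sure norm bound, the first summand is rank-one positive semidefinite with operator norm $\lVert L_\lambda^{-1/2}\varphi(\cdot,\omega_i)\rVert^2\le \mathcal{F}_\infty(\lambda)$ by Assumption~\ref{assump:compat}, and $\lVert L_\lambda^{-1/2}L L_\lambda^{-1/2}\rVert\le 1$; triangle inequality then yields $\lVert Y_i\rVert\le 1+\mathcal{F}_\infty(\lambda)$.

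Next I verify the variance bound needed to instantiate Bernstein. Since centering only decreases the second moment and $A\otimes A$ has $(A\otimes A)^2=\lVert A\rVert^2\,A\otimes A$ for rank-one operators, I obtain
\begin{equation*}
\mathbb{E}[Y_i^2]\preceq \mathbb{E}\!\left[\lVert L_\lambda^{-\tfrac{1}{2}}\varphi(\cdot,\omega_i)\rVert^{2}\;L_\lambda^{-\tfrac{1}{2}}\varphi(\cdot,\omega_i)\varphi(\cdot,\omega_i)^\top L_\lambda^{-\tfrac{1}{2}}\right]\preceq \mathcal{F}_\infty(\lambda)\,L_\lambda^{-1}L =: S.
\end{equation*}
Hence $\lVert S\rVert \le \mathcal{F}_\infty(\lambda)$ and $\mathrm{tr}(S)=\mathcal{F}_\infty(\lambda)N(\lambda)$, so the intrinsic dimension $\mathrm{tr}(S)/\lVert S\rVert$ is controlled by $N(\lambda)/\lVert L_\lambda^{-1}L\rVert = (1+\lambda/\lVert L\rVert)\,N(\lambda)$.

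With these $T=1+\mathcal{F}_\infty(\lambda)$ and $S$ as above, the operator Bernstein inequality gives, with probability at least $1-t$,
\begin{equation*}
\left\lVert\tfrac{1}{M}\sum_{i=1}^M Y_i\right\rVert\le \frac{2(1+\mathcal{F}_\infty(\lambda))\log\frac{2\mathrm{tr}(S)/\lVert S\rVert}{t}}{3M}+\sqrt{\frac{2\mathcal{F}_\infty(\lambda)\log\frac{2\mathrm{tr}(S)/\lVert S\rVert}{t}}{M}}.
\end{equation*}
The logarithmic factor is then simplified using $N(\lambda)\le \kappa^2/\lambda$ and $\lVert L\rVert\le\kappa^2$, which gives $2\mathrm{tr}(S)/\lVert S\rVert\le 2(1+\lambda/\lVert L\rVert)\kappa^2/\lambda\le 8\kappa^2/\lambda$ in the regime $\lambda\le\lVert L\rVert$, matching the $\log(8\kappa^2/(\lambda t))$ in the statement.

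The main obstacle is technical rather than conceptual: Proposition~\ref{prop:Bernstein_matrx} is stated for \emph{positive} self-adjoint random operators with zero mean, which is slightly inconsistent, so one must invoke the standard self-adjoint version (e.g., Tropp's matrix Bernstein for separable Hilbert spaces, or its infinite-dimensional extension by Minsker) with the intrinsic-dimension form of the logarithm. Once that inequality is on the table, the verification of $T$, $S$, and the $\mathrm{tr}(S)/\lVert S\rVert$ bound are routine algebra. Everything else is bookkeeping on constants to match $8\kappa^2/(\lambda t)$ exactly.
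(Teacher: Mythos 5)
Your proof is correct and is essentially the cited proof: the paper does not supply its own argument but simply points to Lemma~8 of \citet{randomfeature}, which proceeds exactly by the operator Bernstein inequality applied to the centered rank-one operators $L_\lambda^{-1/2}\varphi(\cdot,\omega_i)\otimes\varphi(\cdot,\omega_i)L_\lambda^{-1/2}-L_\lambda^{-1}L$, with the variance proxy $\mathcal{F}_\infty(\lambda)L_\lambda^{-1}L$ and the intrinsic-dimension log factor bounded via $N(\lambda)\le\kappa^2/\lambda$. Your observations about the vestigial universal constant $C$ and the mismatch in the hypotheses of Proposition~\ref{prop:Bernstein_matrx} (positive yet mean-zero operators) are accurate; the intended tool is the standard self-adjoint matrix/operator Bernstein with an effective-dimension factor in the logarithm, and your derivation of $T$, $S$, and the bound $2\,\mathrm{tr}(S)/\lVert S\rVert\le 8\kappa^2/\lambda$ matches the constants in $E_{\mathrm{full-rff}}$ (implicitly using the standing restriction $\lambda\le\lVert L\rVert$).
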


\begin{lemma}\label{lem:zero_grad}
For any $\beta\in\mathbb{R}^M$,
    \begin{eqnarray*}
    \mathbb{E}\left[l_{\hat{y}}(Y,f_{\mathcal{H}_k}(X))(S_M \beta(X))\right]=0
\end{eqnarray*}
holds almost surely. Thus,
\begin{eqnarray*}
    \mathcal{E}(S_M \beta)-\mathcal{E}(f_{\mathcal{H}_k})\leq\frac{c_2}{2}\left\lVert S_M \beta-f_{\mathcal{H}_k}\right\rVert^2
\end{eqnarray*}
holds under Assumption \ref{assump:lip}.
\end{lemma}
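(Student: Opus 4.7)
The lemma comprises three assertions: a first-order (zero-gradient) identity and two consequences obtained by taking expectations in the smoothness and local strong convexity inequalities. The first-order identity is the crux; once it is established, the upper and lower bounds follow immediately by substituting the observation that the linear (gradient) term in the expansion of $\mathcal{E}(S_M\beta) - \mathcal{E}(f_\hk)$ around $f_\hk$ vanishes.

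For the first-order identity, the plan is to work with the bounded linear functional $\Phi : L^2(P_X) \to \mathbb{R}$ defined by $\Phi(g) \coloneqq \mathbb{E}[l_{\hat{y}}(Y, f_\hk(X)) g(X)]$. Under the Lipschitz hypothesis $|l_{\hat{y}}| \leq c_1$ in Assumption~\ref{assump:lip}, $\Phi$ has operator norm at most $c_1$ and is therefore continuous on $L^2(P_X)$. Because $\mathcal{H}_k$ is a linear subspace of $L^2(P_X)$ and $f_\hk$ minimizes $\mathcal{E}$ over $\mathcal{H}_k$, differentiating $t \mapsto \mathcal{E}(f_\hk + tf)$ at $t = 0$ for $f \in \mathcal{H}_k$ (which is justified by dominated convergence using the Lipschitz bound) yields $\Phi(f) = 0$ for every $f \in \mathcal{H}_k$. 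By continuity, $\Phi$ vanishes on the $L^2$-closure $\overline{\mathcal{H}_k}^{L^2}$, which equals the range of the projection $P$ onto the range of $L$. It remains to show that $S_M\beta \in \mathrm{range}(P)$ almost surely: this is exactly Lemma~\ref{lem:secondterm}, and may also be seen directly since, for $g \in \ker L$, the centered Gaussian variable $\int h(y, \omega_j) g(y)\, dP_X(y)$ has variance $\langle g, Lg \rangle_{L^2(P_X)} = 0$; applied to a countable $L^2$-dense subset of $\ker L$, this gives $h(\cdot, \omega_j) \in (\ker L)^\perp$ a.s.\ for each $j$, hence $S_M\beta \in (\ker L)^\perp = \mathrm{range}(P)$ a.s. Combining, $\Phi(S_M\beta) = 0$ almost surely.

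For the upper bound under Assumption~\ref{assump:lip}, substitute $z = (S_M\beta)(X)$ and $z' = f_\hk(X)$ into the smoothness inequality and integrate over $(X, Y)$. The linear term equals $\Phi(S_M\beta) - \Phi(f_\hk)$, which vanishes by the first-order identity together with $f_\hk \in \mathcal{H}_k$, leaving $\mathcal{E}(S_M\beta) - \mathcal{E}(f_\hk) \leq \tfrac{c_2}{2}\lVert S_M\beta - f_\hk \rVert^2$.

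For the lower bound under Assumption~\ref{assump:lsc}, the main obstacle is that the strong-convexity inequality is postulated only for $f \in \mathcal{H}_k$ satisfying $\lVert f - f_\hk \rVert \leq R_\mathrm{lsc}$, whereas $S_M\beta$ typically lies in $\overline{\mathcal{H}_k}^{L^2} \setminus \mathcal{H}_k$. The plan is a density argument: since $P S_M\beta = S_M\beta$ a.s., pick a sequence $\{f_n\} \subset \mathcal{H}_k$ with $f_n \to S_M\beta$ in $L^2(P_X)$; under the implicit hypothesis $\lVert S_M\beta - f_\hk \rVert \leq R_\mathrm{lsc}$ used at every invocation of this lemma (cf.\ the containment verified in Lemma~\ref{lem:priv_err_lsc_first}), one has $\lVert f_n - f_\hk \rVert \leq R_\mathrm{lsc}$ for $n$ large, so Assumption~\ref{assump:lsc} gives $\mathcal{E}(f_n) - \mathcal{E}(f_\hk) \geq \tfrac{\mu}{2} \lVert f_n - f_\hk \rVert^2$. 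The risk functional $\mathcal{E}$ is Lipschitz on $L^2(P_X)$ with constant $c_1$, so passing to the limit $n \to \infty$ yields the desired inequality. The subtle point is the legitimacy of the extension, which rests on the $L^2$-continuity of $\mathcal{E}$ and on the containment $S_M\beta \in \overline{\mathcal{H}_k}^{L^2}$ a.s.\ established in the first step.
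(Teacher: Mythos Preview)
Your proof is correct and follows essentially the same route as the paper: first-order optimality of $f_{\mathcal{H}_k}$ over $\mathcal{H}_k$ annihilates $\Phi$ on $\mathcal{H}_k$, and Lemma~\ref{lem:secondterm} places $S_M\beta$ in the $L^2$-closure $\overline{\mathcal{H}_k}$ almost surely. Your version is in fact tidier---you make the $L^2$-continuity of $\Phi$ explicit where the paper silently drops the $PS_M\beta$ term, and you supply the density argument needed to extend Assumption~\ref{assump:lsc} from $\mathcal{H}_k$ to $S_M\beta$, a point the paper's proof does not spell out.
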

\begin{proof}
Let $g$ be an arbitrary element of $\mathcal{H}_k$. From the definition of $f_{\mathcal{H}_k}$,
\begin{eqnarray*}
\mathbb{E}\left[l_{\hat{y}}(Y,f_{\mathcal{H}_k}(X))g(X)\right]=0
\end{eqnarray*}
holds for any $g\in\mathcal{H}_k$. Let $\mathcal{E}_{\textrm{excess}}:\mathbb{R}\rightarrow\mathbb{R}$ be a function defined by
\begin{eqnarray*}
    \mathcal{E}_{\textrm{excess}}(t):=\mathcal{E}(f_{\mathcal{H}_k}+tg)-\mathcal{E}(f_{\mathcal{H}_k}).
\end{eqnarray*}
Then, we have
\begin{eqnarray*}
    0=\mathcal{E}_{\textrm{excess}}^\prime(0)=\mathbb{E}\left[l_{\hat{y}}(Y,f_{\mathcal{H}_k}(X)+tg(X))g(X)\right]\Big|_{t=0}=\mathbb{E}\left[l_{\hat{y}}(Y,f_{\mathcal{H}_k}(X))g(X)\right].
\end{eqnarray*}
This implies that for any element $g$ of the closure of $\mathcal{H}_k$ satisfies
\begin{align*}
    \mathbb{E}\left[l_{\hat{y}}(Y,f_{\mathcal{H}_k}(X))g(X)\right]=0.
\end{align*}
Also, for any $\beta \in \mathbb{R}^M$, we have
\begin{align*}
&\mathbb{E}\left[l_{\hat{y}}(Y,f_{\mathcal{H}_k}(X))(S_M \beta(X))\right]\\
=&\mathbb{E}\left[l_{\hat{y}}(Y,f_{\mathcal{H}_k}(X))(PS_M \beta(X))\right]+\mathbb{E}\left[l_{\hat{y}}(Y,f_{\mathcal{H}_k}(X))((I-P)S_M \beta(X))\right]\\
=&\mathbb{E}\left[l_{\hat{y}}(Y,f_{\mathcal{H}_k}(X))((I-P)S_M \beta(X))\right]\\
=&\mathbb{E}_X\left[\mathbb{E}_{Y|X}\left[l_{\hat{y}}(Y,f_{\mathcal{H}_k}(X))|X\right]((I-P)S_M \beta(X))\right]\\
=&\langle\mathbb{E}_{Y|X}\left[l_{\hat{y}}(Y,f_{\mathcal{H}_k}(X))|\cdot\right],(I-P)S_M \beta\rangle\\
=&0,
\end{align*}
where the second equality holds since $P$ is the projection to the closure of $\mathcal{H}_k$, and the third equality holds since $(I-P)S_M\equiv0$ almost surely by Lemma~\ref{lem:secondterm}.
Thus,
\begin{align*}
    \mathbb{E}\left[l_{\hat{y}}(Y,f_{\mathcal{H}_k}(X))(S_M \beta(X))\right]=0
\end{align*}
almost surely for every $\beta\in\mathbb{R}^M$. Thus, under Assumption~\ref{assump:lip},
\begin{align*}
    \mathcal{E}(S_M\beta)-\mathcal{E}(f_{\mathcal{H}_k})\leq&\mathbb{E}[l_{\hat{y}}(Y,f_{\mathcal{H}_k}(X))(S_M\beta-f_{\mathcal{H}_k}(X))+\frac{c_2}{2}(S_M\beta-f_{\mathcal{H}_k}(X))^2]\\
    =&\frac{c_2}{2}\lVert S_M\beta-f_{\mathcal{H}_k}\rVert^2.
\end{align*}
\end{proof}
\newpage
\section{Relation between $N(\lambda)$ and the decay rate of $\mu_l$}\label{sec:cap_cap_real}
In this section, we show Assumption~\ref{assump:cap_real} is equivalent to the upper bound condition in Assumption~\ref{assump:cap}.

Suppose $N(\lambda)\leq Q\lambda^{-\gamma}$ if $\lambda>0$. For $l_0\geq 1$, we have
\begin{align*}
    \mathcal{N}(\mu_{l_0})=&\sum_{l=1}^\infty\frac{\mu_l}{\mu_{l_0}+\mu_l}\\
     \geq &\sum_{l\leq l_0}\frac{\mu_l}{\mu_{l_0}+\mu_l}\\
     \geq &\sum_{l\leq l_0}\frac{1}{2}\\
    =&\frac{l_0}{2},
\end{align*}
so $N(\lambda)\leq Q\lambda^{-\gamma}$ implies that
\begin{eqnarray*}
    \mu_{l_0}\leq(2Q)^{\frac{1}{\gamma}}l_0^{-1/\gamma}.
\end{eqnarray*}
Now suppose that $\mu_l\leq ql^{-\frac{1}{\gamma}}$ for $l\geq 1$. For $\lambda\in(0,1)$, define $l_\lambda:=\min\left\{l\in\mathbb{N}:ql^{-\frac{1}{\gamma}}\leq\lambda\right\}$. Then, we have:
\begin{align*}
N(\lambda) \leq &\mathcal{N}(ql_\lambda^{-\frac{1}{\gamma}})\\
    =&\sum_{l=1}^\infty\frac{\mu_{l}}{\mu_l+ql_\lambda^{-\frac{1}{\gamma}}}\\
     \leq &\sum_{l=1}^{l_\lambda}1+\sum_{l>l_\lambda}\frac{\mu_l}{ql_\lambda^{-\frac{1}{\gamma}}}\\
     \leq & l_\lambda+l_\lambda^{\frac{1}{\gamma}}\sum_{l>l_\lambda}l^{-\frac{1}{\gamma}}\\
     \leq &\frac{1}{1-\gamma}l_\lambda\\
     \leq &\frac{1}{1-\gamma}\left(1+q^{\gamma}\lambda^{-\gamma}\right)\\
     \leq &\frac{1+q^{\gamma}}{1-\gamma}\lambda^{-\gamma}.
\end{align*}
\newpage
Also, if $\lambda\geq 1$, then
\begin{align*}
    N(\lambda)=&\sum_{l\geq 1}\frac{\mu_l}{\lambda+\mu_l}\\
     \leq &\sum_{l\geq 1}\frac{\mu_l}{\lambda}\\
     \leq &\lambda^{-1}\sum_{l\geq 1}ql^{-\frac{1}{\gamma}}\\
     \leq &\frac{q}{1-\gamma}\lambda^{-1}\\
     \leq &\frac{q}{1-\gamma}\lambda^{-\gamma}.
\end{align*}
Thus,
\begin{equation*}
    N(\lambda)\leq\max\left\{\frac{1+q^{\gamma}}{1-\gamma},\frac{q}{1-\gamma}\right\}\lambda^{-\gamma}.
\end{equation*}
\newpage

\section{Minimax optimal estimator for differentially private kernel regression when $P_X$ is known}\label{sec:emb}
In this section, we develop a minimax-optimal estimator when the covariate distribution $P_X$ is known. To this end, we assume the embedding property together with the fluctuation condition (Assumption~\ref{assump:fluct}), a standard requirement for refined convergence analysis in kernel learning \citep{embedding_property0,embedding_property,embedding_property2,zhang2023misspecified_krr}.
\begin{assumption}[Embedding property]\label{assump:emb}
There exist $\beta\in [\gamma,1]$ and $c_4>0$ such that $\sup_x k_\beta(x,x)\leq c_4$, where $k_\beta$ is defined by
\begin{equation*}
    k_\beta(x,y)\coloneqq \sum_l\mu_l^\beta\phi_l(x)\phi_l(y).
\end{equation*}
\end{assumption}
The embedding property always holds for $\beta=1$ because $k_1(x, x) = k(x, x) \leq \kappa^2$. Thus, the condition is always satisfied at $\beta=1$. Furthermore, since the embedding property implies a polynomial eigenvalue decay of order $1/\beta$, we assume $\gamma \leq \beta$ \citep{embedding_property}. Similar to the eigenvalue assumption, the embedding property characterizes the complexity of the hypothesis space. Specifically, smaller values of $\beta$ correspond to smaller hypothesis spaces. In standard kernel regression, the embedding property typically does not affect the optimal rate. However, it plays a critical role in the hard learning scenario, where the target function lies outside the RKHS (e.g., the conditional mean $\mathbb{E}[Y|X]$). In such cases, \citet{embedding_property} demonstrated that increasing $\beta$ significantly degrades the excess risk rate. However, as noted by \citet{embedding_property}, finding a minimax-optimal estimator when the minimax lower bound depends on $\beta$ remains a challenging open problem in kernel learning theory.

In our analysis, we assume the embedding property holds alongside the fluctuation condition. These two conditions are closely related; specifically, the embedding property implies $\lVert\phi_l\rVert_{L^1(P_X)}^2 \geq c_4^{-1}\mu_l^{\beta}$ (see Appendix~\ref{sec:emb_fluct}). Together, they determine the order of the $L^1$-norm decay $\lVert\phi_l\rVert_{L^1(P_X)} \asymp \mu_l^{\beta/2}$. Under these conditions, we construct an estimator that achieves the minimax-optimal rate $O(n^{-\frac{2r}{2r+\gamma}} + (n\epsilon)^{-\frac{4r}{2r+\gamma+\beta}})$.
\begin{theorem}\label{thm:lower_bd_fluct_emb}Under Assumptions~\ref{assump:fluct} and~\ref{assump:emb}, and $\delta=o(n^{-1})$, when the covariate distribution $P_X$ is known, the minimax risk is
\begin{equation*}
    \mathfrak{R}(\mathcal{P}_2,\mathcal{E},\epsilon,\delta)\asymp
    n^{-\frac{2r}{2r+\gamma}}+(n\epsilon)^{-\frac{4r}{2r+\gamma+\beta}}\wedge1,
\end{equation*}
for any $\eta>0$.
\end{theorem}
\begin{proof}
    Since $P_X$ is known, the eigenfunctions $\{\phi_l\}_{l=1}^\infty$ of the integral operator $L$ are also known. For simplicity, we assume $\mathcal{Y}\subset[-1,1]$. Suppose that a kernel $k$ and data distribution $P_{X,Y}$ satisfy Assumptions \ref{assump:bern}-\ref{assump:source}. We estimate the coefficients of $f_{\mathcal{H}_k}$ in the eigenbasis $\{\phi_l\}_{l=1}^\infty$ using the fact that
    \begin{eqnarray*}
        f_{\mathcal{H}_k}=\sum_{l=1}^\infty \mathbb{E}\left[Y\phi_l(X)\right]\phi_l.
    \end{eqnarray*}
    Denote $\theta_l\coloneqq \mathbb{E}\left[Y\phi_l(X)\right]$ for $l\geq 1$.
    
    Let $l_0$ be a given positive integer. Given a dataset $\{(x_i,y_i)\}_{i=1}^n$, we construct the private estimator as follows:
   \begin{enumerate}
       \item Evaluate $\hat{\theta}\in\mathbb{R}^{l_0}$ where $        \hat{\theta}_l\coloneqq \frac{1}{n}\sum_{i=1}^ny_i\phi_l(x_i)$ for $l=1,\ldots,l_0$.
       \item $\widetilde{\theta}\coloneqq \hat{\theta}+\frac{2\sqrt{c_4} l_0^{\beta/2}\left(1+\sqrt{2\log\frac{1}{\delta}}\right)}{n\epsilon}\varepsilon_{l_0}$ where $\varepsilon_{l_0}\sim \mathcal{N}(0,I_{l_0})$.
       \item Define the private estimator $\widetilde{f}$ as $\widetilde{f}\coloneqq \sum_{l=1}^{l_0}\widetilde{\theta}_l\phi_l$.
   \end{enumerate}
   From the definition of $k_\beta$, it follows that $\sum_{l\leq l_0}\phi_l^2(x)\leq k_\beta(x,x)\lambda_{l_0}^{-\beta}\leq c_4 \lambda_{l_0}^{-\beta}$ holds for all $l_0\geq 1$ and $x\in\mathcal{X}$. Thus, the $\ell_2$ sensitivity of $\hat{\theta}$ is bounded by $2\sqrt{c_4} \lambda_{l_0}^{\beta/2}$. By Proposition~\ref{prop:gaussian}, releasing $\widetilde{\theta}$ is $(\epsilon,\delta)$-DP, and hence $\widetilde{f}$ is also $(\epsilon,\delta)$-DP. 
    
    We proceed to bound the excess risk of $\widetilde{f}$. 
    Note that
    \begin{eqnarray*}
        \mathbb{E}[\mathcal{E}(\widetilde{f})]-\min_{f\in\mathcal{H}_k}\mathcal{E}(f)&=&
        \mathbb{E}\left[\lVert \widetilde{f}-f_{\mathcal{H}_k}\rVert^2\right]\\
        &=&\mathbb{E}\left[\sum_{l=1}^{l_0} (\widetilde{\theta}_l-\theta_l)^2\right]+\sum_{l>l_0}\theta_l^2\\
        &=&\mathbb{E}\left[\sum_{l=1}^{l_0} (\widetilde{\theta}_l-\hat{\theta}_l)^2\right]+\mathbb{E}\left[\sum_{l=1}^{l_0}(\hat{\theta}_l-\theta_l)^2\right]+\sum_{l>l_0}\theta_l^2.
    \end{eqnarray*}
    The first term can be explicitly calculated as $\frac{4c_4\lambda_{l_0}^{\beta}l_0\left(1+\sqrt{2\log\frac{1}{\delta}}\right)^2}{(n\epsilon)^2}$. The second term is bounded by $O(l_0/n)$ since
    \begin{eqnarray*}
        \mathbb{E}\left[(\hat{\theta}_l-\theta_l)^2\right]&=&\mathbb{E}\left[\frac{1}{n^2}\sum_{i=1}^n \left(y_i\phi_l(x_i)-\mathbb{E}\left[Y\phi_l(X)\right]\right)^2\right]\\
        &\leq&\frac{1}{n}\mathbb{E}\left[Y^2\phi_l^2(X)\right]\\
        &=&\frac{1}{n}\mathbb{E}\left[\mathbb{E}\left[Y^2|X\right]\phi_l^2(X)\right]\\
        &\leq&\frac{2\sigma^2+2\kappa^{2r+1}R^2}{n}\mathbb{E}\left[\phi_l^2(X)\right]\\
        &=&\frac{2\sigma^2+2\kappa^{2r+1}R^2}{n},
    \end{eqnarray*}
    where we use
    \begin{eqnarray*}
        \mathbb{E}\left[Y^2|X\right]&\leq&2\mathbb{E}\left[(Y-f_{\mathcal{H}_k}(X))^2|X\right]+2f_{\mathcal{H}_k}^2(X)\\
        &\leq&2\sigma^2+2\lVert f_{\mathcal{H}_k}\rVert_{\mathcal{H}_k}^2\lVert k(X,\cdot)\rVert_{\mathcal{H}_k}^2\\
        &=&2\sigma^2+2\lVert L^{-\frac{1}{2}}f_{\mathcal{H}_k}\rVert_{}^2\lVert k(X,\cdot)\rVert_{\mathcal{H}_k}^2\\
        &=&2\sigma^2+2\lVert L^{r-\frac{1}{2}}g\rVert_{}^2\lVert k(X,\cdot)\rVert_{\mathcal{H}_k}^2\\
        &\leq&2\sigma^2+2\lVert L^{r-\frac{1}{2}}\rVert^2\lVert g\rVert^2\lVert k(X,\cdot)\rVert_{\mathcal{H}_k}^2\\
        &\leq&2\sigma^2+2\kappa^{2r+1}R^2.
    \end{eqnarray*}
    The third term can be bounded from Assumption \ref{assump:source}. Specifically, this assumption guarantees the existence of a function $g\in L^2(P_{X})$ such that $f_{\mathcal{H}_k}=L^rg$. Letting $g=\sum_{l} \alpha_l\phi_l$ for some coefficients $\alpha_l$, it follows that $\sum_l\alpha_l^2\leq R^2$. Then, $\theta_l=\mu_l^{r}\alpha_l$, and
    \begin{eqnarray*}
        \sum_{l>l_0}\theta_l^2&=&\sum_{l>l_0}\mu_l^{2r}\alpha_l^2\\
        &\leq&\lambda^{2r}_{l_0}\sum_{l>l_0}\alpha_l^2\\
        &\leq&\lambda^{2r}_{l_0}R^2.
    \end{eqnarray*}
    Combining the bound for each term, we obtain the excess risk bound of $\widetilde{f}$ as $O\Bigg(\frac{l_0^{\frac{\beta}{\gamma}+1}}{(n\epsilon)^2}+\frac{l_0}{n}+l_0^{-\frac{2r}{\gamma}}\Bigg)$. Choosing $l_0$ to be the closest positive integer to $(n\epsilon)^{\frac{2\gamma}{2r+\beta+\gamma}}\wedge n^{\frac{\gamma}{2r+\gamma}}$, we obtain the desired upper bound.
\end{proof}
We highlight several key implications of Theorem~\ref{thm:lower_bd_fluct_emb}. First, as $\epsilon\rightarrow\infty$, our lower bound recovers the non-private minimax rate $O(n^{-\frac{2r}{2r+\gamma}})$ for kernel regression under the same assumptions \citep{caponnetto2007optimal}. However, the private minimax rate depends on the embedding parameter $\beta$---a phenomenon that has not been observed in non-private settings. This indicates that factors beyond the eigenvalue decay and the source condition fundamentally influence the complexity of differentially private kernel learning. Second, we show that when the marginal distribution $P_X$ is known, one can construct an estimator that achieves the minimax lower bound. Consequently, Theorem~\ref{thm:lower_bd_fluct_emb} establishes the minimax-optimal rate for differentially private nonparametric regression when $P_X$ is known. This extends the results of \citet{cai2024optimal}. Specifically, they considered nonparametric regression where $P_X$ is a uniform distribution on $[0,1]$ and the target function lies in a Besov space $B_{p,q}$ over $[0,1]$ for $p\geq 2$, $q\geq1$, $\alpha>1/2+1/p$. They established a minimax-optimal rate of $O(n^{-\frac{2\alpha}{2\alpha+1}}+(n\epsilon)^{-\frac{4\alpha}{2\alpha+2}})$. In contrast, Theorem~\ref{thm:lower_bd_fluct_emb} applies to a much broader class of high-dimensional marginal distributions when the target function belongs to $B_{2,2}^\alpha$ over $\mathcal{X}$. Let $\mathcal{X}$ be a bounded and connected open subset of $\mathbb{R}^d$. Then, $B_{2,2}^\alpha(\mathcal{X})$ can be identified with an RKHS associated with a bounded kernel $k_\alpha$.  If $P_X$ is quasi-uniform (i.e., its density with respect to the uniform distribution over $\mathcal{X}$ is bounded away from zero and infinity), then Assumptions~\ref{assump:bern}-\ref{assump:emb} are satisfied with $r=1/2$, $\gamma=d/2\alpha$, and $\beta=d/2\alpha$ \citep{embedding_property}. This yields a minimax-optimal rate $O(n^{-\frac{2\alpha}{2\alpha+d}}+(n\epsilon)^{-\frac{4\alpha}{2\alpha+2d}})$. In particular, when $d=1$, this recovers the minimax-optimal rate established by \citet{cai2024optimal}.
\newpage
\section{On the relation between Assumption~\ref{assump:emb} and Assumption~\ref{assump:fluct}}\label{sec:emb_fluct}
In this section, we show that $\lVert \phi_l\rVert_{L^1(P_X)}^2\geq c_4^{-1}\mu_l^\beta$ holds under Assumption~\ref{assump:emb}. Specifically, we show that if $\sup_{x\in\mathcal{X}}k_\beta(x,x)\leq c_4$ then
\begin{align*}
    \lVert\phi_l\rVert_{L^1(P_X)}\geq\frac{\mu_l^{\beta/2}}{\sqrt{c_4}}.
\end{align*}
We first show that $\lVert \phi_l\rVert_{L^1(P_X)}\geq\frac{\sqrt{\mu_l}}{\kappa}$:
\begin{align*}
    \lVert\phi_l\rVert_{L^1(P_X)}\geq& \frac{\lVert \phi_l\rVert_{L^2(P_X)}^2}{\lVert \phi_l\rVert_{L^\infty(P_X)}}\\
    =&\frac{1}{\lVert \phi_l\rVert_{L^\infty(P_X)}}\\
    =&\frac{1}{\sup_{x}|\langle \phi_l,k(x,\cdot)\rangle_{\mathcal{H}_k}|}\\
    \geq&\frac{1}{\sup_{x}\lVert k(x,\cdot)\rVert_{\mathcal{H}_k}\lVert\phi_l\rVert_{\mathcal{H}_k}}\\
    \geq&\frac{1}{\kappa\mu_{l}^{-\frac{1}{2}}}\\
    \geq &\frac{\sqrt{\mu_l}}{\kappa}.
\end{align*}
Next, we show that the embedding property implies $\lVert \phi_l\rVert_{L^1(P_X)}\geq\frac{\mu_l^{\beta/2}}{\sqrt{c_4}}$. This can be shown using the previous argument for $k_\beta$ instead of $k$, as $\phi_l$ is an eigenfunction of $k_\beta$ with eigenvalue $\mu_l^{\beta}$.
\newpage
\section{Moments of Gaussian process}
\begin{lemma}\label{lem:gp_moments}
A Gaussian process $h$ with mean 0 and covariance operator $\Sigma$ satisfies the Bernstein condition:
\begin{align*}
\mathbb{E}\left[\left|\left\lVert h\right\rVert^2-\mathbb{E}\left[\left\lVert h\right\rVert^2\right]\right|^p\right]\leq& Cp!\left(4\mathbb{E}\left[\left\lVert h\right\rVert^2\right]\right)^p=Cp!\left(4\mathrm{tr}(\Sigma)\right)^p,\\
    \mathbb{E}\left[\left\lVert h\right\rVert^{2p}\right]\leq& Cp!\left(8\mathbb{E}\left[\left\lVert h\right\rVert^2\right]\right)^p=Cp!\left(8\mathrm{tr}(\Sigma)\right)^p,
\end{align*}
for some $C>0$. For example, $C=2$ satisfies the condition.
\end{lemma}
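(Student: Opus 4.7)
The plan is to reduce both claims to moments of a standard $\chi^2_1$ random variable by diagonalizing the covariance operator. Let $\{\mu_l\}_{l\geq 1}$ and $\{\phi_l\}_{l\geq 1}$ denote the eigenvalues and (orthonormal) eigenfunctions of $\Sigma$. By the Karhunen--Lo\`eve expansion (already recalled around \eqref{eq:mercer}--\eqref{eq:Karhunen}), we may write $h = \sum_l \sqrt{\mu_l}\, Z_l\, \phi_l$ with $\{Z_l\}$ i.i.d.\ standard Gaussian. Since the $\phi_l$ are orthonormal in $L^2(P_X)$, this yields the key pointwise identity $\lVert h\rVert^2 = \sum_l \mu_l Z_l^2$ and, in particular, $\mathbb{E}\lVert h\rVert^2 = \sum_l \mu_l = \mathrm{tr}(\Sigma)$.

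For the second inequality, I would set $w_l := \mu_l/\mathrm{tr}(\Sigma)$ so that $\{w_l\}$ are nonnegative weights summing to one, and write $\lVert h\rVert^{2}/\mathrm{tr}(\Sigma) = \sum_l w_l Z_l^2$. Since $t \mapsto t^p$ is convex on $[0,\infty)$ for $p \geq 1$, Jensen's inequality gives
\[
\mathbb{E}\!\left[\lVert h\rVert^{2p}\right]
\;\leq\; \mathrm{tr}(\Sigma)^p \sum_l w_l\, \mathbb{E}[Z_l^{2p}]
\;=\; \mathrm{tr}(\Sigma)^p\,(2p-1)!!,
\]
using that $Z_l^2 \sim \chi^2_1$ has $(2p-1)!!$ as its $p$-th moment. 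The elementary bound $(2p-1)!! = (2p)!/(2^p p!) \leq 2^p p!$ (a consequence of $\binom{2p}{p}\leq 4^p$) then yields $\mathbb{E}\lVert h\rVert^{2p} \leq p!\,(2\mathrm{tr}(\Sigma))^p \leq 2\,p!\,(8\mathrm{tr}(\Sigma))^p$, which is the desired estimate.

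For the first inequality, I would combine the previous bound with the elementary convexity estimate $|a-b|^p \leq 2^{p-1}(a^p+b^p)$ applied with $a = \lVert h\rVert^2$ and $b = \mathrm{tr}(\Sigma)$, together with Jensen's inequality $(\mathbb{E}\lVert h\rVert^2)^p \leq \mathbb{E}\lVert h\rVert^{2p}$. This gives
\[
\mathbb{E}\!\left[\,\bigl|\lVert h\rVert^{2}-\mathbb{E}\lVert h\rVert^{2}\bigr|^p\,\right]
\;\leq\; 2^p\,\mathbb{E}\lVert h\rVert^{2p}
\;\leq\; 2^p\,p!\,(2\mathrm{tr}(\Sigma))^p
\;=\; p!\,(4\mathrm{tr}(\Sigma))^p,
\]
which is stronger than what is claimed and hence holds with $C=2$ (in fact $C=1$).

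There is no substantive obstacle: both bounds are immediate once one uses the Karhunen--Lo\`eve representation to turn $\lVert h\rVert^2$ into a weighted sum of independent $\chi^2_1$ variables. The only mildly nontrivial ingredient is the clean constant $(2p-1)!!\leq 2^p p!$, which is what lets us absorb the Gaussian moment constant into the factor $(8\mathrm{tr}(\Sigma))^p$ and the factor $(4\mathrm{tr}(\Sigma))^p$ of the two claims, respectively. If one preferred, the same result could be obtained by integrating the Laurent--Massart tail bound (Proposition~\ref{prop:gaussian_vec}) via the layer-cake formula, but the Jensen route above is shorter and gives sharper constants.
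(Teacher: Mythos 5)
Your proof is correct, but it takes a genuinely different route from the paper's. The paper proves the first bound by integrating the Laurent--Massart $\chi^2$ tail estimate (Proposition~\ref{prop:gaussian_vec}): it derives a subexponential tail for $\bigl|\lVert h\rVert^2-\mathbb{E}\lVert h\rVert^2\bigr|$ with both the variance and max-eigenvalue scales replaced by $\mathrm{tr}(\Sigma)$, inverts that tail, and computes $\mathbb{E}[\,|\cdot|^p] = \int_0^\infty p t^{p-1}\,\mathbb{P}(\,|\cdot|\geq t\,)\,dt$ by splitting the integral at $t=4\,\mathrm{tr}(\Sigma)$ and evaluating Gamma integrals; the second bound is then deduced from the first via $|X|^p\leq 2^{p-1}(|\mathbb{E}X|^p+|X-\mathbb{E}X|^p)$. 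You instead go directly after the moments: writing $\lVert h\rVert^2 = \sum_l \mu_l Z_l^2$ via Karhunen--Lo\`eve and normalizing the weights $w_l=\mu_l/\mathrm{tr}(\Sigma)$, Jensen's inequality for the convex map $t\mapsto t^p$ against the probability measure $(w_l)_l$ reduces $\mathbb{E}\lVert h\rVert^{2p}$ to the single moment $\mathbb{E}[Z^{2p}]=(2p-1)!!\leq 2^p p!$, and the centered bound follows by the same elementary $2^{p-1}$-inequality plus $\mathrm{tr}(\Sigma)^p=(\mathbb{E}\lVert h\rVert^2)^p\leq\mathbb{E}\lVert h\rVert^{2p}$. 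Your argument is shorter, avoids the careful algebra of inverting the tail bound, and yields strictly better constants ($C=1$ suffices for both bounds, with a factor $2\,\mathrm{tr}(\Sigma)$ rather than $8\,\mathrm{tr}(\Sigma)$ in the second). The paper's route is more general in spirit --- it transfers any subexponential tail into Bernstein-type moment control --- but for this particular lemma your direct Jensen computation is cleaner. One small point worth noting if you write this up: the application of Jensen over the infinite index set requires $\sum_l w_l Z_l^2 < \infty$ almost surely, which holds because its expectation is $1$; this is fine under $\mathrm{tr}(\Sigma)<\infty$ (guaranteed by Assumption~\ref{assump:kernel}) but deserves a word.
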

\begin{proof}
Denote the eigenvalues of $\Sigma$ as $\{\mu_l\}_{l=1}^\infty$. Note that
\begin{equation*}
    \mathbb{E}\left[\left|\left\lVert h\right\rVert^2-\mathbb{E}\left[\left\lVert h\right\rVert^2\right]\right|^2\right]=2\sum_l\mu_l^2.
\end{equation*}

    By Proposition~\ref{prop:gaussian_vec}, the following holds:
    \begin{equation*}
        \mathbb{P}\left(\left|\left\lVert h\right\rVert^2-\mathbb{E}\left[\left\lVert h\right\rVert^2\right]\right|\geq 2\sqrt{\sum_l\mu_l^2}\sqrt{x}+2\max_l\mu_l x\right)\leq e^{-x}.
    \end{equation*}
    Using $\sqrt{\sum_l\mu_l^2},\max_l\mu_l\leq \sum_l\mu_l=\mathrm{tr}(\Sigma)$, we have
\begin{align*}
\mathbb{P}\left(\left|\left\lVert h\right\rVert^2-\mathbb{E}\left[\left\lVert h\right\rVert^2\right]\right|\geq 2\mathrm{tr}(\Sigma)\sqrt{x}+2\mathrm{tr}(\Sigma) x\right)\leq e^{-x}.
\end{align*}
    For $t\geq 0$ we can represent the inequality as follows:
    \begin{equation*}
        \mathbb{P}\left(\left|\left\lVert h\right\rVert^2-\mathbb{E}\left[\left\lVert h\right\rVert^2\right]\right|\geq t\right)\leq \mathrm{exp}\left(-\left(\sqrt{\frac{t}{2\mathrm{tr}(\Sigma)}+\frac{1}{4}}-\frac{1}{2}\right)^2\right).
    \end{equation*}
    Then
\begin{align*}
\mathbb{E}\left[\left|\left\lVert h\right\rVert^2-\mathbb{E}\left[\left\lVert h\right\rVert^2\right]\right|^p\right]=&\int_0^\infty pt^{p-1}\mathbb{P}\left(\left|\left\lVert h\right\rVert^2-\mathbb{E}\left[\left\lVert h\right\rVert^2\right]\right|\geq t\right)dt\\
        \leq&\int_0^\infty pt^{p-1}\mathrm{exp}\left(-\left(\sqrt{\frac{t}{2\mathrm{tr}(\Sigma)}+\frac{1}{4}}-\frac{1}{2}\right)^2\right)dt\\
        \leq&\int_0^{4\mathrm{tr}(\Sigma)} pt^{p-1}\mathrm{exp}\left(-\left(\sqrt{\frac{t}{2\mathrm{tr}(\Sigma)}+\frac{1}{4}}-\frac{1}{2}\right)^2\right)dt\\
        &+\int_{4\mathrm{tr}(\Sigma)}^\infty pt^{p-1}\mathrm{exp}\left(-\frac{t}{4\mathrm{tr}(\Sigma)}\right)dt\\
        \leq&4^p\mathrm{tr}(\Sigma)^p+p\Gamma(p)4^p\mathrm{tr}(\Sigma)^p\\
        =&(p!+1)4^p\mathrm{tr}(\Sigma)^p\\
        \leq&2 p!(4\mathrm{tr}(\Sigma))^p.
\end{align*}
The second inequality holds since $(t-1)^2\geq\frac{1}{2}(t^2-1)$ if $t>4$. Also, note that for any random variable $X$
\begin{align*}
\mathbb{E}\left[\left|X\right|^p\right]\leq&\mathbb{E}\left[\left(\left|\mathbb{E}\left[X\right]\right|+\left|X-\mathbb{E}\left[X\right]\right|\right)^p\right]\\
        \leq&2^{p-1}\mathbb{E}\left[\left|\mathbb{E}\left[X\right]\right|^p+\left|X-\mathbb{E}\left[X\right]\right|^p\right]\\
        =&2^{p-1}\left(\left|\mathbb{E}\left[X\right]\right|^p+\mathbb{E}\left[\left|X-\mathbb{E}\left[X\right]\right|^p\right]\right)
\end{align*}
holds. Since $\mathbb{E}\left[\left\lVert h\right\rVert^2\right]=\sum_l\mu_l=\mathrm{tr}(\Sigma)$, we obtain
\begin{align*}
\mathbb{E}\left[\left\lVert h\right\rVert^{2p}\right]\leq&2^{p-1}\left(\mathbb{E}\left[\left\lVert h\right\rVert^{2}\right]^p+\mathbb{E}\left[\left|\left\lVert h\right\rVert^{2}-\mathbb{E}\left[\left\lVert h\right\rVert^{2}\right]\right|^p\right]\right)\\
        \leq&2^{p-1}\left(\mathrm{tr}^p(\Sigma)+(p!+1)4^p\mathrm{tr}(\Sigma)^p\right)\\
        \leq&2^{p-1}(p!+2)4^p\mathrm{tr}(\Sigma)^p\\
        \leq&2p!(8\mathrm{tr}(\Sigma))^p.
\end{align*}
\end{proof}
\newpage

\bibliography{reference}

@inproceedings{raj2019differentially,
  title={A differentially private kernel two-sample test},
  author={Raj, Anant and Law, Ho Chung Leon and Sejdinovic, Dino and Park, Mijung},
  booktitle={Joint European Conference on Machine Learning and Knowledge Discovery in Databases},
  pages={697--724},
  year={2019},
  organization={Springer}
}

@article{park2023efficient,
  title={Efficient differentially private kernel support vector classifier for multi-class classification},
  author={Park, Jinseong and Choi, Yujin and Byun, Junyoung and Lee, Jaewook and Park, Saerom},
  journal={Information Sciences},
  volume={619},
  pages={889--907},
  year={2023},
  publisher={Elsevier}
}

@InProceedings{phong,
author="Phong, Le Trieu
and Aono, Yoshinori
and Hayashi, Takuya
and Wang, Lihua
and Moriai, Shiho",
editor="Batten, Lynn
and Kim, Dong Seong
and Zhang, Xuyun
and Li, Gang",
title="Privacy-Preserving Deep Learning: Revisited and Enhanced",
booktitle="Applications and Techniques in Information Security",
year="2017",
publisher="Springer Singapore",
address="Singapore",
pages="100--110",
isbn="978-981-10-5421-1"
}

@INPROCEEDINGS{wang,
  author={Wang, Zhibo and Song, Mengkai and Zhang, Zhifei and Song, Yang and Wang, Qian and Qi, Hairong},
  booktitle={IEEE INFOCOM 2019 - IEEE Conference on Computer Communications}, 
  title={Beyond Inferring Class Representatives: User-Level Privacy Leakage From Federated Learning}, 
  year={2019},
  volume={},
  number={},
  pages={2512-2520},
  keywords={Servers;Training;Data models;Privacy;Computational modeling;Gallium nitride;Data privacy}}

@article{rubinstein2009learning,
  title={Learning in a large function space: Privacy-preserving mechanisms for SVM learning},
  author={Rubinstein, Benjamin IP and Bartlett, Peter L and Huang, Ling and Taft, Nina},
  journal={arXiv preprint arXiv:0911.5708},
  year={2009}
}

@article{bassily2022differentially,
  title={Differentially private learning with margin guarantees},
  author={Bassily, Raef and Mohri, Mehryar and Suresh, Ananda Theertha},
  journal={Advances in Neural Information Processing Systems},
  volume={35},
  pages={32127--32141},
  year={2022}
}

@inproceedings{wagner2023fast,
  title={Fast private kernel density estimation via locality sensitive quantization},
  author={Wagner, Tal and Naamad, Yonatan and Mishra, Nina},
  booktitle={International Conference on Machine Learning},
  pages={35339--35367},
  year={2023},
  organization={PMLR}
}

@article{mercer,
title = {Mercer theorem for RKHS on noncompact sets},
journal = {Journal of Complexity},
volume = {21},
number = {3},
pages = {337-349},
year = {2005},
issn = {0885-064X},
url = {https://www.sciencedirect.com/science/article/pii/S0885064X04000822},
author = {Hongwei Sun},
keywords = {Mercer kernel, Reproducing kernel Hilbert spaces, Nondegenerate Borel measure, Positive semidefiniteness},
}

@article{arora2022differentially,
  title={Differentially private generalized linear models revisited},
  author={Arora, Raman and Bassily, Raef and Guzm{\'a}n, Crist{\'o}bal and Menart, Michael and Ullah, Enayat},
  journal={Advances in neural information processing systems},
  volume={35},
  pages={22505--22517},
  year={2022}
}

@InProceedings{karwa_et_al:LIPIcs.ITCS.2018.44,
  author =	{Karwa, Vishesh and Vadhan, Salil},
  title =	{{Finite Sample Differentially Private Confidence Intervals}},
  booktitle =	{9th Innovations in Theoretical Computer Science Conference (ITCS 2018)},
  pages =	{44:1--44:9},
  series =	{Leibniz International Proceedings in Informatics (LIPIcs)},
  ISBN =	{978-3-95977-060-6},
  ISSN =	{1868-8969},
  year =	{2018},
  volume =	{94},
  editor =	{Karlin, Anna R.},
  publisher =	{Schloss Dagstuhl -- Leibniz-Zentrum f{\"u}r Informatik},
  address =	{Dagstuhl, Germany},
  URL =		{https://drops.dagstuhl.de/entities/document/10.4230/LIPIcs.ITCS.2018.44},
  URN =		{urn:nbn:de:0030-drops-83449},
  annote =	{Keywords: Differential Privacy, Confidence Intervals, Lower bounds, Finite Sample}
}

@inproceedings{li2022optimal,
  title={Optimal rates for distributed learning with random features},
  author={Li, Jian and Liu, Yong and Wang, Weiping},
  booktitle={Proceedings of the 31st International Joint Conference on Artificial Intelligence (IJCAI)},
  year={2022}
}

@inproceedings{Nystrom_generalization,
 author = {Rudi, Alessandro and Camoriano, Raffaello and Rosasco, Lorenzo},
 booktitle = {Advances in Neural Information Processing Systems},
 editor = {C. Cortes and N. Lawrence and D. Lee and M. Sugiyama and R. Garnett},
 pages = {},
 publisher = {Curran Associates, Inc.},
 title = {Less is More: Nystr\"{o}m Computational Regularization},
 url = {https://proceedings.neurips.cc/paper_files/paper/2015/file/03e0704b5690a2dee1861dc3ad3316c9-Paper.pdf},
 volume = {28},
 year = {2015}
}

@inproceedings{original_Nystrom,
author = {Williams, Christopher K. I. and Seeger, Matthias},
title = {Using the Nystr\"{o}m method to speed up kernel machines},
year = {2000},
publisher = {MIT Press},
address = {Cambridge, MA, USA},
abstract = {A major problem for kernel-based predictors (such as Support Vector Machines and Gaussian processes) is that the amount of computation required to find the solution scales as O(n3), where n is the number of training examples. We show that an approximation to the eigendecomposition of the Gram matrix can be computed by the Nystr\"{o}m method (which is used for the numerical solution of eigenproblems). This is achieved by carrying out an eigendecomposition on a smaller system of size m < n, and then expanding the results back up to n dimensions. The computational complexity of a predictor using this approximation is O(m2n). We report experiments on the USPS and abalone data sets and show that we can set m ≪ n without any significant decrease in the accuracy of the solution.},
booktitle = {Proceedings of the 14th International Conference on Neural Information Processing Systems},
pages = {661–667},
numpages = {7},
location = {Denver, CO},
series = {NIPS'00}
}

@article{Li2021unified,
  author  = {Zhu Li and Jean-Francois Ton and Dino Oglic and Dino Sejdinovic},
  title   = {Towards a Unified Analysis of Random Fourier Features},
  journal = {Journal of Machine Learning Research},
  year    = {2021},
  volume  = {22},
  number  = {108},
  pages   = {1--51},
  url     = {http://jmlr.org/papers/v22/20-1369.html}
}

@inproceedings{falkon,
 author = {Rudi, Alessandro and Carratino, Luigi and Rosasco, Lorenzo},
 booktitle = {Advances in Neural Information Processing Systems},
 editor = {I. Guyon and U. Von Luxburg and S. Bengio and H. Wallach and R. Fergus and S. Vishwanathan and R. Garnett},
 pages = {},
 publisher = {Curran Associates, Inc.},
 title = {FALKON: An Optimal Large Scale Kernel Method},
 url = {https://proceedings.neurips.cc/paper_files/paper/2017/file/05546b0e38ab9175cd905eebcc6ebb76-Paper.pdf},
 volume = {30},
 year = {2017}
}

@inproceedings{SGD_RFF,
 author = {Carratino, Luigi and Rudi, Alessandro and Rosasco, Lorenzo},
 booktitle = {Advances in Neural Information Processing Systems},
 editor = {S. Bengio and H. Wallach and H. Larochelle and K. Grauman and N. Cesa-Bianchi and R. Garnett},
 pages = {},
 publisher = {Curran Associates, Inc.},
 title = {Learning with SGD and Random Features},
 url = {https://proceedings.neurips.cc/paper_files/paper/2018/file/741a0099c9ac04c7bfc822caf7c7459f-Paper.pdf},
 volume = {31},
 year = {2018}
}

@article{JunhongLin0,
  author  = {Junhong Lin and Volkan Cevher},
  title   = {Convergences of Regularized Algorithms and Stochastic Gradient Methods with Random Projections},
  journal = {Journal of Machine Learning Research},
  year    = {2020},
  volume  = {21},
  number  = {20},
  pages   = {1--44},
  url     = {http://jmlr.org/papers/v21/19-083.html}
}

@book{zhang2023mathematical,
  title={Mathematical analysis of machine learning algorithms},
  author={Zhang, Tong},
  year={2023},
  publisher={Cambridge University Press}
}

@inproceedings{li2019towards,
author = {Li, Jian and Liu, Yong},
title = {Towards sharp analysis for distributed learning with random features},
year = {2023},
isbn = {978-1-956792-03-4},
url = {https://doi.org/10.24963/ijcai.2023/436},
abstract = {In recent studies, the generalization properties for distributed learning and random features assumed the existence of the target concept over the hypothesis space. However, this strict condition is not applicable to the more common nonattainable case. In this paper, using refined proof techniques, we first extend the optimal rates for distributed learning with random features to the nonattainable case. Then, we reduce the number of required random features via data-dependent generating strategy, and improve the allowed number of partitions with additional unlabeled data. Theoretical analysis shows these techniques remarkably reduce computational cost while preserving the optimal generalization accuracy under standard assumptions. Finally, we conduct several experiments on both simulated and real-world datasets, and the empirical results validate our theoretical findings.},
booktitle = {Proceedings of the Thirty-Second International Joint Conference on Artificial Intelligence},
articleno = {436},
numpages = {9},
location = {Macao, P.R.China},
series = {IJCAI '23}
}

@article{liu2021random,
  title={Random features for kernel approximation: A survey on algorithms, theory, and beyond},
  author={Liu, Fanghui and Huang, Xiaolin and Chen, Yudong and Suykens, Johan AK},
  journal={IEEE Transactions on Pattern Analysis and Machine Intelligence},
  volume={44},
  number={10},
  pages={7128--7148},
  year={2021},
  publisher={IEEE}
}

@InProceedings{wang2024optimal,
  title = 	 {Optimal Kernel Quantile Learning with Random Features},
  author =       {Wang, Caixing and Feng, Xingdong},
  booktitle = 	 {Proceedings of the 41st International Conference on Machine Learning},
  pages = 	 {50419--50452},
  year = 	 {2024},
  editor = 	 {Salakhutdinov, Ruslan and Kolter, Zico and Heller, Katherine and Weller, Adrian and Oliver, Nuria and Scarlett, Jonathan and Berkenkamp, Felix},
  volume = 	 {235},
  series = 	 {Proceedings of Machine Learning Research},
  month = 	 {21--27 Jul},
  publisher =    {PMLR},
  pdf = 	 {https://raw.githubusercontent.com/mlresearch/v235/main/assets/wang24r/wang24r.pdf},
  url = 	 {https://proceedings.mlr.press/v235/wang24r.html}
}

@inproceedings{wang2024towards,
  title={Towards theoretical understanding of learning large-scale dependent data via random features},
  author={Wang, Chao and Bing, Xin and Wang, Caixing and others},
  booktitle={Forty-first International Conference on Machine Learning},
  year={2024}
}

@InProceedings{rp_svm_linear,
  title = 	 {Efficient {P}rivate {A}lgorithms for {L}earning {L}arge-{M}argin {H}alfspaces},
  author =       {Nguyen, Huy Le and Ullman, Jonathan and Zakynthinou, Lydia},
  booktitle = 	 {Proceedings of the 31st International Conference  on Algorithmic Learning Theory},
  pages = 	 {704--724},
  year = 	 {2020},
  editor = 	 {Kontorovich, Aryeh and Neu, Gergely},
  volume = 	 {117},
  series = 	 {Proceedings of Machine Learning Research},
  month = 	 {08 Feb--11 Feb},
  publisher =    {PMLR},
}

@INPROCEEDINGS{privacy_leakage2,
  author={Shokri, Reza and Stronati, Marco and Song, Congzheng and Shmatikov, Vitaly},
  booktitle={2017 IEEE Symposium on Security and Privacy (SP)}, 
  title={Membership Inference Attacks Against Machine Learning Models}, 
  year={2017},
  volume={},
  number={},
  pages={3-18},
  keywords={Training;Data models;Predictive models;Privacy;Sociology;Statistics;Google},
  }

@INPROCEEDINGS{privacy_leakge3,
  author={Melis, Luca and Song, Congzheng and De Cristofaro, Emiliano and Shmatikov, Vitaly},
  booktitle={2019 IEEE Symposium on Security and Privacy (SP)}, 
  title={Exploiting Unintended Feature Leakage in Collaborative Learning}, 
  year={2019},
  volume={},
  number={},
  pages={691-706},
  keywords={Training;Training data;Data models;Servers;Collaborative work;Task analysis;Computational modeling;privacy;collaborative-learning;deep-learning;security;inference-attacks},
  }

@inproceedings{privacy_leakage4,
author = {Fredrikson, Matt and Jha, Somesh and Ristenpart, Thomas},
title = {Model Inversion Attacks that Exploit Confidence Information and Basic Countermeasures},
year = {2015},
isbn = {9781450338325},
publisher = {Association for Computing Machinery},
address = {New York, NY, USA},
url = {https://doi.org/10.1145/2810103.2813677},
pages = {1322–1333},
numpages = {12},
keywords = {privacy, machine learning, attacks},
location = {Denver, Colorado, USA},
series = {CCS '15}
}

@inproceedings{privacy_leakage5,
author = {Song, Congzheng and Ristenpart, Thomas and Shmatikov, Vitaly},
title = {Machine Learning Models that Remember Too Much},
year = {2017},
isbn = {9781450349468},
publisher = {Association for Computing Machinery},
address = {New York, NY, USA},
url = {https://doi.org/10.1145/3133956.3134077},
booktitle = {Proceedings of the 2017 ACM SIGSAC Conference on Computer and Communications Security},
pages = {587–601},
numpages = {15},
location = {Dallas, Texas, USA},
series = {CCS '17}
}

@inproceedings {privacy_leakage6,
author = {Nicholas Carlini and Chang Liu and {\'U}lfar Erlingsson and Jernej Kos and Dawn Song},
title = {The Secret Sharer: Evaluating and Testing Unintended Memorization in Neural Networks},
booktitle = {28th USENIX Security Symposium (USENIX Security 19)},
year = {2019},
isbn = {978-1-939133-06-9},
address = {Santa Clara, CA},
pages = {267--284},
url = {https://www.usenix.org/conference/usenixsecurity19/presentation/carlini},
publisher = {USENIX Association}
}

@article{caponnetto2007optimal,
  title={Optimal rates for the regularized least-squares algorithm},
  author={Caponnetto, Andrea and De Vito, Ernesto},
  journal={Foundations of Computational Mathematics},
  volume={7},
  pages={331--368},
  year={2007},
  publisher={Springer}
}

@article{karhunen1947lineare,
  title={{\"U}ber lineare Methoden in der Wahrscheinlichkeitsrechnung},
  author={Karhunen, Kari},
  journal={Ann Acad Sci Fennicae},
  volume={37},
  pages={1},
  year={1947}
}

@inproceedings{embedding_property0,
author = {Steinwart, Ingo and Hush, Don and Scovel, Clint},
year = {2009},
month = {01},
pages = {},
title = {Optimal Rates for Regularized Least Squares Regression.},
journal = {Proceedings of the 22nd Annual Conference on Learning Theory}
}

@inproceedings{embedding_property2,
 author = {Li, Yicheng and Zhang, haobo and Lin, Qian},
 booktitle = {Advances in Neural Information Processing Systems},
 editor = {A. Oh and T. Naumann and A. Globerson and K. Saenko and M. Hardt and S. Levine},
 pages = {49341--49364},
 publisher = {Curran Associates, Inc.},
 title = {On the Asymptotic Learning Curves of Kernel Ridge Regression under Power-law Decay},
 url = {https://proceedings.neurips.cc/paper_files/paper/2023/file/9adc8ada9183f4b9a007a02773fd8114-Paper-Conference.pdf},
 volume = {36},
 year = {2023}
}

@article{embedding_property,
  author  = {Simon Fischer and Ingo Steinwart},
  title   = {Sobolev Norm Learning Rates for Regularized Least-Squares Algorithms},
  journal = {Journal of Machine Learning Research},
  year    = {2020},
  volume  = {21},
  number  = {205},
  pages   = {1--38},
  url     = {http://jmlr.org/papers/v21/19-734.html}
}

@inproceedings{wang2018revisiting,
  title={Revisiting differentially private linear regression: optimal and adaptive prediction \& estimation in unbounded domain},
  author={Wang, Yu-Xiang},
  booktitle={Uncertainty in Artificial Intelligence (UAI-18)},
  year={2018}
}

@INPROCEEDINGS{priv_hyper,
  author={Raskhodnikova, Sofya and Smith, Adam},
  booktitle={2016 IEEE 57th Annual Symposium on Foundations of Computer Science (FOCS)}, 
  title={Lipschitz Extensions for Node-Private Graph Statistics and the Generalized Exponential Mechanism}, 
  year={2016},
  volume={},
  number={},
  pages={495-504},
  keywords={Sensitivity;Algorithm design and analysis;Approximation algorithms;Privacy;Data privacy;Functional analysis;differential privacy;Lipschitz extensions;graph algorithms},
  }

@book{christmann2008support,
  title={Support vector machines},
  author={Christmann, Andreas and Steinwart, Ingo},
  year={2008},
  publisher={Springer}
}

@article{meir2003generalization,
  title={Generalization error bounds for Bayesian mixture algorithms},
  author={Meir, Ron and Zhang, Tong},
  journal={Journal of Machine Learning Research},
  volume={4},
  number={Oct},
  pages={839--860},
  year={2003}
}

@article{bartlett2002rademacher,
  title={Rademacher and gaussian complexities: Risk bounds and structural results},
  author={Bartlett, Peter L and Mendelson, Shahar},
  journal={Journal of Machine Learning Research},
  volume={3},
  number={Nov},
  pages={463--482},
  year={2002}
}

@book{scholkopf2018learning,
  title={Learning with kernels: support vector machines, regularization, optimization, and beyond},
  author={Scholkopf, Bernhard and Smola, Alexander J},
  year={2018},
  publisher={MIT press}
}

@inproceedings{achlioptas2001database,
  title={Database-friendly random projections},
  author={Achlioptas, Dimitris},
  booktitle={Proceedings of the twentieth ACM SIGMOD-SIGACT-SIGART symposium on Principles of database systems},
  pages={274--281},
  year={2001}
}

@article{raisa2024noise,
  title={Noise-Aware Differentially Private Regression via Meta-Learning},
  author={R{\"a}is{\"a}, Ossi and Markou, Stratis and Ashman, Matthew and Bruinsma, Wessel and Tobaben, Marlon and Honkela, Antti and Turner, Richard},
  journal={Advances in Neural Information Processing Systems},
  volume={37},
  pages={63873--63907},
  year={2024}
}

@article{smith2021differentially,
  title={Differentially private regression and classification with sparse Gaussian processes},
  author={Smith, Michael Thomas and {\'A}lvarez, Mauricio A and Lawrence, Neil D},
  journal={Journal of Machine Learning Research},
  volume={22},
  number={188},
  pages={1--41},
  year={2021}
}

@article{auddy2024minimax,
  title={Minimax and adaptive transfer learning for nonparametric classification under distributed differential privacy constraints},
  author={Auddy, Arnab and Cai, T Tony and Chakraborty, Abhinav},
  journal={arXiv preprint arXiv:2406.20088},
  year={2024}
}

@article{wang2019privacy,
  title={Privacy-preserving q-learning with functional noise in continuous spaces},
  author={Wang, Baoxiang and Hegde, Nidhi},
  journal={Advances in Neural Information Processing Systems},
  volume={32},
  year={2019}
}

@inproceedings{harder2021dp,
  title={Dp-merf: Differentially private mean embeddings with randomfeatures for practical privacy-preserving data generation},
  author={Harder, Frederik and Adamczewski, Kamil and Park, Mijung},
  booktitle={International conference on artificial intelligence and statistics},
  pages={1819--1827},
  year={2021},
  organization={PMLR}
}

@inproceedings{balog2018differentially,
  title={Differentially private database release via kernel mean embeddings},
  author={Balog, Matej and Tolstikhin, Ilya and Sch{\"o}lkopf, Bernhard},
  booktitle={International Conference on Machine Learning},
  pages={414--422},
  year={2018},
  organization={PMLR}
}

@inproceedings{smith2018differentially,
  title={Differentially private regression with Gaussian processes},
  author={Smith, Michael T and {\'A}lvarez, Mauricio A and Zwiessele, Max and Lawrence, Neil D},
  booktitle={International Conference on Artificial Intelligence and Statistics},
  pages={1195--1203},
  year={2018},
  organization={PMLR}
}

@inproceedings{jain2013differentially,
  title={Differentially private learning with kernels},
  author={Jain, Prateek and Thakurta, Abhradeep},
  booktitle={International conference on machine learning},
  pages={118--126},
  year={2013},
  organization={PMLR}
}

@article{song2020characterizing,
  title={Characterizing private clipped gradient descent on convex generalized linear problems},
  author={Song, Shuang and Thakkar, Om and Thakurta, Abhradeep},
  journal={arXiv preprint arXiv:2006.06783},
  year={2020}
}

@inproceedings{quantile,
 author = {Lian, Heng},
 booktitle = {Advances in Neural Information Processing Systems},
 editor = {S. Koyejo and S. Mohamed and A. Agarwal and D. Belgrave and K. Cho and A. Oh},
 pages = {11686--11696},
 publisher = {Curran Associates, Inc.},
 title = {Distributed Learning of Conditional Quantiles in the Reproducing Kernel Hilbert Space},
 url = {https://proceedings.neurips.cc/paper_files/paper/2022/file/4c12e97f2e05304a451e18c9c945036f-Paper-Conference.pdf},
 volume = {35},
 year = {2022}
}

@article{liu2025improved,
  title={Improved analysis of supervised learning in the RKHS with random features: Beyond least squares},
  author={Liu, Jiamin and Wang, Lei and Lian, Heng},
  journal={Neural Networks},
  volume={184},
  pages={107091},
  year={2025},
  publisher={Elsevier}
}

@inproceedings{nikolov2013geometry,
  title={The geometry of differential privacy: the sparse and approximate cases},
author = {Nikolov, Aleksandar and Talwar, Kunal and Zhang, Li},
  year={2013},
booktitle = {Proceedings of the Forty-Fifth Annual ACM Symposium on Theory of Computing},
pages = {351–360},
numpages = {10},
keywords = {combinatorial discrepancy, convex geometry, differential privacy, statistical estimation},
location = {Palo Alto, California, USA},
series = {STOC '13}
}

@article{BalcanBlumVempala2006KernelsAsFeatures,
  author  = {Balcan, Maria-Florina and Blum, Avrim and Vempala, Santosh},
  title   = {Kernels as features: On kernels, margins, and low-dimensional mappings},
  journal = {Machine Learning},
  volume  = {65},
  pages   = {79--94},
  year    = {2006},
  doi     = {10.1007/s10994-006-7550-1},
  url     = {https://doi.org/10.1007/s10994-006-7550-1}
}

@article{mercer1909,
    author = {Mercer, James},
    title = {XVI. Functions of positive and negative type, and their connection the theory of integral equations},
    journal = {Philosophical Transactions of the Royal Society of London, Series A: Containing Papers of a Mathematical or Physical Character},
    volume = {209},
    number = {441-458},
    pages = {415-446},
    year = {1909},
    month = {01},
    abstract = {The present memoir is the outcome of an attempt to obtain the conditions under which a given symmetric and continuous function k(s, t) is definite, in the sense of Hilbert. At an early stage, however, it was found that the class of definite functions was too restricted to allow the determination of necessary and sufficient conditions in terms of the determinants of § 10. The discovery that this could be done for functions of positive or negative type, and the fact that almost all the theorems which are true of definite functions are, with slight modification, true of these, led finally to the abandonment of the original plan in favour of a discussion of the properties of functions belonging to the wider classes. The first part of the memoir is devoted to the definition of various terms employed, and to the re-statement of the consequences which follow from Hilbert’s theorem.},
    issn = {0264-3952},
    doi = {10.1098/rsta.1909.0016},
    url = {https://doi.org/10.1098/rsta.1909.0016},
    eprint = {https://royalsocietypublishing.org/rsta/article-pdf/209/441-458/415/239856/rsta.1909.0016.pdf},
}

@techreport{HarchaouiBachMoulines2008KFDA,
  author      = {Harchaoui, Zaid and Bach, Francis and Moulines, Eric},
  title       = {Testing for Homogeneity with Kernel Fisher Discriminant Analysis},
  institution = {HAL},
  number      = {00270806},
  year        = {2008},
  month       = {April},
  url         = {https://hal.science/hal-00270806}
}

@article{LOPEZSANCHEZ2018130,
title = {Data-independent Random Projections from the feature-space of the homogeneous polynomial kernel},
journal = {Pattern Recognition},
volume = {82},
pages = {130-146},
year = {2018},
issn = {0031-3203},
doi = {https://doi.org/10.1016/j.patcog.2018.05.003},
url = {https://www.sciencedirect.com/science/article/pii/S0031320318301675},
author = {Daniel López-Sánchez and Angélica González Arrieta and Juan M. Corchado},
keywords = {Random Projection, Homogeneous polynomial kernel, Nonlinear dimensionality reduction},
abstract = {Performing a Random Projection from the feature space associated to a kernel function may be important for two main reasons. (1) As a consequence of the Johnson–Lindestrauss lemma, the resulting low-dimensional representation will preserve most of the structure of data in the kernel feature space and (2) an efficient linear classifier trained on transformed data might approximate the accuracy of its nonlinear counterparts. In this paper, we present a novel method to perform Random Projections from the feature space of homogeneous polynomial kernels. As opposed to other kernelized Random Projection proposals, our method focuses on a specific kernel family to preserve some of the beneficial properties of the original Random Projection algorithm (e.g. data independence and efficiency). Our extensive experimental results evidence that the proposed method efficiently approximates a Random Projection from the kernel feature space, preserving pairwise distances and enabling a boost on linear classification accuracies.}
}

@inproceedings{sketching,
 author = {Alaoui, Ahmed and Mahoney, Michael W},
 booktitle = {Advances in Neural Information Processing Systems},
 editor = {C. Cortes and N. Lawrence and D. Lee and M. Sugiyama and R. Garnett},
 pages = {},
 publisher = {Curran Associates, Inc.},
 title = {Fast Randomized Kernel Ridge Regression with Statistical Guarantees},
 url = {https://proceedings.neurips.cc/paper_files/paper/2015/file/f3f27a324736617f20abbf2ffd806f6d-Paper.pdf},
 volume = {28},
 year = {2015}
}

@InProceedings{zhang2023misspecified_krr,
  title = 	 {On the Optimality of Misspecified Kernel Ridge Regression},
  author =       {Zhang, Haobo and Li, Yicheng and Lu, Weihao and Lin, Qian},
  booktitle = 	 {Proceedings of the 40th International Conference on Machine Learning},
  pages = 	 {41331--41353},
  year = 	 {2023},
  editor = 	 {Krause, Andreas and Brunskill, Emma and Cho, Kyunghyun and Engelhardt, Barbara and Sabato, Sivan and Scarlett, Jonathan},
  volume = 	 {202},
  series = 	 {Proceedings of Machine Learning Research},
  month = 	 {23--29 Jul},
  publisher =    {PMLR},
  pdf = 	 {https://proceedings.mlr.press/v202/zhang23x/zhang23x.pdf},
  url = 	 {https://proceedings.mlr.press/v202/zhang23x.html},
  abstract = 	 {In the misspecified kernel ridge regression problem, researchers usually assume the underground true function $f_{\rho}^{\star} \in [\mathcal{H}]^{s}$, a less-smooth interpolation space of a reproducing kernel Hilbert space (RKHS) $\mathcal{H}$ for some $s\in (0,1)$. The existing minimax optimal results require $\left\Vert f_{\rho}^{\star} \right \Vert_{L^{\infty}} &lt; \infty$ which implicitly requires $s &gt; \alpha_{0}$ where $\alpha_{0} \in (0,1) $ is the embedding index, a constant depending on $\mathcal{H}$. Whether the KRR is optimal for all $s\in (0,1)$ is an outstanding problem lasting for years. In this paper, we show that KRR is minimax optimal for any $s\in (0,1)$ when the $\mathcal{H}$ is a Sobolev RKHS.}
}

@article{rahimi2007random,
  title={Random features for large-scale kernel machines},
  author={Rahimi, Ali and Recht, Benjamin},
  journal={Advances in neural information processing systems},
  volume={20},
  year={2007}
}

@article{cai2024optimal,
  title={Optimal federated learning for nonparametric regression with heterogeneous distributed differential privacy constraints},
  author={Cai, T Tony and Chakraborty, Abhinav and Vuursteen, Lasse},
  journal={arXiv preprint arXiv:2406.06755},
  year={2024}
}

@article{adler2007gaussian,
  title={Gaussian inequalities},
  author={Adler, Robert J and Taylor, Jonathan E},
  journal={Random Fields and Geometry},
  pages={49--64},
  year={2007},
  publisher={Springer}
}

@inproceedings{kifer2012private,
  title = {Private Convex Empirical Risk Minimization and High-dimensional Regression},
  author = {Kifer, Daniel and Smith, Adam and Thakurta, Abhradeep},
  booktitle = {Proceedings of the 25th Annual Conference on Learning Theory},
  pages ={25.1--25.40},
  year = {2012},
  volume ={23}
}

@book{vershynin2018high,
  title={High-dimensional probability: An introduction with applications in data science},
  author={Vershynin, Roman},
  volume={47},
  year={2018},
  publisher={Cambridge university press}
}

@book{wainwright2019high,
  title={High-dimensional statistics: A non-asymptotic viewpoint},
  author={Wainwright, Martin J},
  volume={48},
  year={2019},
  publisher={Cambridge university press}
}

@article{wang2024differentially,
  title={Differentially private SGD with random features},
  author={Wang, Yi-guang and Guo, Zheng-chu},
  journal={Applied Mathematics-A Journal of Chinese Universities},
  volume={39},
  number={1},
  pages={1--23},
  year={2024},
  publisher={Springer}
}

@article{dong2019gaussian,
  title={Gaussian differential privacy},
  author={Dong, Jinshuo and Roth, Aaron and Su, Weijie J},
  journal={Journal of the Royal Statistical Society Series B: Statistical Methodology},
  year={2022},
  volume={84},
  pages={3-37},
}

@article{gaussian_covariance,
author = {Vladimir Koltchinskii and Karim Lounici},
title = {{Concentration inequalities and moment bounds for sample covariance operators}},
volume = {23},
journal = {Bernoulli},
number = {1},
publisher = {Bernoulli Society for Mathematical Statistics and Probability},
pages = {110 -- 133},
keywords = {Concentration inequalities, Effective rank, moment bounds, Sample covariance},
year = {2017},
URL = {https://doi.org/10.3150/15-BEJ730}
}

@inproceedings{randomfeature,
 author = {Rudi, Alessandro and Rosasco, Lorenzo},
 booktitle = {Advances in Neural Information Processing Systems},
 editor = {I. Guyon and U. Von Luxburg and S. Bengio and H. Wallach and R. Fergus and S. Vishwanathan and R. Garnett},
 pages = {},
 publisher = {Curran Associates, Inc.},
 title = {Generalization Properties of Learning with Random Features},
 url = {https://proceedings.neurips.cc/paper_files/paper/2017/file/61b1fb3f59e28c67f3925f3c79be81a1-Paper.pdf},
 volume = {30},
 year = {2017}
}

@article{chaudhuri2011differentially,
  title={Differentially private empirical risk minimization.},
  author={Chaudhuri, Kamalika and Monteleoni, Claire and Sarwate, Anand D},
  journal={Journal of Machine Learning Research},
  volume={12},
  number={3},
  year={2011}
}

@inproceedings{acharya2021differentially,
  title={Differentially private assouad, fano, and le cam},
  author={Acharya, Jayadev and Sun, Ziteng and Zhang, Huanyu},
  booktitle={Algorithmic Learning Theory},
  pages={48--78},
  year={2021},
  organization={PMLR}
}

@inproceedings{dwork2006our,
  title={Our data, ourselves: Privacy via distributed noise generation},
  author={Dwork, Cynthia and Kenthapadi, Krishnaram and McSherry, Frank and Mironov, Ilya and Naor, Moni},
  booktitle={Advances in Cryptology-EUROCRYPT 2006: 24th Annual International Conference on the Theory and Applications of Cryptographic Techniques, Proceedings 25},
  year={2006},
  organization={Springer}
}

@article{hall2013differential,
author = {Hall, Rob and Rinaldo, Alessandro and Wasserman, Larry},
title = {Differential privacy for functions and functional data},
year = {2013},
issue_date = {January 2013},
publisher = {JMLR.org},
volume = {14},
number = {1},
issn = {1532-4435},
journal = {J. Mach. Learn. Res.},
pages = {703–727},
numpages = {25},
keywords = {reproducing kernel Hilbert space, differential privacy, density estimation, Gaussian processes}
}

@article{cai2021cost,
  title={The cost of privacy: Optimal rates of convergence for parameter estimation with differential privacy},
  author={Cai, T Tony and Wang, Yichen and Zhang, Linjun},
  journal={The Annals of Statistics},
  volume={49},
  number={5},
  pages={2825--2850},
  year={2021},
  publisher={Institute of Mathematical Statistics}
}

@article{laurentandmassart,
author = {B. Laurent and P. Massart},
title = {{Adaptive estimation of a quadratic functional by model selection}},
volume = {28},
journal = {The Annals of Statistics},
number = {5},
publisher = {Institute of Mathematical Statistics},
pages = {1302 -- 1338},
keywords = {$l_p$-bodies, adaptive estimation, Besov bodies, efficient estimation, Gaussian sequence model, Model selection, quadratic functionals},
year = {2000},
URL = {https://doi.org/10.1214/aos/1015957395}
}

@INPROCEEDINGS{duchi2013local,
  author={Duchi, John C. and Jordan, Michael I. and Wainwright, Martin J.},
  booktitle={2013 IEEE 54th Annual Symposium on Foundations of Computer Science}, 
  title={Local Privacy and Statistical Minimax Rates}, 
  year={2013},
  volume={},
  number={},
  pages={429--438},
}

@inproceedings{dwork2006calibrating,
  title={Calibrating noise to sensitivity in private data analysis},
  author={Dwork, Cynthia and McSherry, Frank and Nissim, Kobbi and Smith, Adam},
  booktitle={Theory of Cryptography: Third Theory of Cryptography Conference, Proceedings 3},
  pages={265--284},
  year={2006},
  organization={Springer}
}

@misc{year_prediction_msd_203,
  author       = {Bertin-Mahieux, T.},
  title        = {{Year Prediction MSD}},
  year         = {2011},
  howpublished = {UCI Machine Learning Repository},
  note         = {{DOI}: https://doi.org/10.24432/C50K61}
}

@inproceedings{NEURIPS2024,
 author = {Lee, Bonwoo and Ahn, Jeongyoun and Park, Cheolwoo},
 booktitle = {Advances in Neural Information Processing Systems},
 editor = {A. Globerson and L. Mackey and D. Belgrave and A. Fan and U. Paquet and J. Tomczak and C. Zhang},
 pages = {21141--21171},
 publisher = {Curran Associates, Inc.},
 title = {Differential Privacy in Scalable General Kernel Learning via K-means Nyström Random Features},
 volume = {37},
 year = {2024}
}

@inproceedings{NEURIPS2023,
 author = {Lee, Bonwoo and Ahn, Jeongyoun and Park, Cheolwoo},
 booktitle = {Advances in Neural Information Processing Systems},
 editor = {A. Oh and T. Naumann and A. Globerson and K. Saenko and M. Hardt and S. Levine},
 pages = {57964--57975},
 publisher = {Curran Associates, Inc.},
 title = {Minimax Risks and Optimal Procedures for Estimation under Functional Local Differential Privacy},
 volume = {36},
 year = {2023}
}

\end{document}